\providecommand*{\boxast}{%
  \mathbin{
    \mathpalette\@boxit{*}%
  }%
}
\newcommand*{\@boxit}[2]{%
  \sbox0{$\m@th#1\Box$}%
  \ifx#1\displaystyle \ht0=\dimexpr\ht0+.05ex\relax \fi
  \ifx#1\textstyle \ht0=\dimexpr\ht0+.05ex\relax \fi
  \ifx#1\scriptstyle \ht0=\dimexpr\ht0+.04ex\relax \fi
  \ifx#1\scriptscriptstyle \ht0=\dimexpr\ht0+.065ex\relax \fi
  \sbox2{$#1\vcenter{}$}
  \rlap{%
    \hbox to \wd0{%
      \hfill
      \raisebox{%
        \dimexpr.5\dimexpr\ht0+\dp0\relax-\ht2\relax
      }{$\m@th#1#2$}%
      \hfill
    }%
  }%
  \Box
}
\def\BState{\State\hskip-\ALG@thistlm}
\newcommand{\tsn}[1]{{\left\vert\kern-0.25ex\left\vert\kern-0.25ex\left\vert #1 
    \right\vert\kern-0.25ex\right\vert\kern-0.25ex\right\vert}}
\definecolor{darkred}{RGB}{150,0,0}
\definecolor{darkgreen}{RGB}{0,150,0}
\definecolor{darkblue}{RGB}{0,0,200}
\newtheorem{theorem}{Theorem}[section]
\newtheorem{assumption}{Assumption}
\newtheorem{lemma}[theorem]{Lemma}
\newtheorem{corollary}[theorem]{Corollary}
\newtheorem{definition}[theorem]{Definition}
\newcommand{\eps}{\varepsilon}
\newcommand{\bn}{\alpha}
\newcommand{\distas}{\overset{\text{i.i.d.}}{\sim}}
\newcommand{\beq}{\begin{equation}}
\newcommand{\eeq}{\end{equation}}
\newcommand{\vrn}{\nu}
\newcommand{\nn}{\nonumber}
\newcommand{\la}{\lambda}
\newcommand{\K}{{K}}
\newcommand{\A}{{\mtx{A}}}
\newcommand{\Ub}{{\mtx{U}}}
\newcommand{\V}{{\mtx{V}}}
\newcommand{\B}{{{\mtx{B}}}}
\newcommand{\Sb}{{{\mtx{S}}}}
\newcommand{\diag}[1]{\text{diag}(#1)}
\newcommand{\Lc}{{\cal{L}}}
\newcommand{\Jc}{{\cal{J}}}
\newcommand{\Jcb}{{\overline{\cal{J}}}}
\newcommand{\Dc}{{\cal{D}}}
\newcommand{\Pb}{{\mtx{P}}}
\newcommand{\Cb}{{\mtx{C}}}
\newcommand{\La}{{\boldsymbol{{\Lambda}}}}
\newcommand{\bSi}{{\boldsymbol{{\Sigma}}}}
\newcommand{\bmu}{{\boldsymbol{{\mu}}}}
\newcommand{\Db}{{\mtx{D}}}
\newcommand{\onebb}{{\mathbf{1}}}
\newcommand{\Iden}{{\mtx{I}}}
\newcommand{\M}{{\mtx{M}}}
\newcommand{\smn}[1]{{s_{\min}(#1)}}
\newcommand{\smx}[1]{{s_{\max}(#1)}}
\newcommand{\tn}[1]{\|{#1}\|_{\ell_2}}
\newcommand{\tin}[1]{\|{#1}\|_{\ell_\infty}}
\newcommand{\trow}[1]{\|{#1}\|_{2,\infty}}
\newcommand{\tf}[1]{\|{#1}\|_{F}}
\newcommand{\upp}{{\cal{B}}_{\alpha,\Gamma}}
\newcommand{\upz}{{\cal{B}}_{\alpha_0,\Gamma}}
\newcommand{\dpz}{{\cal{D}}_{\alpha_0,\Gamma}}
\newcommand{\dpp}{{\cal{D}}_{\alpha,\Gamma}}
\newcommand{\Rc}{\mathcal{I}}
\newcommand{\Rcb}{\mathcal{N}}
\newcommand{\bteta}{\boldsymbol{\theta}}
\newcommand{\bbteta}{\widetilde{\boldsymbol{\theta}}}
\newcommand{\brteta}{\overline{\boldsymbol{\theta}}}
\newcommand{\Sc}{\mathcal{S}}
\newcommand{\Nn}{\mathcal{N}}
\newcommand{\vb}{\vct{v}}
\newcommand{\Jb}{\mtx{J}}
\newcommand{\Ic}{{\mathcal{I}}}
\newcommand{\w}{\vct{w}}
\newcommand{\err}[1]{\text{Err}_{\Dc}(#1)}
\newcommand{\ab}{\vct{a}}
\newcommand{\bb}{\vct{b}}
\newcommand{\ub}{{\vct{u}}}
\newcommand{\g}{{\vct{g}}}
\newcommand{\Fc}{\mathcal{F}}
\newcommand{\mat}[1]{{\text{mat}{#1}}}
\newcommand{\opnorm}[1]{\left\|#1\right\|}
\newcommand{\fronorm}[1]{\left\|#1\right\|_{F}}
\newcommand{\twonorm}[1]{\left\|#1\right\|_{\ell_2}}
\newcommand{\infnorm}[1]{\left\|#1\right\|_{\ell_\infty}}
\newcommand{\abs}[1]{\left|#1\right|}
\newcommand{\x}{\vct{x}}
\newcommand{\rb}{\vct{r}}
\newcommand{\rbb}{\vct{\widetilde{r}}}
\newcommand{\y}{\vct{y}}
\newcommand{\W}{\mtx{W}}
\newcommand{\Wc}{{\cal{W}}}
\newcommand{\Vc}{{\cal{V}}}
\definecolor{emmanuel}{RGB}{255,127,0}
\newcommand{\R}{\mathbb{R}}
\newcommand{\Pro}{\mathbb{P}}
\newcommand{\E}{\operatorname{\mathbb{E}}}
\newcommand{\grad}[1]{{\nabla\Lc(#1)}}
\newcommand{\e}{\mathrm{e}}
\newcommand{\eb}{\vct{e}}
\newcommand{\vct}[1]{\bm{#1}}
\newcommand{\mtx}[1]{\bm{#1}}
\newcommand{\X}{{\mtx{X}}}
\newcommand{\Y}{{\mtx{Y}}}
\newcommand{\Vb}{{\mtx{V}}}
\newcommand{\calF}{\mathcal{I}}
\newcommand{\calS}{\mathcal{N}}
\numberwithin{equation}{section} 
\def \endprf{\hfill {\vrule height6pt width6pt depth0pt}\medskip}
\newenvironment{proof}{\noindent {\bf Proof} }{\endprf\par}
\newcommand*\samethanks[1][\value{footnote}]{\footnotemark[#1]}
\begin{document}
\title{Overparameterization without Overfitting: Jacobian-based Generalization Guarantees for Neural Networks}
\title{Generalization Guarantees for Neural Networks via Low-Dimensional Representation of Data}
\title{Overparameterization without Overfitting:\\Harnessing the Jacobian Structure for Generalization Guarantees}
\title{Harnessing the Jacobian Structure\\ for Neural Network Generalization}
\title{Harnessing Low-rank Structure in the Jaocbian\\ for Neural Network Generalization}
\title{Generalization Guarantees for Neural Networks\\via Harnessing the Low-rank Structure of the Jacobian}
\author{Samet Oymak\thanks{{Department of Electrical and Computer Engineering, University of California, Riverside, CA}}\quad  \quad Zalan Fabian\thanks{{Ming Hsieh Department of Electrical Engineering, University of Southern California, Los Angeles, CA}} $^\alpha$\quad  \quad Mingchen Li\thanks{{Department of Computer Science and Engineering, University of California, Riverside, CA}} $^\alpha$  \quad \quad Mahdi Soltanolkotabi\samethanks[2]}
\maketitle
\begin{abstract}
Modern neural network architectures often generalize well despite containing many more parameters than the size of the training dataset. This paper explores the generalization capabilities of neural networks trained via gradient descent. We develop a data-dependent optimization and generalization theory which leverages the low-rank structure of the Jacobian matrix associated with the network. Our results help demystify why training and generalization is easier on clean and structured datasets and harder on noisy and unstructured datasets as well as how the network size affects the evolution of the train and test errors during training. Specifically, we use a control knob to split the Jacobian spectum into ``information" and ``nuisance" spaces associated with the large and small singular values. We show that over the information space learning is fast and one can quickly train a model with zero training loss that can also generalize well. Over the nuisance space training is slower and early stopping can help with generalization at the expense of some bias. We also show that the overall generalization capability of the network is controlled by how well the label vector is aligned with the information space. A key feature of our results is that even constant width neural nets can provably generalize for sufficiently nice datasets. We conduct various numerical experiments on deep networks that corroborate our theoretical findings and demonstrate that: (i) the Jacobian of typical neural networks exhibit low-rank structure with a few large singular values and many small ones leading to a low-dimensional information space, (ii) over the information space learning is fast and most of the label vector falls on this space, and (iii) label noise falls on the nuisance space and impedes optimization/generalization.
\end{abstract}\vspace{-0pt}
{\let\thefootnote\relax\footnotetext{~~~\hspace{1pt}$\alpha$\hspace{1pt}Equal contribution.}}
\section{Introduction}
\subsection{Motivation and contributions}
Deep neural networks (DNN) are ubiquitous in a growing number of domains ranging from computer vision to healthcare. State-of-the-art DNN models are typically overparameterized and contain more parameters than the size of the training dataset. It is well understood that in this overparameterized regime, DNNs are highly expressive and have the capacity to (over)fit arbitrary training datasets including pure noise \cite{zhang2016understanding}. Mysteriously however neural network models trained via simple algorithms such as (stochastic) gradient descent continue to predict well or \emph{generalize} on yet unseen test data. In this paper we wish to take a step towards demystifying this phenomenon and help explain why neural nets can overfit to noise yet have the ability to generalize when real data sets are used for training. In particular we explore the generalization dynamics of neural nets trained via gradient descent. Using the Jacobian mapping associated to the neural network we characterize directions where learning is fast and generalizable versus directions where learning is slow and leads to overfitting. The main contributions of this work are as follows.

\noindent $\bullet$ {\bf{Leveraging dataset structure:}} We develop new optimization and generalization results that can harness the low-rank representation of semantically meaningful datasets via the Jacobian mapping of the neural net. This sheds light as to why training and generalization is easier using datasets where the features and labels are semantically linked versus others where there is no meaningful relationship between the features and labels (even when the same network is used for training).\\
\noindent $\bullet$ {\bf{Bias--variance tradeoffs:}} We develop a bias--variance theory based on the Jacobian which decouples the learning process into {\em{information}} and {\em{nuisance}} spaces. We show that gradient descent almost perfectly interpolates the data over the information space (incurring only a small bias). In contrast, optimization over the nuisance space is slow and results in overfitting due to higher variance.\\ 
\noindent $\bullet$ {\bf{Network size vs prediction bias:}} We obtain data-dependent tradeoffs between the network size and prediction bias. Specifically, we show that larger networks result in smaller prediction bias, but small networks can still generalize well, especially when the dataset is sufficiently structured, but typically incur a larger bias. This is in stark contrast to recent literature on optimization and generalization of neural networks \cite{arora2019fine,du2018gradient,allen2018convergence,cao2019generalization,ma2019comparative,allen2018learning,brutzkus2017sgd} where guarantees only hold for very wide networks with the width of the network growing inversely proportional to the distance between the input samples or class margins or related notions. See Section \ref{priorart} for further detail.\\
\noindent $\bullet$ {\bf{Pretrained models:}} In our framework we do not require the initialization to be random and our results continue to apply even with arbitrary initialization. Therefore, our results may shed light on the generalization capabilities of networks initialized with pre-trained models such as those commonly used in meta/transfer learning.\\
\subsection{Model and training}
In our theoretical analysis we focus on neural networks consisting of one hidden layer with $d$ input features, $k$ hidden neurons and $\K$ outputs as depicted in Figure \ref{neuralnet}. We use $\W\in\R^{k\times d}$ and $\Vb\in\R^{\K\times k}$ to denote the input-to-hidden and hidden-to-output weights. The overall input-output relationship of the neural network in this case is a function $f(\cdot;\W):\R^d\rightarrow\R^\K$ that maps an input vector $\vct{x}\in\R^d$ into an output of size $\K$ via
\begin{align}
\label{neural net func}
\vct{x}\mapsto f(\vct{x};\mtx{W}):=\Vb\phi(\mtx{W}\vct{x}).
\end{align}
Given a dataset consisting of $n$ feature/label pairs $(\vct{x}_i,\vct{y}_i)$ with $\vct{x}_i\in\R^d$ representing the features and $\y_i\in\R^\K$ the associated labels representing one of $\K$ classes with one-hot encoding (i.e.~$\vct{y}_i\in\{\vct{e}_1,\vct{e}_2,\ldots,\vct{e}_K\}$ where $\vct{e}_\ell\in\R^K$ has all zero entries except for the $\ell$th entry which is equal to one). To learn this dataset, we fix the output layer and train over $\mtx{W}$ via\footnote{For clarity of exposition, we focus only on optimizing over the input layer. However, as shown in the supplementary material, the technical approach is quite general and applies to arbitrary multiclass nonlinear least-squares problems. In particular, the proofs are stated so as to apply (or easily extend) to one-hidden layer networks where both layers are used for training. These results when combined can be used to prove variations of Theorems \ref{gen main} and \ref{nn deter gen} when both layers are trained.}

\begin{figure}[t!]
\centering
\includegraphics[scale=1]{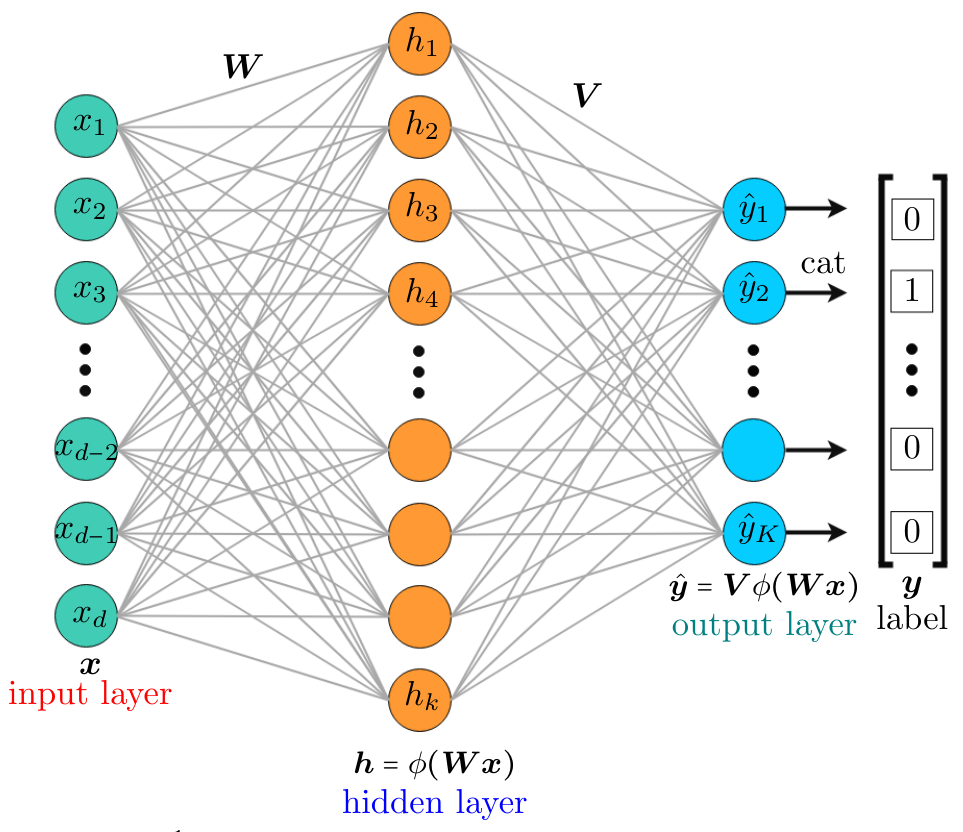}
\caption{Illustration of a one-hidden layer neural net with $d$ inputs, $k$ hidden units and $\K$ outputs along with a one-hot encoded label.}
\label{neuralnet}
\end{figure}

\begin{align}
\label{neuralopt}
\underset{\mtx{W}\in\R^{k\times d}}{\min}\text{ }\mathcal{L}(\mtx{W}):=\frac{1}{2}\sum_{i=1}^n \twonorm{\Vb\phi\left(\mtx{W}\vct{x}_i\right)-\y_i}^2.
\end{align}
It will be convenient to concatenate the labels and prediction vectors as follows
\begin{align}
\label{concat}
\y=\begin{bmatrix}\y_1\\\vdots\\\y_n\end{bmatrix}\in\R^{nK}\quad\text{and}\quad f(\W)=\begin{bmatrix}\Vb f(\x_1;\W)\\\vdots\\\Vb f(\x_n;\W)\end{bmatrix}\in\R^{nK}.
\end{align}
Using this shorthand we can rewrite the loss \eqref{neuralopt} as
\begin{align}
\label{neuralopt2}
\underset{\mtx{W}\in\R^{k\times d}}{\min}\text{ }\mathcal{L}(\mtx{W}):=\frac{1}{2}\twonorm{f(\W)-\vct{y}}^2.
\end{align}
To optimize this loss starting from an initialization $\W_0$ we run gradient descent iterations of the form
\begin{align}
\W_{\tau+1}=\W_\tau-\eta \grad{\W_\tau},\label{grad dec me}
\end{align}
with a step size $\eta$. In this paper we wish to explore the theoretical properties of the model found by such iterative updates with an emphasis on the generalization ability. 

\section{Components of a Jacobian-based theory of generalization}
\subsection{Prelude: fitting a linear model}\label{linmodel} 
To gain better insights into what governs the generalization capability of gradient based iterations let us consider the simple problem of fitting a linear model via gradient descent. This model maps an input/feature vector $\vct{x}\in\R^d$ into a one-dimensional output/label via $\vct{x}\mapsto f(\vct{x},\w):=\w^T\x$. We wish to fit a model of this form to $n$ training data consisting of input/label pairs $\{(\vct{x}_i,y_i)\}_{i=1}^n\in\R^d\times \R$. Aggregating this training data as rows of a feature matrix $\X\in\R^{n\times d}$ and label vector $\y\in\R^n$, the training problem takes the form 
\begin{align} \label{LSt} 
\underset{}{}\mathcal{L}(\w)=\frac{1}{2}\twonorm{\X\w-\y}^2.
\end{align} 
We focus on an overparameterized model where there are fewer training data than the number of parameters i.e.~$n\le d$. We assume the feature matrix can be decomposed into the form $\mtx{X}=\overline{\X}+\mtx{Z}$ where $\overline{\X}$ is low-rank (i.e. rank$(\overline{\X})=r<<n$) with singular value decomposition $\overline{\X}=\vct{U}\mtx{\Sigma}\mtx{V}^T$ with $\mtx{U}\in\R^{n\times r}$, $\mtx{\Sigma}\in\R^{r\times r}$, $\mtx{V}\in\R^{d\times r}$, and $\mtx{Z}\in\R^{n\times d}$ is a matrix with i.i.d.~$\mathcal{N}(0,\sigma_{x}^2/n)$ entries. We shall also assume the labels are equal to $\y=\overline{\y}+\vct{z}$ with $\overline{\vct{y}}=\overline{\X}\w^*$ for some $\w^*\in\text{Range}(\mtx{V})$ and $\vct{z}\in\R^n$ a Gaussian random vector with i.i.d.~$\mathcal{N}(0,\sigma_{y}^2/n)$ entries. One can think of this as a linear regression model where the features and labels are corrupted with Gaussian noise. The goal of course is to learn a model which fits to the clean uncorrupted data and not the corruption. In this case the population loss (i.e.~test error) takes the form
\begin{figure*}[t!] 
\centering \begin{subfigure}[b]{0.5\textwidth} 
\centering 
\includegraphics[scale=1.4]{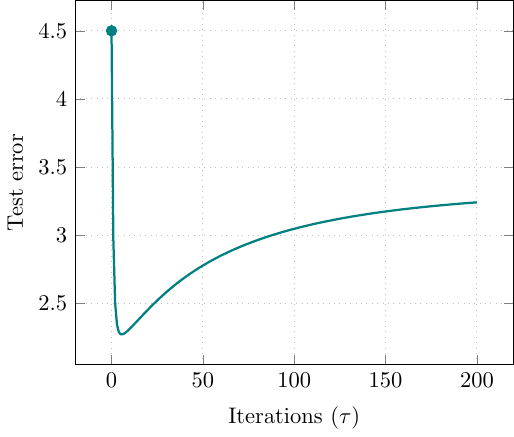} 
\caption{Total test error} 
\label{synthfigtest} 
\end{subfigure}~
\begin{subfigure}[b]{0.5\textwidth}
\centering
\includegraphics[scale=1.4]{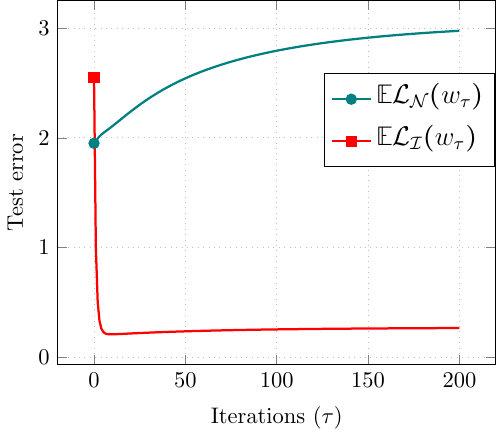}
\caption{Test error along information and nuisance spaces} 
\label{synthfigtest2} 
\end{subfigure} 
\caption{Plots of the (a) total test error and (b) the test error components for the model in Section \ref{linmodel}. The test error decreases rapidly over the information subspace but slowly increases over the nuisance subspace.}
\label{synthfig}
\end{figure*}
\begin{align*}
\E\big[\mathcal{L}(\w)\big]=\frac{1}{2}\twonorm{\overline{\mtx{X}}\w-\overline{\vct{y}}}^2+\frac{1}{2}\sigma_x^2\twonorm{\w}^2+\frac{1}{2}\sigma_y^2,
\end{align*}
Now let us consider gradient descent iterations with a step size $\eta$ which take the form 
\begin{align} 
\label{LSGD}
\w_{\tau+1}=\w_\tau-\eta \nabla\mathcal{L}(\w_\tau)=\left(\mtx{I}-\eta\X^T\X\right)\w_\tau+\eta\X^T\y. \end{align}
To gain further insight into the generalization capabilities of the gradient descent iterations we shall consider an instance of this problem where the subspaces $\mtx{U}$ and $\mtx{V}$ are chosen uniformly at random, $\mtx{\Sigma}=\mtx{I}_r$ with $n=200$, $d=500$, $r=5$, and $\sigma_x =0.2, ~\sigma_y=2$. In Figure \ref{synthfigtest} we plot the population loss evaluated at different iterations. We observe an interesting phenomenon, in the first few iterations the test error goes down quickly but it then slowly increases. To better understand this behavior we decompose the population loss into two parts by tracking the projection of the misfit $\X\w-\y$ on the column space of the uncorrupted portion of the input data $(\mtx{U})$ and its complement. That is, 
\begin{align*} 
\E\mathcal{L}(\w)=\E\mathcal{L}_{\calF}(\w)+\E\mathcal{L}_{\calS}(\w). 
\end{align*}
where
\begin{align*} \E\mathcal{L}_{\calF}(\w):=&\E\Big[\twonorm{\Pi_{\calF}\left(\X\w-\y\right)}^2\Big]=\twonorm{\overline{\X}\w-\overline{\y}}^2+\frac{r}{2n}\sigma_x^2\twonorm{\w}^2+\frac{r}{2n}\sigma_y^2,\\ 
\E\mathcal{L}_{\calS}(\w):=&\E\Big[\twonorm{\Pi_{\calS}\left(\X\w-\y\right)}^2\Big]=\frac{1}{2}\left(1-\frac{r}{n}\right)\left(\sigma_x^2\twonorm{\w}^2+\sigma_y^2\right),
\end{align*} 
with $\Pi_{\calF}=\mtx{U}\mtx{U}^T$ and $\Pi_{\calS}=\mtx{I}-\mtx{U}\mtx{U}^T$. In Figure \ref{synthfigtest2} we plot these two components. This plot clearly shows that $\E\mathcal{L}_{\calF}(\w)$ goes down quickly while $\E\mathcal{L}_{\calS}(\w)$ slowly increases with their sum creating the dip in the test error. Since $\mtx{U}$ is a basis for the range of the uncorrupted portion of the features ($\overline{\mtx{X}}$) one can think of span$(\mtx{U})$ as the ``information" subspace and $\E\mathcal{L}_{\calF}(\w)$ as the test error on this information subspace. Similarly, one can think of the complement of this subspace as the ``nuisance" subspace and $\E\mathcal{L}_{\calS}(\w)$ as the test error on this nuisance subspace. Therefore, one can interpret Figure \ref{synthfigtest} as the test error decreasing rapidly in the first few iterations over the information subspace but slowly increasing due to the contributions of the nuisance subspace. 

To help demystify this behavior note that using the gradient descent updates from \eqref{LSGD} the update in terms of the misfit/residual $\vct{r}_\tau=\X\w_\tau-\y$ takes the form 
\begin{align*} 
\vct{r}_{\tau+1}=\left(\mtx{I}-\eta\X\X^T\right)\vct{r}_\tau=\left(\mtx{I}-\eta\overline{\X}\text{ }\overline{\X}^T\right)\left(\overline{\X}\w_\tau-\overline{y}\right)+noise
\end{align*} 
Based on the form of this update when the information subspace is closely aligned with the prominent singular vectors of $\X$ the test error on the information subspace ($\E\mathcal{L}_{\calF}(\w)\approx \twonorm{\overline{\X}\w_\tau-\overline{\y}}^2$) quickly decreases in the first few iterations. However, the further we iterate the parts of the residual aligned with the less prominent eigen-directions of $\X$ (which correspond to the nuisance subspace) slowly pick up more energy contributing to a larger total test error.

\subsection{Information and nuisance spaces of the Jacobian}
In this section we build upon the intuition gained from the linear case to develop a better understanding of generalization dynamics for nonlinear data fitting problems. As in the linear case, in order to understand the generalization capabilities of models trained via gradient descent we need to develop better insights into the form of the gradient updates and how it affects the training dynamics. To this aim let us aggregate the weights at each iteration into one large vector $\vct{w}_\tau:=$vect$(\W_\tau)\in\R^{kd}$, define the misfit/residual vector $\rb(\w):=f(\w)-\y$ and note that the gradient updates take the form 
\begin{align*}
\w_{\tau+1}=\w_{\tau}-\eta\nabla\mathcal{L}(\w_\tau)\quad\text{where}\quad\nabla\mathcal{L}(\w)=\nabla \mathcal{L}(\w)=\mathcal{J}^T(\w)\vct{r}(\w).
\end{align*}
Here, $\mathcal{J}(\w)\in\R^{nK\times kd}$ denotes the Jacobian mapping associated with $f$ defined as $\mathcal{J}(\w)=\frac{\partial f(\w)}{\partial \w}$.
%
%
%
Due to the form of the gradient updates the dynamics of training is dictated by the spectrum of the Jacobian matrix as well as the interaction between the residual vector and the Jacobian. If the residual vector is very well aligned with the singular vectors associated with the top singular values of $\mathcal{J}(\w)$, the gradient update significantly reduces the misfit allowing substantial reduction in the train error. In a similar fashion we will also show that if the labels $\vct{y}$ are well-aligned with the prominent directions of the Jacobian the test error of the trained network will be low. Thus to provide a more precise understanding of the training dynamics and generalization capabilities of neural networks it is crucial to develop a better understanding of the interaction between the Jacobian and the misfit and label vectors. To capture these interactions we require a few definitions.
\begin{figure}[t!]\hspace{-30pt}
\begin{centering}
\begin{subfigure}[t]{3.1in}
\centering
\begin{tikzpicture}
\node at (0,0) {\includegraphics[height=0.6\linewidth,width=0.72\linewidth]{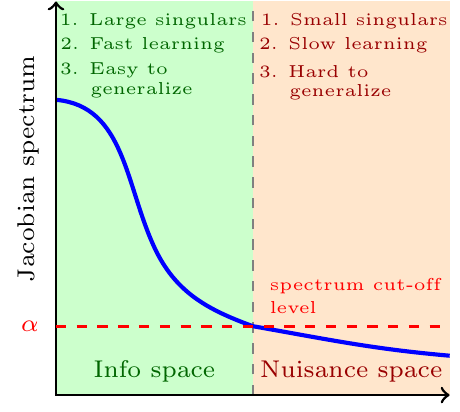}};
\end{tikzpicture}
\caption{Depiction via the Jacobian spectrum}
\label{fig3a}
\end{subfigure}
\end{centering}\hspace{-20pt}
\begin{centering}
\begin{subfigure}[t]{3in}
\centering
\begin{tikzpicture}
\node at (0,0) {\includegraphics[height=0.6\linewidth,width=1.02\linewidth]{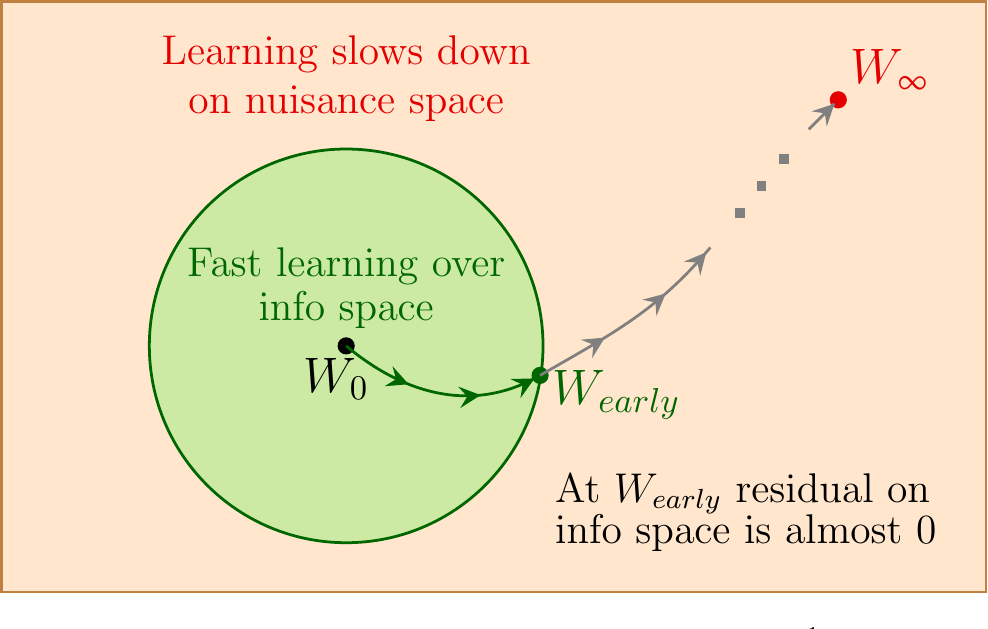}};
\end{tikzpicture}
\caption{Depiction in parameter space}\label{fig3b}
\end{subfigure}
\end{centering}\caption{Depiction of the training and generalization dynamics of gradient methods based on the information and nuisance spaces associated with the neural net Jacobian.}\label{fig3}
\end{figure}
\begin{definition}[Information \& Nuisance Spaces]\label{tjac neural} Consider a matrix $\mtx{J}\in\R^{nK\times p}$ with singular value decomposition given by
\[
\mtx{J}=\sum_{s=1}^{n\K}\lambda_s\ub_s\vb_s^T=\mtx{U}\text{diag}\left(\lambda_1, \lambda_2,\ldots,\lambda_{nK}\right)\mtx{V}^T,
\]
with $\lambda_1\ge \lambda_2\ge \ldots \ge\lambda_{nK}$ denoting the singular values of $\mtx{J}$ in decreasing order and $\{\vct{u}_s\}_{s=1}^{nK}\in\R^{nK}$ and $\{\vct{v}_s\}_{s=1}^{nK}\in\R^{p}$ the corresponding left and right singular vectors forming the orthonormal basis matrices $\mtx{U}\in\R^{nK\times nK}$ and $\mtx{V}\in\R^{p\times nK}$. For a spectrum cutoff $\alpha$ obeying $0\le \alpha\le \lambda_1$ let $r:=r(\alpha)$ denote the index of the smallest singular value above the threshold $\alpha$.

We define the information and nuisance spaces associated with $\mtx{J}$ as $\calF:=\text{span}(\{\ub_s\}_{s=1}^r)$ and $\calS:=\text{span}(\{\ub_s\}_{s=r+1}^{\K n})$. We also define the truncated Jacobian
\begin{align*}
\mtx{J}_{\calF}=\begin{bmatrix}\vct{u}_1 & \vct{u}_2 & \ldots & \vct{u}_r\end{bmatrix}\text{diag}\left(\lambda_1,\lambda_2,\ldots,\lambda_r\right)\begin{bmatrix}\vct{v}_1 & \vct{v}_2 & \ldots & \vct{v}_r\end{bmatrix}^T
\end{align*}
which is the part of the reference Jacobian that acts on the information space $\calF$.
\end{definition}
In this paper we shall use either the expected value of the Jacobian at the random initialization or the Jacobian at one of the iterates to define the matrix $\mtx{J}$ and the corresponding information/nuisance spaces. More, specifically we will set $\mtx{J}$ to either $\mtx{J}=\left(\E[\mathcal{J}(\W_0)\mathcal{J}^T(\W_0)]\right)^{1/2}$ or $\mtx{J}=\mathcal{J}(\W_\tau)$. Therefore, one can effectively think of the information space as the span of the prominent singular vectors of the Jacobian and the nuisance space as its complement. In particular, as we demonstrate in Section \ref{numeric sec} the Jacobian mapping associated with neural networks exhibit low-rank structure with a few large singular values and many small ones leading to natural choices for the cut-off value $\alpha$ as well as the information and nuisance spaces. Furthermore, we demonstrate both (empirically and theoretically) that learning is fast over the information space leading to a significant reduction in both train/test accuracy in the early stages of training. However, after a certain number of iterations learning shifts to the nuisance space and reduction in the training error significantly slows down. Furthermore, subsequent iterations in this stage lead to a slight increase in test error. We provide a cartoon depiction of this behavior in Figure \ref{fig3}. 

\section{Main results}
Our main results establish multi-class generalization bounds for neural networks trained via gradient descent. First, we will focus on networks where both layers are randomly initialized. Next we will provide guarantees for arbitrary initialization with the goal of characterizing the generalization ability of subsequent iterative updates for a given (possibly pre-trained) network in terms of its Jacobian mapping. In this paper we focus on activations $\phi$ which are smooth and have bounded first and second order derivatives. This would for instance apply to the softplus activation $\phi(z)=\log\left(1+e^z\right)$. We note that utilizing a proof technique developed in \cite{onehidden} for going from smooth to ReLU activations it is possible to extend our results to ReLU activations with proper modifications. We avoid doing this in the current paper for clarity of exposition. Before we begin discussing our main results we discuss some notation used throughout the paper. For a matrix $\mtx{X}\in\R^{n\times d}$ we use $\smn{\mtx{X}}$ and $\smx{\mtx{X}}=\opnorm{\mtx{X}}$ to denote the minimum and maximum singular value of $\mtx{X}$. For two matrices $\mtx{A}$ and $\mtx{B}$ we use $\mtx{A}\odot B$ and $\mtx{A}\otimes B$ to denote their Hadamard and Kronecker products, respectively. For a PSD matrix $\mtx{A}\in\R^{n\times n}$ with eigenvalue decomposition $\mtx{A}=\sum_{i=1}^n\lambda_i\vct{u}_i\vct{u}_i^T$, the square root matrix is defined as $\mtx{A}^{1/2}:=\sum_{i=1}^n\sqrt{\lambda_i}\vct{u}_i\vct{u}_i^T$. We also use $\A^\dagger$ to denote the pseudo-inverse of $\A$. In this paper we mostly focus on label vectors $\y$ which are one-hot encoded i.e.~all entries are zero except one of them. For a subspace $\mathcal{S}\subset \R^n$ and point $\x\in\R^n$, $\Pi_{\mathcal{S}}(\vct{x})$ denotes the projection of $\x$ onto $\mathcal{S}$. Finally, before stating our results we need to provide a quantifiable measure of performance for a trained model. Given a sample $(\x,\y)\in\R^{d}\times \R^\K$ from a distribution $\Dc$, the classification error of the network $\W$ with respect to $\Dc$ is defined as
\begin{align}
\err{\W}=\Pro\Big\{\arg\max_{1\leq \ell\leq K}\y_\ell\neq \arg\max_{1\leq \ell\leq K}f_\ell(\x;\W)\Big\}.\label{class error}
\end{align}

\subsection{Results for random initialization}
To explore the generalization of randomly initialized networks, we utilize the neural tangent kernel.
\begin{definition}[Multiclass Neural Tangent Kernel (M-NTK) \cite{jacot2018neural}]\label{nneig} Let $\vct{w}\in\R^d$ be a vector with $\mathcal{N}(\vct{0},\mtx{I}_d)$ distribution. Consider a set of $n$ input data points $\vct{x}_1,\vct{x}_2,\ldots,\vct{x}_n\in\R^d$ aggregated into the rows of a data matrix $\X\in\R^{n\times d}$. Associated to the activation $\phi$ and the input data matrix $\X$ we define the multiclass kernel matrix as
\begin{align*}
\mtx{\Sigma}(\X):=\Iden_{\K}\otimes \E\Big[\left(\phi'\left(\X\w\right)\phi'\left(\X\w\right)^T\right)\odot\left(\X\X^T\right)\Big],
\end{align*}
\end{definition}
where $\Iden_K$ is the identity matrix of size $K$. Here, the $\ell$ th diagonal block of $\mtx{\Sigma}(\X)$ corresponds to the kernel matrix associated with the $\ell$ th network output for $1\le\ell\leq \K$. This kernel is intimately related to the multiclass Jacobian mapping. In particular, suppose the initial input weights $\W_0$ are distributed i.i.d.~ $\Nn(0,1)$ and the output layer $\Vb$ has i.i.d.~zero-mean entries with $\vrn^2/K$ variance. Then $\E[\Jc(\W_0)\Jc(\W_0)^T]=\vrn^2\mtx{\Sigma}(\X)$. We use the square root of this multiclass kernel matrix (i.e.~$\mtx{\Sigma}(\X)^{1/2}$) to define the information and nuisance spaces for our random initialization result. 
\vspace{-2pt}\begin{theorem}  \label{gen main}
Let $\zeta, \Gamma, \bar{\alpha}$ be scalars obeying $\zeta\le 1/2$, $\Gamma\ge 1$, and $\bar{\alpha}\ge 0$ which determine the overall precision, cut-off and learning duration, respectively.\footnote{Note that this theorem and its conclusions hold for any choice of these parameters in the specified range.} Consider a training data set $\{(\x_i,\y_i)\}_{i=1}^n\in\R^d\times \R^K$ generated i.i.d.~according to a distribution $\Dc$ where the input samples have unit Euclidean norm and the concatenated label vector obeys $\tn{\y}=\sqrt{n}$ (e.g.~one-hot encoding). Consider a neural net with $k$ hidden nodes as described in \eqref{neural net func} parameterized by $\W$ where the activation function $\phi$ obeys $\abs{\phi'(z)}, \abs{\phi''(z)}\le B$. Let $\W_0$ be the initial weight matrix with i.i.d.~$\Nn(0,1)$ entries. Fix a precision level $\zeta$ and set $\vrn=\zeta/(50B\sqrt{\log(2\K)})$.
Also assume the output layer $\Vb$ has i.i.d.~Rademacher entries scaled by $\frac{\vrn}{\sqrt{k\K}}$. Furthermore, set $\mtx{J}:=(\bSi(\X))^{1/2}$ and define the information $\calF$ and nuisance $\calS$ spaces and the truncated Jacobian $\mtx{J}_{\calF}$ associated with the Jacobian $\mtx{J}$ based on a cut-off spectrum value of $\alpha_0=\bar{\alpha}\sqrt[4]{n}\sqrt{K\opnorm{\X}}B$ per Definition \ref{tjac neural}. Assume
\begin{align}
\label{hidnum}
k\gtrsim \frac{\Gamma^4\log n}{\zeta^4\bar{\alpha}^8}
\end{align}
with $\Gamma\ge 1$. We run gradient descent iterations of the form \eqref{grad dec me} with a learning rate $\eta\le \frac{1}{\vrn^2B^2\opnorm{\X}^2}$. Then, after $T=\frac{\Gamma K}{\eta\vrn^2\alpha_0^2}$ iterations, classification error $\err{\W_{T}}$ is upper bounded by 
\[
\underbrace{\frac{2\tn{ \Pi_{\Rcb}(\y)}}{\sqrt{n}}}_{\text{bias term}}+\underbrace{\frac{12B\sqrt{\K}}{\sqrt{n}}\left(\twonorm{\mtx{J}_{\calF}^\dagger\y}+\frac{\Gamma}{\alpha_0}\tn{ \Pi_{\Rcb}(\y)}\right)}_{\text{variance term}}+12\Big(1+\frac{\Gamma}{\bar{\alpha}\sqrt[4]{n\|\X\|^2}}\Big)\zeta+5\sqrt{\frac{\log(2/\delta)}{n}}+2\e^{-\Gamma},
\]
holds with probability at least $1-(2\K)^{-100}-\delta$.
\end{theorem}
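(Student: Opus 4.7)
The plan is to reduce the population classification error of $\W_T$ to three controllable quantities: (a) the empirical $\ell_2$ residual $\tn{f(\W_T)-\y}$ on the training set, (b) a generalization gap that scales with the path length $\|\W_T-\W_0\|_F$, and (c) the distortion between the true gradient dynamics and a linearization driven by a reference Jacobian. For one-hot targets a misclassification at $(\x,\y)$ forces $\tn{f(\x;\W)-\y}$ to be at least a constant, so Markov's inequality bounds $\err{\W_T}$ by a multiple of $\E_\Dc\,\tn{f(\x;\W_T)-\y}^2$; a Hoeffding/uniform-convergence step across the function class $\{\x\mapsto\Vb\phi(\W\x):\|\W-\W_0\|_F\le R\}$ then yields the $5\sqrt{\log(2/\delta)/n}$ deviation from its empirical counterpart plus a Rademacher complexity of order $B\sqrt{K}R/\sqrt{n}$, producing the $12B\sqrt K/\sqrt n$ prefactor of the variance term once $R$ is identified with the path length.

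\textbf{Residual dynamics on $\calF$ and $\Rcb$.} Set $\rb_\tau=f(\W_\tau)-\y$ and linearize the gradient update as $\rb_{\tau+1}\approx(\Iden-\eta\,\Jc(\W_\tau)\Jc(\W_\tau)^T)\rb_\tau$. Splitting $\rb_\tau$ along the information space $\calF$ and nuisance space $\Rcb$ defined from $\mtx{J}=\bSi(\X)^{1/2}$, every singular value of $\Jc(\W_\tau)$ restricted to $\calF$ is — by the spectral cutoff $\alpha_0$ and the Jacobian concentration below — at least $\vrn\alpha_0$, so the restriction of $\Iden-\eta\Jc(\W_\tau)\Jc(\W_\tau)^T$ to $\calF$ has operator norm at most $1-\eta\vrn^2\alpha_0^2$; iterating for $T=\Gamma K/(\eta\vrn^2\alpha_0^2)$ steps shrinks the $\calF$-component of $\rb_\tau$ geometrically, yielding the $2\e^{-\Gamma}$ tail. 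On $\Rcb$ the singular values are at most $\vrn\alpha_0$, so $\Iden-\eta\Jc\Jc^T$ is almost the identity and the $\Rcb$-component of the residual stays $\approx \Pi_\Rcb(\y)$; this produces the $2\tn{\Pi_\Rcb(\y)}/\sqrt n$ bias term after dividing by $\sqrt n$ via the classification reduction. Summing the update magnitudes along the two subspaces telescopes into the path-length bound $\|\W_T-\W_0\|_F\lesssim\twonorm{\mtx{J}_\calF^\dagger\y}+(\Gamma/\alpha_0)\tn{\Pi_\Rcb(\y)}$, which plugged into the Rademacher term reproduces the variance contribution exactly.

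\textbf{Jacobian concentration and main obstacle.} For the linearization to be valid I need $\Jc(\W_\tau)$ to stay close to $\vrn\mtx{J}$ throughout training. At initialization, $\Jc(\W_0)\Jc(\W_0)^T$ is a sum of $k$ independent blocks (one per hidden unit), so matrix Bernstein gives $\|\Jc(\W_0)\Jc(\W_0)^T-\vrn^2\mtx{J}^2\|\le \zeta^2\vrn^2\alpha_0^2$ with failure probability $(2\K)^{-100}$ as soon as $k\gtrsim \Gamma^4\log n/(\zeta^4\bar\alpha^8)$. Along the trajectory, the bound $|\phi''|\le B$ implies $\|\Jc(\W)-\Jc(\W')\|\lesssim B\vrn\|\W-\W'\|_F/\sqrt{k\K}$, and combining this with the path-length bound above via an induction on $\tau$ keeps the Jacobian perturbation below $\zeta\vrn\alpha_0$ at every step — exactly what drives the $12(1+\Gamma/(\bar\alpha\sqrt[4]{n\opnorm{\X}^2}))\zeta$ term. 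The main obstacle will be making this perturbation argument tight enough to survive with only the polylogarithmic width assumed here: a uniform spectral approximation of $\Jc\Jc^T$ to the full $\vrn^2\mtx{J}^2$ would demand $k$ polynomial in $1/\lambda_{\min}(\bSi(\X))$, as in standard NTK proofs. Avoiding that requires the asymmetric cutoff analysis above — tight spectral control on $\calF$ but only an operator-norm bound on $\Rcb$ — together with a careful use of the exponential decay on $\calF$ to ensure the nuisance component never re-contaminates the information component during later iterations.
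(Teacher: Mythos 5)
Your overall architecture matches the paper's — generalization by uniform convergence over a ball around $\W_0$, residual dynamics split along information/nuisance spaces, Jacobian concentration via matrix-concentration over the $k$ independent hidden units — but several steps as stated would not go through, and one important ingredient is missing entirely.

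First, the central dynamical claim is the wrong mechanism. You assert that ``every singular value of $\Jc(\W_\tau)$ restricted to $\calF$ is at least $\vrn\alpha_0$'' and that the $\calF$-block of $\Iden-\eta\Jc(\W_\tau)\Jc(\W_\tau)^T$ therefore contracts. But $\calF$ is defined by the singular vectors of the \emph{kernel square root} $\mtx J=\bSi(\X)^{1/2}$, not by those of $\Jc(\W_\tau)$. An operator-norm bound $\|\Jc(\W_\tau)\Jc(\W_\tau)^T-\vrn^2\mtx J\mtx J^T\|\le\zeta^2\vrn^2\alpha_0^2$ does not give you a block-diagonal picture in the $\mtx J$-eigenbasis; the off-diagonal blocks can recouple $\calF$ and $\calS$, which is exactly the re-contamination issue you flag at the end but do not resolve. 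The paper sidesteps this by never projecting $\rb_\tau$ directly: it runs an auxiliary \emph{fixed} linearized recursion $\rbb_{\tau+1}=(\Iden-\eta\Jb\Jb^T)\rbb_\tau$ with a reference Jacobian $\Jb$ (constructed via the PSD square-root perturbation lemma so that $\Jb\Jb^T=\vrn^2\bSi(\X)$ and $\opnorm{\Jcb(\W_0)-\Jb}\le\eps_0$), and then bounds $\tn{\rb_\tau-\rbb_\tau}$ cumulatively (Lemma~\ref{grwth} and Lemma~\ref{lem growth} inside Theorem~\ref{many step thm}). That comparison-to-a-fixed-linear-system argument is what makes the asymmetric cutoff rigorous; your proposal gestures at it but does not supply a replacement.

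Second, you never control the initial network output, yet your final claim is stated in terms of $\Pi_{\Rcb}(\y)$ and $\mtx J_\calF^\dagger\y$ rather than the corresponding projections of $\rb_0=f(\W_0)-\y$. Without establishing that $\tn{f(\W_0)}\le\zeta\sqrt n$ (this is where the deliberately tiny $\vrn=\zeta/(50B\sqrt{\log 2K})$ is used, together with a Gaussian concentration bound for $\tn{\phi(\X\W_0^T)\vb}$ and a union bound over the $K$ output rows — Lemma~\ref{upresz} in the paper), the substitution $\rb_0\approx-\y$ is unjustified, and the bias and variance terms of the theorem cannot be written in terms of the labels.

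Third, your Rademacher step is too coarse in two respects. (i) You reduce the classification error to $\E_\Dc\tn{f(\x;\W_T)-\y}^2$ via Markov; that produces a bound quadratic in the residual norms, not the linear-in-$\tn{\Pi_{\Rcb}(\y)}/\sqrt n$ bound being claimed. The paper instead uses the bounded, $1$-Lipschitz surrogate $\ell(\y,\hat\y)=\min(1,\tn{\y-\hat\y})$, applies Maurer's vector-contraction inequality, and then upper-bounds the empirical surrogate loss by $\sqrt{\Lc(\W_T)}=\tn{f(\W_T)-\y}/\sqrt n$ — that is where the linear dependence comes from. (ii) The complexity of the relevant function class cannot be controlled by the Frobenius path length alone: Lemma~\ref{rad rad bound} in the paper gives $\mathcal R_S\lesssim\vrn B(M_\mathcal{W}/\sqrt n+R^2/\sqrt k)$ where $R$ is a \emph{row-wise} ($\ell_{2,\infty}$) bound on $\W_\tau-\W_0$, and this second term would dominate unless you also prove the row-wise estimate $\trow{\W_\tau-\W_0}\lesssim \vrn B\Gamma\opnorm{\X}\tn{\rb_0}/(\sqrt k\,\alpha^2)$ (equation~\eqref{row boundsss}). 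Your proposal has no analog of this row-wise control, so the $R^2/\sqrt k$ contribution is unaccounted for.
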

This theorem shows that even networks of moderate width can achieve a small generalization error if (1) the data has low-dimensional representation i.e.~the kernel is approximately low-rank and (2) the inputs and labels are semantically-linked i.e.~the label vector $\y$ mostly lies on the information space.\\
%
%

\noindent $\bullet$ {\bf{Bias--Variance decomposition:}} The generalization error has two core components: bias and variance. The bias component ${\tn{ \Pi_{\Rcb}(\y)}}/{\sqrt{n}}$ arises from the portion of the labels that falls over the nuisance space leading to a nonzero training error. The variance component is proportional to the distance $\tf{\W_T-\W_0}$ and arises from the growing model complexity as gradient descent strays further away from the initialization while fitting the label vector over the information space. If the label vector is aligned with the information space, bias term $\Pi_{\Rcb}(\y)$ will be small. Additionally, if the kernel matrix is approximately low-rank, we can set $\bar{\alpha}$ to ensure small variance even when the width grows at most logarithmically with the size of the training data as required by \eqref{hidnum}. In particular, using $\tn{\mtx{J}_{\calF}^\dagger\y}\leq \tn{\y}/\alpha_0\leq \sqrt{n}/\alpha_0$, the bound simplifies to
\begin{align}
\err{\W_{T}}\leq \frac{2}{\sqrt{n}}\tn{ \Pi_{\Rcb}(\y)}+{\frac{36\Gamma}{\bar{\alpha}\sqrt[4]{n\opnorm{\X}^2}}}+12\zeta+5\sqrt{\frac{\log(2/\delta)}{n}}+2\e^{-\Gamma},\label{simple bound}
\end{align}
which is small as soon as the label vector is well-aligned with the information subspace. We note however that our results continue to apply even when the kernel is not approximately low-rank. In particular, consider the extreme case where we select $\alpha_0=\sqrt{\lambda}:=\sqrt{\lambda_{\min}\left(\mtx{\Sigma}(\X)\right)}$. Then, the information space $\Rc$ spans $\R^{\K n}$ and the bias term disappears ($\tn{ \Pi_{\Rcb}(\y)}=0$) and 
\[
\tn{\mtx{J}_{\calF}^\dagger\y}=\tn{\mtx{J}^\dagger\y}=\sqrt{\y^T\bSi(\X)^{-1}\y}.
\]
In this case our results guarantee that 
\begin{align}
\label{simpgen}
\err{\W_{T}}\lesssim \frac{\sqrt{K}}{\sqrt{n}}\sqrt{\vct{y}^T\mtx{\Sigma}^{-1}(\X)\vct{y}}+\sqrt{\frac{\log(2/\delta)}{n}},
\end{align}
holds as long as $\bSi(\X)$ is invertible and the width of the network obeys
\begin{align}
\label{simwidth}
k\gtrsim \frac{n^2K^4\opnorm{\X}^4 \log n}{\lambda^4}
\end{align}
We note that in this special case our results improve upon the required width in recent literature \cite{arora2019fine}\footnote{Based on our understanding \cite{arora2019fine} requires the number of hidden units to be at least on the order of $k\gtrsim \frac{n^8}{\lambda^6}$. Note that using the fact that $\opnorm{\X}\le \sqrt{n}$ our result reduces the dependence on width by a factor of at least $\frac{n^4}{\lambda^2}$. We note that $\opnorm{\X}$ often scales with $\sqrt{\frac{n}{d}}$ so that the improvement in width is even more pronounced in typical instances.} that focuses on $\K=1$ and a conclusion of the form \eqref{simpgen}. However, as we demonstrate in our numerical experiments in practice $\lambda$ can be rather small or even zero (e.g. see the toy model in Section \ref{mix sec}) so that requirements of the form \eqref{simwidth} may require unrealistically (or even infinitely) wide networks. In contrast, as discussed above by harnessing the low-rank structure of the Jacobian our results show that neural networks generalize well as soon as the width grows at most logarithmically in the size of the training data (even when $\lambda=0$).

\noindent $\bullet$ {\bf{Small width is sufficient for generalization:}} Based on our simulations the M-NTK (or more specifically Jacobian at random initialization) indeed has low-rank structure with a few large eigenvalues and many smaller ones. As a result a typical scaling of the cut-off $\alpha_0$ is so that $\bar{\alpha}$ scales like a constant. In that case our result states that as soon as the number of hidden nodes are moderately large (e.g.~logarithmic in $n$) then good generalization can be achieved. Specifically we can achieve good generalization by using width on the order of $\log n$ and picking small values for $\zeta$ and $\bar{\alpha}$ and large values for $\Gamma$. 

\noindent $\bullet$ {\bf{Network size--Bias tradeoff:}} Based on the requirement \eqref{hidnum} if the network is large (in terms of \# of hidden units $k$), we can choose a small cut-off $\alpha_0$. This in turn allows us to enlargen the information space and reduce the training bias. In summary, as network capacity grows, we can gradually interpolate finer detail and reduce bias. On the other hand, choosing a properly large $\alpha_0$, we can obtain good bounds for even small network sizes $k$ as long as the portion of the labels that fall on the nuisance space is small. This is in stark contrast to related works \cite{arora2019fine,du2018gradient,allen2018convergence,cao2019generalization} where network size grows inversely proportional to the distance between the input samples or other notions of margin.

\noindent $\bullet$ {\bf{Fast convergence:}} We note that by setting learning rate to $\eta=\frac{1}{\vrn^2B^2\opnorm{\X}^2}$, the number of gradient iterations is upper bounded by $\frac{\Gamma }{\bar{\alpha}^2}$. Hence, the training speed is dictated by and is inversely proportional to the the smallest singular value over the information space. Specifically, when the Jacobian is sufficiently low-rank so that we can pick $\bar{\alpha}$ to be a constant, convergence on the information space is rather fast requiring only a {\em{constant number of iterations}} to converge to any fixed constant accuracy. See the proofs for further detail on the optimization dynamics of the training problem (e.g.~results/proofs for linear convergence of the empirical loss).


\subsection{Generalization guarantees with arbitrary initialization}
Our next result provides generalization guarantees from an arbitrary initialization which applies to pre-trained networks (e.g.~those that arise in transfer learning applications) as well as intermediate gradient iterates as the weights evolve. This result has a similar flavor to Theorem \ref{gen main} with the key difference that the information and nuisance spaces are defined with respect to any arbitrary initial Jacobian. This shows that if a pre-trained model\footnote{e.g. obtained by training with data in a related problem as is common in transfer learning.} provides a better low-rank representation of the data in terms of its Jacobian, it is more likely to generalize well. Furthermore, given its deterministic nature the theorem can be applied at any iteration, implying that if the Jacobians of any of the iterates provides a better low-rank representation of the data then one can provide sharper generalization guarantees.  
\begin{theorem} \label{nn deter gen} Let $\zeta, \Gamma, \bar{\alpha}$ be scalars obeying $\zeta\le 1/2$, $\Gamma\ge 1$, and $\bar{\alpha}\ge 0$ which determine the overall precision, cut-off and learning duration, respectively.\footnote{Note that this theorem and its conclusions hold for any choice of these parameters in the specified range.} Consider a training data set $\{(\x_i,\y_i)\}_{i=1}^n\in\R^d\times \R^K$ generated i.i.d.~according to a distribution $\Dc$ where the input samples have unit Euclidean norm. Also consider a neural net with $k$ hidden nodes as described in \eqref{neural net func} parameterized by $\W$ where the activation function $\phi$ obeys $\abs{\phi'(z)}, \abs{\phi''(z)}\le B$. Let $\W_0$ be the initial weight matrix with i.i.d.~$\Nn(0,1)$ entries. Also assume the output matrix has bounded entries obeying $\infnorm{\mtx{V}}\le \frac{\vrn}{\sqrt{kK}}$. Furthermore, set $\mtx{J}:=\mathcal{J}(\W_0)$ and define the information $\calF$ and nuisance $\calS$ subspaces and the truncated Jacobian $\mtx{J}_{\calF}$ associated with the reference/initial Jacobian $\mtx{J}$ based on a cut-off spectrum value $\alpha=\vrn B \bar{\alpha} \sqrt[4]{n}\sqrt{\opnorm{\X}}$. Also define the initial residual $\rb_0=f(\W_0)-\y\in\R^{nK}$ and pick $C_r>0$ so that $\frac{\tn{\rb_0}}{\sqrt{n}}\le C_r$. Suppose number of hidden nodes $k$ obeys
\begin{align}
 k\gtrsim\frac{C_r^2\Gamma^4}{\bar{\alpha}^8\vrn^2\zeta^2},\label{k bound det gen}
\end{align}
with $\Gamma\ge 1$ and tolerance level $\zeta$.
Run gradient descent updates \eqref{grad dec me} with learning rate $\eta\le \frac{1}{\vrn^2B^2\opnorm{\X}^2}$. Then, after $T=\frac{\Gamma }{\eta\alpha^2}$ iterations, with probability at least $1-\delta$, the generalization error obeys
\begin{align*}
\err{\W_{T}}&\leq \underbrace{\frac{2\tn{ \Pi_{\mathcal{N}}(\rb_0)}}{\sqrt{n}}}_{bias~term}+\underbrace{\frac{12\vrn B }{\sqrt{n}} \left(\twonorm{\mtx{J}_{\calF}^\dagger\rb_0}+\frac{\Gamma}{\alpha}\tn{ \Pi_{\Rcb}(\rb_0)}\right)}_{variance~term}+5\sqrt{\frac{\log(2/\delta)}{n}}+2C_r(\e^{-\Gamma}+\zeta).
\end{align*}
\end{theorem}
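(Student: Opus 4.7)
The plan is to follow an NTK-style local analysis adapted to the spectral decomposition of the reference Jacobian $\mtx{J}=\mathcal{J}(\W_0)$. Throughout, the residual $\rb_\tau:=f(\W_\tau)-\y$ is decomposed as $\rb_\tau=\Pi_{\calF}(\rb_\tau)+\Pi_{\calS}(\rb_\tau)$, and the strategy is to show fast geometric decay on $\calF$, controlled (slow) growth on $\calS$, and a Frobenius-norm trajectory bound of the form $\tf{\W_T-\W_0}\lesssim \twonorm{\mtx{J}_{\calF}^\dagger\rb_0}+\tfrac{\Gamma}{\alpha}\tn{\Pi_{\calS}(\rb_0)}$, which then feeds into a Rademacher complexity bound.

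First, I would establish the local stability of the Jacobian: since $|\phi''|\le B$ and the output weights satisfy $\infnorm{\V}\le \vrn/\sqrt{kK}$, a routine calculation shows $\opnorm{\mathcal{J}(\W)-\mathcal{J}(\W')}\le L\tf{\W-\W'}$ with $L\lesssim \vrn B^2\opnorm{\X}^2/\sqrt{k}$, and similarly that $\opnorm{\mathcal{J}(\W)}\le \vrn B\opnorm{\X}$. The condition \eqref{k bound det gen} on $k$ is tuned precisely so that $L\cdot R\ll \alpha$ for $R$ equal to the target trajectory radius; this makes $\mathcal{J}(\W_\tau)$ a multiplicative perturbation of $\mtx{J}$ throughout the optimization. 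I would then set up a standard induction: assume $\W_0,\ldots,\W_\tau$ lie in the Frobenius ball of radius $R$ around $\W_0$, analyze one gradient step, and close the induction by reverifying $\tf{\W_{\tau+1}-\W_0}\le R$.

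Next, I would analyze residual dynamics. Writing $\rb_{\tau+1}=\rb_\tau-\eta \mathcal{J}(\W_\tau^{\text{avg}})\mathcal{J}(\W_\tau)^T\rb_\tau$ via the fundamental theorem of calculus and using Jacobian stability, the update is approximately $\rb_{\tau+1}\approx(\Iden-\eta \mtx{J}\mtx{J}^T)\rb_\tau$ plus a small error. Projecting onto $\calF$ (where eigenvalues of $\mtx{J}\mtx{J}^T$ exceed $\alpha^2$) gives geometric contraction with rate $(1-\eta\alpha^2)$, while on $\calS$ the contraction factor is essentially $1$, so $\tn{\Pi_{\calS}(\rb_\tau)}$ stays within an additive $\zeta$-slack of $\tn{\Pi_{\calS}(\rb_0)}$. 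After $T=\Gamma/(\eta\alpha^2)$ iterations the information part decays to $e^{-\Gamma}\tn{\Pi_{\calF}(\rb_0)}$ plus perturbation. Summing the per-step movements $\tf{\W_{\tau+1}-\W_\tau}=\eta\|\mathcal{J}(\W_\tau)^T\rb_\tau\|$ separately over $\calF$ (telescoping/geometric sum bounded by $\twonorm{\mtx{J}_{\calF}^\dagger\rb_0}$) and over $\calS$ (bounded by $T\cdot \vrn B\opnorm{\X}\cdot \tn{\Pi_{\calS}(\rb_0)}\cdot \eta\lesssim \tfrac{\Gamma}{\alpha}\tn{\Pi_{\calS}(\rb_0)}$ once $\alpha$ is substituted) closes the trajectory-length bound.

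Finally, to convert the training-side guarantee into a classification bound I would invoke a Rademacher-complexity argument for the function class $\mathcal{F}_R=\{\x\mapsto f(\x;\W):\tf{\W-\W_0}\le R\}$ with $R$ equal to the trajectory bound. Smoothness of $\phi$ and the scaling of $\V$ yield an empirical Rademacher complexity of order $\vrn B\,R/\sqrt{n}$ per output coordinate, hence $\vrn B\,R\sqrt{K}/\sqrt{n}$ after a union/vector-contraction step. Standard symmetrization plus a McDiarmid bound delivers the $\sqrt{\log(2/\delta)/n}$ term. For the final conversion to classification error, I would use the fact that for one-hot labels the argmax is wrong only if $\tn{f(\x;\W)-\y}\ge 1/2$, so by Markov $\err{\W_T}\le 2\,\E_{(\x,\y)\sim\Dc}\tn{f(\x;\W_T)-\y}$; plugging in the training residual bound $\frac{1}{\sqrt{n}}\tn{\rb_T}\le \frac{1}{\sqrt{n}}\tn{\Pi_{\calS}(\rb_0)}+C_r(e^{-\Gamma}+\zeta)$ plus the uniform generalization gap gives the stated inequality. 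The main obstacle is the coupled induction: maintaining simultaneously that (i) $\W_\tau$ stays within the Frobenius ball whose radius is itself \emph{derived} from the information/nuisance decomposition, and (ii) the Jacobian remains close enough to $\mtx{J}$ that the spectral split into $\calF$ and $\calS$ continues to govern the dynamics; matching constants so that the width requirement collapses to \eqref{k bound det gen} is the delicate bookkeeping.
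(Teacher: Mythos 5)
Your overall plan---track a spectral split of the reference Jacobian $\mtx{J}=\mathcal{J}(\W_0)$ into $\calF$ and $\calS$, prove geometric contraction on $\calF$ and near-identity dynamics on $\calS$ by comparing the nonlinear iterates to linearized NTK iterates, bound the Frobenius trajectory $\tf{\W_T-\W_0}\lesssim \twonorm{\mtx{J}_{\calF}^\dagger\rb_0}+\tfrac{\Gamma}{\alpha}\tn{\Pi_{\calS}(\rb_0)}$, and then convert to classification error via Rademacher complexity and a Markov/Lipschitz-loss argument---is precisely the route the paper takes (Theorem~\ref{many step thm} for the linearized comparison, Lemma~\ref{nn deter}--\ref{nn deter2} for the neural-net instantiation, Lemma~\ref{generalize corollary} for the Rademacher-to-classification conversion).

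However, there is a genuine gap in your Rademacher complexity step. You claim a complexity of order $\vrn B R/\sqrt{n}$ for the class $\{\x\mapsto f(\x;\W):\tf{\W-\W_0}\le R\}$. That is the complexity of the \emph{linearized} model; for the actual network the second-order Taylor remainder $\phi''(\cdot)\langle\w_\ell-\w_\ell^0,\x_i\rangle^2$ does not vanish, and over a pure Frobenius ball it cannot be controlled tightly. The paper handles this by working over the smaller class $\{\W:\tf{\W-\W_0}\le M_{\mathcal W},\ \trow{\W-\W_0}\le R/\sqrt{k}\}$, whose Rademacher complexity (Lemma~\ref{rad rad bound}) carries an extra term $\vrn B R^2/\sqrt{k}$ from the row-wise constraint. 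To feed this lemma you also need a separate \emph{row-wise} trajectory bound, $\trow{\W_\tau-\W_0}\lesssim \vrn B\Gamma\opnorm{\X}\tn{\rb_0}/(\sqrt{k}\,\alpha^2)$, proven in Lemmas~\ref{nn deter}--\ref{nn deter2}; you never derive any $\ell_{2,\infty}$ control on the iterates, so the $R^2/\sqrt{k}$ term is unaccounted for, and this is exactly where width requirement \eqref{k bound det gen} is consumed (see \eqref{R2k}) to absorb that term into $C_r\zeta$. A secondary, smaller imprecision: in your nuisance-trajectory estimate you bound $\eta\|\mathcal{J}(\W_\tau)^T \Pi_{\calS}\rb_\tau\|$ by $\eta\vrn B\opnorm{\X}\,\tn{\Pi_{\calS}(\rb_0)}$, which after $T=\Gamma/(\eta\alpha^2)$ steps gives $\Gamma\vrn B\opnorm{\X}/\alpha^2$, not $\Gamma/\alpha$; the required factor of $\alpha$ comes from using the truncated spectral action $\|\mtx{J}^T\Pi_{\calS}\|\le\alpha$ (i.e., the reference Jacobian's nuisance singular values are at most $\alpha$), not the crude operator-norm bound.
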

As with the random initialization result, this theorem shows that as long as the initial residual is sufficiently correlated with the information space, then high accuracy can be achieved for neural networks with moderate width. As with its randomized counter part this result also allows us to study various tradeoffs between bias-variance and network size-bias. Crucially however this result does not rely on random initialization. The reason this is particularly important is two fold. First, in many scenarios neural networks are not initialized at random. For instance, in transfer learning the network is pre-trained via data from a different domain. Second, as we demonstrate in Section \ref{numeric sec} as the iterates progress the Jacobian mapping seems to develop more favorable properties with the labels/initial residuals becoming more correlated with the information space of the Jacobian. As mentioned earlier, due its deterministic nature the theorem above applies in both of these scenarios. In particular, if a pre-trained model provides a better low-rank representation of the data in terms of its Jacobian, it is more likely to generalize well. Furthermore, given its deterministic nature the theorem can be applied at any iteration by setting $\vct{\theta}_0=\vct{\theta}_\tau$, implying that if the Jacobians of any of the iterates provides a better low-rank representation of the data then one can provide sharper generalization guarantees. Our numerical experiments demonstrate that the Jacobian of the neural network seems to adapt to the dataset over time with a more substantial amount of the labels lying on the information space. While we have not formally proven such an adaptation behavior in this paper, we hope to develop rigorous theory demonstrating this adaptation in our future work.Such a result when combined with our arbitrary initialization guarantee above can potentially provide significantly tighter generalization bounds. This is particularly important in light of a few recent literature \cite{chizat2018note, ghorbani2019linearized, yehudai2019power} suggesting a significant gap between generalization capabilities of kernel methods/linearized neural nets when compared with neural nets operating beyond a linear or NTK learning regime (e.g.~mean field regime). As a result we view our deterministic result as a first step towards moving beyond the NTK regime.


\subsection{Case Study: Gaussian mixture model}\label{mix sec}
To illustrate a concrete example, we consider a distribution based on a Gaussian mixture model consisting of $K$ classes where each class consists of $C$ clusters. 

\begin{definition} [Gaussian mixture model]\label{GMM} Consider a data set of size $n$ consisting of input/label pairs $\{(\vct{x}_i,\y_i)\}_{i=1}^n\in\R^d\times \R^K$. We assume this data set consists of $K$ classes each comprising of $C$ clusters with a total of $KC$ clusters. We use the class/cluster pair to index the clusters with $(\ell,\widetilde{\ell})$ denoting the $\widetilde{\ell}$th cluster from the $\ell$th class. We assume the data set in cluster $(\ell,\widetilde{\ell})$ is centered around a cluster center $\vct{\mu}_{\ell,\widetilde{\ell}}\in\R^d$ with unit Euclidian norm. We assume the data set is generated i.i.d.~with the cluster membership assigned uniformlych of the clusters with probability $\frac{1}{KC}$\footnote{This assumption is for simplicity of exposition. Our results (with proper modification) apply to any discrete probability distribution over the clusters.} and the input data points associated with the cluster indexed by $(\ell,\widetilde{\ell})$ are generated i.i.d.~according to $\mathcal{N}\left(\vct{\mu}_{\ell,\widetilde{\ell}},\frac{\sigma^2}{d}\mtx{I}_d\right)$ with the corresponding label set to the one-hot encoded vector associated with class $\ell$ i.e.~$\vct{e}_\ell$. We note that in this model the cluster indexed by $(\ell,\widetilde{\ell})$ contains $\widetilde{n}_{\ell,\widetilde{\ell}}$ data points satisfying $\E[\widetilde{n}_{\ell,\widetilde{\ell}}]=\widetilde{n}=\frac{n}{KC}$.
\end{definition}
\begin{figure}
	\centering
	\begin{tikzpicture}
	\node at (0,0) {\includegraphics[scale=1]{./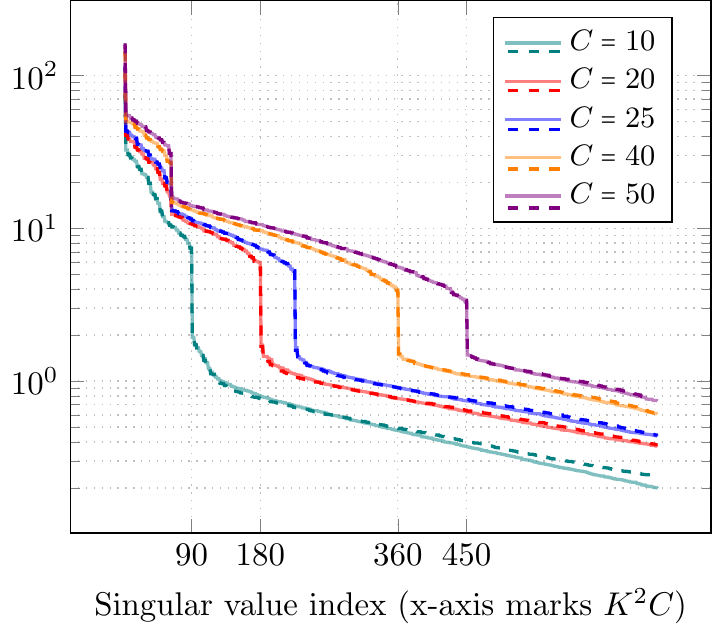}};
	\end{tikzpicture}
	\caption{The singular values of the normalized Jacobian spectrum $\sqrt{\frac{KC}{n}}\Jc(\W_0)$ of a one-hidden layer neural network with $K=3$ outputs. Here, the data set is generated according to the Gaussian mixture model in Definition \ref{GMM} with $K=3$ classes and $\sigma=0.1$. We pick the cluster center so that the distance between any two is at least $0.5$. We consider two cases: $n=30C$ (solid line) and $n=60C$ (dashed line). These plots demonstrate that the top $KC$ singular values grow with the square root of the size of the data set ($\sqrt{n}$).}
	\label{neuralnet}
	\label{synthetic_fig}
\end{figure}

This distribution is an ideal candidate to demonstrate why the Jacobian of the network exhibits low-rank or bimodal structure. Let us consider the extreme case $\sigma=0$ where we have a discrete input distribution over the cluster centers. In this scenario, we can show that the multi-class Jacobian matrix is at most rank 
\[
\K^2 C=\text{\#~of~output~nodes}~\times~\text{\# of distinct inputs}.
\]
as there are (i) only $\K C$ distinct input vectors and (ii) $\K$ output nodes. We can thus set the information space to be the top $\K^2 C$ eigenvectors of the multiclass kernel matrix $\bSi(\X)$. As formalized in the appendix, it can be shown that
\begin{itemize}
\item The singular values of the information space grow proportionally with $n/KC$.
\item The concatenated label vector $\y$ perfectly lies on the information space.
\end{itemize}
In Figure \ref{synthetic_fig} we numerically verify that the approximate rank and singular values of the Jacobian indeed scale as above even when $\sigma>0$. The following informal theorem leverages these observations to establish a generalization bound for this mixture model. This informal statement is for exposition purposes. See Theorem \ref{app thm} in Appendix \ref{Jacclust} for a more detailed result capturing the exact dependencies (e.g.~$\zeta,B,\log n$). In this theorem we use $\gtrsim$ to denote inequality up to constant/logarithmic factors. 
\begin{theorem} [Generalization for Gaussian Mixture Models-simplified] Consider a data set of size $n$ consisting of input/label pairs $\{(\vct{x}_i,\y_i)\}_{i=1}^n\in\R^d\times \R^K$ generated according to a Gaussian mixture model per Definition \ref{GMM} with the standard deviation obeying $\sigma\lesssim\frac{K}{n}$. Let $\M=[\bmu_{1,1}~\dots~\bmu_{\K,C}]^T$ be the matrix obtained by aggregating all the cluster centers as rows and let $\vct{g}\in\R^d$ be a Gaussian random vector distributed as $\Nn(0,\Iden_d)$. Also let $\bSi(\M)\in\R^{KC\times KC}$ be the M-NTK associated with the cluster centers $\mtx{M}$ per Definition \ref{nneig}. Furthermore, set $\la_{\mtx{M}}=\la_{\min}(\bSi(\M))$, and assume $\la_{\mtx{M}}>0$. Also, assume the number of hidden nodes obeys
\[
k\gtrsim \frac{\Gamma^4K^8C^4}{\la_{\mtx{M}}^4}.
\]
Then, after running gradient descent for $T=\frac{2\Gamma K^2C}{\la_{\mtx{M}}}$ iterations, the model obeys
\[
\err{\W_{T}}\lesssim \Gamma \sqrt{\frac{K^2C}{n\la_{\mtx{M}}}},
\]
with high probability.
\end{theorem}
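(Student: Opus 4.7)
The proof proposal is to apply Theorem~\ref{gen main} after establishing a tight structural description of the M-NTK $\bSi(\X)$ for data drawn from the Gaussian mixture model. The central observation is that in the noiseless limit $\sigma=0$ the data support collapses to only the $KC$ distinct cluster centers, so after grouping samples by cluster $\bSi(\X)$ factors (up to permutation) as the Kronecker product of a block-constant ``within-cluster'' indicator matrix---whose nonzero block $(\ell,\widetilde{\ell})$ has scale $\widetilde{n}_{\ell,\widetilde{\ell}}\approx n/(KC)$---with the kernel $\bSi(\M)$ on the cluster centers. Consequently its top $K^2C$ eigenvalues are of order $\tfrac{n}{KC}\la_i(\bSi(\M))$, all remaining eigenvalues vanish, and the concatenated label vector $\y$---which equals $\vct{e}_\ell$ on every sample of class $\ell$---lies exactly in the span of the corresponding eigenvectors.

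For $\sigma>0$ I would treat $\bSi(\X)$ as a perturbation of this reference kernel by Taylor-expanding $\phi'(\w^T\x_i)$ around $\phi'(\w^T\bmu_{\ell,\widetilde{\ell}})$ and combining $\abs{\phi''}\le B$ with Gaussian concentration on the residuals $\x_i-\bmu_{\ell,\widetilde{\ell}}$. Under the hypothesis $\sigma\lesssim K/n$ the resulting operator-norm perturbation is small enough that Weyl's inequality preserves the spectral gap between the $K^2C$-dimensional cluster block and zero, and the nuisance-space projection of $\y$ inherits only an $O(\sigma\sqrt{n})$ magnitude, so $\tn{\Pi_\Rcb(\y)}/\sqrt{n}$ is negligible. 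I would then choose the cutoff in Theorem~\ref{gen main} so that $\alpha_0^2\asymp n\la_\M/(KC)$, which isolates precisely the $K^2C$ cluster-structure directions as the information space $\calF$ and ensures $\smn{\mtx{J}_\calF}\gtrsim\sqrt{n\la_\M/(KC)}$.

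With these spectral estimates in hand, Theorem~\ref{gen main} yields a negligible bias term, and the dominant variance contribution is controlled by $\tn{\mtx{J}_\calF^\dagger\y}\le \tn{\y}/\smn{\mtx{J}_\calF}\lesssim \sqrt{KC/\la_\M}$, so the full variance bound $\tfrac{12B\sqrt{K}}{\sqrt{n}}\tn{\mtx{J}_\calF^\dagger\y}$ collapses to the advertised $\sqrt{K^2C/(n\la_\M)}$ rate, with the $\Gamma$ factor coming from the remaining tail terms in \eqref{simple bound}. Translating the generic width hypothesis $k\gtrsim \Gamma^4\log n/(\zeta^4\bar{\alpha}^8)$ through the implied scaling $\bar{\alpha}^2\sim \la_\M/(KC)$ produces the stated requirement $k\gtrsim \Gamma^4 K^8 C^4/\la_\M^4$, and the iteration budget $T=\Gamma K/(\eta\vrn^2\alpha_0^2)$ reduces to $\Theta(\Gamma K^2C/\la_\M)$, matching the theorem hypothesis exactly.

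The main obstacle is the perturbation analysis of paragraph two: one must simultaneously show that the within-cluster Gaussian noise (i) drifts the top $K^2C$ eigenvalues of $\bSi(\X)$ by strictly less than the spectral gap $\Theta(n\la_\M/(KC))$, (ii) tilts the label-carrying subspace by an amount small enough that $\tn{\Pi_\Rcb(\y)}$ stays below the target accuracy, and (iii) absorbs the $O(\sqrt{n/(KC)})$ fluctuations of the cluster occupancies $\widetilde{n}_{\ell,\widetilde{\ell}}$ around $n/(KC)$ without degrading the lower bound on $\smn{\mtx{J}_\calF}$. Each of these three checks translates (via the smoothness of $\phi$ and standard Gaussian and sub-Gaussian concentration) into an explicit budget on $\sigma$, and the $\sigma\lesssim K/n$ threshold in the theorem statement is precisely what is needed so that all three are comfortably satisfied; once these pieces are in place, the remainder is a mechanical instantiation of Theorem~\ref{gen main}.
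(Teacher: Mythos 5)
Your proposal matches the paper's argument almost exactly: the paper first proves the rank-$K^2C$ structure, the $\asymp n/(KC)$ eigenvalue scaling, and the label alignment of the noiseless kernel (Theorem~\ref{mix mod}), then bounds the kernel perturbation under noise via a $\phi'$-Lipschitz plus Gaussian-concentration estimate followed by Davis--Kahan to control the subspace drift (Lemma~\ref{perturb}), and finally instantiates Theorem~\ref{gen main} with cutoff $\alpha_0^2\asymp n\la_{\mtx{M}}/(KC)$, which is exactly the route you describe (Theorem~\ref{app thm}). The one small imprecision in your description is that the noiseless kernel is not a Kronecker of a cluster-indicator matrix with $\bSi(\mtx{M})$ but rather the sandwich $\Iden_K\otimes\bigl(\mtx{U}_{\mathcal{S}}\widetilde{\bSi}(\mtx{M})\mtx{U}_{\mathcal{S}}^T\bigr)$ for a tall $0$--$1$ cluster-membership matrix $\mtx{U}_{\mathcal{S}}$, though the spectral conclusions you draw from it are nonetheless the correct ones.
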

We note that $\la_{\mtx{M}}$ captures how diverse the cluster centers are. In this sense $\la_{\mtx{M}}>0$ intuitively means that neural network, specifically the neural tangent kernel, is sufficiently expressive to interpolate the cluster centers. In fact when the cluster centers are in generic position $\la_{\mtx{M}}$ scales like a constant \cite{onehidden}. This theorem focuses on the regime where the noise level $\sigma$ is small. In this case we show that one can achieve good generalization as soon as the number of data points scale with the square of the number classes times the total number of cluster (i.e.~$n\gtrsim K^2C$) which is the effective rank of the M-NTK matrix. We note that this result follows from our main result with random initialization by setting the cutoff level at $\alpha_0^2\sim\frac{\la_{\M}n}{KC}$. This demonstrates that in this model $\bar{\alpha}$ does indeed scale as a constant. Finally, the required network width is independent of $n$ and only depends on $K$ and $C$ specifically we require $k\gtrsim K^8C^4$. This is in stark contrast with \cite{arora2019fine} in the binary case. To the best of understanding \cite{arora2019fine} requires $k\gtrsim \frac{n^8}{\la_{\X}^6}$ which depends on $n$ (in lieu of $K$ and $C$) and the minimum eigenvalue $\la_{\X}$ of the NTK matrix $\bSi(\X)$ (rather than $\la_{\M}$). Furthermore, in this case as $\sigma\rightarrow 0$, $\bSi(\X)$ becomes rank deficient and $\la_{\X}\rightarrow 0$ so that the required width of \cite{arora2019fine} grows to infinity.




\subsection{Prior Art}\vspace{-0pt}\label{priorart}
Neural networks have impressive generalization abilities even when they are trained with more parameters than the size of the dataset \cite{zhang2016understanding}. Thus, optimization and generalization properties of neural networks have been the topic of many recent works \cite{zhang2016understanding}. Below we discuss related work on classical learning theory as well as optimization and implicit bias.

\noindent{\bf{Statistical learning theory:}} Statistical properties of neural networks have been studied since 1990's \cite{anthony2009neural, bartlett1999almost,bartlett1998sample}. With the success of deep networks, there is a renewed interest in understanding capacity of the neural networks under different norm constraints or network architectures \cite{dziugaite2017computing,arora2018stronger,neyshabur2017exploring,golowich2017size}. \cite{Bartlett:2017aa,neyshabur2017pac} established tight sample complexity results for deep networks based on the product of appropriately normalized spectral norms. See also \cite{nagarajan2019deterministic} for improvements via leveraging various properties of the inter-layer Jacobian and \cite{long2019size} for results with convolutional networks. Related, \cite{arora2018stronger} leverages compression techniques for constructing tighter bounds. \cite{yin2018rademacher} jointly studies statistical learning and adversarial robustness. These interesting results, provide generalization guarantees for the optimal solution to the empirical risk minimizer. In contrast, we focus on analyzing the generalization dynamics of gradient descent iterations. 

\noindent{\bf{Properties of gradient descent:}} There is a growing understanding that solutions found by first-order methods such as gradient descent have often favorable properties. Generalization properties of stochastic gradient descent is extensively studied empirically \cite{keskar2016large,hardt2015train,sagun2017empirical,chaudhari2016entropy,hoffer2017train,goel2017learning,goel2018learning}. For linearly separable datasets, \cite{soudry2018implicit,gunasekar2018implicit,brutzkus2017sgd,ji2018gradient,ji2018risk} show that first-order methods find solutions that generalize well without an explicit regularization for logistic regression. An interesting line of work establish connection between kernel methods and neural networks and study the generalization abilities of kernel methods when the model interpolates the training data \cite{dou2019training,belkin2018reconciling,Belkin:2018aa, belkin2019two, Liang:2018aa, Belkin:2018ab}. \cite{chizat2018global,song2018mean,mei2018mean,Sirignano:2018aa, Rotskoff:2018aa} relate the distribution of the network weights to Wasserstein gradient flows using mean field analysis. This literature is focused on asymptotic characterizations rather than finite-size networks.

\noindent{\bf{Global convergence and generalization of neural nets:}} Closer to this work, recent literature \cite{cao2019generalization,arora2019fine,ma2019comparative,allen2018learning} provides generalization bounds for overparameterized networks trained via gradient descent. Also see \cite{li2018visualizing,Huang2019UnderstandingGT} for interesting visualization of the optimization and generalization landscape. Similar to Theorem \ref{gen main}, \cite{arora2019fine} uses the NTK to provide generalization gurantees. \cite{li2019gradient} leverages low-rank Jacobian structure to establish robustness to label noise. These works build on global convergence results of randomly initialized neural networks \cite{du2018gradient, du2018gradient2, allen2018convergence,chizat2018note,zhang2019training,nitanda2019refined,Oymak:2018aa,zou2018stochastic} which study the gradient descent trajectory via comparisons to a a linearized Neural Tangent Kernel (NTK) learning problem. These results however typically require unrealistically wide networks for optimization where the width grows poly-inversely proportional to the distance between the input samples. Example distance measures are class margin for logistic loss and minimum eigenvalue of the kernel matrix for least-squares. Our work circumvents this by allowing a capacity-dependent interpolation. We prove that even rather small networks (e.g.~of constant width) can interpolate the data over a low-dimensional information space without making restrictive assumptions on the input. This approach also leads to faster convergence rates. In terms of generalization, our work has three distinguishing features: (a) bias-variance tradeoffs by identifying information/nuisance spaces, (b) no margin/distance/minimum eigenvalue assumptions on data, (c) the bounds apply to multiclass classification as well as pre-trained networks (Theorem \ref{nn deter gen}).

\section{Numerical experiments}\label{numeric sec}
\textbf{Experimental setup.}  We present experiments supporting our theoretical findings on the CIFAR-10 dataset, which consists of $50k$ training images  and $10k$ test images in $10$ classes. For our experiments, we reduced the number of classes to $3$ (automobile, airplane, bird) and subsampled the training data such that each class is represented by $3333$ images ($9999$ in total). This is due to the fact that calculating the full spectrum of the Jacobian matrix over the entire data set is computationally intensive\footnote{We plan to perform more comprehensive set of experiments by calculating the Jacobian spectrum in a distributed manner.}. For testing, we used all examples of the $3$ classes ($3000$ in total). In all of our experiments we set the information space to be the span of the top 50 singular vectors (out of total dimension of $\K n\approx 30000$).

We demonstrate our results on ResNet20, a state-of-the-art architecture with a fairly low test error on this dataset ($8.75\%$ test error reported on $10$ classes) and relatively few parameters ($0.27M$). In order to be consistent with our theoretical formulation we made the following modifications to the default architecture: (1) we turned off batch normalization and (2) we did not pass the network output through a soft-max function. We trained the network using a least-squares loss with SGD with batch size $128$ and standard data augmentation (e.g.~random crop and flip). We set the initial learning rate to $0.01$ and adjusted the learning rate schedule and number of epochs depending on the particular experiment so as to achieve a good fit to the training data quickly. The figures in this section depict the minimum error over a window consisting of the last 10 epochs for visual clarity. We also conducted two sets of experiments to illustrate the results on uncorrupted and corrupted data.


\noindent\textbf{Experiments without label corruption.} First, we present experiments on the original training data described above with no label corruption. We train the network to fit to the training data by using $400$ epochs and decreasing the learning rate at $260$ and $360$ epochs by a factor of $10$. 
\begin{figure}
\centering
		\includegraphics[scale=0.9]{./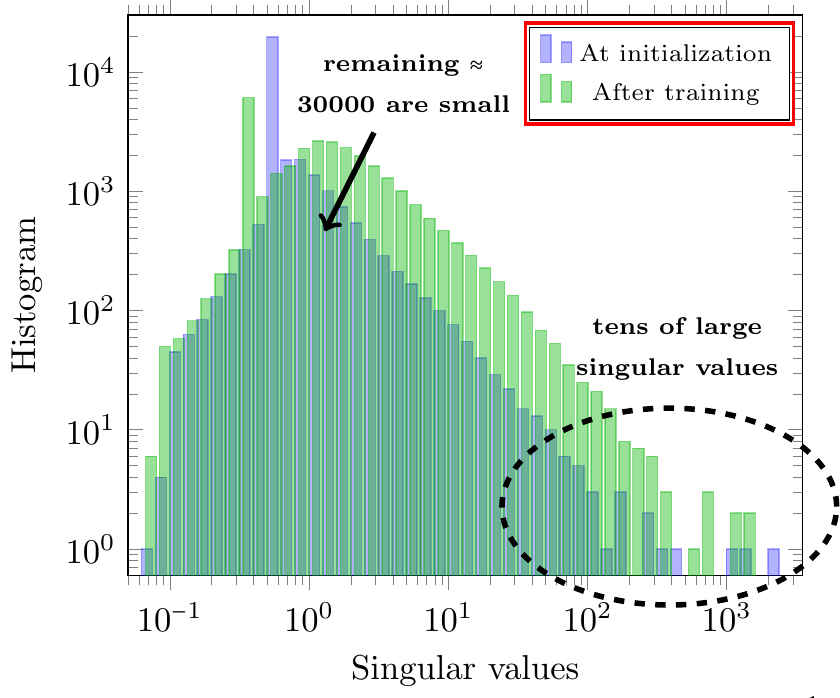}
		\captionof{figure}{Histogram of the singular values of the initial and final Jacobian of the neural network during training.}\label{exp:fig_hist}
		\end{figure}
		
In Figure \ref{exp:fig_hist} we plot the histogram of the eigenvalues of the Jacobian calculated on the training data at initialization and after training. This figure clearly demonstrates that the Jacobian has low-rank structure as there are tens of large singular values with the remaining majority of the spectrum consisting of small singular values. This observation serves as a natural basis for decomposition of the label space into the information space $\calF$ (large singular values, low-dimensional) and nuisance space $\calS$ (small singular values, high-dimensional). 

\begin{table}
\centering
{
		\centering
		\begin{tabular}{|l||c|c|c||c|c|r|}
			\hline
			&$\frac{\twonorm{\Pi_{\calF}( \y)}}{\twonorm{\y}}$ & 
			$\frac{\twonorm{\Pi_{\calS}( \y)}}{\twonorm{\y}}$ & 
			$\frac{\twonorm{\mtx{J}_{\calF}^{\dagger}\y}}{\twonorm{\y}}$ & 
			$\frac{\twonorm{\Pi_{\calF}( \vct{r}_0)}}{\twonorm{\vct{r}_0}}$  & 
			$\frac{\twonorm{\Pi_{\calS}( \vct{r}_0)}}{\twonorm{\vct{r}_0}}$ & 
			$\frac{\twonorm{\mtx{J}_{\calF}^{\dagger}\vct{r}_0}}{\twonorm{\vct{r}_0}}$ \\
			\hlineB{2.5}
			\textbf{$\mtx{J}_{init}$} & 0.724 &  0.690 & $5.44\cdot 10^{-3}$&  0.886  & 0.465& $4.10\cdot 10^{-3}$\\
			\hline
			\textbf{$\mtx{J}_{final}$} & 0.987 &  0.158 &  $3.16\cdot 10^{-3}$&  0.976  & 0.217&$3.43\cdot 10^{-3}$\\
			\hline
		\end{tabular}
	}
	\captionof{table}{Depiction of the alignment of the initial label/residual with the information/nuisance space using uncorrupted data and a Multi-class ResNet20 model trained with SGD.}\label{exp:table_nolc}
\end{table}

Our theory predicts that the sum of $\twonorm{\mtx{J}_{\calF}^{\dagger}\y}$ and $\twonorm{\Pi_{\calS}\left(\y\right)}$ determines the classification error (Theorem \ref{gen main}). Table \ref{exp:table_nolc} collects these values for the initial and final Jacobian. These values demonstrate that the label vector is indeed correlated with the top eigenvectors of both the initial and final Jacobians. An interesting aspect of these results is that this correlation increases from the initial to the final Jacobian so that more of the label energy lies on the information space of the final Jacobian in comparison with the initial Jacobian. Stated differently, we observe a significant adaptation of the Jacobian to the labels after training compared to the initial Jacobian so that our predictions become more and more accurate as the iterates progress. In particular, Table \ref{exp:table_nolc} shows that more of the energy of both labels and initial residual $\rb_0$ lies on the information space of the Jacobian after training. Consequentially, less energy falls on the nuisance space, while $\twonorm{\mtx{J}_{\calF}^{\dagger}\y}$ remains relatively small resulting in better generalization. 

\begin{figure}[t]
	\begin{subfigure}[b]{0.48\textwidth}
	\centering
	\includegraphics[scale=1.2]{./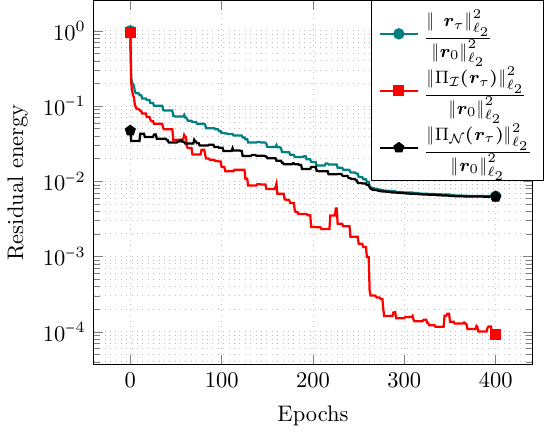}
		\caption{Residual along the information and nuisance spaces of the final Jacobian ${\mtx{J}}_{\text{final}}$ using training data.}\label{exp:fig_nolc_train}
\end{subfigure}~		
		\begin{subfigure}[b]{0.48\textwidth}
		\centering
		\includegraphics[scale=1.2]{./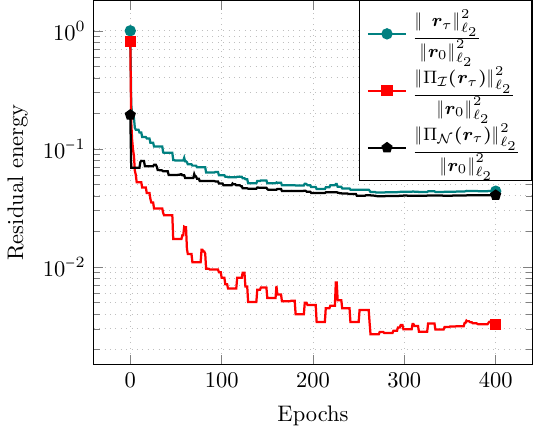}
		\caption{Residual along the information/nuisance spaces of the final Jacobian ${\mtx{J}}_{\text{final}}$ using test data.}      \label{exp:fig_nolc_test}
		\end{subfigure}
		\\
		\begin{center}
		\begin{subfigure}[c]{0.48\textwidth}
		\centering
		\includegraphics[scale=1.2]{./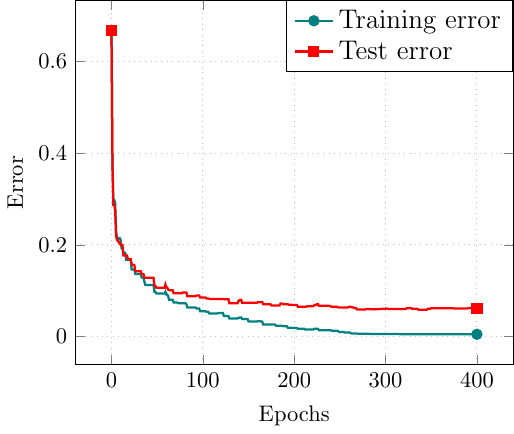}
		\caption{Training and test miss-classification error.}
		\label{exp:fig_nolc_err}
	\end{subfigure}
	\end{center}
	\caption{Evolution of the residual ($\rb_\tau=f(\W_\tau)-\y$) and misclassification error on training and test data without label corruption using SGD.}
	 \label{exp:fig_nolc}
\end{figure}

We also track the projection of the residual $\rb_{\tau}$ on the information and nuisance subspaces throughout training on both training and test data and depict the results in Figures \ref{exp:fig_nolc_train} and \ref{exp:fig_nolc_test}. In agreement with our theory, these plots show that learning on $\calF$ is fast and the residual energy decreases rapidly on this space. On the other hand, residual energy on $\calS$ goes down rather slowly and the decrease in total residual energy is overwhelmingly governed by $\calF$, suggesting that most information relevant to learning lies in this space. We also plot the training and test error in Figure \ref{exp:fig_nolc_err}. We observe that  as learning progresses, the residual on both spaces decrease in tandem with training and test error.

%
%
%
%
%

In our final experiment with uncorrupted data we focus on training the model with an Adam optimizer with a learning rate of $0.001$. We depict the results in Figure \ref{exp:fig_adam}. We observe that due to the built-in learning rate adaptation of Adam, perfect fitting to training data is achieved in fewer iterations compared to SGD. Interestingly, the residual energy on the information space drops significantly faster than in the previous experiment with simple SGD (without Adam). In particular, after $100$ epochs the fraction of the residual on the information space falls below $4 \cdot 10^{-4}$  with Adam ($\twonorm{\Pi_{\mathcal{I}}(\rb_\tau)}^2/\twonorm{\rb_0}^2\le 4 \cdot 10^{-4}$) versus $10^{-2}$ for the SGD on the final Jacobian. This suggests Adam obtains semantically relevant features significantly faster. Moreover, Table \ref{exp:table_nolc_adam} shows that the Jacobian adapts to both the labels and initial residual even faster than SGD on this dataset.

\begin{figure}[t!]
	\begin{subfigure}[t]{0.5\textwidth}
		\centering
		\includegraphics[scale=1.4]{./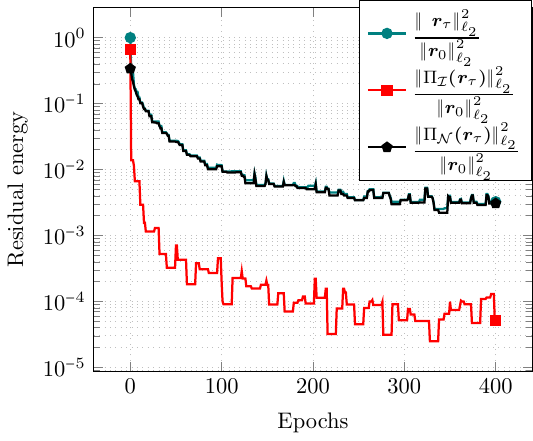}
		\caption{Residual along the information and nuisance spaces of the initial Jacobian ${\mtx{J}}_{\text{init}}$ using training data.}
\label{exp:fig_adam_init}
	\end{subfigure}~~
	\begin{subfigure}[t]{0.5\textwidth}
		\centering
		\includegraphics[scale=1.4]{./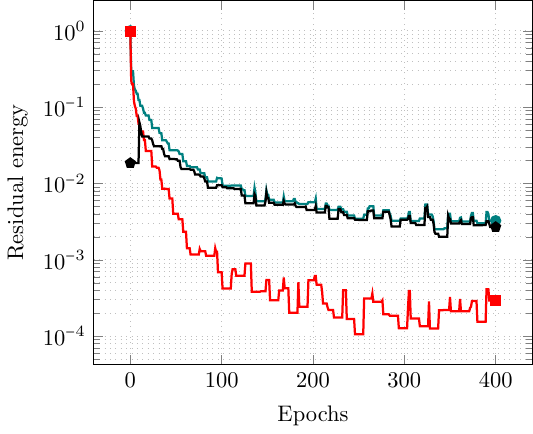}
		\caption{Residual along the information and nuisance spaces of the final Jacobian ${\mtx{J}}_{\text{final}}$ using training data.}\label{exp:fig_adam_final}
	\end{subfigure}\vspace{0.1cm}
	\caption{Evolution of the residual ($\rb_\tau=f(\W_\tau)-\y$) on training data without label corruption using ADAM.}\label{exp:fig_adam}
\end{figure}
\begin{table}
\centering
	{
		\centering
		\begin{tabular}{|l||c|c|c||c|c|r|}
			\hline
			&$\frac{\twonorm{\Pi_{\calF}( \y)}}{\twonorm{\y}}$ & 
			$\frac{\twonorm{\Pi_{\calS}( \y)}}{\twonorm{\y}}$ & 
			$\frac{\twonorm{\mtx{J}_{\calF}^{\dagger}\y}}{\twonorm{\y}}$ & 
			$\frac{\twonorm{\Pi_{\calF}( \vct{r}_0)}}{\twonorm{\vct{r}_0}}$  & 
			$\frac{\twonorm{\Pi_{\calS}( \vct{r}_0)}}{\twonorm{\vct{r}_0}}$ & 
			$\frac{\twonorm{\mtx{J}_{\calF}^{\dagger}\vct{r}_0}}{\twonorm{\vct{r}_0}}$ \\
			\hlineB{2.5}
			\textbf{$\mtx{J}_{init}$} & 0.702 &  0.712 & $5.36\cdot 10^{-3}$&  0.814  & 0.582& $4.43\cdot 10^{-3}$\\
			\hline
			\textbf{$\mtx{J}_{final}$} & 0.997 &  0.078 &  $3.10\cdot 10^{-3}$&  0.991  & 0.136&$3.06\cdot 10^{-3}$\\
			\hline
		\end{tabular}
	}
	\captionof{table}{Depiction of the alignment of the initial label/residual with the information/nuisance space using uncorrupted data and a Multi-class ResNet20 model trained with Adam.}\label{exp:table_nolc_adam}
\end{table}

\noindent\textbf{Experiments with 50\% label corruption.} In our next series of experiments we study the effect of corruption. Specifically, we corrupt $50\%$ of the labels by randomly picking a label from a (strictly) different class. We train the network for $800$ epochs and divide the learning rate by $10$ at $700$ epochs to fit to the training data.

Similar to the uncorrupted case, we track the projection of the residual $\rb_{\tau}$ on the information and nuisance spaces throughout training on both training and test data and depict the results in Figures \ref{exp:fig_lc50_train} and \ref{exp:fig_lc50_test}. We also track the train and test misclassification error in Figure \ref{exp:fig_lc50_err}. From Figure  \ref{exp:fig_lc50_err} it is evident that while the training error steadily decreases, test error exhibits a very different behavior from the uncorrupted experiment. In the first phase, test error drops rapidly as the network learns from information contained in the uncorrupted data, accompanied by a corresponding decrease in residual energy on the information subspace on the training data (Figure \ref{exp:fig_lc50_train}). The lowest test error is observed at $100$ epochs after which a steady increase follows. In the second phase, the network overfits to the corrupted data resulting in larger test error on the uncorrupted test data (Figure \ref{exp:fig_lc50_test}). More importantly, the increase of the test error is due to the nuisance space as the error over information space is stable while it increases over the nuisance space. In particular the residual on $\calS$ slowly increases while residual on $\calF$ drops sharply creating a dip in both test error and total residual energy at approximately $100$ epochs. This phenomenon closely resembles the population loss decomposition of the linear model discussed in Section \ref{linmodel} (see Figure \ref{synthfig}), where we observe a dip in total test error caused by an increasing component along the nuisance space and a simultaneously decreasing component along information space. 



\begin{figure}
\begin{subfigure}[b]{0.5\textwidth}
		\centering
		\includegraphics[scale=1.1]{./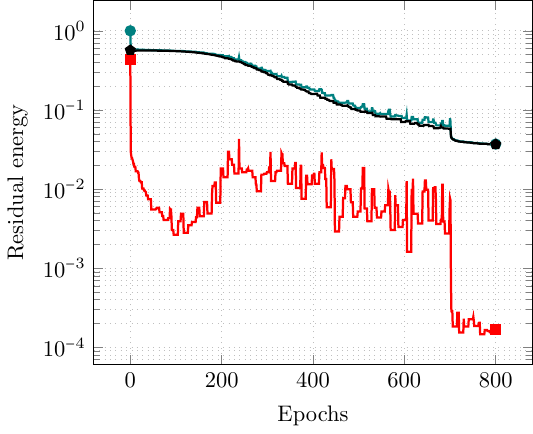}
		\caption{Residual along the info./nuisance spaces of the Jacobian evaluated at 100 epoch (${\mtx{J}}(\w_\tau)$) using training data.}
\label{exp:fig_lc50_train}		
\end{subfigure}~~
\begin{subfigure}[b]{0.5\textwidth}
\centering
\includegraphics[scale=1.1]{./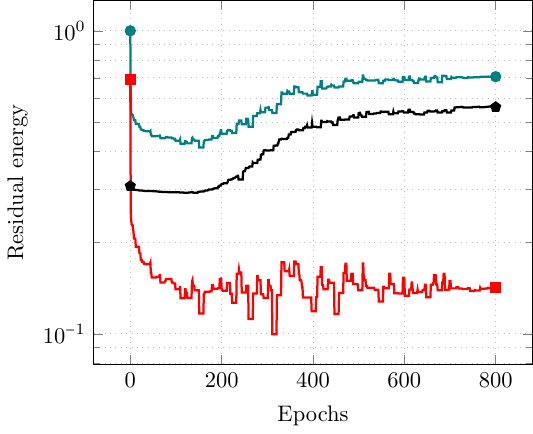}		
		\caption{Residual along the information/nuisance spaces of the Jacobian evaluated at 100 epoch (${\mtx{J}}(\w_\tau)$) using test data.}
\label{exp:fig_lc50_test}
		\end{subfigure}
		\begin{center}
		\begin{subfigure}[b]{0.5\textwidth}
		\centering
		\includegraphics[scale=1.1]{./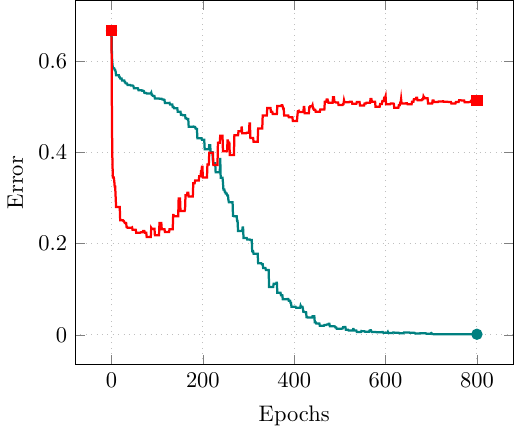}
		\caption{Training and test error}
		\label{exp:fig_lc50_err}
		\end{subfigure}
		\end{center}
		\caption{Evolution of the residual ($\rb_\tau=f(\W_\tau)-\y$) and misclassification error on training and test data with $50\%$ label corruption using SGD.}
		\label{exp:fig_lc50}

\end{figure}


\begin{figure}
		\centering
		\includegraphics[scale=1.4]{./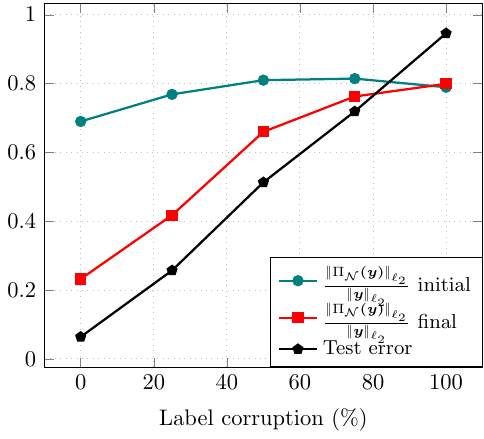}
		\captionof{figure}{Fraction of the energy of the label vector that lies on the nuisance space of the initial Jacobian (cyan with circles) and final Jacobian (red with squares) as we as the test error (black with pentagons)  as a function of the amount of label corruption.}\label{exp:fig_lc_compare}
\end{figure}

In Table \ref{exp:table_lc50} we again depict the fraction of the energy of the labels and the initial residual that lies on the information/nuisance spaces. The Jacobian continues to adapt to the labels/initial residual even in the presence of label corruption, albeit to a smaller degree. We note that due to corruption, labels are less correlated with the information space of the Jacobian and the fraction of the energy on the nuisance space is higher which results in worse generalization (as also predicted by our theory).

\begin{table}[t]
	\centering
	{
		\centering
		\begin{tabular}{|l||c|c|c||c|c|r|}
			\hline
			&$\frac{\twonorm{\Pi_{\calF}( \y)}}{\twonorm{\y}}$ & 
			$\frac{\twonorm{\Pi_{\calS}( \y)}}{\twonorm{\y}}$ & 
			$\frac{\twonorm{\mtx{J}_{\calF}^{\dagger}\y}}{\twonorm{\y}}$ & 
			$\frac{\twonorm{\Pi_{\calF}( \vct{r}_0)}}{\twonorm{\vct{r}_0}}$  & 
			$\frac{\twonorm{\Pi_{\calS}( \vct{r}_0)}}{\twonorm{\vct{r}_0}}$ & 
			$\frac{\twonorm{\mtx{J}_{\calF}^{\dagger}\vct{r}_0}}{\twonorm{\vct{r}_0}}$ \\
			\hlineB{2.5}
			\textbf{$\mtx{J}_{init}$} & 0.587&  0.810 & $1.72\cdot 10^{-3}$&  0.643  & 0.766& $1.98\cdot 10^{-3}$\\
			\hline
			\textbf{$\mtx{J}_{final}$} & 0.751 &  0.660 & $1.87\cdot 10^{-3}$ &  0.763  & 0.646& $1.20\cdot 10^{-3}$\\
			\hline
		\end{tabular}
	}
	\captionof{table}{Depiction of the alignment of the initial label/residual with the information/nuisance space using 50\% label corrupted data and a Multi-class ResNet20 model trained with SGD.}\label{exp:table_lc50}
\end{table}

In order to demonstrate the connection between generalization error and information/nuisance spaces of the Jacobian, we repeat the experiment with $25\%$, $75\%$ and $100\%$ label corruption and depict the results after $800$ epochs in Figure~\ref{exp:fig_lc_compare}. As expected, the test error increases with the corruption level.  Furthermore, the corrupted labels become less correlated with the information space, with more of the label energy falling onto the nuisance space. This is consistent with our theory which predicts worse generalization in this case.


\section{Technical approach and General Theory}
In this section, we outline our approach to proving robustness of over-parameterized neural networks. Towards this goal, we consider a general formulation where we aim to fit a general nonlinear model of the form $\vct{x}\mapsto f(\vct{x};\vct{\theta})$ with $\vct{x}\in\R^d$ denoting the input features, $\vct{\theta}\in\R^p$ denoting the parameters, and $f(\vct{x};\vct{\theta})\in\R^K$ the $K$ outputs of the model denoted by $f_1(\vct{x};\vct{\theta}), f_2(\vct{x};\vct{\theta}), \ldots, f_K(\vct{x};\vct{\theta})$. For instance in the case of neural networks $\vct{\theta}$ represents its weights. Given a data set of $n$ input/label pairs $\{(\vct{x}_i,\vct{y}_i)\}_{i=1}^n\subset \R^d\times \R^K$, we fit to this data by minimizing a nonlinear least-squares loss of the form
\begin{align}
\label{noncompact}
\Lc(\bteta)=\frac{1}{2}\sum_{i=1}^n \twonorm{f(\x_i;\bteta)-\y_i}^2.
\end{align}
To continue let us first aggregate the predictions and labels into larger vectors based on class. In particular define
\begin{align*}
f_\ell(\bteta)=\begin{bmatrix}f_\ell(\x_1;\bteta)\\\vdots\\f_\ell(\x_n;\bteta)\end{bmatrix}\in\R^{n}\quad\text{and}\quad\y^{(\ell)}=\begin{bmatrix}{(\y_1)}_\ell\\\vdots\\ {(\y_n)}_\ell\end{bmatrix}\in\R^{n}\quad\text{for}\quad \ell=1,2,\ldots,K.
\end{align*}
Concatenating these vectors we arrive at
\begin{align}
\label{model}
f(\bteta)=\begin{bmatrix}f_1(\bteta)\\\vdots\\f_\K(\bteta)\end{bmatrix}\in\R^{\K n}\quad\text{and}\quad\y=\begin{bmatrix}\y^{(1)}\\\vdots\\\y^{(\K)}\end{bmatrix}\in\R^{\K n}.
\end{align}
Using the latter we can rewrite the optimization problem \eqref{noncompact} into the more compact form
\begin{align}
\label{compact}
\Lc(\bteta)=\frac{1}{2}\twonorm{f(\bteta)-\y}^2.
\end{align}
To solve this problem we run gradient descent iterations with a learning rate $\eta$ starting from an initial point $\vct{\theta}_0$. These iterations take the form
\begin{align}
\bteta_{\tau+1}=\bteta_\tau-\eta \grad{\bteta_\tau}\quad\text{with}\quad\grad{\bteta}=\mathcal{J}^T(\vct{\theta})\left(f(\bteta)-\y\right).\label{nonlin gd}
\end{align}
As mentioned earlier due to the form of the gradient the convergence/generalization of gradient descent naturally depends on the spectral properties of the Jacobian. To capture these spectral properties we will use a reference Jacobian $\Jb$ (formally defined below) that is close to the Jacobian at initialization $\mathcal{J}(\bteta_0)$. %
\begin{definition} [Reference Jacobian and its SVD]\label{tjac} Consider an initial point $\bteta_0\in\R^p$ and the Jacobian mapping $\mathcal{J}(\bteta_0)\in\R^{\K n\times p}$. For $\eps_0,\beta>0$, we call $\Jb\in\R^{\K n\times \max(\K n,p)}$ an $(\eps_0,\beta)$ reference Jacobian matrix if it obeys the following conditions,
\[
\opnorm{\Jb}\leq\beta,\quad\opnorm{\Jc(\bteta_0)\Jc^T(\bteta_0)-\Jb\Jb^T}\leq \eps_0^2,\quad\text{and}\quad\opnorm{\Jcb(\bteta_0)-\Jb}\leq \eps_0.
\]
where $\Jcb(\bteta_0)\in \R^{\K n\times \max(\K n,p)}$ is a matrix obtained by augmenting $\Jc(\bteta_0)$ with $\max(0,\K n-p)$ zero columns. Furthermore, consider the singular value decomposition of $\Jb$ given by
\begin{align}
\label{eigdecompJ}
\Jb=\Ub\text{diag}(\vct{\la})\Vb^T=\sum_{s=1}^{\K n} \la_s\ub_s\vb_s^T.
\end{align}
where $\vct{\la}\in\R^{\K n}$ are the vector of singular values and $\ub_s\in\R^{Kn}$ and $\vb_s\in\R^p$ are the left/right singular vectors. 
\end{definition}
One natural choice for this reference Jacobian is $\Jb=\Jcb(\bteta_0)$. However, we shall also use other reference Jacobians in our results. We will compare the gradient iterations \eqref{nonlin gd} to the iterations associated with fitting a linearized model around $\bteta_0$ defined as $f_{\text{lin}}(\bbteta)=f(\bteta_0)+\Jb(\bbteta-\brteta_0)$, where $\brteta_0\in\R^{\max\left(Kn,p\right)}$ is obtained from $\bteta_0$ by adding $\max(\K n-p,0)$ zero entries at the end of $\bteta_0$. The optimization problem for fitting the linearized problem has the form
\begin{align}
\label{compactlin}
\mathcal{L}_{lin}(\bteta)=\frac{1}{2}\twonorm{f_{\text{lin}}(\bteta)-\y}^2.
\end{align}
Thus starting from $\bbteta_0=\brteta_0$ the iterates $\bbteta_{\tau}$ on the linearized problem take the form
\begin{align}
\label{lingd}
\bbteta_{\tau+1}&=\bbteta_{\tau}-\eta \nabla\mathcal{L}_{lin}(\bbteta_\tau),\\
&=\bbteta_{\tau}-\eta \Jb^T(f(\bteta_0)+\Jb(\bbteta_\tau-\bteta_0)-\y),\nn\\
&=\bbteta_{\tau}-\eta \Jb^T\Jb(\bbteta_\tau-\brteta_0)-\eta \Jb^T\left(f(\bteta_0)-\y\right).\nn
\end{align}
The iterates based on the linearized problem will provide a useful reference to keep track of the evolution of the original iterates \eqref{nonlin gd}. Specifically we study the evolution of misfit/residuals associated with the two problems
\begin{align}
&\text{Original residual: }\rb_\tau=f(\bteta_\tau)-\y.\label{org res eq}\\
&\text{Linearized residual: }\rbb_\tau=f_{\text{lin}}(\bbteta_\tau)-\y=(\Iden-\eta\Jb\Jb^T)^\tau \rb_0.\label{lin res eq}
\end{align}
To better understand the dynamics of convergence of the linearized iterates next we define two subspaces associated with the reference Jacobian and its spectrum.
\begin{definition}[Information/Nuisance Subspaces]\label{tjac2} Let $\mtx{J}$ denote the reference Jacobian per Definition \ref{tjac} with eigenvalue decomposition $\mtx{J}=\mtx{U}$diag$(\vct{\lambda})\mtx{V}^T$ per \eqref{eigdecompJ}. For a spectrum cutoff $\alpha$ obeying $0\le \alpha\le\lambda_1$ let $r(\alpha)$ denote the index of the smallest singular value above the threshold $\alpha$, that is,
\begin{align*}
r(\alpha)=\min\left(\{s\in\{1,2,\ldots,nK\}\quad \text{such that}\quad\lambda_s\ge \alpha \}\right).
\end{align*}
We define the information and nuisance subspaces associated with $\mtx{J}$ as $\calF:=\text{span}(\{\ub_s\}_{s=1}^r)$ and $\calS:=\text{span}(\{\ub_s\}_{s=r+1}^{\K n})$. We also define the truncated reference Jacobian
\begin{align*}
\mtx{J}_{\calF}=\begin{bmatrix}\vct{u}_1 & \vct{u}_2 & \ldots & \vct{u}_r\end{bmatrix}\text{diag}\left(\lambda_1,\lambda_2,\ldots,\lambda_r\right)\begin{bmatrix}\vct{v}_1 & \vct{v}_2 & \ldots & \vct{v}_r\end{bmatrix}^T
\end{align*}
which is the part of the reference Jacobian that acts on the information subspace $\calF$.
\end{definition}
We will show rigorously that the information and nuisance subspaces associated with the reference Jacobian dictate the directions where learning is fast and generalizable versus the directions where learning is slow and overfitting occurs.
Before we make this precise we list two assumptions that will be utilized in our result.
\begin{assumption}[Bounded spectrum]\label{ass2} For any $\bteta\in\R^p$ the Jacobian mapping associated with the nonlinearity $f:\R^p\mapsto \R^n$ has bounded spectrum, i.e.~$\|\Jc(\bteta)\|\leq \beta$. 
\end{assumption}
\begin{assumption}[Bounded perturbation] \label{ass3} Consider a point $\bteta_0\in\R^p$ and positive scalars $\eps,R>0$. Assume that for any $\bteta$ obeying $\tn{\bteta-\bteta_0}\leq R$, we have
\[
\|\Jc(\bteta)-\Jc(\bteta_0)\|\leq \frac{\eps}{2}.
\]
\end{assumption}
With these assumptions in place we are now ready to discuss our meta theorem that demonstrates that the misfit/residuals associated to the original and linearized iterates do in fact track each other rather closely.
\begin{theorem}[Meta Theorem]\label{many step thm}
Consider a nonlinear least squares problem of the form $\mathcal{L}(\bteta)=\frac{1}{2}\twonorm{f(\bteta)-\y}^2$ with $f:\R^p\mapsto \R^{nK}$ the multi-class nonlinear mapping, $\bteta\in\R^p$ the parameters of the model, and $\y\in\R^{nK}$ the concatenated labels as in \eqref{model}. Let $\brteta$ be zero-padding of $\bteta$ till size $\max(\K n,p)$. Also, consider a point $\vct{\theta}_0\in\R^p$ with $\mtx{J}$ an $(\epsilon_0,\beta)$ reference Jacobian associated with $\mathcal{J}(\vct{\theta}_0)$ per Definition \ref{tjac} and fitting the linearized problem $f_{\text{lin}}(\bbteta)=f(\bteta_0)+\Jb(\bbteta-\brteta_0)$ via the loss $\mathcal{L}_{lin}(\bteta)=\frac{1}{2}\twonorm{f_{\text{lin}}(\bteta)-\y}^2$. Furthermore, define the information $\calF$ and nuisance $\calS$ subspaces and the truncated Jacobian $\mtx{J}_{\calF}$ associated with the reference Jacobian $\mtx{J}$ based on a cut-off spectrum value of $\alpha$ per Definition \ref{tjac2}. Furthermore, assume the Jacobian mapping $\mathcal{J}(\bteta)\in\R^{nK\times p}$ associated with $f$ obeys Assumptions \ref{ass2} and \ref{ass3} for all $\bteta\in\R^p$ obeying
\begin{align}
\label{assumpgen}
\twonorm{\bteta-\bteta_0}\le R:=2\left(\twonorm{\mtx{J}_{\calF}^{\dagger}\vct{r}_0}+\frac{\Gamma}{\alpha}\twonorm{\Pi_{\calS}\left(\vct{r}_0\right)}+\delta\frac{\Gamma}{\alpha}\twonorm{\vct{r}_0}\right),
\end{align}  
around a point $\bteta_0\in\R^p$ for a tolerance level $\delta$ obeying $0< \delta \le 1$ and stopping time $\Gamma$ obeying $\Gamma\ge 1$. Finally, assume the following inequalities hold
\begin{align}
 \eps_0\leq \frac{\min(\delta\alpha,\sqrt{\delta\alpha^3/\Gamma\beta})}{5}\quad\text{and}\quad\eps\leq \frac{\delta\alpha^3}{5\Gamma\beta^2}.\label{cndd}
\end{align}
We run gradient descent iterations of the form $\bteta_{\tau+1}=\bteta_{\tau}-\eta\nabla \mathcal{L}(\bteta_\tau)$ and $\bbteta_{\tau+1}=\bbteta_{\tau}-\eta \nabla\mathcal{L}_{lin}(\bbteta_\tau)$ on the original and linearized problems starting from $\bteta_0$ with step size $\eta$ obeying $\eta\le 1/\beta^2$. Then for all iterates $\tau$ obeying $0\le \tau\le T:=\frac{\Gamma}{\eta\alpha^2}$ the iterates of the original ($\bteta_\tau$) and linearized ($\bbteta_\tau$) problems and the corresponding residuals $\rb_\tau:=f(\bteta_\tau)-\y$ and $\rbb_\tau:=f_{\text{lin}}(\bbteta_\tau)-\y$  closely track each other. That is,
\begin{align}
\label{main res eq1}
&\tn{\rb_\tau-\rbb_\tau}\leq \frac{3}{5} \frac{\delta\alpha}{\beta}\tn{\rb_0}\quad\text{and}\quad\tn{\brteta_\tau-\bbteta_\tau}\leq\delta\frac{\Gamma}{\alpha}\tn{\rb_0}
\end{align}
Furthermore, for all iterates $\tau$ obeying $0\le \tau\le T:=\frac{\Gamma}{\eta\alpha^2}$
\begin{align}
\tn{\bteta_\tau-\bteta_0}\leq \frac{R}{2}=\twonorm{\mtx{J}_{\calF}^{\dagger}\vct{r}_0}+\frac{\Gamma}{\alpha}\twonorm{\Pi_{\calS}\left(\vct{r}_0\right)}+\delta\frac{\Gamma}{\alpha}\twonorm{\vct{r}_0}.\label{main res eq2}
\end{align}
and after $\tau=T$ iteration we have
\begin{align}
\twonorm{\rb_{T}}\leq   e^{-\Gamma}\twonorm{\Pi_{\calF}(\rb_0)}+\twonorm{ \Pi_{\calS}(\rb_0)}+\frac{\delta\alpha}{\beta}\tn{\rb_0}.\label{eqqq2}
\end{align}
\end{theorem}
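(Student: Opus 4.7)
The strategy is to compare the nonlinear iterates $\bteta_\tau$ to the linearized iterates $\bbteta_\tau$ via induction, exploiting the spectral split of the reference Jacobian $\Jb$. First I would analyze the linearized dynamics in closed form: since $\rbb_{\tau+1}=(\Iden-\eta\Jb\Jb^T)\rbb_\tau$, we have $\rbb_\tau=(\Iden-\eta\Jb\Jb^T)^\tau\rb_0$. Decomposing via the SVD \eqref{eigdecompJ}, each mode with $\la_s\ge \alpha$ (the information space $\calF$) contracts by $(1-\eta\la_s^2)^\tau$, so after $T=\Gamma/(\eta\alpha^2)$ steps its energy shrinks by at least $e^{-\Gamma}$; on the nuisance space $\calS$ the operator is a contraction but need not shrink, yielding $\tn{\Pi_{\calF}(\rbb_\tau)}\le e^{-\eta\alpha^2\tau}\tn{\Pi_{\calF}(\rb_0)}$ and $\tn{\Pi_{\calS}(\rbb_\tau)}\le \tn{\Pi_{\calS}(\rb_0)}$. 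Telescoping $\bbteta_\tau-\brteta_0 = -\eta\Jb^T\sum_{k=0}^{\tau-1}\rbb_k$ and using the identity $\eta\sum_{k=0}^{\tau-1}(\Iden-\eta\Jb\Jb^T)^k \to \Jb_{\calF}^\dagger$ on the information block gives $\tn{\bbteta_\tau-\brteta_0}\le \tn{\Jb_{\calF}^\dagger\rb_0}+(\Gamma/\alpha)\tn{\Pi_{\calS}(\rb_0)}$, which already delivers the linearized portion of \eqref{main res eq2}.

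The heart of the proof is a joint induction on $\tau\in\{0,1,\ldots,T\}$ for the two estimates in \eqref{main res eq1}. The inductive engine comes from a Taylor expansion, $f(\bteta_{\tau+1})-f(\bteta_\tau)=-\eta\,\Cb_\tau\Jc^T(\bteta_\tau)\rb_\tau$, where $\Cb_\tau=\int_0^1\Jc(\bteta_\tau+s(\bteta_{\tau+1}-\bteta_\tau))\,ds$. Subtracting the linearized update $\rbb_{\tau+1}=(\Iden-\eta\Jb\Jb^T)\rbb_\tau$ yields
\begin{equation*}
\rb_{\tau+1}-\rbb_{\tau+1} = (\Iden-\eta\Jb\Jb^T)(\rb_\tau-\rbb_\tau)+\eta\bigl(\Jb\Jb^T-\Cb_\tau\Jc^T(\bteta_\tau)\bigr)\rb_\tau.
\end{equation*}
Under the reference-Jacobian closeness (Definition \ref{tjac}) together with Assumptions \ref{ass2}--\ref{ass3}, the per-step perturbation norm $\|\Jb\Jb^T-\Cb_\tau\Jc^T(\bteta_\tau)\|$ is at most a constant multiple of $\eps_0\beta+\eps\beta$, provided the induction hypothesis keeps the iterate inside the ball of radius $R$ around $\bteta_0$ in which Assumption \ref{ass3} applies. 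That containment is in turn maintained by bounding $\bteta_\tau-\bteta_0$ via $\bteta_\tau-\bbteta_\tau$ plus the already controlled $\bbteta_\tau-\brteta_0$, where the first piece is a sum of the form $\eta\sum_k\Jc^T(\bteta_k)(\rb_k-\rbb_k)$ plus $\eta\sum_k(\Jc^T(\bteta_k)-\Jb^T)\rbb_k$.

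The main obstacle is preventing the accumulated perturbation from blowing up over the $T=\Gamma/(\eta\alpha^2)$ iterations. A naive summation introduces a factor $T$ which is prohibitive; the clean bound comes from again splitting $\rb_\tau-\rbb_\tau$ into its $\calF$- and $\calS$-components and applying $\eta\sum_{k=0}^{\tau-1}(\Iden-\eta\Jb\Jb^T)^k$, which contributes a $1/\alpha^2$ factor on $\calF$ and at most $T=\Gamma/(\eta\alpha^2)$ on $\calS$. Matching these two contributions against the two scalings prescribed in \eqref{cndd}, namely $\eps_0\lesssim\sqrt{\delta\alpha^3/(\Gamma\beta)}$ and $\eps\lesssim\delta\alpha^3/(\Gamma\beta^2)$, is precisely what produces the residual bound $\tn{\rb_\tau-\rbb_\tau}\le \tfrac{3}{5}(\delta\alpha/\beta)\tn{\rb_0}$ and closes the induction; this in turn upgrades the parameter bound to \eqref{main res eq2} with the extra $\delta(\Gamma/\alpha)\tn{\rb_0}$ correction. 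Finally, \eqref{eqqq2} follows by the triangle inequality $\tn{\rb_T}\le \tn{\rbb_T}+\tn{\rb_T-\rbb_T}$ combined with the closed-form linearized bound $\tn{\rbb_T}\le e^{-\Gamma}\tn{\Pi_{\calF}(\rb_0)}+\tn{\Pi_{\calS}(\rb_0)}$.
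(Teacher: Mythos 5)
Your proposal follows essentially the same route as the paper's proof: a closed-form analysis of the linearized iterates via the SVD of $\Jb$ (Lemma \ref{lem linear}, Corollary \ref{cor simp}), a joint induction on $\tau$, a one-step Taylor expansion of $f(\bteta_{\tau+1})$, the growth lemma (Lemma \ref{lem growth}) exploiting geometric decay on $\calF$ to bound the cumulative perturbation, and a triangle inequality giving \eqref{eqqq2}. The only substantive difference is in the per-step recursion for $\eb_\tau=\rb_\tau-\rbb_\tau$: you keep $(\Iden-\eta\Jb\Jb^T)$ as the homogeneous operator and apply the perturbation to the nonlinear residual $\rb_\tau$, whereas the paper's Lemma \ref{grwth} keeps the asymmetric Taylor operator acting on $\eb_\tau$ (estimated via Lemma \ref{asym pert}) and applies the perturbation to the linearized residual $\rbb_\tau$; the two are equivalent rewrites of $\rb_{\tau+1}-\rbb_{\tau+1}$. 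Your variant dispenses with Lemma \ref{asym pert} since $\Iden-\eta\Jb\Jb^T$ is a genuine contraction for $\eta\le1/\beta^2$, but it then requires splitting $\tn{\rb_\tau}\le\tn{\rbb_\tau}+\tn{\eb_\tau}$, yielding a self-referential recursion $e_{\tau+1}\le(1+\eta(\eps_0^2+\eps\beta))e_\tau+\eta(\eps_0^2+\eps\beta)\tn{\rbb_\tau}$ in place of the paper's $e_{\tau+1}\le(1+\eta\eps^2)e_\tau+\eta(\eps_0^2+\eps\beta)\tn{\rbb_\tau}$; both close under \eqref{cndd} because in each case $\Gamma$ times the multiplicative perturbation is a small constant times $\alpha^2$. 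One minor imprecision: you say the clean cumulative bound comes from splitting $\rb_\tau-\rbb_\tau$ into $\calF$- and $\calS$-components. In fact the split is applied to the linearized driving term $\rbb_t$ (whose $\calF$-component decays geometrically, giving the $1/\alpha^2$ geometric-sum factor, and whose $\calS$-component is merely bounded, giving the factor $T$), while the discrepancy $\eb_t$ is handled as a scalar in Lemma \ref{lem growth}.
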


\section{Proofs}
Before we proceed with the proof let us briefly discuss some notation used throughout. For a matrix $\mtx{W}\in\R^{k\times d}$ we use vect$(\mtx{W})\in\R^{kd}$ to denote a vector obtained by concatenating the rows $\vct{w}_1,\vct{w}_2,\ldots,\vct{w}_k\in\R^d$ of $\mtx{W}$. That is,
$\text{vect}(\W)=\begin{bmatrix}\vct{w}_1^T&\vct{w}_2^T &\ldots &\vct{w}_k^T\end{bmatrix}^T$. Similarly, we use $\mat(\vct{w})\in\R^{k\times d}$ to denote a $k\times d$ matrix obtained by reshaping the vector $\vct{w}\in\R^{kd}$ across its rows. Throughout, for a differentiable function $\phi:\R\mapsto\R$ we use $\phi'$ and $\phi''$ to denote the first and second derivative. 
\subsection{Proofs for General Theory (Proof of Theorem \ref{many step thm})}
In this section we prove our result for general nonlinearities. We begin with a few notations and definitions and preliminary lemmas in Section \ref{prelim5}. Next in Section \ref{key5} we prove some key lemmas regarding the evolution of the linearized residuals $\rbb_\tau$. In Section \ref{radam} we establish some key Rademacher complexity results used in our generalization bounds. Finally, in Section \ref{comp5} we use these results to complete the proof of Theorem \ref{many step thm}.
\subsubsection{Preliminary definitions and lemmas}
\label{prelim5}
Throughout we use 
\begin{align*}
\mtx{U}_{\mathcal{I}}=\begin{bmatrix}\vct{u}_1 & \vct{u}_2 & \ldots & \vct{u}_r\end{bmatrix}\in\R^{nK\times r}
\quad\text{and}\quad
\mtx{U}_{\mathcal{N}}=\begin{bmatrix}\vct{u}_{r+1} & \vct{u}_{r+2} & \ldots & \vct{u}_{nK}
\end{bmatrix}\in\R^{nK\times (nK-r)}. 
\end{align*}
to denote the basis matrices for the information and nuisance subspaces from Definition \ref{tjac2}. Similarly, we define the information and nuisance spectrum as 
\begin{align*}
\vct{\lambda}_{\mathcal{I}}=
\begin{bmatrix}
\lambda_1&
\lambda_2&
\ldots&
\lambda_r
\end{bmatrix}^T
\quad\text{and}\quad
\vct{\lambda}_{\mathcal{N}}=
\begin{bmatrix}
\lambda_{r+1}&
\lambda_{r+2}&
\ldots&
\lambda_{nK}
\end{bmatrix}^T.
\end{align*}
We also define the diagonal matrices 
\begin{align*}
\mtx{\Lambda}=\text{diag}(\vct{\lambda}),\quad \mtx{\Lambda}_{\calF}=\text{diag}(\vct{\lambda}_{\calF}),\quad\text{and}\quad\mtx{\Lambda}_{\calS}=\text{diag}(\vct{\lambda}_{\calS}).
\end{align*}
\begin{definition}[early stopping value and distance]\label{earlyval} Consider Definition \ref{tjac2} and let $\Gamma>0$ be a positive scalar. Associated with the initial residual $\rb_0=f(\bteta_0)-\y$ and the information/nuisance subspaces of the reference Jacobian $\mtx{J}$ (with a cut-off level $\alpha$) we define the $(\alpha,\Gamma)$ early stopping value as
\begin{align}
\upp=&\left(\sum_{s=1}^r  \frac{\alpha^2}{\la_s^2}\left(\langle\vct{u}_s,\vct{r}_0\rangle\right)^2+\Gamma^2\sum_{s=r+1}^{nK}\frac{\la_s^2}{\alpha^2} \left(\langle\vct{u}_s,\vct{r}_0\rangle\right)^2\right)^{1/2}.\label{pinv def2}
\end{align}
We also define the early stopping distance as
\begin{align*}
\dpp=\frac{\upp}{\alpha}.
\end{align*}
\end{definition}
The goal of early stopping value/distance is understanding the behavior of the algorithm at a particular stopping time that depends on $\Gamma$ and the spectrum cutoff $\alpha$. In particular, as we will see later on the early stopping distance characterizes the distance from initialization at an appropriate early stopping time. We continue by stating and proving a few simple lemmas. The first Lemma provides upper/lower bounds on the early stopping value.
\begin{lemma}[Bounds on Early-Stopping Value]\label{bndearlystopval} The early stopping value $\upp$ from Definition \ref{earlyval} obeys
\begin{align}
&\upp\leq  \left(\tn{\Pi_{\Rc}(\rb_0)}^2+{\Gamma}^2\tn{\Pi_{\Rcb}(\rb_0)}^2\right)^{1/2}\le \Gamma\twonorm{\rb_0}\label{upp up bound}\\
&\upp\geq \frac{\alpha}{\la_1}\twonorm{\Pi_{\Rc}(\rb_0)}.\label{low bound}
\end{align}
\end{lemma}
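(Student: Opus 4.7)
The plan is to prove both inequalities directly from the defining expression \eqref{pinv def2} by exploiting the spectrum cutoff at $\alpha$ that separates the information and nuisance subspaces, together with the ordering $\lambda_1\ge\lambda_2\ge\cdots\ge\lambda_{nK}$.

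For the upper bound \eqref{upp up bound}, the key observation is that by the very definition of the cutoff index $r=r(\alpha)$ (Definition \ref{tjac2}), we have $\lambda_s\ge \alpha$ for $s\le r$ and $\lambda_s\le \alpha$ (strict for $s>r$, modulo boundary cases). Therefore $\alpha^2/\lambda_s^2\le 1$ for every $s\le r$, and $\lambda_s^2/\alpha^2\le 1$ for every $s>r$. Substituting these inequalities into \eqref{pinv def2} and recognizing the resulting two sums as the squared norms $\tn{\Pi_{\Rc}(\rb_0)}^2=\sum_{s=1}^r\langle\ub_s,\rb_0\rangle^2$ and $\tn{\Pi_{\Rcb}(\rb_0)}^2=\sum_{s=r+1}^{nK}\langle\ub_s,\rb_0\rangle^2$ yields the first half of \eqref{upp up bound}. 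The second half then follows from $\Gamma\ge 1$, since
\[
\tn{\Pi_{\Rc}(\rb_0)}^2+\Gamma^2\tn{\Pi_{\Rcb}(\rb_0)}^2\leq \Gamma^2\big(\tn{\Pi_{\Rc}(\rb_0)}^2+\tn{\Pi_{\Rcb}(\rb_0)}^2\big)=\Gamma^2\tn{\rb_0}^2,
\]
using the Pythagorean identity for the orthogonal decomposition $\rb_0=\Pi_{\Rc}(\rb_0)+\Pi_{\Rcb}(\rb_0)$.

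For the lower bound \eqref{low bound}, I discard the nonnegative nuisance contribution $\Gamma^2\sum_{s=r+1}^{nK}(\lambda_s^2/\alpha^2)\langle\ub_s,\rb_0\rangle^2\ge 0$ from \eqref{pinv def2}, and then bound $\alpha^2/\lambda_s^2\ge \alpha^2/\lambda_1^2$ uniformly for $s=1,\dots,r$ using the ordering of singular values. This gives
\[
\upp\ge\Bigl(\sum_{s=1}^r\tfrac{\alpha^2}{\lambda_1^2}\langle\ub_s,\rb_0\rangle^2\Bigr)^{1/2}=\tfrac{\alpha}{\lambda_1}\tn{\Pi_{\Rc}(\rb_0)},
\]
as required.

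There is no real obstacle here: the proof is a direct manipulation of the defining sum using only the ordering of singular values and the cutoff defining $r(\alpha)$. The only minor care needed is in the case $r=0$ (empty information space) or $r=nK$ (empty nuisance space), where one of the sums is vacuous and the corresponding bounds become trivial; both inequalities continue to hold under the convention that an empty sum is zero.
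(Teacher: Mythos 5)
Your proof is correct and follows essentially the same route as the paper's: both bound $\alpha^2/\lambda_s^2$ by $1$ on the information indices and $\lambda_s^2/\alpha^2$ by $1$ on the nuisance indices to get the upper bound, and both discard the nuisance sum and use $\lambda_s\le\lambda_1$ for the lower bound. Your additional remark about the degenerate cases $r=0$ and $r=nK$ is a harmless extra observation that the paper omits.
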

\begin{proof}
To prove the upper bound we use the fact that $\alpha\le \la_s$ for $s\le r$ and $\alpha\geq \la_s$ for $s\geq r$ to conclude that
\begin{align*}
\upp &\le \left(\sum_{s=1}^r \left(\langle\vct{u}_s,\vct{r}_0\rangle\right)^2+\Gamma^2\sum_{s=r+1}^{nK} \left(\langle\vct{u}_s,\vct{r}_0\rangle\right)^2\right)^{1/2}\\
&= \left(\twonorm{\Pi_{\Rc}(\rb_0)}^2+{\Gamma}^2\twonorm{\Pi_{\Rcb}(\rb_0)}^2\right)^{1/2}\\
&\leq \Gamma \tn{\rb_0}.
\end{align*}
To prove the lower bound, we use the facts that $\alpha^2/\la_s^2\geq \alpha^2/\la_1^2$ to conclude that
\begin{align*}
\upp=&\left(\sum_{s=1}^r  \frac{\alpha^2}{\la_s^2}\left(\langle\vct{u}_s,\vct{r}_0\rangle\right)^2+\Gamma^2\sum_{s=r+1}^{nK}\frac{\la_s^2}{\alpha^2} \left(\langle\vct{u}_s,\vct{r}_0\rangle\right)^2\right)^{1/2},\\
\ge&\left(\sum_{s=1}^r  \frac{\alpha^2}{\la_s^2}\left(\langle\vct{u}_s,\vct{r}_0\rangle\right)^2\right)^{1/2},\\
\ge&\frac{\alpha}{\lambda_1}\twonorm{\Pi_{\Rc}(\rb_0)}.
\end{align*}
\end{proof}
It is of course well known that the mapping $\left(\mtx{I}-\eta\A\A^T\right)$ is a contraction for sufficiently small values of $\eta$. The next lemma shows that if we replace one of the $\A$ matrices with a matrix $\B$ which is close to $\A$ the resulting matrix $\left(\mtx{I}-\eta\A\B^T\right)$, while may not be contractive, is not too expansive.
\begin{lemma} [Asymmetric PSD increase]\label{asym pert} Let $\A,\B\in\R^{n\times p}$ be matrices obeying 
\begin{align*}
\opnorm{\A}\le \beta,\quad\opnorm{\B}\le \beta,\quad\text{and}\quad\opnorm{\B-\A}\leq \eps.
\end{align*}
Then, for all $\rb\in\R^n$ and $\eta\leq 1/\beta^2$ we have
\[
\twonorm{\left(\Iden-\eta \A\B^T\right)\rb}\le \left(1+\eta\eps^2\right)\twonorm{\rb}.
\]
\end{lemma}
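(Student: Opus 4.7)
The plan is to prove this by expanding the squared norm $\tn{(\Iden-\eta\A\B^T)\rb}^2$ and carefully bounding the cross term. The main subtlety is that $\A\B^T$ is not symmetric, so the naive bound $\opnorm{\Iden-\eta\A\B^T}\leq 1+\eta\beta\eps$ (obtained by writing $\A\B^T=\A\A^T+\A(\B-\A)^T$ and using $\opnorm{\A}\le\beta$, $\opnorm{\B-\A}\le\eps$) is far too weak; we need to produce the quadratic dependence $\eps^2$ rather than the linear $\beta\eps$. I anticipate this ``quadratic vs.\ linear'' estimate is the only real obstacle, and it will come out of a polarization identity together with the stepsize bound $\eta\le 1/\beta^2$.

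Introduce the two vectors $\vct{v}:=\A^T\rb$ and $\vct{w}:=\B^T\rb$, which satisfy $\tn{\vct{v}-\vct{w}}=\tn{(\A-\B)^T\rb}\le\eps\tn{\rb}$ and $\tn{\vct{v}},\tn{\vct{w}}\le\beta\tn{\rb}$. Then
\begin{align*}
\tn{(\Iden-\eta\A\B^T)\rb}^2
&=\tn{\rb}^2-2\eta\,\rb^T\A\B^T\rb+\eta^2\tn{\A\B^T\rb}^2\\
&=\tn{\rb}^2-2\eta\,\vct{v}^T\vct{w}+\eta^2\tn{\A\vct{w}}^2.
\end{align*}
Apply the polarization identity $2\vct{v}^T\vct{w}=\tn{\vct{v}}^2+\tn{\vct{w}}^2-\tn{\vct{v}-\vct{w}}^2$ to the middle term, and use $\tn{\A\vct{w}}^2\le \beta^2\tn{\vct{w}}^2$. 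This yields
\begin{align*}
\tn{(\Iden-\eta\A\B^T)\rb}^2
&\le\tn{\rb}^2-\eta\tn{\vct{v}}^2-\eta\tn{\vct{w}}^2+\eta\tn{\vct{v}-\vct{w}}^2+\eta^2\beta^2\tn{\vct{w}}^2\\
&=\tn{\rb}^2-\eta\tn{\vct{v}}^2-\eta(1-\eta\beta^2)\tn{\vct{w}}^2+\eta\tn{\vct{v}-\vct{w}}^2.
\end{align*}

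Finally, the stepsize assumption $\eta\le 1/\beta^2$ makes $1-\eta\beta^2\geq 0$, so the middle two terms are nonpositive and may be dropped; together with $\tn{\vct{v}-\vct{w}}^2\le\eps^2\tn{\rb}^2$ this gives $\tn{(\Iden-\eta\A\B^T)\rb}^2\le (1+\eta\eps^2)\tn{\rb}^2$. Taking square roots and using $\sqrt{1+x}\le 1+x$ for $x\ge 0$ completes the proof of the stated inequality. No further appeals to external lemmas in the excerpt are needed; this is a self-contained linear-algebra computation whose only nontrivial ingredient is the polarization identity that converts an asymmetric cross term into a difference of norms.
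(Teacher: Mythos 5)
Your proof is correct, and it takes a genuinely different route than the paper's, though the two are closely related. The paper handles the asymmetric cross term $-2\eta\,\rb^T\A\B^T\rb$ by writing $\A=(\A-\B)+\B$, applying Cauchy--Schwarz to $\rb^T(\A-\B)\B^T\rb$, using $\eta\le 1/\beta^2$ to absorb the $\eta^2\opnorm{\A}^2\tn{\B^T\rb}^2$ term, and finally recognizing a complete square $-\eta\left(\eps\tn{\rb}-\tn{\B^T\rb}\right)^2$ to extract the $(1+\eta\eps^2)$ factor. You instead apply the polarization identity $2\vct{v}^T\vct{w}=\tn{\vct{v}}^2+\tn{\vct{w}}^2-\tn{\vct{v}-\vct{w}}^2$, which produces the crucial $\tn{\vct{v}-\vct{w}}^2\le\eps^2\tn{\rb}^2$ term directly and leaves the remaining terms manifestly nonpositive once $\eta\le 1/\beta^2$ kills the $\eta^2\beta^2\tn{\vct{w}}^2$ contribution. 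Your version is arguably cleaner: it makes the provenance of the $\eps^2$ factor transparent and avoids the completion-of-squares step; the paper's version makes explicit the nonnegative quantity being discarded. Both proofs in fact establish the slightly stronger bound $\tn{(\Iden-\eta\A\B^T)\rb}^2\le(1+\eta\eps^2)\tn{\rb}^2$ before taking square roots (a step the paper leaves implicit and you spell out).
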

\begin{proof} Note that using $\eta\leq 1/\beta^2$ and $\opnorm{\B-\A}\leq \eps$ we conclude that
\begin{align*}
\twonorm{\left(\Iden-\eta \A\B^T\right)\rb}^2=&\twonorm{\left(\Iden-\eta \B\B^T-\eta(\A-\B)\B^T\right)\rb}^2\\
=&\twonorm{\rb-\eta(\A-\B+\B)\B^T\rb}^2\\
=&\twonorm{\rb}^2-2\eta\rb^T(\A-\B+\B)\B^T\rb+\eta^2\twonorm{\A\B^T\rb}^2\\
\le& \twonorm{\rb}^2-2\eta\tn{\B^T\rb}^2+2\eta \twonorm{(\A-\B)^T\rb}\tn{\B^T\rb}+\eta^2\|\A\|^2\tn{\B^T\rb}^2\\
=& \twonorm{\rb}^2-\eta\tn{\B^T\rb}^2+2\eta \twonorm{(\A-\B)^T\rb}\tn{\B^T\rb}+\left(\eta^2\|\A\|^2\tn{\B^T\rb}^2-\eta\tn{\B^T\rb}^2\right)\\
\overset{\eta\le 1/\beta^2}{\leq}&\twonorm{\rb}^2-\eta\tn{\B^T\rb}^2+2\eta \twonorm{(\A-\B)^T\rb}\tn{\B^T\rb}\\
\overset{\opnorm{\A-\B}\le \epsilon}{\le} & \twonorm{\rb}^2-\eta\tn{\B^T\rb}^2+2\eta \eps\tn{\B^T\rb}\tn{\rb}\\
=&(1+\eta\eps^2)\twonorm{\rb}^2-\eta\left(\eps\twonorm{\rb}-\twonorm{\B^T\rb}\right)^2\\
\le&(1+\eta\eps^2)\twonorm{\rb}^2,
\end{align*}
completing the proof.
\end{proof}
The next lemma shows that if two PSD matrices are close to each other then an appropriate square root of these matrices will also be close.
\begin{lemma}\label{psd pert} Let $\A$ and $\B$ be $n\times n$ positive semi-definite matrices satisfying $\|\A-\B\|\leq \alpha^2$ for a scalar $\alpha\ge 0$. Then for any $\X\in\R^{n\times p}$ with $p\geq n$ obeying $\A=\X\X^T$, there exists a matrix $\Y\in\R^{n\times p}$ obeying $\B=\Y\Y^T$ such that
\[
\|\Y-\X\|\leq 2\alpha
\]
\end{lemma}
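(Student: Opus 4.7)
The plan is to construct $\Y$ by transporting $\X$ through the ``polar'' factor and then replacing $\sqrt{\A}$ by $\sqrt{\B}$. Specifically, write the SVD $\X=\mtx{U}\mtx{\Sigma}\mtx{V}^T$ with $\mtx{U}\in\R^{n\times n}$ and $\mtx{V}\in\R^{p\times p}$ orthogonal and $\mtx{\Sigma}\in\R^{n\times p}$. Since $\A=\X\X^T=\mtx{U}(\mtx{\Sigma}\mtx{\Sigma}^T)\mtx{U}^T$, the PSD square root is $\A^{1/2}=\mtx{U}(\mtx{\Sigma}\mtx{\Sigma}^T)^{1/2}\mtx{U}^T$, and a short direct computation shows $\X=\A^{1/2}\Qb$ where $\Qb:=\mtx{U}[\mtx{I}_n\ \mtx{0}]\mtx{V}^T\in\R^{n\times p}$ satisfies $\Qb\Qb^T=\mtx{I}_n$; in particular $\|\Qb\|=1$. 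Then I would simply define
\[
\Y:=\B^{1/2}\Qb.
\]
The identity $\Y\Y^T=\B^{1/2}\Qb\Qb^T\B^{1/2}=\B^{1/2}\B^{1/2}=\B$ is immediate, so $\Y$ has the required property.

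It remains to control the perturbation. By construction,
\[
\|\Y-\X\|=\|(\B^{1/2}-\A^{1/2})\Qb\|\le \|\B^{1/2}-\A^{1/2}\|,
\]
so the lemma reduces to the operator-norm square-root bound $\|\B^{1/2}-\A^{1/2}\|\le \sqrt{\|\B-\A\|}\le \alpha$, which is strictly stronger than the asserted $2\alpha$.

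To prove this bound (which I see as the main and only nontrivial obstacle), I would set $\Cb:=\B^{1/2}-\A^{1/2}$ and use the algebraic identity $\Cb\A^{1/2}+\B^{1/2}\Cb=\B-\A$, obtained by expanding $(\B^{1/2}-\A^{1/2})\A^{1/2}+\B^{1/2}(\B^{1/2}-\A^{1/2})$. Since $\Cb$ is Hermitian, pick a unit eigenvector $\vb$ with $\Cb\vb=\lambda\vb$ and $|\lambda|=\|\Cb\|$. Taking $\<\vb,\cdot\,\vb\>$ on both sides of the identity yields
\[
\lambda\,\<\vb,(\A^{1/2}+\B^{1/2})\vb\>=\<\vb,(\B-\A)\vb\>.
\]
A short case analysis on the sign of $\lambda$ together with PSD-ness of $\A^{1/2},\B^{1/2}$ and the relation $\<\vb,\B^{1/2}\vb\>-\<\vb,\A^{1/2}\vb\>=\lambda$ gives the elementary lower bound $\<\vb,(\A^{1/2}+\B^{1/2})\vb\>\ge|\lambda|$. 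Substituting back and using $|\<\vb,(\B-\A)\vb\>|\le\|\B-\A\|\le\alpha^2$ yields $|\lambda|^2\le\alpha^2$, i.e.\ $\|\Cb\|\le\alpha$, which combined with the first display finishes the argument. The whole proof is essentially just the choice $\Y=\B^{1/2}\Qb$ plus this one-page spectral estimate; no iterative or probabilistic machinery is needed.
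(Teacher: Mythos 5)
Your construction of $\Y$ is in the same spirit as the paper's: both factor $\X=\A^{1/2}\Qb$ with a co-isometry $\Qb$ (the paper writes it as $\X=\mtx{U}_{\A}\La_{\A}^{1/2}\mtx{V}_{\A}^T$ and sets $\Y=\Ub_{\B}\La_{\B}^{1/2}\Ub_{\B}^T\mtx{U}_{\A}\mtx{V}_{\A}^T$, which is precisely $\B^{1/2}\Qb$) so that the whole problem reduces to bounding $\opnorm{\A^{1/2}-\B^{1/2}}$. Where you genuinely diverge is in that last step. The paper regularizes: it shifts to $\A_+=\A+\frac{\alpha^2}{4}\Iden$, $\B_+=\B+\frac{\alpha^2}{4}\Iden$, invokes Lemma 2.2 of \cite{schmitt1992perturbation} (which requires a lower bound $\succeq\frac{\alpha^2}{4}\Iden$) to get $\opnorm{\A_+^{1/2}-\B_+^{1/2}}\le\alpha$, and then pays an extra $\alpha/2+\alpha/2$ for removing the shift, landing on $\opnorm{\A^{1/2}-\B^{1/2}}\le 2\alpha$. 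You instead prove the unregularized bound $\opnorm{\A^{1/2}-\B^{1/2}}\le\opnorm{\A-\B}^{1/2}$ directly via the Sylvester-type identity $\Cb\A^{1/2}+\B^{1/2}\Cb=\B-\A$ and a spectral argument on the top eigenvector of $\Cb=\B^{1/2}-\A^{1/2}$; this is a standard and correct elementary proof of the operator-monotone square-root perturbation inequality (the case analysis giving $\langle\vb,(\A^{1/2}+\B^{1/2})\vb\rangle\ge|\lambda|$ checks out using PSD-ness of $\A^{1/2}$ and $\B^{1/2}$ together with $\langle\vb,\B^{1/2}\vb\rangle-\langle\vb,\A^{1/2}\vb\rangle=\lambda$). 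Your route is self-contained, avoids the external citation, and yields the sharper bound $\opnorm{\Y-\X}\le\alpha$ rather than $2\alpha$; the paper's is a quick black-box application at the cost of the factor $2$, which is harmless for how the lemma is used downstream but not tight.
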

\begin{proof}
First we note that for any two PSD matrices $\A_+, \B_+\in\R^{n\times n}$ obeying $\A_+,\B_+\succeq \frac{\alpha^2}{4}\Iden_n$, Lemma 2.2 of \cite{schmitt1992perturbation} guarantees that
\[
\opnorm{\A_+^{1/2}-\B_+^{1/2}}\leq \frac{\|\A_+-\B_+\|}{\alpha}.
\]
In the above for a PSD matrix $\A\in\R^{n\times n}$ with an eigenvalue decomposition $\A=\mtx{U}\mtx{\Lambda}\mtx{U}^T$ we use $\A^{1/2}$ to denote the square root of the matrix given by $\A=\mtx{U}\mtx{\Lambda}^{1/2}\mtx{U}^T$. We shall use this result with $\A_+=\A+\frac{\alpha^2}{4}\Iden_n$ and $\B_+=\B+\frac{\alpha^2}{4}\Iden_n$ to conclude that
\[
\opnorm{\A_+^{1/2}-\B_+^{1/2}}\leq \frac{\opnorm{\A_+-\B_+}}{\alpha}=\frac{\opnorm{\A-\B}}{\alpha}.
\]
Furthermore, using the fact that the eigenvalues of $\A_+$ and $\B_+$ are just shifted versions of the eigenvalues of $\A$ and $\B$ by $\alpha^2/4$ we can conclude that
\[
\opnorm{\A_+^{1/2}-\A^{1/2}}\le \frac{\alpha}{2}\quad\text{and}\quad\opnorm{\B_+^{1/2}-\B^{1/2}}\leq\frac{\alpha}{2}.
\]
Combining the latter two inequalities with the assumption that $\opnorm{\A-\B}\le \alpha^2$ we conclude that
\begin{align}
\label{tempkey1}
\opnorm{\A^{1/2}-\B^{1/2}}\le& \opnorm{\A_+^{1/2}-\B_+^{1/2}}+\opnorm{\A_+^{1/2}-\A^{1/2}}+\opnorm{\B_+^{1/2}-\B^{1/2}}\nn\\
\le&\frac{\opnorm{\A-\B}}{\alpha}+\frac{\alpha}{2}+\frac{\alpha}{2}\nn\\
\le&2\alpha.
\end{align}
Suppose $p\geq n$ and assume the matrices $\A$ and $\B$ have eigenvalue decompositions given by $\A=\Ub_{\A}\La_{\A}\Ub_{\A}^T$ and $\B=\Ub_{\B}\La_{\B}\Ub_{\B}^T$. Then, any $\X\in\R^{n\times p}$ with $p\ge n$ has the form $\X=\mtx{U}_{\A}\La_{\A}^{1/2}\mtx{V}_{\A}^T$ with $\mtx{V}_{\A}\in\R^{p\times n}$ an orthonormal matrix. Now pick
\begin{align*}
\Y= \Ub_{\B}\La_{\B}^{1/2}\Ub_{\B}^T\mtx{U}_{\A}\mtx{V}_{\A}^T.
\end{align*}
Then clearly $\Y\Y^T=\B$. Furthermore, we have
\begin{align*}
\opnorm{\X-\Y}=&\opnorm{\mtx{U}_{\A}\La_{\A}^{1/2}\mtx{V}_{\A}^T-\Ub_{\B}\La_{\B}^{1/2}\Ub_{\B}^T\mtx{U}_{\A}\mtx{V}_{\A}^T}\\
=&\opnorm{\mtx{U}_{\A}\La_{\A}^{1/2}\mtx{U}_{\A}^T\mtx{U}_{\A}\mtx{V}_{\A}^T-\Ub_{\B}\La_{\B}^{1/2}\Ub_{\B}^T\mtx{U}_{\A}\mtx{V}_{\A}^T}\\
=&\opnorm{\left(\mtx{U}_{\A}\La_{\A}^{1/2}\mtx{U}_{\A}^T-\Ub_{\B}\La_{\B}^{1/2}\Ub_{\B}^T\right)\mtx{U}_{\A}\mtx{V}_{\A}^T}\\
=&\opnorm{\left(\A^{1/2}-\B^{1/2}\right)\mtx{U}_{\A}\mtx{V}_{\A}^T}\\
=&\opnorm{\A^{1/2}-\B^{1/2}}.
\end{align*}
Combining the latter with \eqref{tempkey1} completes the proof. 
\end{proof}

\subsubsection{Key lemmas for general nonlinearities}
\label{key5}
Throughout this section we assume $\Jb$ is the reference Jacobian per Definition \ref{tjac} with eigenvalue decomposition $\Jb=\Ub\mtx{\Lambda}\Vb^T=\sum_{s=1}^{\K n} \la_s\ub_s\vb_s^T$ with $\mtx{\Lambda}=\text{diag}(\vct{\la})$. We also define $\vct{a}=\mtx{U}^T\rb_0=\mtx{U}^T\widetilde{\rb}_0\in\R^{nK}$ be the coefficients of the initial residual in the span of the column space of this reference Jacobian. 

We shall first characterize the evolution of the linearized parameter $\bbteta_\tau$ and residual $\rbb_{\tau}$ vectors from \eqref{lin res eq} in the following lemma.
\begin{lemma} \label{lem linear} The linearized residual vector $\rbb_{\tau}$ can be written in the form
\begin{align}
\rbb_\tau=\mtx{U}\left(\Iden-\eta\mtx{\Lambda}^2\right)^\tau\vct{a}=\sum_{s=1}^{nK} (1-\eta \la_s^2)^{\tau} a_s\ub_s.\label{res result}
\end{align}
Furthermore, assuming $\eta\leq 1/\la_1^2$ the linear updates $\bbteta_\tau$ obey
\begin{align}
\label{tetaresult}
\tn{\bbteta_\tau-\bbteta_0}^2\leq\sum_{s=1}^r  \frac{a_s^2}{\la_s^2}+\tau^2\eta^2\sum_{s=r+1}^{nK} \la_s^2 a_s^2.
\end{align}
\end{lemma}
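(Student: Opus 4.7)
The plan is to exploit the SVD $\mtx{J}=\mtx{U}\mtx{\Lambda}\mtx{V}^T$ of the reference Jacobian, which by Definition \ref{tjac} has $\mtx{U}\in\R^{nK\times nK}$ square orthonormal and $\mtx{V}\in\R^{\max(nK,p)\times nK}$ with orthonormal columns. This gives the clean decomposition $\mtx{J}\mtx{J}^T=\mtx{U}\mtx{\Lambda}^2\mtx{U}^T$ with $\mtx{U}\mtx{U}^T=\mtx{I}_{nK}$, which is the only structural fact required for both parts.

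For the residual identity, I would start from \eqref{lin res eq}, which already gives $\rbb_\tau=(\mtx{I}-\eta\mtx{J}\mtx{J}^T)^\tau\rb_0$. Substituting the spectral form $\mtx{J}\mtx{J}^T=\mtx{U}\mtx{\Lambda}^2\mtx{U}^T$ and using $\mtx{U}\mtx{U}^T=\mtx{I}$ yields
\[
(\mtx{I}-\eta\mtx{J}\mtx{J}^T)^\tau=\mtx{U}(\mtx{I}-\eta\mtx{\Lambda}^2)^\tau\mtx{U}^T.
\]
Recalling $\vct{a}=\mtx{U}^T\rb_0$ and expanding in the basis $\{\ub_s\}$ gives \eqref{res result} directly.

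For the parameter bound, I would unroll the linear update \eqref{lingd} from $\bbteta_0=\brteta_0$ to obtain $\bbteta_\tau-\bbteta_0=-\eta\sum_{t=0}^{\tau-1}\mtx{J}^T\rbb_t$. Plugging in the formula for $\rbb_t$ and using $\mtx{J}^T=\mtx{V}\mtx{\Lambda}\mtx{U}^T$ collapses the expression to
\[
\bbteta_\tau-\bbteta_0=-\mtx{V}\,\mtx{\Lambda}\Big[\eta\sum_{t=0}^{\tau-1}(\mtx{I}-\eta\mtx{\Lambda}^2)^t\Big]\vct{a}.
\]
The geometric sum on each diagonal slot evaluates to $(1-(1-\eta\lambda_s^2)^\tau)/\lambda_s^2$ when $\lambda_s>0$ (and contributes $0$ when $\lambda_s=0$), so after multiplying by $\eta\lambda_s$ the $s$-th coefficient in the orthonormal $\mtx{V}$ basis becomes $-(1-(1-\eta\lambda_s^2)^\tau)a_s/\lambda_s$. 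Orthonormality of $\mtx{V}$ then gives
\[
\tn{\bbteta_\tau-\bbteta_0}^2=\sum_{s:\lambda_s>0}\frac{(1-(1-\eta\lambda_s^2)^\tau)^2}{\lambda_s^2}a_s^2.
\]

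The final step is to split this sum at the cut-off index $r$. Since $\eta\le 1/\lambda_1^2$, each factor $1-\eta\lambda_s^2$ lies in $[0,1]$, so $|1-(1-\eta\lambda_s^2)^\tau|\le 1$ and the terms with $s\le r$ contribute at most $a_s^2/\lambda_s^2$, matching the first sum in \eqref{tetaresult}. For $s>r$ the Bernoulli-type inequality $(1-x)^\tau\ge 1-\tau x$ with $x=\eta\lambda_s^2\in[0,1]$ yields $1-(1-\eta\lambda_s^2)^\tau\le \tau\eta\lambda_s^2$, which after squaring and dividing by $\lambda_s^2$ gives exactly $\tau^2\eta^2\lambda_s^2 a_s^2$, matching the second sum. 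There is no genuine obstacle here; the only minor bookkeeping issue is the zero-padding of $\bbteta_\tau$ up to dimension $\max(nK,p)$, which is precisely the reason $\mtx{J}$ and $\mtx{V}$ are defined with that second dimension so that $\mtx{V}^T\mtx{V}=\mtx{I}_{nK}$ and the norm computation above is legitimate.
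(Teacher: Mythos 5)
Your proposal is correct and follows essentially the same route as the paper: diagonalizing $\mtx{J}\mtx{J}^T=\mtx{U}\mtx{\Lambda}^2\mtx{U}^T$ for \eqref{res result}, unrolling the linearized gradient update through $\mtx{J}^T=\mtx{V}\mtx{\Lambda}\mtx{U}^T$, and evaluating the resulting geometric series coordinate-by-coordinate in the $\{\vct{v}_s\}$ basis before splitting at $r$. The paper bounds the partial geometric sum by $\tau$ directly on the nuisance indices rather than invoking Bernoulli on the evaluated sum, but that is the same estimate; your explicit note that $\bbteta_\tau-\bbteta_0$ lies in the range of $\mtx{V}$ (so $\mtx{V}^T\mtx{V}=\mtx{I}_{nK}$ suffices) matches the paper's implicit use of this fact.
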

\begin{proof}
Using the fact that $\Jb\Jb^T=\mtx{U}\mtx{\Lambda}^2\mtx{U}^T$ we have
\begin{align*}
(\Iden-\eta\Jb\Jb^T)^\tau=\mtx{U}\left(\Iden-\eta\mtx{\Lambda}^2\right)^\tau\mtx{U}^T
\end{align*}
Using the latter combined with \eqref{lin res eq} we thus have
\begin{align*}
\rbb_{\tau}=&(\Iden-\eta\Jb\Jb^T)^\tau\rb_0,\\
=&\mtx{U}\left(\Iden-\eta\mtx{\Lambda}^2\right)^\tau\mtx{U}^T\rb_0,\\
=&\mtx{U}\left(\Iden-\eta\mtx{\Lambda}^2\right)^\tau\vct{a},\\
=&\sum_{s=1}^{nK} (1-\eta \la_s^2)^{\tau} a_s\ub_s,
\end{align*}
completing the proof of \eqref{res result}.

We now turn our attention to proving \eqref{tetaresult} by tracking the representation of $\bbteta_\tau$ in terms of the right singular vectors of $\Jb$. To do this note that using \eqref{res result} we have
\begin{align*}
\Jb^T\rbb_t=\mtx{V}\mtx{\Lambda}\mtx{U}^T\rbb_t=\mtx{V}\mtx{\Lambda}\left(\Iden-\eta\mtx{\Lambda}^2\right)^t\vct{a}.
\end{align*}
Using the latter together with the gradient update on the linearized problem we have
\begin{align*}
\bbteta_\tau-\bbteta_0=&-\eta\left(\sum_{t=0}^{\tau-1}\nabla \mathcal{L}_{lin}(\bbteta_t)\right)=-\eta\left(\sum_{t=0}^{\tau-1}\Jb^T\rbb_t\right)=-\eta\mtx{V}\left(\sum_{t=0}^{\tau-1}\mtx{\Lambda}\left(\Iden-\eta\mtx{\Lambda}^2\right)^t\right)\vct{a}.
\end{align*}
Thus for any $s\in\{1,2,\ldots,nK\}$
\begin{align*}
\vct{v}_s^T\left(\bbteta_\tau-\bbteta_0\right)=-\eta\lambda_s\vct{a}_s\left(\sum_{t=0}^{\tau-1}\left(1-\eta\lambda_s^2\right)^t\right)=-\eta\lambda_s\vct{a}_s\frac{1-\left(1-\eta\lambda_s^2\right)^\tau}{\eta\lambda_s^2}=-\vct{a}_s\frac{1-\left(1-\eta\lambda_s^2\right)^\tau}{\lambda_s}.
\end{align*}
Noting that for $\eta\le 1/\lambda_1^2\le 1/\lambda_s^2$ we have $1-\eta\lambda_s^2\ge 0$, the latter identity implies that
\begin{align}
\label{vproj1}
\abs{\vct{v}_s^T\left(\bbteta_\tau-\bbteta_0\right)}\le \frac{\abs{\vct{a}_s}}{\lambda_s}.
\end{align}
Furthermore, using the fact that $1-\eta\lambda_s^2\le 1$ we have
\begin{align}
\label{vproj2}
\abs{\vct{v}_s^T\left(\bbteta_\tau-\bbteta_0\right)}=\eta\lambda_s\abs{\vct{a}_s}\left(\sum_{t=0}^{\tau-1}\left(1-\eta\lambda_s^2\right)^t\right)\le \eta\lambda_s\abs{\vct{a}_s}\tau
\end{align}
Combining \eqref{vproj1} for $1\le s\le r$ and \eqref{vproj2} for $s>r$ we have
\[
\twonorm{\bbteta_\tau-\bbteta_0}^2=\sum_{s=1}^{nK} \abs{\vct{v}_s^T\left(\bbteta_\tau-\bbteta_0\right)}^2\leq \sum_{s=1}^r  \frac{a_s^2}{\la_s^2}+\tau^2\eta^2\sum_{s=r+1}^{nK} \la_s^2 a_s^2,
\]
completing the proof of \eqref{tetaresult}.
\end{proof}
For future use we also state a simple corollary of the above Lemma below.
\begin{corollary}\label{cor simp} Consider the setting and assumptions of Lemma \ref{lem linear}. Then, after $\tau$ iterations we have
\begin{align}
\label{cor76conc1}
\twonorm{\rbb_\tau}\le \left(1-\eta\alpha^2\right)^\tau\tn{\Pi_{\Rc}(\rb_0)}+\tn{\Pi_{\Rcb}(\rb_0)}.
\end{align}
Furthermore, after $T=\frac{\Gamma}{\eta\alpha^2}$ iterations we have
\begin{align}
\label{cor76conc2}
\tn{\rbb_T}\le e^{-\Gamma}\tn{\Pi_{\Rc}(\rb_0)}+\tn{\Pi_{\Rcb}(\rb_0)}.
\end{align}
and
\begin{align*}
&\twonorm{\bbteta_T-\bbteta_0}^2\leq \sum_{s=1}^r  \frac{\vct{a}_s^2}{\la_s^2}+\Gamma^2\sum_{s=r+1}^{nK}\frac{\la_s^2 \vct{a}_s^2}{\alpha^4}=\frac{\upp^2}{\alpha^2}.
\end{align*}
with $\upp$ given by \eqref{pinv def2} per Definition \ref{bndearlystopval}.
\end{corollary}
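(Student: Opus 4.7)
The proposal is to derive all three claims directly from the explicit formulas in Lemma \ref{lem linear} by decomposing along the information/nuisance subspaces and bounding each component separately. Nothing new is needed; this really is a packaging corollary, so I do not expect any serious obstacle.

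\textbf{Step 1 (Orthogonal split of the residual).} Starting from the representation $\rbb_\tau = \sum_{s=1}^{nK}(1-\eta\lambda_s^2)^\tau a_s \ub_s$ given by \eqref{res result} in Lemma \ref{lem linear}, I would split the sum at index $r$. Because $\{\ub_s\}$ is orthonormal, the pieces $\Pi_{\calF}(\rbb_\tau)=\sum_{s\le r}(1-\eta\lambda_s^2)^\tau a_s \ub_s$ and $\Pi_{\calS}(\rbb_\tau)=\sum_{s>r}(1-\eta\lambda_s^2)^\tau a_s\ub_s$ are orthogonal and $\sum_s a_s^2 = \twonorm{\rb_0}^2$.

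\textbf{Step 2 (Bounding each component).} For the information part, $\lambda_s\ge\alpha$ and $\eta\le 1/\lambda_1^2$ imply $0\le 1-\eta\lambda_s^2 \le 1-\eta\alpha^2$, so
\[
\twonorm{\Pi_{\calF}(\rbb_\tau)}^2 \;\le\; (1-\eta\alpha^2)^{2\tau}\sum_{s=1}^r a_s^2 \;=\;(1-\eta\alpha^2)^{2\tau}\twonorm{\Pi_{\calF}(\rb_0)}^2.
\]
For the nuisance part, $0\le 1-\eta\lambda_s^2\le 1$ gives
\[
\twonorm{\Pi_{\calS}(\rbb_\tau)}^2 \;\le\; \sum_{s>r} a_s^2 \;=\;\twonorm{\Pi_{\calS}(\rb_0)}^2.
\]
The triangle inequality $\twonorm{\rbb_\tau}\le \twonorm{\Pi_{\calF}(\rbb_\tau)}+\twonorm{\Pi_{\calS}(\rbb_\tau)}$ then yields \eqref{cor76conc1}. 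Specializing to $\tau=T=\Gamma/(\eta\alpha^2)$ and using $(1-\eta\alpha^2)^T\le e^{-\eta\alpha^2 T}=e^{-\Gamma}$ gives \eqref{cor76conc2}.

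\textbf{Step 3 (Parameter-distance bound).} For the last claim I would invoke \eqref{tetaresult} directly with $\tau=T$:
\[
\twonorm{\bbteta_T-\bbteta_0}^2 \;\le\; \sum_{s=1}^r \frac{a_s^2}{\lambda_s^2} + T^2\eta^2\sum_{s=r+1}^{nK}\lambda_s^2 a_s^2.
\]
Since $T^2\eta^2=\Gamma^2/\alpha^4$, the right-hand side equals $\sum_{s\le r}\frac{a_s^2}{\lambda_s^2}+\frac{\Gamma^2}{\alpha^4}\sum_{s>r}\lambda_s^2 a_s^2$, which is exactly $\upp^2/\alpha^2$ by the definition \eqref{pinv def2}, and both bounds in the stated equality follow at once.

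\textbf{Expected obstacles.} None of real substance. The only places to be careful are (i) verifying $1-\eta\lambda_s^2\ge 0$ for all $s$, which uses $\eta\le 1/\lambda_1^2\le 1/\beta^2$ (already a standing hypothesis of the meta theorem, and of Lemma \ref{lem linear} itself), and (ii) the cleanest way to combine the two $\ell_2$ pieces into a single scalar bound; I prefer triangle inequality over a Pythagorean-style bound because it matches the form of \eqref{cor76conc1} with a sum rather than a square root of a sum.
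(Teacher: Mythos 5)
Your proposal is correct and follows essentially the same route as the paper's own proof: decompose the residual along the information and nuisance subspaces via \eqref{res result}, bound the contraction factors by $(1-\eta\alpha^2)^\tau$ and $1$ respectively, combine by the triangle inequality, specialize $\tau=T$ with $(1-\eta\alpha^2)^T\le e^{-\Gamma}$, and substitute $T^2\eta^2=\Gamma^2/\alpha^4$ into \eqref{tetaresult} for the parameter bound.
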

\begin{proof} 
To prove the first bound on the residual (\eqref{cor76conc1})note that using \eqref{res result} we have
\begin{align*}
\mtx{U}_{\mathcal{I}}^T\rbb_\tau=\left(\mtx{I}-\eta\Lambda_{\calF}^2\right)^\tau\mtx{U}_{\mathcal{I}}^T\rbb_0\quad \text{and}\quad\mtx{U}_{\mathcal{N}}^T\rbb_\tau=\left(\mtx{I}-\eta\Lambda_{\calS}^2\right)^\tau\mtx{U}_{\calS}^T\rbb_0
\end{align*}
Thus, using the fact that for $s\leq r$ we have $\lambda_s\ge \alpha$ we have $(1-\eta\la_s^2)^{\tau}\leq(1-\eta\alpha^2)^{\tau}$ and for $s>r$ we have $(1-\eta\la_s^2)^{\tau}\le 1$, we can conclude that
\begin{align*}
\twonorm{\mtx{U}_{\mathcal{I}}^T\rbb_\tau}\le \left(1-\eta\alpha^2\right)^\tau\twonorm{\mtx{U}_{\mathcal{I}}^T\rb_0}\quad\text{and}\quad\twonorm{\mtx{U}_{\mathcal{N}}^T\rbb_\tau}\le \twonorm{\mtx{U}_{\mathcal{N}}^T\rb_0}.
\end{align*}
Combining these with the triangular inequality we have
\begin{align*}
\twonorm{\rbb_\tau}=\twonorm{\begin{bmatrix}\mtx{U}_{\mathcal{I}}^T\rbb_\tau\\\mtx{U}_{\mathcal{N}}^T\rbb_\tau\end{bmatrix}}\le\twonorm{\mtx{U}_{\mathcal{I}}^T\rbb_\tau}+\twonorm{\mtx{U}_{\mathcal{N}}^T\rbb_\tau}\le\left(1-\eta\alpha^2\right)^\tau\twonorm{\mtx{U}_{\mathcal{I}}^T\rb_0}+\twonorm{\mtx{U}_{\mathcal{N}}^T\rb_0},
\end{align*}
concluding the proof of \eqref{cor76conc1}. The second bound on the residual simply follows from the fact that $(1-\eta\alpha^2)^{T}\leq e^{-\Gamma}$. The bound on $\twonorm{\bbteta_T-\bbteta_0}^2$ is trivially obtained by using $T^2=\frac{\Gamma^2}{\eta^2\alpha^4}$ in \eqref{tetaresult}.
\end{proof}
The lemma above shows that with enough iterations, gradient descent on the linearized problem fits the residual over the information space and the residual is (in the worst case) unchanged over the nuisance subspace $\Rcb$. Our hypothesis is that, when the model is generalizable the residual mostly lies on the information space $\Rc$ which contains the directions aligned with the top singular vectors. Hence, the smaller term $\tn{\Pi_{\mathcal{N}}(\rb_0)}$ over the nuisance space will not affect generalization significantly. To make this intuition precise however we need to connect the residual of the original problem to that of the linearized problem. The following lemma sheds light on the evolution of the original problem \eqref{nonlin gd} by characterizing the evolution of the difference between the residuals of the original and linearized problems from one iteration to the next.
\begin{lemma}[Keeping track of perturbation - one step]\label{grwth} Assume Assumptions \ref{ass2} and \ref{ass3} hold and $\bteta_{\tau}$ and $\bteta_{\tau+1}$ are within an $R$ neighborhood of $\bteta_0$, that is,
\begin{align*}
\twonorm{\bteta_{\tau}-\bteta_0}\le R\quad\text{and}\quad\twonorm{\bteta_{\tau+1}-\bteta_0}\le R.
\end{align*}
Then with a learning rate obeying $\eta\leq 1/\beta^2$, the deviation in the residuals of the original and linearized problems $\eb_{\tau+1}=\rb_{\tau+1}-\rbb_{\tau+1}$ obey
\begin{align}
\tn{\eb_{\tau+1}}&\leq \eta(\eps_0^2+\eps\beta)\tn{\rbb_{\tau}}+(1+\eta\eps^2)\tn{\eb_{\tau}}.\label{etau repeat}
\end{align}
\end{lemma}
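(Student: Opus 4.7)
The plan is to write $\rb_{\tau+1}$ as a linear update of $\rb_\tau$ via an averaged Jacobian, subtract the corresponding linearized update, and then carefully split the resulting expression into a contraction-on-error term and a small perturbation term driven by $\rbb_\tau$.

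First, I would use the fundamental theorem of calculus to write
\[
f(\bteta_{\tau+1})-f(\bteta_\tau)=\mathcal{C}(\bteta_{\tau+1}-\bteta_\tau),\qquad \mathcal{C}:=\int_0^1 \mathcal{J}\bigl(\bteta_\tau+t(\bteta_{\tau+1}-\bteta_\tau)\bigr)\,dt.
\]
Since the ball $\{\bteta:\tn{\bteta-\bteta_0}\le R\}$ is convex and contains both $\bteta_\tau$ and $\bteta_{\tau+1}$, every $\tilde{\bteta}$ on the segment satisfies $\|\mathcal{J}(\tilde{\bteta})-\mathcal{J}(\bteta_0)\|\le \eps/2$ by Assumption \ref{ass3}, so $\|\mathcal{C}-\mathcal{J}(\bteta_0)\|\le \eps/2$ and (with $\mathcal{J}_\tau:=\mathcal{J}(\bteta_\tau)$) also $\|\mathcal{J}_\tau-\mathcal{J}(\bteta_0)\|\le \eps/2$, hence $\|\mathcal{C}-\mathcal{J}_\tau\|\le \eps$ and $\|\mathcal{C}\|,\|\mathcal{J}_\tau\|\le \beta$ by Assumption \ref{ass2}. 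Using $\bteta_{\tau+1}-\bteta_\tau=-\eta\mathcal{J}_\tau^T\rb_\tau$ gives
\[
\rb_{\tau+1}=\bigl(\Iden-\eta\,\mathcal{C}\mathcal{J}_\tau^T\bigr)\rb_\tau.
\]
Subtracting $\rbb_{\tau+1}=(\Iden-\eta\Jb\Jb^T)\rbb_\tau$ and writing $\rb_\tau=\rbb_\tau+\eb_\tau$, I get the clean identity
\[
\eb_{\tau+1}=\bigl(\Iden-\eta\,\mathcal{C}\mathcal{J}_\tau^T\bigr)\eb_\tau+\eta\bigl(\Jb\Jb^T-\mathcal{C}\mathcal{J}_\tau^T\bigr)\rbb_\tau.
\]

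For the first term I invoke Lemma \ref{asym pert} with $\A=\mathcal{C}$, $\B=\mathcal{J}_\tau$ (both of norm $\le\beta$, differing by at most $\eps$) and $\eta\le 1/\beta^2$, obtaining
\[
\tn{(\Iden-\eta\,\mathcal{C}\mathcal{J}_\tau^T)\eb_\tau}\le (1+\eta\eps^2)\tn{\eb_\tau}.
\]
For the second term the main subtlety is to recover the $\eps_0^2$ (not $\eps_0\beta$) dependence advertised in the conclusion. I would split through the initial Jacobian:
\[
\Jb\Jb^T-\mathcal{C}\mathcal{J}_\tau^T=\underbrace{\bigl(\Jb\Jb^T-\mathcal{J}(\bteta_0)\mathcal{J}(\bteta_0)^T\bigr)}_{\|\cdot\|\,\le\,\eps_0^2\text{ by Def.~\ref{tjac}}}+\underbrace{\mathcal{J}(\bteta_0)\bigl(\mathcal{J}(\bteta_0)-\mathcal{J}_\tau\bigr)^T+\bigl(\mathcal{J}(\bteta_0)-\mathcal{C}\bigr)\mathcal{J}_\tau^T}_{\|\cdot\|\,\le\,\beta\cdot\eps/2+\eps/2\cdot\beta\,=\,\eps\beta}.
\]
The triangle inequality then yields $\|\Jb\Jb^T-\mathcal{C}\mathcal{J}_\tau^T\|\le \eps_0^2+\eps\beta$, so the second term contributes at most $\eta(\eps_0^2+\eps\beta)\tn{\rbb_\tau}$.

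Combining the two bounds gives exactly \eqref{etau repeat}. The only real subtlety is the second decomposition: a naive $\|\Jb-\mathcal{J}(\bteta_0)\|\le\eps_0$ triangle-inequality bound would produce an $\eps_0\beta$ term that is too large for the downstream induction in Theorem \ref{many step thm}; the quadratic bound $\eps_0^2$ is essential and is the reason Definition \ref{tjac} includes the separate PSD-closeness hypothesis $\|\mathcal{J}(\bteta_0)\mathcal{J}(\bteta_0)^T-\Jb\Jb^T\|\le \eps_0^2$. Similarly, applying Lemma \ref{asym pert} (rather than bounding $\|\Iden-\eta\mathcal{C}\mathcal{J}_\tau^T\|$ by its operator norm) is what keeps the error-amplification factor at $1+\eta\eps^2$ instead of something like $1+\eta\eps\beta$, which is crucial for the later telescoping argument.
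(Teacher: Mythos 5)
Your proof is correct and follows essentially the same route as the paper's: the same averaged Jacobian $\mathcal{C}=\Jc(\bteta_{\tau+1},\bteta_\tau)$, the same decomposition $\eb_{\tau+1}=(\Iden-\eta\mathcal{C}\mathcal{J}_\tau^T)\eb_\tau+\eta(\Jb\Jb^T-\mathcal{C}\mathcal{J}_\tau^T)\rbb_\tau$, the same invocation of Lemma~\ref{asym pert} for the first term, and the same three-way split through $\mathcal{J}(\bteta_0)$ for the second term to obtain $\eps_0^2+\eps\beta$ rather than $\eps_0\beta$. Your closing remark on why the quadratic $\eps_0^2$ bound from Definition~\ref{tjac} (rather than a naive $\eps_0\beta$ via $\|\mathcal{J}(\bteta_0)-\Jb\|\le\eps_0$) is structurally necessary is accurate and matches the role this plays in the induction of Theorem~\ref{many step thm}.
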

\begin{proof} For simplicity, denote $\B_1=\Jc(\bteta_{\tau+1},\bteta_\tau)$, $\B_2=\Jc(\bteta_\tau)$, $\A=\Jc(\bteta_0)$ where
\[
\Jc(\bb,\ab)=\int_{0}^1 \Jc(t \bb+(1-t)\ab)dt.
\]
We can write the predictions due to $\bteta_{\tau+1}$ as
\begin{align*}
f(\bteta_{\tau+1})&=f(\bteta_{\tau}-\eta\grad{\bteta_\tau})=f(\bteta_{\tau})+\eta \Jc(\bteta_{\tau+1},\bteta_\tau) \grad{\bteta_\tau}\\
&=f(\bteta_{\tau})+\eta \Jc(\bteta_{\tau+1},\bteta_\tau)\Jc^T(\bteta_\tau)(f(\bteta_{\tau})-\y).
\end{align*}
This implies that
\[
\rb_{\tau+1}=f(\bteta_{\tau+1})-\y=(\Iden-\eta\B_1\B_2^T)\rb_\tau.
\]
Similarly, for linearized problem we have $\rbb_{\tau+1}=(\Iden-\eta\Jb\Jb^T)\rbb_\tau$. Thus,%
\begin{align}
\label{tkey2}
\tn{\eb_{\tau+1}}&=\twonorm{(\Iden-\eta\B_1\B_2^T)\rb_\tau-(\Iden-\eta\Jb\Jb^T)\rbb_\tau}\nn\\
&=\twonorm{(\Iden-\eta\B_1\B_2^T)\eb_\tau-\eta(\B_1\B_2^T-\Jb\Jb^T)\rbb_\tau}\nn\\
&\leq \tn{(\Iden-\eta\B_1\B_2^T)\eb_\tau}+\eta\tn{(\B_1\B_2^T-\Jb\Jb^T)\rbb_\tau}\nn\\
&\leq \tn{(\Iden-\eta\B_1\B_2^T)\eb_\tau}+\eta\opnorm{(\B_1\B_2^T-\Jb\Jb^T)}\tn{\rbb_\tau}.
\end{align}
We proceed by bounding each of these two terms. For the first term, we apply Lemma \ref{asym pert} with $\A=\B_1$ and $\B=\B_2$ and use $\|\B_1-\B_2\|\leq \eps$ to conclude that
\begin{align}
\label{fstkey2}
\tn{(\Iden-\eta\B_1\B_2^T)\eb_\tau}\leq (1+\eta\eps^2)\tn{\eb_\tau}.
\end{align}
Next we turn our attention to bounding the second term. To this aim note that
\begin{align}
\label{sstkey2}
\|\B_1\B_2^T-\Jb\Jb^T\|&=\|\B_1\B_2^T-\A\A^T+\A\A^T-\Jb\Jb^T\|\nn\\
&\leq \|\B_1\B_2^T-\A\A^T\|+\|\A\A^T-\Jb\Jb^T\|\nn\\
&\leq \|(\B_1-\A)\B_2^T\|+\|\A(\B_2-\A)^T\|+\|\A\A^T-\Jb\Jb^T\|\nn\\
&\le \|\B_1-\A\|\|\B_2\|+\|\B_2-\A\|\|\A\|+\|\A\A^T-\Jb\Jb^T\|\nn\\
&\leq \beta\frac{\eps}{2}+\beta\frac{\eps}{2}+ \eps_0^2\nn\\
&=\eps_0^2+\eps\beta.
\end{align}
In the last inequality we use the fact that per Assumption \ref{ass3} we have $\|\B_1-\A\|\le \eps/2$ and $\|\B_2-\A\|\le \eps/2$ as well as the fact that per Definition \ref{tjac} $\|\A\A^T-\Jb\Jb^T\|\le \eps_0^2$. Plugging \eqref{fstkey2} and \eqref{sstkey2} in \eqref{tkey2} completes the proof.
\end{proof}
Next we prove a result about the growth of sequences obeying certain assumptions. As we will see later on in the proofs this lemma allows us to control the growth of the perturbation between the original and linearized residuals ($e_\tau=\twonorm{\eb_\tau}$).  
\begin{lemma}[Bounding residual perturbation growth for general nonlinearities] \label{lem growth} Consider positive scalars $\Gamma,\alpha,\eps,\eta>0$. Also assume $\eta\leq 1/\alpha^2$ and $\alpha\ge \sqrt{2\Gamma}\eps$ and set $T=\frac{\Gamma}{\eta\alpha^2}$. Assume the scalar sequences $e_\tau$ (with $e_0=0$) and $\widetilde{r}_\tau$ obey the following identities 
\begin{align}
{\widetilde{r}}_{\tau}\le& (1-\eta\alpha^2)^\tau \rho_++\rho_-,\nn\\
e_{\tau}\le& (1+\eta\eps^2)e_{\tau-1}+ \eta \Theta{\widetilde{r}}_{\tau-1},\label{lem78ass}
\end{align}
for all $0\le \tau\le T$ and non-negative values $\rho_-,\rho_+\ge 0$. Then, for all $0\leq \tau\leq T$, 
\begin{align}
e_{\tau}\leq \Theta \Lambda\quad\text{holds with}\quad\Lambda=\frac{2(\Gamma\rho_-+\rho_+)}{\alpha^2}.\label{e control}
\end{align}
\end{lemma}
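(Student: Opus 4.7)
The plan is to unroll the recursion for $e_\tau$ and then bound the resulting sum carefully using the smallness of $\eps$ relative to $\alpha$ and the stopping time constraint. Specifically, setting $e_0=0$ and iterating the second inequality in \eqref{lem78ass} yields
\[
e_\tau \;\le\; \eta\Theta \sum_{k=0}^{\tau-1}(1+\eta\eps^2)^{\tau-1-k}\,\widetilde r_k,
\]
and substituting the bound $\widetilde r_k\le (1-\eta\alpha^2)^k\rho_+ + \rho_-$ decomposes the right-hand side into the ``decay'' sum $S_1=\eta\Theta\rho_+\sum_{k=0}^{\tau-1}(1+\eta\eps^2)^{\tau-1-k}(1-\eta\alpha^2)^k$ and the ``constant'' sum $S_2=\eta\Theta\rho_-\sum_{k=0}^{\tau-1}(1+\eta\eps^2)^{\tau-1-k}$.

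To control the common growth factor $(1+\eta\eps^2)^\tau$ over the time horizon $\tau\le T=\Gamma/(\eta\alpha^2)$, I will use $(1+\eta\eps^2)^\tau\le e^{\tau\eta\eps^2}\le e^{T\eta\eps^2}=e^{\Gamma\eps^2/\alpha^2}$, and then invoke the standing assumption $\alpha\ge\sqrt{2\Gamma}\eps$ (equivalently $\Gamma\eps^2/\alpha^2\le 1/2$) to conclude $(1+\eta\eps^2)^\tau\le \sqrt e\le 2$. For $S_2$ this gives directly $S_2\le \eta\Theta\rho_- T\cdot 2 = 2\Theta\rho_-\Gamma/\alpha^2$. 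For $S_1$ I will factor out $(1+\eta\eps^2)^{\tau-1}$ and sum the geometric series in $(1-\eta\alpha^2)/(1+\eta\eps^2)\in[0,1)$, obtaining
\[
S_1 \;\le\; \eta\Theta\rho_+\,(1+\eta\eps^2)^{\tau-1}\cdot \frac{1+\eta\eps^2}{\eta(\alpha^2+\eps^2)} \;\le\; \frac{\Theta\rho_+\,(1+\eta\eps^2)^\tau}{\alpha^2} \;\le\; \frac{2\Theta\rho_+}{\alpha^2}.
\]
Adding the two bounds yields $e_\tau\le 2\Theta(\Gamma\rho_-+\rho_+)/\alpha^2 = \Theta\Lambda$, which is precisely \eqref{e control}.

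The main (only) delicate point is the interplay between the amplification factor $(1+\eta\eps^2)$ in the recursion for $e_\tau$ and the contraction factor $(1-\eta\alpha^2)$ coming from $\widetilde r_\tau$: a priori the amplification could blow up exponentially. The key observation that makes the argument go through is that the stopping time $T=\Gamma/(\eta\alpha^2)$ together with the spectral separation $\alpha\ge\sqrt{2\Gamma}\eps$ exactly ensures that the total amplification $(1+\eta\eps^2)^T$ stays a constant (bounded by $\sqrt e$), so the decay in $\widetilde r_k$ dominates and we pay only an $O(1/\alpha^2)$ price from the geometric sum. The step sizes constraint $\eta\le 1/\alpha^2$ ensures $1-\eta\alpha^2\ge 0$ so that all the manipulations above are valid without sign issues.
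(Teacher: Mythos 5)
Your proof is correct, and it takes a genuinely different route from the paper's. The paper argues by induction: assuming $e_t\le\Theta\Lambda$ for $t<\tau$, it replaces the amplification term $\eta\eps^2e_t$ by $\eta\eps^2\Theta\Lambda$, which turns the multiplicative $(1+\eta\eps^2)$ factor into an additive per-step increment $\eta\Theta\eps^2\Lambda$, and then sums the resulting telescoped chain; the assumption $\alpha^2\ge2\Gamma\eps^2$ closes the loop at the end via $\frac{\Gamma\eps^2\Lambda}{\alpha^2}\le\frac{\Lambda}{2}$. You instead unroll the recursion from $e_0=0$ into an explicit weighted sum $\eta\Theta\sum_{k=0}^{\tau-1}(1+\eta\eps^2)^{\tau-1-k}\widetilde r_k$, split it into a decaying geometric sum $S_1$ and a constant sum $S_2$, evaluate each in closed form, and observe that the same hypothesis $\alpha\ge\sqrt{2\Gamma}\eps$ guarantees $(1+\eta\eps^2)^\tau\le\e^{\Gamma\eps^2/\alpha^2}\le\sqrt\e\le2$ over the whole horizon $\tau\le T$. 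Both routes hinge on exactly the same interplay between the stopping time and the spectral gap, and both yield exactly $e_\tau\le\Theta\Lambda$. Your argument has the pedagogical advantage of being non-inductive and making the bounded-amplification mechanism explicit; the paper's inductive bootstrap is perhaps easier to generalize to recursions without a clean closed-form unrolling, since it never needs to sum the exponential factors explicitly.
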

\begin{proof} We shall prove the result inductively. Suppose \eqref{e control} holds for all $t\leq \tau-1$. Consequently, we have
\begin{align*}
e_{t+1}\le&(1+\eta\eps^2)e_t+ \eta\Theta {\widetilde{r}}_t\\
\le&e_t+\eta\eps^2e_t+\eta\Theta\left((1-\eta\alpha^2)^t \rho_++\rho_-\right)\\
\le&e_t+\eta \Theta\left(\eps^2 \Lambda+(1-\eta\alpha^2)^{t} \rho_++\rho_-\right).
\end{align*}
Thus
\begin{align}
\frac{e_{t+1}-e_{t}}{\Theta}\le \eta \left(\eps^2 \Lambda+(1-\eta\alpha^2)^{t} \rho_++\rho_-\right).\label{diff eq}
\end{align}
Summing up both sides of \eqref{diff eq} for $0\leq t\leq \tau-1$ we conclude that
\begin{align*}
\frac{e_{\tau}}{\Theta}&=\sum_{t=0}^{\tau-1} \frac{e_{t+1}-e_t}{\Theta}\\
&\leq \eta\tau\left(\eps^2\Lambda+\rho_-\right)+\eta\rho_+\sum_{t=0}^{\tau-1}(1-\eta\alpha^2)^{t} \\
&= \eta\tau\left(\eps^2\Lambda+\rho_-\right)+\eta\rho_+\frac{1-\left(1-\eta\alpha^2\right)^\tau}{\eta\alpha^2} \\
&\le \eta\left( \tau\eps^2 \Lambda+\frac{\rho_+}{\eta\alpha^2} +\tau\rho_-\right)\\
&= \eta\tau(\eps^2 \Lambda+\rho_-)+\frac{\rho_+}{\alpha^2}\\
&\leq \eta T(\eps^2 \Lambda+\rho_-)+\frac{\rho_+}{\alpha^2}\\
&=\frac{\Gamma\eps^2\Lambda +\Gamma\rho_-+\rho_+}{\alpha^2}\\
&=\frac{\Gamma\eps^2\Lambda}{\alpha^2}+\frac{\Lambda}{2}\\
&\leq \Lambda,
\end{align*}
where in the last inequality we used the fact that $\alpha^2\geq 2\Gamma\eps^2$. This completes the proof of the induction step and the proof of the lemma.
\end{proof}

\subsubsection{Completing the proof of Theorem \ref{many step thm}}
\label{comp5}
With the key lemmas in place in this section we wish to complete the proof of Theorem \ref{many step thm}.
We will use induction to prove the result. Suppose the statement is true for some $\tau-1\leq T-1$. In particular, we assume the identities \eqref{main res eq1} and \eqref{main res eq2} hold for all $0\leq t\leq\tau-1$. We aim to prove these identities continue to hold for iteration $\tau$. We will prove this result in multiple steps.


\noindent{\bf{Step I: Next iterate obeys $\twonorm{\bteta_\tau-\bteta_0}\le R$.}}\\ 
We first argue that $\bteta_{\tau}$ lies in the domain of interest as dictated by \eqref{assumpgen}, i.e.~$\tn{\bteta_\tau-\bteta_0}\leq R$. To do this note that per the induction assumption \eqref{main res eq2} holds for iteration $\tau-1$ and thus $\tn{\bteta_{\tau-1}-\bteta_0}\leq R/2$. As a result using the triangular inequality to show $\twonorm{\bteta_\tau-\bteta_0}\le R$ holds it suffices to show that $\tn{\bteta_{\tau}-\bteta_{\tau-1}}\leq R/2$ holds. To do this note that
\begin{align}
\tn{\bteta_\tau-\bteta_{\tau-1}}=&\eta\tn{ \grad{\bteta_{\tau-1}}}\nn\\
=&\eta\twonorm{\Jc^T(\bteta_{\tau-1})\rb_{\tau-1}}\nn\\
=&\eta\twonorm{\Jcb^T(\bteta_{\tau-1})\rb_{\tau-1}}\nn\\
\overset{(a)}{\le}& \eta \tn{\Jcb^T(\bteta_{\tau-1})\rbb_{\tau-1}}+\eta \tn{\Jcb^T(\bteta_{\tau-1})(\rb_{\tau-1}-\rbb_{\tau-1})}\nn\\
\overset{(b)}{\le}& \eta \twonorm{\Jb^T\rbb_{\tau-1}}+\eta \opnorm{\Jcb(\bteta_{\tau-1})-\Jb}\twonorm{\rbb_{\tau-1}}+\eta \opnorm{\Jcb(\bteta_{\tau-1})}\twonorm{\rb_{\tau-1}-\rbb_{\tau-1}}\nn\\
\overset{(c)}{\le}& \eta \twonorm{\Jb^T\rbb_{\tau-1}}+\frac{\eps_0+\eps}{\beta^2}\twonorm{\rbb_{\tau-1}}+\frac{1}{\beta}\tn{\rb_{\tau-1}-\rbb_{\tau-1}}\nn\\
\overset{(d)}{\le}& \eta \tn{\Jb^T\rbb_{\tau-1}}+\frac{2\delta\alpha}{5\beta^2}\tn{{\rb}_0}+\frac{1}{\beta}\twonorm{\rb_{\tau-1}-\rbb_{\tau-1}}\nn\\
\overset{(e)}{\le}& \eta \tn{\Jb^T\rbb_{\tau-1}}+\frac{2\delta\alpha}{5\beta^2}\tn{{\rb}_0}+\frac{3\delta\alpha}{5\beta^2}\tn{\rb_0}\nn\\
=& \eta \tn{\Jb^T\rbb_{\tau-1}}+\frac{\delta\alpha}{\beta^2}\tn{\rb_0}\nn\\
\overset{(f)}{\le}& \eta\beta^2\frac{\upp}{\alpha}+\frac{\delta\alpha}{\beta^2}\twonorm{{\rb}_0}\nn\\
\overset{(g)}{\le}& \frac{\upp}{\alpha}+\frac{\delta\alpha}{\beta^2}\twonorm{{\rb}_0}\nn\\
\overset{(h)}{\le}& \frac{\upp}{\alpha}+\frac{\delta\Gamma}{\alpha}\twonorm{{\rb}_0}\nn\\
=&\frac{R}{2}.\nn
\end{align}
Here, (a) and (b) follow from a simple application of the triangular inequality, (c) from the fact that $\opnorm{\Jc(\bteta_{\tau-1})-\mtx{J}}\le\opnorm{\Jc(\bteta_{\tau-1})-\Jc(\bteta_0)}+\opnorm{\Jc(\bteta_0)-\mtx{J}}\le \eps+\eps_0$, (d) from combining the bounds in \eqref{cndd}, (e) from the induction hypothesis that postulates \eqref{main res eq1} holds for iteration $\tau-1$, (f) from considering the SVD $\mtx{J}=\mtx{U}\mtx{\Lambda}\mtx{V}^T$ which implies that
\begin{align*}
\twonorm{\mtx{J}^T\rbb_{\tau-1}}^2=&\twonorm{\mtx{J}^T\left(\mtx{I}-\eta\mtx{J}\mtx{J}^T\right)^{\tau-1}\rb_0}^2=\twonorm{\mtx{V}\mtx{\Lambda}\left(\mtx{I}-\eta\mtx{\Lambda}^2\right)^{\tau-1}\mtx{U}^T\rb_0}^2\\
=&\twonorm{\mtx{\Lambda}\left(\mtx{I}-\eta\mtx{\Lambda}^2\right)^{\tau-1}\mtx{U}^T\rb_0}^2\\
=&\sum_{s=1}^{nK} \lambda_s^2 (1-\eta\lambda_s^2)^{2(\tau-1)} \left(\langle\vct{u}_s,\rb_0\rangle\right)^2\\
\le&\sum_{s=1}^{nK} \lambda_s^2 \left(\langle\vct{u}_s,\rb_0\rangle\right)^2\\
=&\sum_{s=1}^{r} \lambda_s^2 \left(\langle\vct{u}_s,\rb_0\rangle\right)^2+\sum_{s=r+1}^{nK} \lambda_s^2 \left(\langle\vct{u}_s,\rb_0\rangle\right)^2\\
\le&\beta^4\sum_{s=1}^{r} \frac{1}{\lambda_s^2} \left(\langle\vct{u}_s,\rb_0\rangle\right)^2+\sum_{s=r+1}^{nK} \lambda_s^2 \left(\langle\vct{u}_s,\rb_0\rangle\right)^2\\
\le&\beta^4\left(\sum_{s=1}^{r} \frac{1}{\lambda_s^2} \left(\langle\vct{u}_s,\rb_0\rangle\right)^2+\Gamma^2\sum_{s=r+1}^{nK} \frac{\lambda_s^2}{\alpha^4} \left(\langle\vct{u}_s,\rb_0\rangle\right)^2\right)\\
=&\beta^4\left(\frac{\upp}{\alpha}\right)^2
\end{align*}
(g) from the fact that $\eta\le \frac{1}{\beta^2}$, and (h) from the fact that $\alpha\le \beta$ and $\Gamma \ge 1$.

\noindent{\bf{Step II: Original and linearized residuals are close (first part of \eqref{main res eq1}).}}\\
In this step we wish to show that the first part of \eqref{main res eq1} holds for iteration $\tau$. Since we established in the previous step that $\tn{\bteta_{\tau}-\bteta_0}\leq R$ the assumption of Lemma \ref{grwth} holds for iterations $\tau-1$ and $\tau$. Hence, using Lemma \ref{grwth} equation \eqref{etau repeat} we conclude that
\begin{align*}
\tn{\eb_{\tau}}&\leq \eta(\eps_0^2+\eps\beta)\tn{\rbb_{\tau-1}}+(1+\eta\eps^2)\tn{\eb_{\tau-1}}.
\end{align*}
This combined with the induction assumption implies that
\begin{align}
\label{pf53t1}
\tn{\eb_{t}}&\leq \eta(\eps_0^2+\eps\beta)\tn{\rbb_{t-1}}+(1+\eta\eps^2)\tn{\eb_{t-1}},
\end{align}
holds for all $t\le \tau\le T$. Furthermore, using Lemma \ref{lem linear} equation \eqref{cor76conc1} for all $t\le \tau\le T$ we have
\begin{align}
\label{pf53t2}
\twonorm{\rbb_t}\le \left(1-\eta\alpha^2\right)^t\tn{\Pi_{\Rc}(\rb_0)}+\tn{\Pi_{\Rcb}(\rb_0)},
\end{align}
To proceed, we shall apply Lemma \ref{lem growth} with the following variable substitutions
\begin{align}
\Theta:=\eps_0^2+\eps\beta,\quad\rho_+=\tn{\Pi_{\Rc}(\rb_0)},\quad\rho_-=\tn{\Pi_{\Rcb}(\rb_0)},\quad e_\tau:=\tn{\eb_\tau},\quad {\widetilde{r}}_\tau:= \tn{\rbb_\tau}.
\end{align}
We note that Lemma \ref{lem growth} is applicable since (i) $\eta\le 1/\beta^2\le 1/\alpha^2$, (ii) based on \eqref{cndd} we have $\frac{\alpha}{\eps}\ge \frac{5\Gamma}{\delta}\frac{\beta^2}{\alpha^2}\ge \sqrt{2\Gamma}$, (iii) $\tau$ obeys $\tau\leq T= \frac{\Gamma}{\eta\bn^2}$, and (iv) \eqref{lem78ass} holds based on \eqref{pf53t1} and \eqref{pf53t2}. Thus using Lemma \ref{lem growth} we can conclude that
\begin{align}
\tn{\eb_{\tau}}\le& 2(\eps_0^2+\eps\beta)\frac{\left(\tn{\Pi_{\Rc}(\rb_0)}+\Gamma\tn{\Pi_{\Rcb}(\rb_0)}\right)}{\bn^2}\nn\\
\le&\frac{2\Gamma(\eps_0^2+\eps\beta)\tn{\rb_0}}{\alpha^2}\label{etau termm}\\
\le&( \frac{2}{25}+\frac{2}{5})\frac{\delta\alpha}{\beta} \tn{\rb_0}\leq\frac{3}{5} \frac{\delta\alpha}{\beta} \tn{\rb_0},\label{etau term}
\end{align}
where in the last inequality we used \eqref{cndd}. This completes the first part of \eqref{main res eq1} via induction.

\noindent{\bf{Step III: Original and linearized parameters are close (second part of \eqref{main res eq1}).}}\\
 In this step we wish to show that the second part of \eqref{main res eq1} holds for iteration $\tau$. To do this we begin by noting that by the fact that $\Jb$ is a reference Jacobian we have $\|\Jcb(\bteta_0)-\Jb\|\leq \eps_0$ where $\Jcb$ augments $\Jc(\bteta_0)$ by padding zero columns to match size of $\Jb$. Also by Assumption \ref{ass3} we have $\|\Jc(\bteta)-\Jc(\bteta_0)\|\leq \frac{\eps}{2}$. Combining the latter two via the  triangular inequality we conclude that
\begin{align}
\label{53temp3}
\|\Jcb(\bteta_\tau)-\Jb\|\leq \eps_0+\eps.
\end{align}
Let $\brteta$ and $\nabla \bar{\Lc}(\bteta)$ be vectors augmented by zero padding $\bteta,\grad{\bteta}$ so that they have dimension $\max(\K n, p)$. Now, we track the difference between $\brteta$ and linearized $\tilde{\bteta}$ as follows
\begin{align}
\frac{\tn{\brteta_{\tau}-\widetilde{\bteta}_{\tau}}}{\eta}&=\twonorm{\sum_{t=0}^{\tau-1}\nabla\bar{\Lc}(\bteta_t)-\nabla\mathcal{L}_{lin}(\bbteta_t)}\nn\\
&=\twonorm{\sum_{t=0}^{\tau-1}\Jcb(\bteta_t)^T\rb_t- \Jb^T\rbb_t}\nn\\
&\leq \sum_{t=0}^{\tau-1} \tn{\Jcb(\bteta_t)^T\rb_t- \Jb^T\rbb_t}\nn\\
&\leq \sum_{t=0}^{\tau-1} \tn{(\Jcb(\bteta_t)-\Jb)^T\rbb_t}+\tn{\Jcb(\bteta_t)^T(\rb_t-\rbb_t)}\nn\\
&= \sum_{t=0}^{\tau-1} \tn{(\Jcb(\bteta_t)-\Jb)^T\rbb_t}+\tn{\Jcb(\bteta_t)^T\eb_t}\nn\\
&\leq  \sum_{t=0}^{\tau-1}(\eps+\eps_0)\tn{\rbb_t}+\beta\tn{\eb_t}.\label{combined eq}
\end{align}
In the last inequality we used the fact that $\|\Jcb(\bteta_t)-\Jb\|\leq \eps+\eps_0$ and $\|\Jb\|\leq \beta$. We proceed by bounding each of the two terms in \eqref{combined eq} above. For the first term we use the fact that $\tn{\rbb_\tau}\leq \tn{\rb_0}$ to conclude
\begin{align}
\sum_{t=0}^{\tau-1} \tn{\rbb_t}\leq\tau \tn{\rb_0}\leq T \tn{\rb_0}= \frac{\Gamma\tn{\rb_0}}{\eta\alpha^2}.\label{rrr bound}
\end{align}
To bound the second term in \eqref{combined eq} we use \eqref{etau termm} together with $\tau\le T\le \frac{\Gamma}{\eta\alpha^2}$ to conclude that
\begin{align}
\sum_{t=0}^{\tau-1} \tn{\eb_t}\leq \tau  \frac{2(\eps\beta+\eps_0^2)}{\alpha^2} \Gamma\tn{\rb_0}\leq   \frac{2\Gamma^2(\eps\beta+\eps_0^2)}{\eta \alpha^4} \tn{\rb_0}.\label{eee bound}
\end{align}
Combining \eqref{rrr bound} and \eqref{eee bound} in \eqref{combined eq}, we conclude that
\begin{align*}
\tn{\brteta_{\tau}-\widetilde{\bteta}_{\tau}}\le& \left(  \frac{2\Gamma(\eps\beta^2+\eps_0^2\beta)}{\alpha^3}+\frac{\eps+\eps_0}{\alpha}\right) \frac{\Gamma}{\alpha}\tn{\rb_0}\\
=&\left(\eps\frac{2\Gamma\beta^2}{\alpha^3}+\eps_0^2\frac{2\Gamma\beta}{\alpha^3}+\frac{\eps+\eps_0}{\alpha}\right)\frac{\Gamma}{\alpha}\tn{\rb_0}\\
\overset{(a)}{\le}&\left(\frac{2}{5}\delta+\eps_0^2\frac{2\Gamma\beta}{\alpha^3}+\frac{\eps+\eps_0}{\alpha}\right)\frac{\Gamma}{\alpha}\tn{\rb_0}\\
\overset{(b)}{\le}&\left(\frac{2}{5}\delta+\frac{2}{25}\delta+\frac{\eps+\eps_0}{\alpha}\right)\frac{\Gamma}{\alpha}\tn{\rb_0}\\
\overset{(c)}{\le}&\left(\frac{2}{5}\delta+\frac{2}{25}\delta+\frac{1}{5}\delta+\frac{\eps_0}{\alpha}\right)\frac{\Gamma}{\alpha}\tn{\rb_0}\\
\overset{(d)}{\le}&\left(\frac{2}{5}\delta+\frac{2}{25}\delta+\frac{1}{5}\delta+\frac{1}{5}\delta\right)\frac{\Gamma}{\alpha}\tn{\rb_0}\\
=&\frac{22}{25}\frac{\delta}{\alpha}\Gamma\tn{\rb_0}.
\end{align*}
Here, (a) follows from $\eps\leq \frac{\delta\alpha^3}{5\Gamma\beta^2}$ per Assumption \eqref{cndd}, (b) from $\eps_0\le \frac{1}{5}\sqrt{\frac{\delta\alpha^3}{\Gamma\beta}}$ per Assumption \eqref{cndd}, (c) from $\eps\leq \frac{\delta\alpha^3}{5\Gamma\beta^2}\le \frac{\delta\alpha}{5\Gamma}\le \frac{\delta\alpha}{5}$ per Assumption \eqref{cndd}, and (d) from $\eps_0\le \frac{\delta\alpha}{5}$ per Assumption \eqref{cndd}. Thus,
\begin{align*}
\tn{\brteta_{\tau}-\widetilde{\bteta}_{\tau}}\le\frac{\delta}{\alpha}\Gamma\tn{\rb_0}.
\end{align*}
Combining the latter with the fact that $\tn{\bbteta_\tau-\brteta_0}\leq \frac{\upp}{\alpha}$ (which follows from Lemma \ref{lem linear} equation \eqref{tetaresult}) we conclude that
\begin{align*}
\twonorm{\bteta_\tau-\bteta_0}=\twonorm{\brteta_\tau-\brteta_0}\le \twonorm{\bbteta_\tau-\brteta_0}+\twonorm{\brteta_\tau-\bbteta_\tau}\le \frac{\upp}{\alpha}+\frac{\delta}{\alpha}\Gamma\tn{\rb_0}\le \twonorm{\mtx{J}_{\calF}^{\dagger}\vct{r}_0}+\frac{\Gamma}{\alpha}\twonorm{\Pi_{\calS}\left(\vct{r}_0\right)}+\frac{\delta}{\alpha}\Gamma\tn{\rb_0}
\end{align*}
The completes the proof of the bound \eqref{main res eq2}. 
  
\noindent\textbf{Step V: Bound on residual with early stopping.}\\
In this step we wish to prove \eqref{eqqq2}. To this aim note that
\begin{align*}
\twonorm{\rb_T}\overset{(a)}{\le}& \twonorm{\rbb_T}+\twonorm{\rbb_T-\rb_T}\\
\overset{(b)}{\le}&\twonorm{\rbb_T}+\frac{\delta\alpha}{\beta}\tn{\rb_0}\\
\overset{(c)}{\le}&e^{-\Gamma}\tn{\Pi_{\Rc}(\rb_0)}+\tn{\Pi_{\Rcb}(\rb_0)}+\frac{\delta\alpha}{\beta}\tn{\rb_0}
\end{align*}
where (a) follows from the triangular inequality, (b) from the conclusion of Step II (first part of \eqref{main res eq1}), and (c) from Corollary \ref{cor simp} equation \eqref{cor76conc2}. This completes the proof of \eqref{eqqq2}.

\subsection{Key lemmas and identities for neural networks}
In this section we prove some key lemmas and identities regarding the Jacobian of one-hidden layer networks as well as the size of the initial residual that when combined with Theorem \ref{many step thm} allows us to prove theorems involving neural networks. We begin with some preliminary identities and calculations in Section \ref{prelimiden}. Next, in Section \ref{fundet} we prove a few key properties of the Jacobian mapping of a one-hidden layer neural network. Section \ref{Jacrandinit} focuses on a few further properties of the Jacobian at a random initialization. Finally, in Section \ref{initmisfit} we provide bounds on the initial misfit.

For two matrices
\begin{align*}
\mtx{A}=\begin{bmatrix}\mtx{A}_1\\\mtx{A}_2\\\vdots\\\mtx{A}_p\end{bmatrix}\in\R^{p\times m}\quad\text{and}\quad \mtx{B}=\begin{bmatrix}\mtx{B}_1\\\mtx{B}_2\\\vdots\\\mtx{B}_p\end{bmatrix}\in\R^{p\times n},
\end{align*}
we define their Khatri-Rao product as $\mtx{A} * \mtx{B}  = [\mtx{A}_1\otimes \mtx{B}_1,\dotsc, \mtx{A}_p\otimes \mtx{B}_p]\in\R^{p\times mn}$, where $\otimes$ denotes the Kronecker product.
\subsubsection{Preliminary identities and calculations}
\label{prelimiden}
We begin by discussing a few notations. Throughout we use $\w_\ell$ and $\vb_\ell$ to denote the $\ell$th row of input and output weight matrices $\W$ and $\Vb$. Given a matrix $\M$ we use $\trow{\M}$ to denote the largest Euclidean norm of the rows of $\M$. We begin by noting that for a one-hidden layer neural network of the form $\vct{x}\mapsto \Vb\phi\left(\W\x\right)$, the Jacobian matrix with respect to vect$(\mtx{W})\in\R^{kd}$ takes the form
\begin{align}
\mathcal{J}(\mtx{W})=
\begin{bmatrix}
\mathcal{J}_1(\W) \\ \vdots \\ \mathcal{J}_\K(\W)
\end{bmatrix}\in\R^{\K n\times kd}\label{jacob multi}
\end{align}
where $\Jc_\ell(\W)$ is the Jacobian matrix associated with the $\ell$th class. In particular, $\Jc_\ell(\W)$ is given by
\begin{align*}
\mathcal{J}_\ell(\mtx{W})=
\begin{bmatrix}
\mathcal{J}_\ell(\vct{w}_1) & \ldots & \mathcal{J}_\ell(\vct{w}_k)
\end{bmatrix}\in\R^{n\times kd}\quad\text{with}\quad\mathcal{J}_\ell(\vct{w}_s):=\mtx{V}_{\ell,s}\text{diag}\left(\phi'(\mtx{X}\vct{w}_s)\right)\mtx{X}.
\end{align*}
Alternatively using Khatri-Rao products this can be rewritten in the more compact form
\begin{align}
\label{KR}
\mathcal{J}_\ell(\mtx{W})=\left(\phi'\left(\mtx{X}\mtx{W}^T\right)\text{diag}(\vct{v}_\ell)\right)*\mtx{X}.
\end{align}
An alternative characterization of the Jacobian is via its matrix representation. Given a vector $\ub\in\R^{\K n}$ let us partition it into $\K$ size $n$ subvectors so that $\ub=[\ub_1^T~\dots~\ub_\K^T]^T$. We have
\begin{align}
\text{mat}\left(\mathcal{J}^T(\mtx{W})\vct{u}\right)= \sum_{\ell=1}^\K\text{diag}(\vct{v}_\ell)\phi'\left(\mtx{W}\mtx{X}^T\right)\text{diag}(\vct{u}_\ell)\mtx{X}.\label{mat form}
\end{align}

\subsubsection{Fundamental properties of the Jacobian of the neural network}
\label{fundet}
In this section we prove a few key properties of the Jacobian mapping of a one-hidden layer neural network.

\begin{lemma}[Properties of Single Output Neural Net Jacobian]\label{props thm} Let $\K=1$ so that $\Vb^T=\vb\in\R^n$. Suppose $\phi$ is an activation obeying $|\phi'(z)|\leq B$ for all $z$. Then, for any $\W\in\R^{k\times d}$ and any unit length vector $\ub$, we have
\begin{align*}
\|\Jc(\W)\|\le& B\sqrt{k}\infnorm{\vct{v}}\opnorm{\X}
\end{align*}
and
\begin{align}
\label{rowwisebnd}
\trow{\text{mat}\left(\mathcal{J}^T(\mtx{W})\vct{u}\right)}\le& B\tin{\vb}\|\X\|
\end{align}
Furthermore, suppose $\phi$ is twice differentiable and $|\phi''(z)|\leq B$ for all $z$. Also assume all data points have unit Euclidean norm ($\twonorm{\vct{x}_i}=1$). Then the Jacobian mapping is Lipschitz with respect to spectral norm i.e.~{for all} $\widetilde{\W},\W\in\R^{k\times d}$ we have
\begin{align*}
\opnorm{\mathcal{J}(\widetilde{\mtx{W}})-\mathcal{J}(\mtx{W})}\le B\infnorm{\vct{v}}\opnorm{\mtx{X}}\fronorm{\widetilde{\mtx{W}}-\mtx{W}}.
\end{align*}
\end{lemma}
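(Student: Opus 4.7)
The plan is to reduce all three claims to Frobenius-norm estimates via the compact matrix-form identity \eqref{mat form}. Specialized to $\K=1$ this identity reads
\[
\text{mat}\big(\mathcal{J}^T(\W)\ub\big)=\text{diag}(\vb)\,\phi'(\W\X^T)\,\text{diag}(\ub)\,\X\in\R^{k\times d},
\]
and since the vect/mat bijection is an isometry in the Frobenius norm, $\tn{\mathcal{J}^T(\W)\ub}=\tf{\text{mat}(\mathcal{J}^T(\W)\ub)}$. Combined with the submultiplicative inequality $\tf{\mtx{A}\X}\le \tf{\mtx{A}}\opnorm{\X}$, each bound reduces to controlling the Frobenius or rowwise norms of the diagonal-sandwich $\mtx{M}(\W):=\text{diag}(\vb)\,\phi'(\W\X^T)\,\text{diag}(\ub)\in\R^{k\times n}$.

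For the operator-norm bound, I would take an arbitrary unit $\ub$ and expand entrywise,
\[
\tf{\mtx{M}(\W)}^2=\sum_{i=1}^k\sum_{j=1}^n \vb_i^2\,\phi'(\w_i^T\x_j)^2\,u_j^2\le B^2\infnorm{\vb}^2\, k,
\]
using $|\phi'|\le B$ and $\tn{\ub}=1$. Multiplying by $\opnorm{\X}$ and taking the supremum over $\ub$ yields $\opnorm{\mathcal{J}(\W)}\le B\sqrt{k}\infnorm{\vb}\opnorm{\X}$. For the rowwise bound \eqref{rowwisebnd}, I would simply read off the $i$-th row of $\text{mat}(\mathcal{J}^T(\W)\ub)$ as $\vb_i\,\ub^T\text{diag}(\phi'(\X\w_i))\X$, whose Euclidean norm is at most $|\vb_i|\cdot B\cdot \tn{\ub}\cdot \opnorm{\X}$; maximizing over $i$ completes this part.

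For the Lipschitz bound I would apply the same matricization trick to $\mathcal{J}(\widetilde\W)-\mathcal{J}(\W)$, giving for any unit $\ub$
\[
\tn{(\mathcal{J}(\widetilde\W)-\mathcal{J}(\W))^T\ub}\le \tf{\text{diag}(\vb)(\phi'(\widetilde\W\X^T)-\phi'(\W\X^T))\text{diag}(\ub)}\,\opnorm{\X}.
\]
The entrywise $B$-Lipschitz estimate $|\phi'(\widetilde\w_i^T\x_j)-\phi'(\w_i^T\x_j)|\le B|(\widetilde\w_i-\w_i)^T\x_j|$, which follows from $|\phi''|\le B$, then bounds the inner Frobenius norm by $B\infnorm{\vb}\sqrt{\sum_{i,j}((\widetilde\w_i-\w_i)^T\x_j)^2 u_j^2}$. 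The main obstacle is aggregating these entrywise bounds so the perturbation appears through $\fronorm{\widetilde\W-\W}$ rather than a weaker (e.g.\ spectral) norm; the key observation is that the unit-norm data assumption gives $|(\widetilde\w_i-\w_i)^T\x_j|\le \tn{\widetilde\w_i-\w_i}$ uniformly in $j$, so the $u_j^2$'s sum to $1$ and the outer sum over $i$ assembles into $\fronorm{\widetilde\W-\W}^2$. Taking the supremum over unit $\ub$ then delivers the stated spectral-norm Lipschitz bound.
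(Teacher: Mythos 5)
Your proof is correct. The paper itself only spells out the row-wise bound \eqref{rowwisebnd}—via $\trow{\text{diag}(\vct{v})\phi'(\W\X^T)\text{diag}(\ub)\X}\le \tin{\vb}\opnorm{\X}\max_\ell\tn{\phi'(\w_\ell^T\X^T)\text{diag}(\ub)}\le B\tin{\vb}\opnorm{\X}$, which is exactly your row-extraction argument—and defers the spectral-norm and Lipschitz bounds to a citation of \cite{onehidden}. Your version is self-contained: you apply the same matricization identity $\text{mat}(\mathcal{J}^T(\W)\ub)=\text{diag}(\vb)\phi'(\W\X^T)\text{diag}(\ub)\X$ and the vect/mat isometry uniformly to all three claims, bounding the relevant Frobenius norms entrywise. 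The key steps are sound: for the spectral bound, the factor $\sqrt{k}$ arises because $\tn{\ub}=1$ is spent once per row index $i$; for the Lipschitz bound, the entrywise estimate $|\phi'(\widetilde\w_i^T\x_j)-\phi'(\w_i^T\x_j)|\le B|(\widetilde\w_i-\w_i)^T\x_j|$ combines with $\tn{\x_j}=1$ and $\tn{\ub}=1$ so that the double sum factorizes cleanly into $\fronorm{\widetilde\W-\W}^2$. This is a neat unified treatment that replaces two external citations with two lines of computation, at no loss of sharpness in the constants.
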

\begin{proof} The result on spectral norm and Lipschitzness of $\Jc(\W)$ have been proven in \cite{onehidden}. To show the row-wise bound \eqref{rowwisebnd}, we use \eqref{mat form} to conclude that
\begin{align*}
\trow{\text{mat}\left(\mathcal{J}^T(\mtx{W})\vct{u}\right)}&= \trow{\text{diag}(\vct{v})\phi'\left(\mtx{W}\mtx{X}^T\right)\text{diag}(\vct{u})\mtx{X}}\\
&\le \tin{\vb}\max_{1\leq \ell\leq k}\tn{\phi'\left(\w_\ell^T\mtx{X}^T\right)\text{diag}(\vct{u})\mtx{X}}\\
&\leq \tin{\vb}\|\X\|\max_{1\leq \ell\leq k}\tn{\phi'\left(\w_\ell^T\mtx{X}^T\right)\text{diag}(\vct{u})}\\
&\leq B\tin{\vb}\|\X\|\tn{\ub}\\
&=B\tin{\vb}\|\X\|.
\end{align*}
\end{proof}
Next we extend the lemma above to the multi-class setting.
\begin{lemma}[Properties of Multiclass Neural Net Jacobian]\label{props lem} Suppose $\phi$ is an activation obeying $|\phi'(z)|\le B$ for all $z$. Then, for any $\W\in\R^{k\times d}$ and any unit length vector $\ub$, we have
\[
\|\Jc(\W)\|\leq B\sqrt{\K k}\infnorm{\Vb}\opnorm{\X}
\]
and
\begin{align}
\label{rowbndm}
\trow{\text{mat}\left(\mathcal{J}^T(\mtx{W})\vct{u}\right)}\leq B\sqrt{\K}\tin{\Vb}\|\X\|.
\end{align}
Furthermore, suppose $\phi$ is twice differentiable and $|\phi''(z)|\leq B$ for all $z$. Also assume all data points have unit Euclidean norm ($\twonorm{\vct{x}_i}=1$). Then the Jacobian mapping is Lipschitz with respect to spectral norm i.e.~{for all} $\widetilde{\W},\W\in\R^{k\times d}$ we have
\begin{align*}
\opnorm{\mathcal{J}(\widetilde{\mtx{W}})-\mathcal{J}(\mtx{W})}\le B\sqrt{\K}\infnorm{\Vb}\opnorm{\mtx{X}}\fronorm{\widetilde{\mtx{W}}-\mtx{W}}.
\end{align*}
\end{lemma}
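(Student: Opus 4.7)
The plan is to reduce the multiclass statement to its single-output counterpart (Lemma \ref{props thm}) by exploiting the block-row decomposition \eqref{jacob multi} of $\mathcal{J}(\W)$ into the per-class Jacobians $\mathcal{J}_\ell(\W)$, together with the matrix-form identity \eqref{mat form}. Each per-class block is exactly a single-output Jacobian whose output weight vector is $\vb_\ell$ (the $\ell$th row of $\Vb$), so the three desired bounds will follow from applying Lemma \ref{props thm} with $\vb\leftarrow\vb_\ell$ on each block, paying a $\sqrt{\K}$ aggregation factor.

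For the spectral-norm bound, I would use the fact that $\mathcal{J}^T(\W)\mathcal{J}(\W)=\sum_{\ell=1}^\K \mathcal{J}_\ell^T(\W)\mathcal{J}_\ell(\W)$ and hence
\[
\opnorm{\mathcal{J}(\W)}^2 = \lambda_{\max}\!\Big(\sum_{\ell=1}^\K \mathcal{J}_\ell^T(\W)\mathcal{J}_\ell(\W)\Big) \le \sum_{\ell=1}^\K \opnorm{\mathcal{J}_\ell(\W)}^2.
\]
Invoking the single-class bound $\opnorm{\mathcal{J}_\ell(\W)}\le B\sqrt{k}\tin{\vb_\ell}\opnorm{\X}\le B\sqrt{k}\infnorm{\Vb}\opnorm{\X}$ term by term and taking square roots produces the claimed $\sqrt{\K k}$ factor. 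The Lipschitz bound follows from the identical stacking argument applied to the difference $\mathcal{J}(\widetilde{\W})-\mathcal{J}(\W)$, whose $\ell$th block row is $\mathcal{J}_\ell(\widetilde{\W})-\mathcal{J}_\ell(\W)$, combined with the per-class Lipschitz estimate $\opnorm{\mathcal{J}_\ell(\widetilde{\W})-\mathcal{J}_\ell(\W)}\le B\tin{\vb_\ell}\opnorm{\X}\fronorm{\widetilde{\W}-\W}$ from Lemma \ref{props thm}.

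The row-wise bound requires a slightly different aggregation. Partition the unit vector $\ub\in\R^{\K n}$ as $\ub=[\ub_1^T,\ldots,\ub_\K^T]^T$ and apply \eqref{mat form} to write
\[
\mathrm{mat}\!\left(\mathcal{J}^T(\W)\ub\right)=\sum_{\ell=1}^\K \mathrm{diag}(\vb_\ell)\,\phi'(\W\X^T)\,\mathrm{diag}(\ub_\ell)\,\X.
\]
Using the triangle inequality for $\trow{\cdot}$ and applying the computation from the single-class proof of \eqref{rowwisebnd} to each summand gives $\trow{\mathrm{diag}(\vb_\ell)\phi'(\W\X^T)\mathrm{diag}(\ub_\ell)\X}\le B\tin{\vb_\ell}\opnorm{\X}\twonorm{\ub_\ell}\le B\infnorm{\Vb}\opnorm{\X}\twonorm{\ub_\ell}$. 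Summing over $\ell$ and applying Cauchy--Schwarz, $\sum_\ell \twonorm{\ub_\ell}\le \sqrt{\K}\bigl(\sum_\ell \twonorm{\ub_\ell}^2\bigr)^{1/2}=\sqrt{\K}\twonorm{\ub}=\sqrt{\K}$, produces the $\sqrt{\K}$ factor appearing in \eqref{rowbndm}.

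There is no substantial obstacle here; the lemma is essentially a bookkeeping exercise reducing the multiclass case to the single-output case already established. The only care needed is to ensure that the $\sqrt{\K}$ scaling is tracked consistently—arising as $\sqrt{\sum_{\ell=1}^\K 1}$ from the block-row spectral inequality for the first and third bounds, and from Cauchy--Schwarz on $\twonorm{\ub_\ell}$ for the row-wise bound.
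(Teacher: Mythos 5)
Your proposal is correct and matches the paper's own approach: both reduce to the single-output Lemma \ref{props thm} via the block-row decomposition \eqref{jacob multi}, then invoke \eqref{mat form} with a triangle-inequality and Cauchy--Schwarz aggregation for the row-wise bound. The only cosmetic difference is that you derive the $\sqrt{\K}$ block-row spectral factor via $\mathcal{J}^T\mathcal{J}=\sum_\ell \mathcal{J}_\ell^T\mathcal{J}_\ell$ while the paper cites the equivalent generic inequality $\|[\A_1^T\cdots\A_\K^T]^T\|\le\sqrt{\K}\max_\ell\|\A_\ell\|$ directly.
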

\begin{proof} The proof will follow from Lemma \ref{props thm}. First, given $\A=[\A_1^T~\dots~\A_\K^T]^T$ and $\B=[\B_1^T~\dots~\B_\K^T]^T$, observe that
\[
\|\A\|\leq \sqrt{\K} \sup_{1\leq \ell\leq \K}\|\A_\ell\|\quad\text{and}\quad\|\A-\B\|\leq \sqrt{\K} \sup_{1\leq \ell\leq \K}\|\A_\ell-\B_\ell\|.
\]
These two identities applied to the components $\Jc_\ell(\W)$ and $\Jc_\ell(\widetilde{\W})-\Jc_\ell(\W)$ completes the proof of the bound on the spectral norm and the perturbation. To prove the bound in \eqref{rowbndm} we use the identity \eqref{mat form} to conclude that
\begin{align*}
\trow{\text{mat}\left(\mathcal{J}^T(\mtx{W})\vct{u}\right)}&=\trow{\sum_{\ell=1}^\K\text{diag}(\vct{v}_\ell)\phi'\left(\mtx{W}\mtx{X}^T\right)\text{diag}(\vct{u}_\ell)\mtx{X}}\\
&\le\sum_{\ell=1}^\K \trow{\text{diag}(\vct{v}_\ell)\phi'\left(\mtx{W}\mtx{X}^T\right)\text{diag}(\vct{u}_\ell)\mtx{X}}\\
&\le\sum_{\ell=1}^\K B\tin{\Vb}\|\X\|\tn{\ub_\ell}\\
&= B\tin{\Vb}\|\X\|\left(\sum_{\ell=1}^\K\tn{\ub_\ell}\right)\\
&\le B\tin{\Vb}\|\X\|\sqrt{K}\left(\sum_{\ell=1}^\K\tn{\ub_\ell}^2\right)^{1/2}\\
&= B\tin{\Vb}\|\X\|\sqrt{\K},
\end{align*}
where the penultimate inequality follows from Cauchy Schwarz, completing the proof.
\end{proof}
\subsubsection{Properties of the Jacobian at random initialization}
\label{Jacrandinit}
In this section we prove a few lemmas characterizing the properties of the Jacobian at the random initialization.
\begin{lemma}[Multiclass covariance] \label{multi cov}Given input and output layer weights $\Vb$ and $\W$, consider the Jacobian described in \eqref{jacob multi}. Given an $\K n\times \K n$ matrix $\M$, for $1\leq \ell,\widetilde{\ell}\leq \K$, let $\M[\ell,\widetilde{\ell}]$ denote the $(\ell,\widetilde{\ell})$th submatrix. For $\Cb(\W)=\Jc(\W)\Jc(\W)^T$ we have
\[
\Cb(\W)[\ell,\widetilde{\ell}]=\sum_{s=1}^k(\X\X^T)\odot (\mtx{V}_{\ell,s}\mtx{V}_{\widetilde{\ell},s}{\phi'(\X\w_{s})}{\phi'(\X\w_{s})^T}).
\]
Suppose $\W\distas\Nn(0,1)$ and $\Vb$ has i.i.d.~zero-mean entries with $\vrn^2$ variance. Then $\E[\Cb(\W)]$ is a block diagonal matrix given by the Kronecker product
\[
\E[\Cb(\W)]=k\vrn^2 \bSi(\X).
\]
where $\bSi(\X)$ is equal to $\Iden_{\K}\otimes[(\X\X^T)\odot \E[{\phi'(\X\w_s)}{\phi'(\X\w_s)^T}]]$.
\end{lemma}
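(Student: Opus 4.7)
The proof is essentially a careful block-matrix computation followed by a clean expectation step, so the plan is to separate the ``algebraic'' content (the formula for $\Cb(\W)[\ell,\widetilde{\ell}]$) from the ``probabilistic'' content (taking expectation). I will handle them in that order.

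First, for the algebraic identity, I would start from the partitioned form \eqref{jacob multi} and write
\[
\Cb(\W)[\ell,\widetilde{\ell}]=\mathcal{J}_\ell(\W)\mathcal{J}_{\widetilde{\ell}}(\W)^T=\sum_{s=1}^k \mathcal{J}_\ell(\vct{w}_s)\mathcal{J}_{\widetilde{\ell}}(\vct{w}_s)^T,
\]
using that $\mathcal{J}_\ell(\W)=[\mathcal{J}_\ell(\vct{w}_1)\ \cdots\ \mathcal{J}_\ell(\vct{w}_k)]$. Plugging in $\mathcal{J}_\ell(\vct{w}_s)=V_{\ell,s}\,\mathrm{diag}(\phi'(\X\vct{w}_s))\,\X$ gives, for each $s$,
\[
\mathcal{J}_\ell(\vct{w}_s)\mathcal{J}_{\widetilde{\ell}}(\vct{w}_s)^T=V_{\ell,s}V_{\widetilde{\ell},s}\,\mathrm{diag}(\phi'(\X\vct{w}_s))\,(\X\X^T)\,\mathrm{diag}(\phi'(\X\vct{w}_s)).
\]
Now I would invoke the standard identity $\mathrm{diag}(d)\,M\,\mathrm{diag}(d)=(dd^T)\odot M$, which converts the sandwich of $\X\X^T$ by the two diagonals into a Hadamard product. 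Summing over $s$ yields exactly the claimed expression for $\Cb(\W)[\ell,\widetilde{\ell}]$.

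Second, for the expectation step, I would use independence of $\Vb$ and $\W$ to compute the expectation in two stages. Taking the expectation over $\Vb$ first (conditioning on $\W$), the only $\Vb$-dependence in the $s$-th summand is the scalar $V_{\ell,s}V_{\widetilde{\ell},s}$, and since the entries of $\Vb$ are i.i.d., zero-mean with variance $\vrn^2$, we have $\E[V_{\ell,s}V_{\widetilde{\ell},s}]=\vrn^2\,\mathbbm{1}\{\ell=\widetilde{\ell}\}$. So all off-diagonal blocks ($\ell\neq\widetilde{\ell}$) vanish identically, which already gives the block-diagonal (Kronecker-with-$\Iden_K$) structure. For $\ell=\widetilde{\ell}$, what remains is
\[
\E\bigl[\Cb(\W)[\ell,\ell]\bigr]=\vrn^2\sum_{s=1}^k (\X\X^T)\odot\E\bigl[\phi'(\X\vct{w}_s)\phi'(\X\vct{w}_s)^T\bigr].
\]
Since $\vct{w}_s\stackrel{\text{i.i.d.}}{\sim}\Nn(0,\Iden_d)$, each of the $k$ summands has the same expectation, collapsing the sum to $k\vrn^2\,(\X\X^T)\odot\E[\phi'(\X\vct{w})\phi'(\X\vct{w})^T]$, which matches $k\vrn^2\,\bSi(\X)[\ell,\ell]$ by Definition~\ref{nneig}. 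Combining the two cases gives $\E[\Cb(\W)]=k\vrn^2\,\Iden_K\otimes\!\bigl[(\X\X^T)\odot\E[\phi'(\X\vct{w})\phi'(\X\vct{w})^T]\bigr]=k\vrn^2\,\bSi(\X)$, as claimed.

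There is no real obstacle here; the only bit requiring care is keeping track of the sandwich-with-diagonals-equals-Hadamard identity and the order of expectations (over $\Vb$ first, then over the $\vct{w}_s$), so that the cross terms in $\Vb$ correctly eliminate the off-diagonal blocks before we touch the nonlinear $\phi'$ factors.
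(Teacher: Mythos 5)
Your proposal is correct and follows essentially the same approach as the paper: the block-wise decomposition $\Cb(\W)[\ell,\widetilde{\ell}]=\sum_{s=1}^k\Jc_\ell(\w_s)\Jc_{\widetilde{\ell}}(\w_s)^T$, the $\mathrm{diag}(d)M\mathrm{diag}(d)=(dd^T)\odot M$ identity, and then expectation over $\Vb$ (giving $\vrn^2\delta(\ell-\widetilde{\ell})$) followed by collapsing the $k$ identical terms from the i.i.d.\ rows of $\W$. No substantive difference from the paper's argument.
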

\begin{proof} The $(\ell,\widetilde{\ell} )$th submatrix of $\Cb(\W)$ is given by
\begin{align}
\label{Celll}
\Cb(\W)[\ell,\widetilde{\ell}]&=((\text{diag}(\vb_\ell)\phi'(\W\X^T))*\X^T)((\text{diag}(\vb_{\widetilde{\ell}})\phi'(\W\X^T))*\X^T)^T\nn\\
&=\sum_{s=1}^k \Jc_\ell(\w_s)\Jc_{\widetilde{\ell}}(\w_s)^T\nn\\
&=\sum_{s=1}^k\mtx{V}_{\ell,s}\mtx{V}_{\widetilde{\ell},s}(\diag{\phi'(\X\w_s)}\X)(\diag{\phi'(\X\w_s)}\X)^T\nn\\
&=\sum_{s=1}^k\mtx{V}_{\ell,s}\mtx{V}_{\widetilde{\ell},s}(\X\X^T)\odot ({\phi'(\X\w_s)}{\phi'(\X\w_s)^T})\nn\\
&=\sum_{s=1}^k\left(\X\X^T\right)\odot \left(\mtx{V}_{\ell,s}\mtx{V}_{\widetilde{\ell},s}{\phi'(\X\w_s)}{\phi'(\X\w_s)^T}\right).
\end{align}
Setting $\W\distas\Nn(0,1)$ and $\mtx{V}$ with i.i.d. zero-mean and $\vrn^2$-variance entries, we conclude that
\begin{align*}
\E[\Cb(\W)[\ell,\widetilde{\ell}]]&=\sum_{s=1}^k(\X\X^T)\odot (\E[\mtx{V}_{\ell,s}\mtx{V}_{\widetilde{\ell},s}]\E[{\phi'(\X\w_s)}{\phi'(\X\w_s)^T}])\\
&=\sum_{s=1}^k\vrn^2\delta(\ell-\widetilde{\ell}) [(\X\X^T)\odot \E[{\phi'(\X\w_s)}{\phi'(\X\w_s)^T}]] \\
&=k\delta(\ell-\widetilde{\ell})\vrn^2 \tilde{\bSi}(\X),
\end{align*}
where $\delta(x)$ is the discrete $\delta$ function which is 0 for $x\neq 0$ and $1$ for $x=0$ and $\tilde{\bSi}(\X)$ is single output kernel matrix which concludes the proof.
\end{proof}
Next we state a useful lemma from \cite{Schur1911} which allows us to bound the eigenvalues of the Hadamard product of the two PSD matrices.
\begin{lemma}[\cite{Schur1911}]\label{minHad} Let $\mtx{A},\mtx{B}\in\R^{n\times n}$ be two Positive Semi-Definite (PSD) matrices. Then,
\begin{align*}
\lambda_{\min}\left(\mtx{A}\odot\mtx{B}\right)\ge& \left(\min_{i} \mtx{B}_{ii}\right)\lambda_{\min}\left(\mtx{A}\right),\\
\lambda_{\max}\left(\mtx{A}\odot\mtx{B}\right)\le& \left(\max_{i} \mtx{B}_{ii}\right)\lambda_{\max}\left(\mtx{A}\right).
\end{align*}
\end{lemma}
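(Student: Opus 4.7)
The plan is to reduce everything to a quadratic form estimate after diagonalizing $\mtx{B}$. Since $\mtx{B}$ is PSD, write its eigendecomposition as $\mtx{B}=\sum_{k=1}^n \mu_k \vct{v}_k\vct{v}_k^T$ with $\mu_k\ge 0$, so that
\[
\mtx{A}\odot\mtx{B}=\sum_{k=1}^n \mu_k\, \mtx{A}\odot(\vct{v}_k\vct{v}_k^T)=\sum_{k=1}^n \mu_k\, \mtx{D}_{\vct{v}_k}\mtx{A}\mtx{D}_{\vct{v}_k},
\]
where $\mtx{D}_{\vct{v}_k}:=\text{diag}(\vct{v}_k)$. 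This identity, $\mtx{A}\odot(\vct{v}\vct{v}^T)=\mtx{D}_{\vct{v}}\mtx{A}\mtx{D}_{\vct{v}}$, is the key algebraic step and follows entrywise.

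Next, for an arbitrary $\vct{x}\in\R^n$ I would compute
\[
\vct{x}^T(\mtx{A}\odot\mtx{B})\vct{x}=\sum_{k=1}^n \mu_k\,(\mtx{D}_{\vct{v}_k}\vct{x})^T\mtx{A}(\mtx{D}_{\vct{v}_k}\vct{x}).
\]
Since $\mtx{A}$ is PSD, each inner quadratic form obeys $\lambda_{\min}(\mtx{A})\|\mtx{D}_{\vct{v}_k}\vct{x}\|_{\ell_2}^2\le(\mtx{D}_{\vct{v}_k}\vct{x})^T\mtx{A}(\mtx{D}_{\vct{v}_k}\vct{x})\le \lambda_{\max}(\mtx{A})\|\mtx{D}_{\vct{v}_k}\vct{x}\|_{\ell_2}^2$. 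Summing in $k$ and swapping the order of summation,
\[
\sum_{k=1}^n \mu_k \|\mtx{D}_{\vct{v}_k}\vct{x}\|_{\ell_2}^2=\sum_{i=1}^n x_i^2\sum_{k=1}^n \mu_k (\vct{v}_k)_i^2=\sum_{i=1}^n x_i^2\, \mtx{B}_{ii}.
\]
Hence
\[
\lambda_{\min}(\mtx{A})\sum_{i=1}^n \mtx{B}_{ii}\,x_i^2\le \vct{x}^T(\mtx{A}\odot\mtx{B})\vct{x}\le \lambda_{\max}(\mtx{A})\sum_{i=1}^n \mtx{B}_{ii}\,x_i^2.
\]

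Finally, bounding $\sum_i \mtx{B}_{ii}x_i^2$ by $(\min_i \mtx{B}_{ii})\|\vct{x}\|_{\ell_2}^2$ from below (using $\mtx{B}_{ii}\ge 0$ since $\mtx{B}$ is PSD) and by $(\max_i \mtx{B}_{ii})\|\vct{x}\|_{\ell_2}^2$ from above yields both claimed inequalities via the Rayleigh quotient characterizations of $\lambda_{\min}(\mtx{A}\odot\mtx{B})$ and $\lambda_{\max}(\mtx{A}\odot\mtx{B})$. There is no real obstacle here: the only subtle point is making sure one invokes the PSD assumption on $\mtx{A}$ so that $\lambda_{\min}(\mtx{A})\ge 0$ and the lower bound is meaningful, and the PSD assumption on $\mtx{B}$ so that the eigendecomposition has nonnegative weights $\mu_k$ and the diagonal entries $\mtx{B}_{ii}$ are nonnegative.
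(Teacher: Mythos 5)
Your argument is correct. Note that the paper does not actually prove this lemma --- it is stated as a cited classical result attributed to Schur (1911) --- so there is no in-paper proof to compare against. Your derivation is the standard route: write $\mtx{B}=\sum_k\mu_k\vct{v}_k\vct{v}_k^T$ with $\mu_k\ge 0$, use the congruence identity $\mtx{A}\odot(\vct{v}\vct{v}^T)=\mtx{D}_{\vct{v}}\mtx{A}\mtx{D}_{\vct{v}}$ to reduce $\vct{x}^T(\mtx{A}\odot\mtx{B})\vct{x}$ to a nonnegative combination of quadratic forms in $\mtx{A}$, Fubini the double sum to recover $\sum_i \mtx{B}_{ii}x_i^2$, and finish with the Rayleigh quotient. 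You correctly flag the one place where the PSD hypothesis on $\mtx{A}$ enters (needing $\lambda_{\min}(\mtx{A})\ge 0$ when bounding $\sum_i\mtx{B}_{ii}x_i^2$ below by $(\min_i\mtx{B}_{ii})\|\vct{x}\|_{\ell_2}^2$) and where PSD on $\mtx{B}$ enters (nonnegativity of $\mu_k$ and of $\mtx{B}_{ii}$). No gaps.
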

Next we state a lemma regarding concentration of the Jacobian matrix at initialization.
\begin{lemma}[Concentration of the Jacobian at initialization]\label{minspectJ} Consider a one-hidden layer neural network model of the form $\vct{x}\mapsto \Vb\phi\left(\W\x\right)$ where the activation $\phi$ obeys $|\phi(0)|\le B$ and $\abs{\phi'(z)}\le B$ for all $z$. Also assume we have $n\geq K$ data points $\vct{x}_1, \vct{x}_2,\ldots,\vct{x}_n\in\R^d$ with unit euclidean norm ($\twonorm{\vct{x}_i}=1$) aggregated as the rows of a matrix $\X\in\R^{n\times d}$. Furthermore, suppose $\Vb$ has i.i.d.~$\vrn$-scaled Rademacher entries (i.e.~$\pm \vrn$ equally-likely). Then, 
the Jacobian matrix at a random point $\mtx{W}_0\in\R^{k\times d}$ with i.i.d.~$\mathcal{N}(0,1)$ entries obeys 
\begin{align*}
&\opnorm{\Jc(\W_0)\Jc(\W_0)^T-\E[\Jc(\W_0)\Jc(\W_0)^T]}\leq 30\K\sqrt{k}\vrn^2B^2\|\X\|^2\log(n).
\end{align*}
with probability at least $1-1/n^{100}$. In particular, as long as
\[
k\geq \frac{1000\K^2B^4\|\X\|^4\log(n)}{\delta^2},
\]
with the same probability, we have that
\[
\opnorm{\frac{1}{k\vrn^2}\Jc(\W_0)\Jc(\W_0)^T- \bSi(\X)}\leq \delta.
\]
\end{lemma}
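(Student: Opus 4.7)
The plan is to write $\Jc(\W_0)\Jc(\W_0)^T$ as a sum of $k$ independent random matrices indexed by the hidden neurons, bound each summand in operator norm, and then invoke a standard matrix concentration inequality (matrix Hoeffding or Bernstein). From equation \eqref{Celll} in Lemma \ref{multi cov}, the $(\ell,\widetilde{\ell})$ block of $\Jc(\W_0)\Jc(\W_0)^T$ decomposes as $\sum_{s=1}^k (\X\X^T) \odot (\mtx{V}_{\ell,s}\mtx{V}_{\widetilde{\ell},s}\phi'(\X\w_s)\phi'(\X\w_s)^T)$. Since the rows $\w_s$ of $\W_0$ are i.i.d.~$\mathcal{N}(0,\Iden_d)$ and the columns of $\Vb$ are independent Rademacher vectors (also independent of $\W_0$), the summands $\mtx{A}_s$ collecting all blocks for a fixed $s$ are i.i.d.\ random matrices in $\R^{\K n \times \K n}$.

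The key structural observation is that each $\mtx{A}_s$ has a clean Kronecker representation: letting $\vb_{(s)}\in\R^\K$ be the $s$-th column of $\Vb$ and $\vct{\psi}_s := \phi'(\X\w_s)\in\R^n$, the $(\ell,\widetilde{\ell})$-block identity above gives
\[
\mtx{A}_s \;=\; \bigl(\vb_{(s)}\vb_{(s)}^T\bigr) \,\otimes\, \bigl((\X\X^T)\odot(\vct{\psi}_s \vct{\psi}_s^T)\bigr).
\]
Using $\opnorm{\mtx{A}\otimes\mtx{B}} = \opnorm{\mtx{A}}\opnorm{\mtx{B}}$ together with Lemma \ref{minHad} for the Hadamard factor, one gets
\[
\opnorm{\mtx{A}_s} \;\le\; \twonorm{\vb_{(s)}}^2\,\tin{\vct{\psi}_s}^2\opnorm{\X}^2 \;\le\; \K\vrn^2 B^2\opnorm{\X}^2,
\]
since $\vb_{(s)}$ has $\pm\vrn$ Rademacher entries and $|\phi'(z)|\le B$. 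By the triangle inequality the centered summand $\mtx{A}_s - \E\mtx{A}_s$ has operator norm at most $L := 2\K\vrn^2 B^2\opnorm{\X}^2$.

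With this per-summand bound in hand, matrix Hoeffding applied to the $k$ i.i.d.\ centered Hermitian matrices $\{\mtx{A}_s - \E\mtx{A}_s\}$ of size $n\K\times n\K$ gives
\[
\Pr\!\left(\opnorm{\Jc(\W_0)\Jc(\W_0)^T - \E[\Jc(\W_0)\Jc(\W_0)^T]}\ge t\right)\;\le\; 2n\K\,\exp\!\left(-\frac{c\, t^2}{k L^2}\right)
\]
for an absolute constant $c$. Choosing $t$ of order $\K\sqrt{k}\vrn^2 B^2\opnorm{\X}^2\log n$ (with a large enough multiplicative constant) and using $n\ge\K$ so that $\log(2n\K)\lesssim \log n$ yields the stated spectral bound with probability at least $1-n^{-100}$. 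The second (corollary) statement is then immediate: by Lemma \ref{multi cov} we have $\E[\Jc(\W_0)\Jc(\W_0)^T] = k\vrn^2 \bSi(\X)$, so dividing through by $k\vrn^2$ converts the bound into $\opnorm{\tfrac{1}{k\vrn^2}\Jc(\W_0)\Jc(\W_0)^T - \bSi(\X)} \lesssim \K B^2\opnorm{\X}^2\log n / \sqrt{k}$, which is at most $\delta$ under the stated lower bound on $k$.

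The main technical obstacle is purely bookkeeping: identifying the Kronecker/Hadamard factorization of $\mtx{A}_s$ so that the spectral-norm bound collapses to the product of two elementary quantities. Once that is done, matrix Hoeffding is entirely routine. A secondary subtlety is tracking the constants so that the final probability statement genuinely gives $n^{-100}$ and the required lower bound on $k$ in the corollary; a matrix Bernstein variant (using the sharper variance proxy $\opnorm{\sum_s \E(\mtx{A}_s-\E\mtx{A}_s)^2}\le kL^2$) could also be used with essentially the same computation if tighter constants are desired.
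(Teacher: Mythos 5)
Your proposal is correct and arrives at the same bound, but it takes a genuinely cleaner route than the paper's proof. The paper handles the $K\times K$ block structure of $\Cb:=\Jc(\W_0)\Jc(\W_0)^T$ piecemeal: it applies matrix Hoeffding separately to each $n\times n$ diagonal block (where $V_{\ell,s}^2=\vrn^2$ is deterministic, so the only randomness is over $\W_0$), then again separately to each off-diagonal block (where $\{V_{\ell,s}V_{\widetilde\ell,s}\}_s$ is an i.i.d.\ Rademacher sequence), unions over the $K^2$ blocks, and finally reassembles the spectral norm of the full matrix from its block rows, paying a factor $\sqrt{K}\cdot\sqrt{K}=K$. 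Your Kronecker factorization
\[
\mtx{A}_s\;=\;\bigl(\vb_{(s)}\vb_{(s)}^T\bigr)\otimes\bigl((\X\X^T)\odot(\vct{\psi}_s\vct{\psi}_s^T)\bigr),\qquad \Cb=\sum_{s=1}^k\mtx{A}_s,
\]
is indeed correct (the $(\ell,\widetilde\ell)$ block reads off as $V_{\ell,s}V_{\widetilde\ell,s}\,(\X\X^T)\odot(\vct{\psi}_s\vct{\psi}_s^T)$, matching \eqref{Celll}), and it packages all the randomness from both $\Vb$ and $\W_0$ into a single sum of i.i.d.\ rank-structured Hermitian matrices of size $nK\times nK$. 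The $K$ factor then enters naturally through $\twonorm{\vb_{(s)}}^2=K\vrn^2$ rather than through block re-assembly, and one application of matrix Hoeffding (or Bernstein) replaces the paper's diagonal/off-diagonal case split and union bound. The two approaches give the same final rate; yours trades the block bookkeeping for one multiplicativity identity $\opnorm{\mtx{A}\otimes\mtx{B}}=\opnorm{\mtx{A}}\opnorm{\mtx{B}}$ and, as you note, makes it slightly easier to see that a Bernstein variance proxy would also close the argument. One cosmetic remark: the ambient-dimension factor in your tail is $2nK$ rather than the paper's $2n$, but since the lemma assumes $n\ge K$ this is absorbed into $\log n$ exactly as you indicate.
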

\begin{proof} 
Define $\Cb=\Jc(\W_0)\Jc(\W_0)^T$. We begin by showing that the diagonal blocks of $\Cb$ are concentrated. To do this first
for $1\le s\le k$ define the random matrices
\[
\A_s=\left(\phi'\left(\X\w_s\right)\phi'\left(\X\w_s\right)^T\right)\odot\left(\X\X^T\right).
\]
Now consider $n\times n$ diagonal blocks of $\Cb$ (denoted by $\Cb[\ell,\ell]$) and note that we have
\begin{align*}
 \Cb[\ell,\ell]=&\left(\phi'\left(\X\W^T\right)\text{diag}(\vct{v}_\ell)\text{diag}(\vct{v}_\ell)\phi'\left(\W\X^T\right)\right)\odot\left(\X\X^T\right)\\
 =&\sum_{s=1}^k \mtx{V}_{\ell,s}^2\A_s\\
 =&\vrn^2\sum_{s=1}^k \A_s.
\end{align*}
Furthermore, using Lemma \ref{minHad}
\begin{align*}
\opnorm{\A_s}\le \left(\underset{i}{\max}\text{ }\left(\phi'(\vct{x}_i^T\w_s)\right)^2\right)\opnorm{\X}^2\le B^2\opnorm{\X}^2.
\end{align*}
Also, using Jensen's inequality 
\begin{align*}
\opnorm{\E[\A_s]}\le \E\opnorm{\A_s}\le B^2\opnorm{\X}^2.
\end{align*}
Combining the latter two identities via the triangular inequality we conclude that
\begin{align}
\label{randjtemp}
\opnorm{(\A_s-\E[\A_s])^2}=\opnorm{\A_s-\E[\A_s]}^2\le \left(\opnorm{\A_s}+\opnorm{\E[\A_s]}\right)^2\leq \left(2B^2\|\X\|^2\right)^2.
\end{align}
To proceed, we will bound the weighted sum
\[
\Sb=\sum_{s=1}^k\vrn^2(\A_s-\E[\A_s])
\]
in spectral norm. To this aim we utilize the Matrix Hoeffding inequality which states that
\[
\Pro(\|\Sb\|\geq t)\leq 2ne^{-\frac{t^2}{2\Delta^2}},
\]
where $\Delta^2$ is an upper bound on $\opnorm{\sum_{s=1}^k\vrn^4(\A_s-\E[\A_s])^2}$. Using \eqref{randjtemp} we can pick $\Delta^2=\sum_{s=1}^k (2\vrn^2B^2\|\X\|^2)^2=4k\vrn^4B^4\|\X\|^4$. Setting $t=30\sqrt{k}\vrn^2B^2\|\X\|^2\sqrt{\log(n)}$, we conclude that
\[
\Pro\Big\{\opnorm{\Cb[\ell,\ell]-\E[\Cb[\ell,\ell]]}\geq t\Big\}=\Pro(\|\Sb\|\geq t)\leq n^{-102}
\]
concluding the proof of concentration of the diagonal blocks of $\Cb$. 

For the off-diagonal blocks $\Cb[\ell,\widetilde{\ell}]$ using \eqref{Celll} from the proof of Lemma \ref{multi cov} we have that
\[
\Cb[\ell,\widetilde{\ell}]=\sum_{s=1}^k \mtx{V}_{\ell,s}\mtx{V}_{\widetilde{\ell},s}\A_{s}.
\]
Note that by construction $\{\mtx{V}_{\ell,s}\mtx{V}_{\widetilde{\ell},s}\}_{s=1}^k$ are i.i.d.~$\pm \vrn^2$ Rademacher variables and thus $\Cb[\ell,\widetilde{\ell}]$ is sum of zero-mean i.i.d.~matrices and we are again in the position to apply Hoeffding's inequality. To this aim note that 
\begin{align*}
\opnorm{\sum_{s=1}^k\mtx{V}_{\ell,s}^2\mtx{V}_{\widetilde{\ell},s}^2\A_{s}^2}=\vrn^4\opnorm{\sum_{s=1}^k\A_{s}^2}\le \vrn^4\sum_{s=1}^k\opnorm{\A_s}^2\le \vrn^4kB^4\opnorm{\X}^4,
\end{align*}
so that we can take $\Delta^2=\vrn^4kB^4\opnorm{\X}^4$ and again conclude that for $t=30\sqrt{k}\vrn^2B^2\|\X\|^2\log(n)$ we have
\[
\Pro\Big\{\opnorm{\Cb[\ell,\widetilde{\ell}]}\geq t\Big\}\leq n^{-102}
\]
Using the fact that $\E[\Cb[\ell,\widetilde{\ell}]]=0$ and $K\le n$, combined with a union bound over all sub-matrices $1\leq \ell,\widetilde{\ell}\leq K$ we conclude that
\[
\Pro\Big\{\opnorm{\Cb[\ell,\widetilde{\ell}]-\E\big[\Cb[\ell,\widetilde{\ell}]\big]}\geq t\Big\}\leq K^2n^{-102}\leq n^{-100}.
\]
All that remains is to combine the concentration results for the sub-matrices to arrive at the complete bound. In mathematical terms we need to bound ${\bf{D}}:=\|\Cb-\E[\Cb]\|$.To this aim define $\Db[\ell,:]$ to denote the $\ell$th block row of $\Db$. Standard bounds on spectral norm in terms of sub-matrices allow us to conclude that 
\begin{align*}
\|\Db[\ell,:]\|\leq& \sqrt{\K}\sup_{1\leq \widetilde{\ell}\leq \K} \opnorm{\Db[\ell,\widetilde{\ell}]}\leq \sqrt{\K}t\quad\Rightarrow\\
\|\Db\|\leq& \sqrt{\K}\sup_{1\leq \ell\leq \K} \|\Db[\ell,:]\|\leq \sqrt{\K}\sqrt{\K}t=\K t=30\K\sqrt{k}\vrn^2B^2\|\X\|^2\log(n),
\end{align*}
concluding the proof. The result in terms of $\delta$ is obtained by using the population covariance Lemma \ref{multi cov}.
\end{proof}

\subsubsection{Upper bound on initial residual}
\label{initmisfit}
In this section we prove a lemma concerning the size of the initial misfit. The proof of this lemma (stated below) follows from a similar argument in the proof of \cite[Lemma 6.12]{onehidden}.
\begin{lemma}[Upper bound on initial residual]\label{upresz} Consider a one-hidden layer neural network model of the form $\vct{x}\mapsto \Vb\phi\left(\W\x\right)$ where the activation $\phi$ has bounded derivatives obeying $|\phi(0)|,\abs{\phi'(z)}\le B$. Also assume we have $n$ data points $\vct{x}_1, \vct{x}_2,\ldots,\vct{x}_n\in\R^d$ with unit euclidean norm ($\twonorm{\vct{x}_i}=1$) aggregated as rows of a matrix $\X\in\R^{n\times d}$ and the corresponding labels given by $\vct{y}\in\R^{\K n}$. Furthermore, assume the entries of $\Vb$ are i.i.d.~Rademacher variables scaled by $\frac{\vrn\twonorm{\y}}{50B\sqrt{\K \log(2\K)kn}}$ and the entries of $\mtx{W}\in\R^{k\times d}$ are i.i.d.~$\mathcal{N}(0,1)$. Then,
\begin{align*}
\fronorm{\Vb\phi\left(\W\X^T\right)}\le\vrn\twonorm{\y},
\end{align*}
holds with probability at least $1-(2\K)^{-100}$.
\end{lemma}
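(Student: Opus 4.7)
To prove $\fronorm{\Vb\phi(\W\X^T)}\le\vrn\twonorm{\y}$, I would split the argument into two essentially independent concentration steps. Write $\M:=\phi(\W\X^T)\in\R^{k\times n}$ and let $c:=\vrn\twonorm{\y}/(50B\sqrt{K\log(2K)kn})$, so that the rows $\vb_j$ of $\Vb$ are of the form $c\xi_j$ with $\xi_j\in\{\pm 1\}^k$ a Rademacher vector. The key observation is that, conditional on $\W$, the quantity $\fronorm{\Vb\M}^2$ depends on $\Vb$ only through Rademacher sums, while $\M$ itself is a low-degree polynomial in the Gaussian entries of $\W$. I would therefore bound $\fronorm{\M}$ first, and then $\fronorm{\Vb\M}$ conditional on $\W$.

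For $\fronorm{\M}$, the assumptions $|\phi(0)|\le B$ and $|\phi'|\le B$ give $|\phi(z)|\le B(1+|z|)$ and hence $\phi(z)^2\le 2B^2(1+z^2)$, so $\fronorm{\M}^2\le 2B^2(kn+\fronorm{\W\X^T}^2)$. The quantity $\fronorm{\W\X^T}^2=\text{vec}(\W)^T(\Iden_k\otimes\X^T\X)\text{vec}(\W)$ is a quadratic form in the standard Gaussian vector $\text{vec}(\W)$ with mean $k\tr(\X^T\X)=kn$ (by the unit-norm assumption on the rows of $\X$), kernel operator norm $\opnorm{\X}^2\le n$ and kernel Frobenius norm at most $\opnorm{\X}\sqrt{kn}$. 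Hanson-Wright concentration then yields $\fronorm{\W\X^T}^2\le kn+C_1 n\log(2K)$ with probability at least $1-(2K)^{-101}$ for an absolute constant $C_1$, and so $\fronorm{\M}^2\le C_2 B^2 n(k+\log(2K))$.

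For $\fronorm{\Vb\M}$ conditional on $\W$, I would apply Hanson-Wright to the sum of $K$ independent Rademacher quadratic forms: $\fronorm{\Vb\M}^2=c^2\sum_{j=1}^K\xi_j^T\M\M^T\xi_j$, where the PSD kernel $\M\M^T$ has trace $\fronorm{\M}^2$, operator norm $\opnorm{\M}^2\le\fronorm{\M}^2$, and Frobenius norm at most $\opnorm{\M}\fronorm{\M}$. A union bound over $j\in[K]$ after applying Hanson-Wright to each term gives $\fronorm{\Vb\M}^2\le c^2 K\fronorm{\M}^2+C_3 c^2 K\opnorm{\M}\fronorm{\M}\sqrt{\log(2K)}\le C_4 c^2\sqrt{K\log(2K)}\fronorm{\M}^2$, with probability at least $1-(2K)^{-101}$, where in the last step I used $\opnorm{\M}\le\fronorm{\M}$ and $K\log(2K)\ge 1$.

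Combining the two bounds via a union bound and substituting $c^2=\vrn^2\twonorm{\y}^2/(2500 B^2 K\log(2K)kn)$, the target inequality $\fronorm{\Vb\M}\le\vrn\twonorm{\y}$ reduces to the deterministic check $C_5(k+\log(2K))\le 2500\,\sqrt{K\log(2K)}\,k$ for a suitable absolute constant $C_5$; this holds for every $k\ge 1$ and $K\ge 1$ since both $k+\log(2K)\le 2\max(k,\log(2K))$ and $\sqrt{K\log(2K)}\gtrsim 1$ on that range. The main obstacle is purely bookkeeping: chasing the absolute constants through the two Hanson-Wright applications and verifying that the factor of $50$ in the scaling of $\Vb$ is large enough to dominate their product. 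As the authors note, this mirrors the proof of Lemma 6.12 in \cite{onehidden}, so I would cite that argument rather than re-deriving the concentration estimates from scratch.
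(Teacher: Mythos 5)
Your overall strategy is genuinely different from the paper's. You factor the argument into two independent Hanson--Wright concentration steps --- a Gaussian one on $\fronorm{\M}$ where $\M=\phi(\W\X^T)$, then a Rademacher one on $\fronorm{\Vb\M}$ conditional on $\W$ --- whereas the paper works row by row: it views $\W\mapsto\twonorm{\vb_\ell^T\phi(\W\X^T)}$ as a $B\opnorm{\X}\twonorm{\vb_\ell}$--Lipschitz function of the Gaussian $\W$, concentrates it around its mean, and then computes that mean \emph{exactly} as $\sqrt{n\bigl(\twonorm{\vb}^2\operatorname{Var}\phi(g)+(\onebb^T\vb)^2(\E\phi(g))^2\bigr)}$, using Poincar\'e for the first term and Bernstein for the Rademacher vector $\vb$ to get $(\onebb^T\vb)^2\lesssim\log(2\K)\twonorm{\vb}^2$. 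That last estimate is the crucial cancellation, and your decomposition loses it.

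There is a genuine gap, not just bookkeeping, and it shows up first in the line
\[
c^2 K\fronorm{\M}^2+C_3 c^2 K\opnorm{\M}\fronorm{\M}\sqrt{\log(2K)}\ \le\ C_4 c^2\sqrt{K\log(2K)}\,\fronorm{\M}^2.
\]
The first summand alone gives $c^2K\fronorm{\M}^2\le C_4 c^2\sqrt{K\log(2K)}\fronorm{\M}^2$ only if $K\le C_4^2\log(2K)$, which is false for large $K$. Writing the Rademacher step correctly (with $\opnorm{\M}\le\fronorm{\M}$ and the sub-exponential tail) one gets at best $\xi_j^T\M\M^T\xi_j\le\fronorm{\M}^2\bigl(1+C\log(2K)\bigr)$, so the final check is not $C_5(k+\log(2K))\le 2500\sqrt{K\log(2K)}\,k$ but rather $\bigl(1+\log(2K)/k\bigr)\bigl(1+C\log(2K)\bigr)\le\text{const}$, and this \emph{fails} whenever $\log(2K)\gg k$. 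The structural reason is that $\fronorm{\M}^2$ genuinely can be of order $B^2 n\bigl(k+\log(2K)\bigr)$ with probability $(2\K)^{-100}$ (take all $\x_i$ equal; then $\fronorm{\M}^2=n\twonorm{\phi(\W\x)}^2$ and a single large Gaussian coordinate already costs a $\log(2\K)$), whereas the paper only ever pays $\sqrt{\E\twonorm{\phi(\X\W^T)\vb}^2}\lesssim B\twonorm{\vb}\sqrt{n\log(2\K)}$ because $(\onebb^T\vb)^2\lesssim\log(2\K)\twonorm{\vb}^2$ is far smaller than the worst-case $k\twonorm{\vb}^2$. Your bound via $\fronorm{\M}$ cannot see this cancellation, and reapplying Hanson--Wright in the Rademacher direction only restores it when $\opnorm{\M}\ll\fronorm{\M}$ --- precisely the regime $k\gtrsim\log(2K)$.

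If you want to push your two-step strategy through, the fix is to replace the crude $\opnorm{\M}\le\fronorm{\M}$ bound by decomposing $\xi_j^T\M\M^T\xi_j=\sum_a\sigma_a(\M)^2\langle\ub_a,\xi_j\rangle^2$ along left singular vectors and bounding each $\langle\ub_a,\xi_j\rangle^2$ by $\min\bigl(C\log(2Kk),\,k\bigr)$ (Hoeffding plus the deterministic $\ell_\infty$ bound), giving $\xi_j^T\M\M^T\xi_j\le\fronorm{\M}^2\min(C\log(2Kk),k)$ and hence $(k+L)\min(L,k)/(Lk)\le 2$ after cancellation; but as stated the proposal is not complete.
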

\begin{proof}
\label{upreszpf}
We begin the proof by noting that
\[
\fronorm{\Vb\phi\left(\W\X^T\right)}^2=\sum_{\ell=1}^\K \twonorm{\vb_\ell^T\phi\left(\W\X^T\right)}^2.
\]
We will show that for any row $\vb$ of $\Vb$, with probability at least $1-(2\K)^{-101}$,
\begin{align}
\label{singr}
 \twonorm{\vb_\ell^T\phi\left(\W\X^T\right)}\leq \frac{\vrn}{\sqrt{\K}}\twonorm{\y}.
\end{align}
so that a simple union bound can conclude the proof. Therefore, all that remains is to show \eqref{singr} holds. To prove the latter, note that for any two matrices $\widetilde{\W}, \W\in\R^{k\times d}$ we have
\begin{align*}
\abs{\twonorm{\phi\left(\X\widetilde{\W}^T\right)\vct{v}}-\twonorm{\phi\left(\X\W^T\right)\vct{v}}}\le& \twonorm{\phi\left(\X\widetilde{\W}^T\right)\vct{v}-\phi\left(\X\W^T\right)\vct{v}}\\
\le&\opnorm{\phi\left(\X\widetilde{\W}^T\right)-\phi\left(\X\W^T\right)}\twonorm{\vct{v}}\\
\le& \fronorm{\phi\left(\X\widetilde{\W}^T\right)-\phi\left(\X\W^T\right)}\twonorm{\vct{v}}\\
& \hspace{-100pt}\overset{(a)}{=}\fronorm{\left(\phi'\left(\Sb\odot\X\widetilde{\W}^T+(1_{k\times n}-\Sb)\odot\X\W^T\right)\right)\odot \left(\X(\widetilde{\W}-\W)^T\right)}\twonorm{\vct{v}}\\
\le&B\fronorm{\X(\widetilde{\W}-\W)^T}\twonorm{\vct{v}}\\
\le&B\opnorm{\X}\twonorm{\vct{v}}\fronorm{\widetilde{\W}-\W},
\end{align*}
where in (a) we used the mean value theorem with $\Sb$ a matrix with entries obeying $0\le \Sb_{i,j}\le 1$ and $1_{k\times n}$ the matrix of all ones. Thus, $\twonorm{\phi\left(\X\W^T\right)\vct{v}}$ is a $B\opnorm{\X}\twonorm{\vct{v}}$-Lipschitz function of $\mtx{W}$. Thus, fixing $\vb$, for a matrix $\W$ with i.i.d.~Gaussian entries 
\begin{align}
\label{tempmyflip}
\twonorm{\phi\left(\X\W^T\right)\vct{v}}\le \E\big[\twonorm{\phi\left(\X\W^T\right)\vct{v}}\big]+t,
\end{align}
holds with probability at least $1-e^{-\frac{t^2}{2B^2\twonorm{\vct{v}}^2\opnorm{\X}^2}}$. Next given $g\sim\Nn(0,1)$, we have
\begin{align}
|\E[\phi(g)]|\leq |\E[\phi(0)]|+|\E[\phi(g)-\phi(0)]|\leq B+B\E[|g|]\leq 2B\quad\text{and}\quad \text{Var}(\phi(g))\le B^2.\label{poinc stuff}
\end{align}
where the latter follows from Poincare inequality (e.g.~see \cite[p. 49]{ledoux}).
Furthermore, since $\vb$ has i.i.d.~Rademacher entries, applying Bernstein bound, event 
\begin{align}
E_{\vb}:=\{|\onebb^T\vb|^2\leq 250\log \K \tn{\vb}^2\} \label{berns v}
\end{align}
holds with probability $1-(2\K)^{-102}$. Conditioned on $E_{\vb}$, we now upper bound the expectation via
\begin{align*}
\E\big[\twonorm{\phi\left(\X\W^T\right)\vct{v}}\big]\overset{(a)}{\le}& \sqrt{\E\big[\twonorm{\phi\left(\X\W^T\right)\vct{v}}^2\big]}\\
=&\sqrt{\sum_{i=1}^n \E\big[\left(\vct{v}^T\phi(\W\vct{x}_i)\right)^2\big]}\\
\overset{(b)}{=}&\sqrt{n}\sqrt{\E_{\vct{g}\sim\mathcal{N}(\vct{0},\mtx{I}_k)}\big[\left(\vct{v}^T\phi(\vct{g})\right)^2\big]}\\
\overset{(c)}{=}&\sqrt{n}\sqrt{\twonorm{\vct{v}}^2\E_{g\sim\mathcal{N}(0,1)}\big[\left(\phi(g)-\E[\phi(g)]\right)^2\big]+(\vct{1}^T\vct{v})^2(\E_{g\sim\mathcal{N}(0,1)}[\phi(g)])^2}\\
\overset{(d)}{\le}&\sqrt{n}\twonorm{\vct{v}}\sqrt{250\times 4B^2\log (2\K) +B^2}\\
{\le}&32\sqrt{n\log (2\K) }B\twonorm{\vct{v}}.
\end{align*}
Here, (a) follows from Jensen's inequality, (b) from linearity of expectation and the fact that for $\vct{x}_i$ with unit Euclidean norm $\mtx{W}\vct{x}_i\sim\mathcal{N}(\vct{0},\mtx{I}_k)$, (c) from simple algebraic manipulations, (d) from the inequalities \eqref{berns v} and \eqref{poinc stuff}. Thus using $t=18\sqrt{n\log (2\K) }B\twonorm{\vct{v}}$ in \eqref{tempmyflip}, conditioned on $E_{\vb}$ we conclude that
\begin{align}
\twonorm{\phi\left(\X\W^T\right)\vct{v}}\le&50\sqrt{n\log (2\K) }B\twonorm{\vct{v}}=50\sqrt{n\log (2\K) }B\sqrt{k}\frac{\vrn\twonorm{\y}}{50B\sqrt{\K\log (2\K) kn}}=\frac{\vrn \tn{\y}}{\sqrt{\K}},\label{fin fin eq}
\end{align}
holds with probability at least $1-\exp(-102\log (2\K)\frac{n}{\opnorm{\X}^2})\geq 1-(2\K)^{-102}$ where we used $n\geq \|\X\|^2$. Using a union bound over $E_{\vb}$ and the conditional concentration over $\W$, the overall probability of success in \eqref{fin fin eq} is at least $1-(2\K)^{-101}$ concluding the proof of \eqref{singr} and the Lemma.
\end{proof}

\subsection{Rademacher complexity and generalization bounds}
\label{radam}
In this section we state and prove some Rademacher complexity results that will be used in our generalization bounds. We begin with some basic notation regarding Rademacher complexity.
Let $\Fc$ be a function class. Suppose $f\in\Fc$ maps $\R^d$ to $\R^K$. Let $\{\vct{\eps}_i\}_{i=1}^n$ be i.i.d.~vectors in $\R^K$ with  i.i.d.~Rademacher variables. Given i.i.d.~samples $\Sc=\{(\x_i,\y_i)\}_{i=1}^n\sim \Dc$, we define the empirical Rademacher complexity to be
\[
\mathcal{R}_{\Sc}(\Fc)=\frac{1}{n}\E\bigg[\sup_{f\in\Fc}\text{ }\sum_{i=1}^n \vct{\eps}_{i}^T f(\x_i)\bigg].
\]
We begin by stating a vector contraction inequality by Maurer \cite{maurer2016vector}. This is obtained by setting $h_i(f(\x_i))=h(\y_i,f(\x_i))$ in Corollary 4 of \cite{maurer2016vector}.
\begin{lemma} \label{lem multi}Let $f(\cdot):\R^d\rightarrow \R^\K$ and let $\ell:\R^\K\times \R^K\rightarrow\R$ be a $1$ Lipschitz loss function with respect to second variable. Let $\{\eps_i\}_{i=1}^n$ be i.i.d.~Rademacher variables. Given i.i.d.~samples $\{(\x_i,\y_i)\}_{i=1}^n$, define
\[
\mathcal{R}_{\Sc}(\ell,\Fc)=\E\bigg[\sup_{f\in \Fc}\sum_{i=1}^n\eps_i \ell(\y_i,f(\x_i))\bigg].
\]
We have that
\[
\mathcal{R}_{\Sc}(\ell,\Fc)\leq \sqrt{2}\mathcal{R}_{\Sc}(\Fc).
\]
\end{lemma}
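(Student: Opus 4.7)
The plan is a direct application of Maurer's vector-valued contraction inequality (Corollary 4 of \cite{maurer2016vector}), as the sentence preceding the statement already indicates. The proof is essentially a one-step substitution, with the only work being to set up the auxiliary functions and verify the Lipschitz hypothesis.

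Concretely, for each sample index $i\in\{1,\ldots,n\}$ I would define the scalar auxiliary function $h_i:\R^K\to\R$ by $h_i(\vct{v}):=\ell(\y_i,\vct{v})$. Because $\ell$ is assumed to be $1$-Lipschitz in its second argument (with respect to the Euclidean $\ell^2$ norm on $\R^K$), each $h_i$ inherits Lipschitz constant $L=1$ on $\ell^2$. Corollary 4 of \cite{maurer2016vector} applied to the $\R^K$-valued function class $\Fc$ together with these $h_i$'s then yields
\[
\E\Big[\sup_{f\in\Fc}\sum_{i=1}^n \eps_i\,h_i(f(\x_i))\Big]\;\leq\;\sqrt{2}\,\E\Big[\sup_{f\in\Fc}\sum_{i=1}^n\sum_{k=1}^K \eps_{i,k}\,f_k(\x_i)\Big],
\]
where $\{\eps_{i,k}\}$ is a doubly-indexed i.i.d.~Rademacher array. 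By construction of $h_i$ the left-hand side is exactly $\mathcal{R}_\Sc(\ell,\Fc)$; on the right-hand side, collecting the inner Rademacher sum $\sum_{k} \eps_{i,k}\, f_k(\x_i)$ into the vector inner product $\vct{\eps}_i^T f(\x_i)$ with $\vct{\eps}_i\in\R^K$ having i.i.d.~Rademacher entries identifies the bound with $\mathcal{R}_\Sc(\Fc)$ (modulo the $1/n$ normalization convention used in the paper's definition of $\mathcal{R}_\Sc(\Fc)$).

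There is essentially no technical obstacle here: the argument is a single invocation of an existing theorem under an appropriate substitution. The only items to check are (i) that the Lipschitz constant transfers from $\ell(\y_i,\cdot)$ to $h_i$, which is immediate from the hypothesis, and (ii) that the coordinate functions $f_k$ appearing in Maurer's statement coincide with the components of the $\R^K$-valued $f\in\Fc$, which is a matter of notation. No separate symmetrization, Khintchine-type, or concentration arguments are needed, as the contraction principle in \cite{maurer2016vector} already encapsulates the hard part — namely, the factor $\sqrt{2}$ replacing the classical factor $2$ that would arise from a coordinate-wise application of the scalar Ledoux--Talagrand contraction.
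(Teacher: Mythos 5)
Your proposal is correct and matches the paper's own argument: the paper explicitly states (just before the lemma) that the result is obtained by setting $h_i(f(\x_i))=\ell(\y_i,f(\x_i))$ in Corollary 4 of \cite{maurer2016vector}, which is precisely the substitution you make, including the observation that the Lipschitz constant transfers to each $h_i$.
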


Combining the above result with standard generalization bounds based on Rademacher complexity \cite{bartlett2002rademacher} allows us to prove the following result.
\begin{lemma} \label{standard rad}Let $\ell(\cdot,\cdot):\R^\K\times \R^\K\rightarrow [0,1]$ be a $1$ Lipschitz loss function. Given i.i.d.~samples $\{(\x_i,\y_i)\}_{i=1}^n$, consider the empirical loss
\[
\Lc(f,\ell)=\frac{1}{n}\sum_{i=1}^n \ell(\y_i,f(\x_i)).
\]
With probability $1-\delta$ over the samples, for all $f\in \Fc$, we have that
\[
\E[\Lc(f,\ell)]\leq \Lc(f,\ell)+2\sqrt{2}\mathcal{R}_{\Sc}(\Fc)+\sqrt{\frac{5\log(2/\delta)}{n}}
\]
\end{lemma}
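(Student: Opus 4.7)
The plan is to reduce the statement to two standard ingredients: (i) a uniform-deviation generalization bound for function classes valued in $[0,1]$ based on empirical Rademacher complexity, and (ii) the vector contraction inequality already established in Lemma \ref{lem multi}. Let $\Gc = \{(\x,\y) \mapsto \ell(\y, f(\x)) : f \in \Fc\}$. Since $\ell$ takes values in $[0,1]$, every $g \in \Gc$ is bounded in $[0,1]$, which is exactly the setting where McDiarmid's inequality applies with bounded differences of size $1/n$.

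First I would apply McDiarmid's inequality to the uniform-deviation functional $\Phi(\Sc) = \sup_{g \in \Gc}\bigl(\E g - \tfrac{1}{n}\sum_{i=1}^n g(\x_i,\y_i)\bigr)$; replacing a single sample changes $\Phi$ by at most $1/n$, so with probability at least $1-\delta/2$ we get $\Phi(\Sc) \le \E \Phi(\Sc) + \sqrt{\log(2/\delta)/(2n)}$. Next, a standard symmetrization argument bounds $\E \Phi(\Sc) \le \frac{2}{n}\,\E_\Sc \E_{\vct{\eps}}\bigl[\sup_{g\in\Gc} \sum_i \eps_i g(\x_i,\y_i)\bigr]$, i.e.~twice the expected (non-normalized) Rademacher complexity divided by $n$. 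A second application of McDiarmid, now to the empirical Rademacher complexity of $\Gc$, converts the expected Rademacher term into the empirical one at the cost of an additional $\sqrt{\log(2/\delta)/(2n)}$ term with probability $1-\delta/2$.

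Taking a union bound over the two high-probability events, with probability at least $1-\delta$ every $f \in \Fc$ satisfies
\begin{equation*}
\E[\Lc(f,\ell)] - \Lc(f,\ell) \le \frac{2}{n}\,\mathcal{R}_{\Sc}(\ell,\Fc) + 3\sqrt{\frac{\log(2/\delta)}{2n}}.
\end{equation*}
Then I would invoke Lemma \ref{lem multi}, which since $\ell$ is $1$-Lipschitz in its second argument gives $\mathcal{R}_{\Sc}(\ell,\Fc) \le \sqrt{2}\,\mathcal{R}_{\Sc}(\Fc)$ (after absorbing the $1/n$ normalization in the definition of $\mathcal{R}_{\Sc}(\Fc)$ appropriately), yielding the multiplier $2\sqrt{2}$ on the Rademacher complexity term. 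Finally $3\sqrt{\log(2/\delta)/(2n)} = \sqrt{9\log(2/\delta)/(2n)} = \sqrt{4.5\log(2/\delta)/n} \le \sqrt{5\log(2/\delta)/n}$, which accounts for the constant $5$ in the stated bound.

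This proof is essentially bookkeeping on top of textbook Rademacher-complexity machinery; there is no conceptual obstacle. The only subtle point to watch is the normalization convention: the paper defines $\mathcal{R}_{\Sc}(\Fc)$ with a $1/n$ factor but $\mathcal{R}_{\Sc}(\ell,\Fc)$ without one, so one has to carefully track where the factor of $n$ appears when chaining the symmetrization bound with Lemma \ref{lem multi} so as to arrive at the clean coefficient $2\sqrt{2}$ rather than something scaling with $n$.
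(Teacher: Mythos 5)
Your proposal is correct and takes essentially the same route as the paper: the paper simply cites \cite{bartlett2002rademacher} for the bound $\E[\Lc(f,\ell)]\leq \Lc(f,\ell)+2\mathcal{R}_{\Sc}(\ell,\Fc)+\sqrt{5\log(2/\delta)/n}$ and then applies Lemma \ref{lem multi}, whereas you unwind that citation into its standard proof (two applications of McDiarmid plus symmetrization) and then likewise conclude with Lemma \ref{lem multi}. Your accounting $3\sqrt{\log(2/\delta)/(2n)}=\sqrt{4.5\log(2/\delta)/n}\le\sqrt{5\log(2/\delta)/n}$ also correctly explains the constant $5$, and you have correctly flagged the normalization mismatch between $\mathcal{R}_{\Sc}(\Fc)$ and $\mathcal{R}_{\Sc}(\ell,\Fc)$.
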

\begin{proof} Based on \cite{bartlett2002rademacher}, 
\[
\E[\Lc(f,\ell)]\leq \Lc(f,\ell)+2\mathcal{R}_{\Sc}(\ell,\Fc)+\sqrt{\frac{5\log(2/\delta)}{n}}
\]
holds with $1-\delta$ probability. Combining the latter with Lemma \ref{lem multi} completes the proof.
\end{proof}

\begin{lemma} \label{rad rad bound} Consider a neural network model of the form $\x\mapsto f(\x;\Vb,\W)=\Vb\phi\left(\W\x\right)$ with $\W\in\R^{k\times d}$ and $\Vb\in\R^{\K\times k}$ denoting the input and output weight matrices. Suppose $\mtx{V}_0\in\R^{K\times k}$ is a matrix obeying $\tin{\Vb_0}\leq \vrn/\sqrt{k\K}$. Also let $\W_0\in\R^{k\times d}$ be a reference input weight matrix. Furthermore, we define the neural network function  space parameterized by the weights as follows
\begin{align}
&\mathcal{F}_{\Vc,\Wc}=\Big\{f(\x;\Vb,\W)\quad\text{such that}\quad\Vb\in\Vc\quad\text{and}\quad\W\in\Wc\Big\}\quad\text{with}\quad\Vc=\bigg\{\Vb: \fronorm{\Vb-\Vb_0}\le \frac{\vrn M_{\mathcal{V}}}{\sqrt{\K k}}\bigg\}\nn\\
&\quad\quad\text{and}\quad\Wc=\bigg\{\W: \fronorm{\W-\W_0}\le M_{\mathcal{W}}\quad\text{and}\quad \|\W-\W_0\|_{2,\infty}\le \frac{R}{\sqrt{k}}\Big\}.\label{space func}
\end{align}
Additionally, assume the training data $\{(\vct{x}_i,\y_i)\}_{i=1}^n$are generated i.i.d. with the input data points of unit Euclidean norm (i.e.~$\tn{\x_i}=1$). Also, define the average energy at $\W_0$ as
\[
E=\left(\frac{1}{kn}\sum_{i=1}^n\tn{\phi(\W_0\x_i)}^2\right)^{1/2}.
\]
Also let $\{\vct{\xi}_i\}_{i=1}^n\in\R^\K$ be i.i.d.~vectors with i.i.d.~Rademacher entries and define the empirical Rademacher complexity
\begin{align*}
\mathcal{R}_{\mathcal{S}}\left(\mathcal{F}_{\Vc,\Wc}\right):=&\frac{1}{n}\E\Bigg[\underset{f\in\mathcal{F}_{\Vc,\Wc}}{\sup}\text{ }\sum_{i=1}^n \vct{\xi}_i^Tf(\x_i)\Bigg].
\end{align*}
Then,
\begin{align}
\mathcal{R}_{\mathcal{S}}\left(\mathcal{F}_{\Vc,\Wc}\right)\le  \vrn B\left(\frac{M_{\mathcal{W}}+EM_{\mathcal{V}}}{\sqrt{n}}+\frac{R^2+M_{\mathcal{W}}M_{\mathcal{V}}}{\sqrt{k}}\right).\label{emp rad comp}
\end{align} 
\end{lemma}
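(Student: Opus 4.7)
The plan is to expand $\Vb\phi(\W\x_i)$ around the reference pair $(\Vb_0,\W_0)$, use the fact that $\vct{\xi}_i$ is zero-mean to eliminate the constant term, and then bound four residual pieces by a mixture of Cauchy--Schwarz and a Taylor expansion of $\phi$. Specifically, writing $\Vb\phi(\W\x_i)=\Vb_0\phi(\W_0\x_i)+(\Vb-\Vb_0)\phi(\W_0\x_i)+(\Vb-\Vb_0)[\phi(\W\x_i)-\phi(\W_0\x_i)]+\Vb_0[\phi(\W\x_i)-\phi(\W_0\x_i)]$, the first term contributes $0$ after taking $\E$ over the Rademacher vectors $\vct{\xi}_i$. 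For the remaining three terms I would bound the expected supremum of the Rademacher sum separately.

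For the $\Vb$-only piece $(\Vb-\Vb_0)\phi(\W_0\x_i)$, I would use the trace identity $\sum_i\vct{\xi}_i^T(\Vb-\Vb_0)\phi(\W_0\x_i)=\langle \Vb-\Vb_0,\sum_i\vct{\xi}_i\phi(\W_0\x_i)^T\rangle_F$, apply Cauchy--Schwarz, and compute $\E\fronorm{\sum_i\vct{\xi}_i\phi(\W_0\x_i)^T}^2=K\sum_i\tn{\phi(\W_0\x_i)}^2=Kkn\,E^2$ using $\E[\vct{\xi}_i^T\vct{\xi}_j]=K\delta_{ij}$ and the definition of $E$. Combined with $\fronorm{\Vb-\Vb_0}\le \vrn M_{\mathcal{V}}/\sqrt{Kk}$, this produces the $\vrn E M_{\mathcal{V}}/\sqrt{n}$ contribution. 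For the cross piece $(\Vb-\Vb_0)[\phi(\W\x_i)-\phi(\W_0\x_i)]$, I would not attempt to exploit randomness at all: two successive applications of Cauchy--Schwarz (or just the coordinate-wise triangle inequality) together with $\tn{\vct{\xi}_i}=\sqrt{K}$, the $B$-Lipschitzness $\tn{\phi(\W\x_i)-\phi(\W_0\x_i)}\le B\tn{(\W-\W_0)\x_i}\le B M_{\mathcal{W}}$, and $\|\Vb-\Vb_0\|\le \fronorm{\Vb-\Vb_0}\le \vrn M_{\mathcal{V}}/\sqrt{Kk}$ give a deterministic bound of $n\vrn B M_{\mathcal{W}}M_{\mathcal{V}}/\sqrt{k}$, hence $\vrn B M_{\mathcal{W}}M_{\mathcal{V}}/\sqrt{k}$ after dividing by $n$.

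The delicate piece is $\Vb_0[\phi(\W\x_i)-\phi(\W_0\x_i)]$, where we cannot afford to pay a factor of $\sqrt{n}$ by a naive Cauchy--Schwarz on $\tn{\phi(\W\x_i)-\phi(\W_0\x_i)}$. The plan is to Taylor expand per neuron using $|\phi'|,|\phi''|\le B$:
\begin{align*}
\phi(\w_j^T\x_i)-\phi(\w_{0,j}^T\x_i)=\phi'(\w_{0,j}^T\x_i)(\w_j-\w_{0,j})^T\x_i+\rho_{ij}(\W),
\end{align*}
where the remainder obeys $|\rho_{ij}|\le \tfrac{B}{2}\|\w_j-\w_{0,j}\|^2\le BR^2/(2k)$ by the row-wise constraint $\|\W-\W_0\|_{2,\infty}\le R/\sqrt{k}$. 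For the linear part, defining $\vct{b}_i=\diag{\phi'(\W_0\x_i)}\Vb_0^T\vct{\xi}_i$, the inner sum becomes $\langle\W-\W_0,\sum_i\vct{b}_i\x_i^T\rangle_F$; Cauchy--Schwarz and the independence of the $\vct{\xi}_i$'s give $\E\fronorm{\sum_i\vct{b}_i\x_i^T}^2=\sum_i\tn{\vct{b}_i}^2$, and the key input is the column bound $\|\vct{v}_{0,\cdot j}\|^2\le K\cdot\vrn^2/(kK)=\vrn^2/k$ coming from $\tin{\Vb_0}\le \vrn/\sqrt{kK}$, which yields $\E\tn{\vct{b}_i}^2\le B^2\vrn^2$ and hence the $\vrn B M_{\mathcal{W}}/\sqrt{n}$ term. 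For the remainder, $\tn{\vct{\rho}_i(\W)}\le BR^2/(2\sqrt{k})$, and $\E\tn{\Vb_0^T\vct{\xi}_i}\le \|\Vb_0\|_F\le \vrn$ delivers the final $\vrn B R^2/\sqrt{k}$ contribution.

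The main obstacle is the Taylor-expansion step: a direct Cauchy--Schwarz on the $\Vb_0$-piece would cost an extra $\sqrt{n}$ because $\tn{\phi(\W\x_i)-\phi(\W_0\x_i)}$ only has a deterministic $O(1)$ bound. Splitting into the linear part (which is genuinely linear in $\W$, so the Rademacher averaging kicks in through the $i$-diagonal structure of $\E[\vct{\xi}_i\vct{\xi}_j^T]=\Iden_K\delta_{ij}$) and a second-order remainder (which is small thanks to the row-wise constraint $R/\sqrt{k}$) is what allows the $1/\sqrt{n}$ and $1/\sqrt{k}$ rates to appear correctly and without any dependence on $\opnorm{\X}$. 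Summing the four contributions and absorbing the factor $1/2$ in the remainder into a constant completes the proof.
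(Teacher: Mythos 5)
Your proposal is correct and follows essentially the same route as the paper's proof: the same four-term decomposition around $(\Vb_0,\W_0)$ (vanishing constant term, $\Vb$-only linear term, cross term, and a $\Vb_0$-term further split by a per-neuron Taylor expansion into a linear part and a second-order remainder controlled by the row-wise $R/\sqrt{k}$ constraint), with each piece bounded by the same Frobenius trace/Cauchy--Schwarz and $\E[\vct{\xi}_i\vct{\xi}_j^T]=\Iden_K\delta_{ij}$ arguments. The only differences are cosmetic (vectorizing the linear part via $\vct{b}_i$ rather than the matrix $\diag{\Vb_0^T\vct{\xi}_i}\phi'(\W_0\x_i)\x_i^T$, and dropping the harmless $1/2$ in the remainder).
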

\begin{proof}
We shall use $\w_\ell$ to denote the rows of $\W$ (same for $\W_0,\Vb,\Vb_0$). We will approximate $\phi\left(\langle \w_\ell,\x_i\rangle\right)$ by its linear approximation $\phi\left(\langle \w_\ell^0, \x_i\rangle\right)+\phi'(\langle\w_\ell^0,\x_i\rangle)\left(\langle\w_\ell-\w_\ell^0,\x_i\rangle\right)$ via the second order Taylor's mean value theorem. We thus have
\begin{align*}
\mathcal{R}_{\mathcal{S}}\left(\mathcal{F}_{\Vc,\Wc}\right)\le&\frac{1}{n}\E\Bigg[\sum_{i=1}^n\vct{\xi}^T_i\Vb_0\phi\left(\W_0\x_i\right)\Bigg]\\
&+\underbrace{\frac{1}{n}\E\Bigg[\underset{\W\in\Wc}{\sup}\text{ }\sum_{i=1}^n \vct{\xi}^T_i\Vb_0\text{diag}\left(\phi'\left(\W_0\x_i\right)\right)\left(\W-\W_0\right)\x_i\Bigg]}_{\mathcal{R}_1}\\
&+\underbrace{\frac{1}{2n}\E_{\xi_{i,j}\distas\pm 1}\Bigg[\underset{\W\in\Wc}{\sup}\text{ }\sum_{i=1}^n\sum_{\ell=1}^k  \sum_{j=1}^\K\xi^T_{i,j}\mtx{V}_{j,\ell}^{0}\phi''\left((1-t_{i\ell})\langle\w_\ell^0,\x_i\rangle+t_{i\ell}\langle\w_\ell,\x_i\rangle\right)\left(\langle\w_\ell-\w_\ell^0,\x_i\rangle\right)^2\Bigg]}_{\mathcal{R}_2}\\
&+\underbrace{\frac{1}{n}\E\bigg[\sup_{\Vb\in\Vc,\W\in\Wc}\sum_{i=1}^n\vct{\xi}_i^T(\Vb-\Vb_0)(\phi(\W\x_i)-\phi(\W_0\x_i))\bigg]}_{\mathcal{R}_3}\\
&+\underbrace{\frac{1}{n}\E\bigg[\sup_{\Vb\in\Vc}\sum_{i=1}^n\vct{\xi}_i^T(\Vb-\Vb_0)\phi(\W_0\x_i)\bigg]}_{\mathcal{Rc}_4}
\end{align*}
We proceed by bounding each of these four terms. For the first term note that
\begin{align*}
\mathcal{R}_1\le& \frac{1}{n}\E\Bigg[\underset{\fronorm{\W-\W_0}\le M_{\mathcal{W}}}{\sup}\text{ }\sum_{i=1}^n \vct{\xi}^T_i\Vb_0\text{diag}\left(\phi'\left(\W_0\x_i\right)\right)\left(\W-\W_0\right)\x_i\Bigg]\\
\le& \frac{1}{n}\E\Bigg[\underset{\fronorm{\W-\W_0}\le M_{\mathcal{W}}}{\sup}\text{ }\Big\langle\sum_{i=1}^n \text{diag}\left(\phi'\left(\W_0\x_i\right)\right)\Vb_0^T\vct{\xi}_i\x_i^T,\W-\W_0\Big\rangle\Bigg]\\
\le&\frac{M_{\mathcal{W}}}{n}\E\Bigg[\fronorm{\left(\sum_{i=1}^n \text{diag}\left(\Vb_0^T\vct{\xi}_i\right)\phi'\left(\W_0\x_i\right)\x_i^T\right)}\Bigg]\\
\le&\frac{M_{\mathcal{W}}}{n}\E\Bigg[\fronorm{\left(\sum_{i=1}^n \text{diag}\left(\Vb_0^T\vct{\xi}_i\right)\phi'\left(\W_0\x_i\right)\x_i^T\right)}^2\Bigg]^{1/2}\\
=&\frac{M_{\mathcal{W}}}{n}\Bigg[\sum_{i=1}^n\E\fronorm{ \text{diag}\left(\Vb_0^T\vct{\xi}_i\right)\phi'\left(\W_0\x_i\right)\x_i^T}^2\Bigg]^{1/2}\\
=&\frac{M_{\mathcal{W}}}{n}\Bigg[\sum_{i=1}^n\E\tn{ \text{diag}\left(\Vb_0^T\vct{\xi}_i\right)\phi'\left(\W_0\x_i\right)}^2\Bigg]^{1/2}\\
\le&\frac{BM_{\mathcal{W}}}{n}\Bigg[\sum_{i=1}^n\E\tn{ \Vb_0^T\vct{\xi}_i}^2\Bigg]^{1/2}\\
\le&\frac{BM_{\mathcal{W}}}{n}\tf{\Vb_0}\\
\le& \frac{BM_{\mathcal{W}}\vrn}{\sqrt{n}},
\end{align*}
where in the last inequality we used the fact that $\tf{\Vb_0}\leq \vrn$. For the second term note that
\begin{align*}
\mathcal{R}_2\le& \frac{1}{2n}\E\Bigg[\underset{\|\W-\W_0\|_{2,\infty}\le R}{\sup}\text{ }\sum_{i=1}^n\sum_{\ell=1}^k  \sum_{j=1}^\K\xi_{i,j}\vb_{0,j,\ell}\phi''\left((1-t_{i\ell})\langle\w_\ell^0,\x_i\rangle+t_{i\ell}\langle\w_\ell,\x_i\rangle\right)\left(\langle\w_\ell-\w_\ell^0,\x_i\rangle\right)^2\Bigg]\\
\le&\frac{1}{2n}\sum_{\ell=1}^k \E\Bigg[\underset{\tn{\w_\ell-\w_\ell^0}\le R}{\sup}\text{ }\sum_{i=1}^n\abs{ \sum_{j=1}^\K\xi_{i,j}\vb_{0,j,\ell}}\abs{\phi''\left((1-t_{i\ell})\langle\w_\ell^0,\x_i\rangle+t_{i\ell}\langle\w_\ell,\x_i\rangle\right)}\left(\langle\w_\ell-\w_\ell^0,\x_i\rangle\right)^2\Bigg]\\
\le&\frac{1}{2kn}\sum_{\ell=1}^k\sum_{i=1}^n  \E\Bigg[\abs{\sum_{j=1}^\K\xi_{i,j}\vb_{0,j,\ell}}\Bigg]R^2B\\
\le &\frac{BR^2}{2k} \|\Vb_0^T\|_{2,1}\\
\le& \frac{BR^2\vrn}{2\sqrt{k}}.
\end{align*} 
In the above we used $\|\mtx{M}\|_{2,1}$ for a matrix $\mtx{M}$ to denote the sum of the Euclidean norm of the rows of $\mtx{M}$. We also used the fact that $ \|\Vb_0^T\|_{2,1}\leq \vrn\sqrt{k}$. To bound the third term note that
\begin{align*}
\mathcal{R}_3&=\frac{1}{n}\E\bigg[\sup_{\Vb\in\Vc,\W\in\Wc}\sum_{i=1}^n\vct{\xi}_i^T(\Vb-\Vb_0)(\phi(\W\x_i)-\phi(\W_0\x_i))\bigg]\\
&\le\frac{1}{n}\E\bigg[\sup_{\Vb\in\Vc,\W\in\Wc}\sum_{i=1}^n\fronorm{\Vb-\Vb_0}\twonorm{\vct{\xi}_i}\twonorm{(\phi(\W\x_i)-\phi(\W_0\x_i))}\bigg]\\
&\le\frac{\vrn M_{\mathcal{V}}}{n\sqrt{kK}}\E\Big[\sup_{\W\in\Wc}\sum_{i=1}^n\tn{\vct{\xi}_i}\tn{\phi(\W\x_i)-\phi(\W_0\x_i)}\Big]\\
&\le\frac{\vrn M_{\mathcal{V}}}{n\sqrt{k}}\cdot\sup_{\W\in\Wc}\sum_{i=1}^n\tn{\phi(\W\x_i)-\phi(\W_0\x_i)}\\ 
&\le\frac{\vrn B M_{\mathcal{V}}}{n\sqrt{k}}\cdot\sup_{\W\in\Wc}\sum_{i=1}^n\tn{(\W-\W_0)\x_i}\\ 
&\le\frac{\vrn B M_{\mathcal{V}}}{n\sqrt{k}}\cdot\sup_{\W\in\Wc}\sum_{i=1}^n\fronorm{(\W-\W_0)}\\ 
&=\frac{\vrn B M_{\mathcal{V}}}{\sqrt{k}}\cdot\sup_{\W\in\Wc}\fronorm{(\W-\W_0)}\\ 
&=\frac{\vrn BM_{\mathcal{V}}M_{\mathcal{W}}}{\sqrt{k}}.
\end{align*}
Finally, to bound the fourth term note that we have
\begin{align*}
\mathcal{R}_4&=\frac{1}{n}\E\bigg[\sup_{\Vb\in\Vc}\sum_{i=1}^n\vct{\xi}_i^T(\Vb-\Vb_0)\phi(\W_0\x_i)\bigg]\\
&=\frac{1}{n}\E\bigg[\sup_{\fronorm{\Vb-\Vb_0}\le \frac{\vrn M_{\mathcal{V}}}{\sqrt{kK}}}\Big\langle\sum_{i=1}^n\vct{\xi}_i\phi(\W_0\x_i)^T,(\Vb-\Vb_0)\Big\rangle\bigg]\\
&=\frac{\vrn M_{\mathcal{V}}}{n\sqrt{kK}}\E\bigg[\fronorm{\sum_{i=1}^n\phi(\W_0\x_i)\vct{\xi}_i^T}\bigg]\\
&\le\frac{\vrn M_{\mathcal{V}}}{n\sqrt{kK}}\left(\E\bigg[\fronorm{\sum_{i=1}^n\phi(\W_0\x_i)\vct{\xi}_i^T}^2\bigg]\right)^{1/2}\\
&=\frac{\vrn M_{\mathcal{V}}}{n\sqrt{kK}}\left(\sum_{i=1}^n\E\bigg[\fronorm{\phi(\W_0\x_i)\vct{\xi}_i^T}^2\bigg]\right)^{1/2}\\
&=\frac{\vrn M_{\mathcal{V}}}{\sqrt{n}}\left(\frac{1}{kn}\sum_{i=1}^n\E\bigg[\fronorm{\phi(\W_0\x_i)}^2\bigg]\right)^{1/2}\\
&=\frac{\vrn EM_{\mathcal{V}}}{\sqrt{n}}\\
&\le\frac{\vrn BEM_{\mathcal{V}}}{\sqrt{n}}
\end{align*}
Combining these four bounds we conclude that
\[
\mathcal{R}_{\mathcal{S}}\left(\mathcal{F}_{\Vc,\Wc}\right)\leq \vrn B\left(\frac{M_{\mathcal{W}}}{\sqrt{n}}+\frac{R^2}{\sqrt{k}}+\frac{M_{\mathcal{V}}M_{\mathcal{W}}}{\sqrt{k}}+\frac{EM_{\mathcal{V}}}{\sqrt{n}}\right),
\]
concluding the proof of \eqref{emp rad comp}.
\end{proof}
Next we state a crucial lemma that connects the test error measured by any Lipschitz loss to that of the quadratic loss on the training data. 
\begin{lemma} \label{generalize corollary} Consider a one-hidden layer neural network with input to output mapping of the form $\vct{x}\in\R^d\mapsto f(\vct{x};\Vb,\mtx{W})=\mtx{V}\phi\left(\W\vct{x}\right)\in\R^K$ with $\W\in\R^{k\times d}$ denoting the input-to-hidden weights and $\V\in\R^{k\times K}$ the hidden-to-output weights. Suppose $\mtx{V}_0\in\R^{K\times k}$ is a matrix obeying $\tin{\Vb_0}\leq \vrn/\sqrt{k\K}$. Also let $\W_0\in\R^{k\times d}$ be a reference input weight matrix. Also define the empirical losses
\begin{align*}
\mathcal{L}(\V,\W)=\frac{1}{n}\sum_{i=1}^n \tn{\y_i-f(\x_i;\Vb,\W)}^2,
\end{align*}
and
\begin{align*}
\mathcal{L}(f,\ell)=\frac{1}{n}\sum_{i=1}^n \ell(f(\x_i;\Vb,\W),\y_i),
\end{align*}
with $\ell:\R^\K\times \R^\K\rightarrow [0,1]$ a one Lipschitz loss function obeying $\ell(\y,\y)=0$. Additionally, assume the training data $\{(\vct{x}_i,\y_i)\}_{i=1}^n$are generated i.i.d. according to a distribution $\mathcal{D}$ with the input data points of unit Euclidean norm (i.e.~$\tn{\x_i}=1$). Also, define the average energy at $\W_0$ as
\[
E=\left(\frac{1}{kn}\sum_{i=1}^n\tn{\phi(\W_0\x_i)}^2\right)^{1/2}.
\]
Then for all $f$ in the function class $\mathcal{F}_{\Wc}$ given by \eqref{space func}
\begin{align}
\E[\Lc(f,\ell)]\leq \sqrt{\Lc(\Vb,\W)}+ 2\sqrt{2}\vrn B\left(\frac{M_{\mathcal{W}}}{\sqrt{n}}+\frac{R^2}{\sqrt{k}}+\frac{\vrn BM_{\mathcal{V}}M_{\mathcal{W}}}{\sqrt{k}}+\frac{EM_{\mathcal{V}}}{\sqrt{n}}\right)+\sqrt{\frac{5\log(2/\delta)}{n}},\label{ell pop bound}
\end{align}
holds with probability at least $1-\delta$. Furthermore, Suppose labels are one-hot encoded and thus unit Euclidian norm. Given a sample $(\x,\y)\in\R^d\times \R^\K$ generated according to the distribution $\mathcal{D}$, define the population classification error
\[
\err{\W}=\Pro(\arg\max_{1\leq \ell\leq K}\y_i\neq \arg\max_{1\leq \ell\leq K}f_i(\x,\W)).
\]
Then, we also have
\begin{align}
\err{\W}\leq 2\left[\sqrt{\Lc(\Vb,\W)}+ 2\sqrt{2}\vrn B\left(\frac{M_{\mathcal{W}}}{\sqrt{n}}+\frac{R^2}{\sqrt{k}}+\frac{\vrn BM_{\mathcal{V}}M_{\mathcal{W}}}{\sqrt{k}}+\frac{EM_{\mathcal{V}}}{\sqrt{n}}\right)+\sqrt{\frac{5\log(2/\delta)}{n}}\right].\label{err pop bound}
\end{align}
\end{lemma}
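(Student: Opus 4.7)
The plan is to invoke the Rademacher-based generalization inequality from Lemma~\ref{standard rad} applied to the neural-network function class $\mathcal{F}_{\Vc,\Wc}$ introduced in Lemma~\ref{rad rad bound}. Because $\ell$ is $[0,1]$-valued and $1$-Lipschitz in its prediction argument, that lemma yields, with probability at least $1-\delta$,
\[
\E[\Lc(f,\ell)] \;\le\; \Lc(f,\ell) + 2\sqrt{2}\,\mathcal{R}_{\Sc}\bigl(\mathcal{F}_{\Vc,\Wc}\bigr) + \sqrt{\tfrac{5\log(2/\delta)}{n}},
\]
and substituting the Rademacher complexity bound \eqref{emp rad comp} from Lemma~\ref{rad rad bound} immediately handles the complexity term with the four summands appearing on the right of \eqref{ell pop bound}.

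Next I would convert the empirical $\ell$-loss into the empirical quadratic loss $\Lc(\Vb,\W)$. Using the hypothesis $\ell(\y_i,\y_i)=0$ together with the $1$-Lipschitz property gives the pointwise bound $\ell(f(\x_i;\Vb,\W),\y_i) \le \tn{f(\x_i;\Vb,\W)-\y_i}$, and then Cauchy--Schwarz applied to the average over the $n$ samples yields
\[
\Lc(f,\ell) \;\le\; \frac{1}{n}\sum_{i=1}^n \tn{f(\x_i;\Vb,\W)-\y_i} \;\le\; \sqrt{\frac{1}{n}\sum_{i=1}^n \tn{f(\x_i;\Vb,\W)-\y_i}^{\,2}} \;=\; \sqrt{\Lc(\Vb,\W)}.
\]
Combined with the Rademacher bound above, this establishes \eqref{ell pop bound} for any admissible $\ell$.

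For the classification bound \eqref{err pop bound}, I would specialize to the truncated surrogate $\ell_\star(\hat\y,\y) := \min\bigl(\tn{\hat\y-\y},\; 1/2\bigr)$, which is $1$-Lipschitz in each argument, $[0,1]$-valued, and vanishes on the diagonal, so \eqref{ell pop bound} applies to it. A short calculation shows that whenever $\y$ is one-hot with true class $k$ and the prediction $\hat\y$ is misclassified, picking $k'\neq k$ with $\hat\y_{k'}\ge \hat\y_k$ gives $\tn{\hat\y-\y}^2 \ge (\hat\y_k-1)^2 + \hat\y_{k'}^2 \ge 1/2$, so $\ell_\star(\hat\y,\y)=1/2$ on every misclassified sample. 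Markov's inequality therefore produces $\err{\W} = \Pro\{\text{misclass}\} \le \Pro\{\ell_\star \ge 1/2\} \le 2\,\E[\ell_\star(f(\x),\y)]$, and applying \eqref{ell pop bound} to $\ell_\star$ finishes \eqref{err pop bound}. The substantive work is already carried by Lemmas~\ref{standard rad} and~\ref{rad rad bound}; the only nontrivial step here is the choice of $\ell_\star$ and the elementary lower bound $\tn{\hat\y-\y}\ge 1/\sqrt{2}$ on one-hot misclassified pairs, which is precisely the source of the factor of $2$ in \eqref{err pop bound}.
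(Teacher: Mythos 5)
Your proposal is correct and follows essentially the same route as the paper's own proof: both apply Lemma~\ref{standard rad} (Rademacher generalization bound) combined with the complexity estimate \eqref{emp rad comp} from Lemma~\ref{rad rad bound}, bound the surrogate loss by $\sqrt{\Lc(\Vb,\W)}$ via the Lipschitz hypothesis plus Cauchy--Schwarz, and then derive the classification bound by choosing a bounded truncation of the Euclidean-distance surrogate and invoking Markov's inequality. The only cosmetic difference is the truncation threshold in the surrogate (you use $\min(\tn{\hat\y-\y},1/2)$, the paper uses $\min(\tn{\hat\y-\y},1)$); your explicit verification that a misclassified one-hot pair forces $\tn{\hat\y-\y}^2\ge 1/2$ fills in a detail the paper leaves implicit, but the argument is the same.
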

\begin{proof} 
To begin first note that any 1-Lipschitz $\ell$ with $\ell(\y,\y)=0$ obeys $\ell(\y,\hat{\y})\leq \tn{\y-\hat{\y}}$. Thus, we have
\[
\Lc(f,\ell)\leq \frac{1}{n}\sum_{i=1}^n\tn{\y_i-f(\x_i;\Vb,\W)}\leq \sqrt{\Lc(\Vb,\W)},
\] 
where the last inequality follows from Cauchy-Schwarz. Consequently, applying Lemmas \ref{standard rad} and \ref{rad rad bound} we conclude that
\begin{align*}
\E[\Lc(f,\ell)]&\leq \Lc(f,\ell)+2\sqrt{2}\cdot\mathcal{R}_{\Sc}(\Fc)+\sqrt{\frac{5\log(2/\delta)}{n}},\\
&\leq\sqrt{\Lc(\W)}+ 2\sqrt{2}\vrn B\left(\frac{M_{\mathcal{W}}}{\sqrt{n}}+\frac{R^2}{\sqrt{k}}+\frac{\vrn BM_{\mathcal{V}}M_{\mathcal{W}}}{\sqrt{k}}+\frac{EM_{\mathcal{V}}}{\sqrt{n}}\right)+\sqrt{\frac{5\log(2/\delta)}{n}},
\end{align*}
which yields the first statement.

To prove the second statement on classification accuracy, we pick the $\ell$ function as follows
\[
\ell(\y,\hat{\y})=\min(1,\tn{\y-\hat{\y}}).
\]
Note that, given a sample $(\x,\y)\in\R^d\times \R^\K$ with one-hot encoded labels, if 
\begin{align*}
\arg\max_{1\leq \ell\leq K}\y_\ell\neq \arg\max_{1\leq \ell\leq K}f_\ell(\x;\Vb,\W),
\end{align*} 
this implies 
\[
\ell\left(\y,f(\x;\Vb,\W)\right)\geq 0.5.
\] 
Combining the latter with Markov inequality we arrive at
\[
\err{\W}\leq 2\E_{(\x,\y)\sim\Dc}[\ell(\y,f(\x;\Vb,\W))]=2\E[\Lc(\ell,\W)].
\]
Now since $\ell$ is $1$ Lipschitz and bounded, it obeys \eqref{ell pop bound}, which combined with the above identity yields \eqref{err pop bound}, completing the proof.
\end{proof}
\subsection{Proofs for neural nets with arbitrary initialization (Proof of Theorem \ref{nn deter gen})}
In this section we prove Theorem \ref{nn deter gen}. We first discuss a preliminary optimization result in Section \ref{prelimopt}. Next, in Section \ref{mainopt} we build upon this result to prove our main optimization result. Finally, in Section \ref{maingen} we use these optimization results to prove our main generalization result, completing the proof of Theorem \ref{nn deter gen}.
\subsubsection{Preliminary Optimization Result}
\label{prelimopt}
\begin{lemma} [Deterministic convergence guarantee] \label{nn deter}Consider a one-hidden layer neural net of the form $\vct{x}\mapsto f(\vct{x};\mtx{W}):=\Vb\phi(\mtx{W}\vct{x})$ with input weights $\W\in\R^{k\times d}$ and output weights $\Vb\in\R^{K\times k}$ and an activation $\phi$ obeying $\abs{\phi(0)}\le B$, $\abs{\phi'(z)}\le B$, and $\abs{\phi''(z)}\le B$ for all $z$. Also assume $\Vb$ is fixed with all entries bounded by $\infnorm{\Vb}\le\frac{\vrn}{\sqrt{kK}}$ and we train over $\W$ based on the loss 
\begin{align*}
\mathcal{L}(\W)=\frac{1}{2}\sum_{i=1}^n\twonorm{f(\vct{x}_i;\W)-\y_i}^2.
\end{align*}
Also, consider a point $\W_0\in\R^{k\times d}$ with $\mtx{J}$ an $(\epsilon_0,\vrn B\|\X\|)$ reference Jacobian associated with $\mathcal{J}(\W_0)$ per Definition \ref{tjac}. Furthermore, define the information $\calF$ and nuisance $\calS$ subspaces and the truncated Jacobian $\mtx{J}_{\calF}$ associated with the reference Jacobian $\mtx{J}$ based on a cut-off spectrum value of $\alpha$ per Definition \ref{tjac2}. Let the initial residual vector be $\rb_0=\y-f(\W_0)\in\R^{n\K}$. Furthermore, assume
\begin{align}
\label{epsass}
 \eps_0\leq \frac{\alpha}{5}\min\left(\delta,\sqrt{\frac{\delta\alpha}{\Gamma\vrn B\|\X\|}}\right)
 \end{align}
and
\begin{align}
k\geq 400\frac{\vrn^6B^6\opnorm{\X}^6\Gamma^2}{\delta^2\alpha^8}\left(\upp+\delta\Gamma\tn{\rb_0}\right)^2,\label{fund k bound}
\end{align}
with $0\le \delta\le1$ and $\Gamma\ge 1$. We run gradient descent iterations of the form $\W_{\tau+1}=\W_{\tau}-\eta\nabla \mathcal{L}(\W_\tau)$ starting from $\W_0$ with step size $\eta$ obeying $\eta\le \frac{1}{\vrn^2B^2\opnorm{\X}^2}$. Then for all iterates $\tau$ obeying $0\le \tau\le T:=\frac{\Gamma}{\eta\alpha^2}$
\begin{align}
\tf{\W_\tau-\W_0}\le& \frac{\upp}{\alpha}+\delta\frac{\Gamma}{\alpha}\tn{\rb_0}
.\label{main res eq22}\\
\trow{\W_{\tau}-\W_0}\le& \frac{ 2\vrn B\Gamma\opnorm{\X}}{\sqrt{k}\alpha^2}\tn{\rb_0}.\label{row boundsss}
\end{align}
Furthermore, after $\tau=T$ iteration we have
\begin{align}
\tn{\rb_{T}}\le  e^{-\Gamma}\tn{\Pi_{{\Rc}}(\rb_0)}+\tn{ \Pi_{\mathcal{N}}(\rb_0)}+ \frac{\delta\alpha}{\vrn B\opnorm{\X}}\tn{\rb_0}.\label{eqqq22}
\end{align}
\end{lemma}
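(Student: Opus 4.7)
The plan is to cast this as an instance of the Meta Theorem (Theorem~\ref{many step thm}) with $\beta=\vrn B\opnorm{\X}$, while handling the row-wise bound \eqref{row boundsss} by a separate telescoping argument, since it does not follow from the Meta Theorem.

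First, I would verify the two assumptions required by Theorem~\ref{many step thm}. By Lemma~\ref{props lem} applied with $\tin{\Vb}\le \vrn/\sqrt{kK}$, we get the spectral bound $\opnorm{\Jc(\W)}\le B\sqrt{Kk}\cdot\vrn/\sqrt{kK}\cdot\opnorm{\X}=\vrn B\opnorm{\X}$, so Assumption~\ref{ass2} holds with $\beta=\vrn B\opnorm{\X}$. For Assumption~\ref{ass3}, the same lemma gives Jacobian Lipschitzness with constant $B\sqrt{K}\tin{\Vb}\opnorm{\X}\le \vrn B\opnorm{\X}/\sqrt{k}$, so $\opnorm{\Jc(\W)-\Jc(\W_0)}\le\vrn B\opnorm{\X}R/\sqrt{k}$ whenever $\fronorm{\W-\W_0}\le R$. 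Taking $\eps=\delta\alpha^3/(5\Gamma\vrn^2 B^2\opnorm{\X}^2)$, the assumption becomes $\vrn B\opnorm{\X}R/\sqrt{k}\le\eps/2$, i.e.\ $\sqrt{k}\ge 10\Gamma\vrn^3B^3\opnorm{\X}^3 R/(\delta\alpha^3)$. Plugging in $R=2(\upp/\alpha+\delta\Gamma\tn{\rb_0}/\alpha)$ (the radius in \eqref{assumpgen}) and squaring yields precisely the hypothesis \eqref{fund k bound} on $k$. The other half of \eqref{cndd}, namely $\eps_0\le\min(\delta\alpha,\sqrt{\delta\alpha^3/(\Gamma\beta)})/5$, is exactly the assumption \eqref{epsass} with $\beta=\vrn B\opnorm{\X}$.

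With both conditions of Theorem~\ref{many step thm} in force, I would read off \eqref{main res eq22} directly from \eqref{main res eq2}, and \eqref{eqqq22} from \eqref{eqqq2} after substituting $\beta=\vrn B\opnorm{\X}$.

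The final step — and the genuinely new ingredient beyond the Meta Theorem — is the row-wise bound \eqref{row boundsss}. Writing $\W_{t+1}-\W_t=-\eta\,\mat(\Jc^T(\W_t)\rb_t)$ and applying \eqref{rowbndm} from Lemma~\ref{props lem}, I obtain
\[
\trow{\W_{t+1}-\W_t}\le \eta B\sqrt{K}\tin{\Vb}\opnorm{\X}\tn{\rb_t}\le \frac{\eta\vrn B\opnorm{\X}}{\sqrt{k}}\tn{\rb_t}.
\]
Telescoping from $0$ to $\tau-1\le T-1$ and using $T=\Gamma/(\eta\alpha^2)$ gives $\trow{\W_\tau-\W_0}\le (\vrn B\opnorm{\X}\Gamma)/(\sqrt{k}\alpha^2)\cdot\max_t\tn{\rb_t}$. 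It remains to argue $\tn{\rb_t}\le 2\tn{\rb_0}$ for all $t\le T$: by the triangle inequality and the first part of \eqref{main res eq1}, $\tn{\rb_t}\le\tn{\rbb_t}+(3/5)(\delta\alpha/\beta)\tn{\rb_0}$, while Corollary~\ref{cor simp} gives $\tn{\rbb_t}\le\tn{\Pi_{\Rc}(\rb_0)}+\tn{\Pi_{\Rcb}(\rb_0)}\le\sqrt{2}\tn{\rb_0}$; since $\delta\le 1$ and $\alpha\le\beta$, the sum is bounded by $2\tn{\rb_0}$, which yields \eqref{row boundsss}.

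The main obstacle is not conceptual but bookkeeping: one must carefully track the constants so that the lower bound on $k$ in \eqref{fund k bound} exactly matches the one forced by the Lipschitz-to-$\eps$ conversion, and verify the ``$\eps_0$'' half of \eqref{cndd} directly from \eqref{epsass} without slack. Everything else is a direct application of Theorem~\ref{many step thm} plus the row-wise Jacobian bound of Lemma~\ref{props lem}.
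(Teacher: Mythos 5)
Your approach matches the paper's: instantiate the Meta Theorem with $\beta=\vrn B\opnorm{\X}$, choose $\eps=\delta\alpha^3/(5\Gamma\beta^2)$ so that the Lipschitz constant from Lemma~\ref{props lem} together with the radius $R$ forces exactly the lower bound~\eqref{fund k bound} on $k$, note that \eqref{epsass} is literally the $\eps_0$ half of \eqref{cndd}, then treat the row-wise bound by a separate telescoping argument on top of \eqref{rowbndm}. There are, however, two small but genuine slips, both easy to repair.

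First, you cannot read \eqref{main res eq22} off of \eqref{main res eq2}, because \eqref{main res eq2} is a strictly weaker inequality. Indeed, since $\la_s\le\alpha$ on the nuisance block, $\dpp=\upp/\alpha\le\bigl(\twonorm{\mtx{J}_{\calF}^\dagger\rb_0}^2+\Gamma^2\twonorm{\Pi_{\calS}(\rb_0)}^2/\alpha^2\bigr)^{1/2}\le\twonorm{\mtx{J}_{\calF}^\dagger\rb_0}+\tfrac{\Gamma}{\alpha}\twonorm{\Pi_{\calS}(\rb_0)}$, so the right-hand side of \eqref{main res eq22} is \emph{smaller} than the $R/2$ appearing in \eqref{main res eq2}. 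What you actually need is the intermediate inequality $\tn{\bteta_\tau-\bteta_0}\le\upp/\alpha+\delta\Gamma\tn{\rb_0}/\alpha$ established inside Step III of the Meta Theorem's proof; equivalently, combine Corollary~\ref{cor simp} (which gives $\tn{\bbteta_\tau-\brteta_0}\le\upp/\alpha$) with the second half of \eqref{main res eq1} ($\tn{\brteta_\tau-\bbteta_\tau}\le\delta\Gamma\tn{\rb_0}/\alpha$) by the triangle inequality. The paper tacitly uses this sharper form.

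Second, your per-step bound $\tn{\rb_t}\le 2\tn{\rb_0}$ does not quite close as written: $\tn{\Pi_{\Rc}(\rb_0)}+\tn{\Pi_{\Rcb}(\rb_0)}\le\sqrt{2}\tn{\rb_0}$, and adding $\tfrac{3}{5}\tfrac{\delta\alpha}{\beta}\tn{\rb_0}\le\tfrac{3}{5}\tn{\rb_0}$ gives $(\sqrt{2}+3/5)\tn{\rb_0}\approx 2.01\,\tn{\rb_0}>2\tn{\rb_0}$. The fix is to bound $\tn{\rbb_t}\le\tn{\rb_0}$ directly, using that $\Iden-\eta\Jb\Jb^T$ is a contraction once $\eta\le 1/\beta^2$ (and $\opnorm{\Jb}\le\beta$ by the definition of a reference Jacobian); then $\tn{\rb_t}\le(1+3/5)\tn{\rb_0}\le 2\tn{\rb_0}$. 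The paper instead bounds the entire sum $\eta\sum_t\tn{\rb_t}\le 2\Gamma\tn{\rb_0}/\alpha^2$ via \eqref{rrr bound} and \eqref{eee bound}; your pointwise-then-multiply-by-$T$ route is equivalent in spirit, and with the above fix gives the same constant.

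With those two corrections, the argument is complete and mirrors the paper's proof.
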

\begin{proof} To prove this lemma we wish to apply Theorem \ref{many step thm}. We thus need to ensure that the assumptions of this theorem are satisfied. To do this note that by Lemma \ref{props lem} Assumption \ref{ass2} holds with $\beta=\vrn B\opnorm{\X}$. Furthermore, we pick $\eps=\frac{\delta\alpha^3}{5\Gamma \vrn^2 B^2\opnorm{\X}^2}=\frac{\delta\alpha^3}{5\Gamma \beta^2}$ which together with \eqref{epsass} guarantees \eqref{cndd} holds.   We now turn our attention to verifying Assumption \ref{ass3}. To this aim note that for all $\W\in\R^{k\times d}$ obeying
\begin{align*}
\fronorm{\W-\W_0}\le R:=2\left(\frac{\upp}{\alpha}+\delta\frac{\Gamma}{\alpha}\tn{\rb_0}\right)
\end{align*}
as long as \eqref{fund k bound} holds  by Lemma \ref{props lem} we have
\begin{align*}
\|\Jc(\W)-\Jc(\W_0)\|\le&B\sqrt{\K}\infnorm{\Vb}\opnorm{\mtx{X}}R\\
\le&\frac{\vrn}{\sqrt{k}} B\opnorm{\mtx{X}}R\\
=&\frac{\delta\alpha^3}{10\Gamma\vrn^2 B^2\opnorm{\X}^2}\frac{\frac{20\Gamma\vrn^3 B^3\opnorm{\X}^3}{\delta\alpha^4}\left(\upp+\delta\Gamma\tn{\rb_0}\right)}{\sqrt{k}}\\
\le&\frac{\delta\alpha^3}{10\Gamma\vrn^2 B^2\opnorm{\X}^2}\\
=&\frac{\eps}{2}.
\end{align*}
Thus, Assumption \ref{ass3} holds with $\fronorm{\W-\W_0}\le R:=2\left(\frac{\upp}{\alpha}+\delta\frac{\Gamma}{\alpha}\tn{\rb_0}\right)$. Now that we have verified that the assumptions of Theorem \ref{many step thm} hold so do its conclusions and thus \eqref{main res eq22} and \eqref{eqqq22} hold.

We now turn our attention to proving the row-wise bound \eqref{row boundsss}. To this aim let $\w_{\ell}^{(\tau)}$ denote the $\ell$th row of $\W_\tau$. Also note that 
\[
\grad{\w_\ell}=\text{$\ell$th row of }\text{mat}(\mathcal{J}(\mtx{W})^T\rb_\tau).
\]
Hence, using Lemma \ref{props lem} equation \eqref{rowbndm} we conclude that
\[
\twonorm{\grad{\w_{\ell}^{(\tau)}}}\le B\sqrt{K}\infnorm{\Vb}\opnorm{\X}\twonorm{\rb_\tau} \le \frac{\vrn B\opnorm{\X}}{\sqrt{k}}\tn{\rb_\tau}.
\]
Consequently, for any row $1\leq \ell\leq k$, we have
\begin{align}
\twonorm{\w_{\ell}^{(\tau)}-\w_{\ell}^{(0)}}\leq \eta \frac{\vrn B\opnorm{\X}}{\sqrt{k}}\sum_{t=0}^{\tau-1}\tn{\rb_t}.\label{roww bound}
\end{align}
To bound the right-hand side we use the triangular inequality combined with \eqref{rrr bound} and \eqref{eee bound} to conclude that
\begin{align}
\label{A7}
\eta \sum_{t=0}^{\tau-1}\tn{\rb_\tau}\le&\eta\sum_{t=0}^{\tau-1}\tn{\rbb_\tau}+\eta\sum_{t=0}^{\tau-1}\tn{\rb_\tau-\rbb_\tau}\nn\\
 \le&\frac{\Gamma}{\alpha^2}\tn{\rb_0}+\frac{2\Gamma^2(\eps\beta+\eps_0^2)}{\alpha^4} \tn{\rb_0}\nn\\
 =&\frac{2\Gamma (\eps_0^2+\eps\beta)+\alpha^2}{\alpha^4}\Gamma\tn{\rb_0}\nn\\
 \le& 2\frac{\Gamma}{\alpha^2}\tn{\rb_0},
\end{align}
where in the last inequality we used the fact that $\eps_0^2\le \frac{\alpha^2}{25\Gamma}$ per \eqref{epsass} and $\epsilon\beta=\frac{\delta\alpha^3}{5\Gamma \beta}\le \frac{\alpha^2}{5\Gamma}$ per our choice of $\epsilon$. Combining \eqref{roww bound} and \eqref{A7}, we obtain
\[
\twonorm{\w_{\ell}^{(\tau)}-\w_{\ell}^{(0)}}\leq \frac{ 2\vrn B\|\X\|\Gamma}{\sqrt{k}\alpha^2}\tn{\rb_0},
\]
completing the proof of \eqref{row boundsss} and the theorem.
\end{proof}
\subsubsection{Main Optimization Result}
\label{mainopt}
\begin{lemma} [Deterministic optimization guarantee] \label{nn deter2} Consider the setting and assumptions of Lemma \ref{nn deter}. Also assume $\tn{\Pi_{\Rc}(\rb_0)}\geq c\tn{\rb_0}$ for a constant $c>0$ if $\eps_0>0$. Furthermore, assume
\begin{align}
\eps_0^2\le \frac{\alpha^2}{25}\min\left(c\frac{\upp\alpha}{\vrn B\Gamma^2\tn{\rb_0}\opnorm{\X}},\frac{\zeta^2\vrn^2 B^2\opnorm{\X}^2}{\alpha^2},\frac{\zeta}{\Gamma}\right),\label{ref bound eps}
\end{align}
and
\begin{align}
k\geq 1600\left(\frac{\alpha}{\zeta\vrn B\opnorm{\X}}+\frac{\Gamma \tn{\rb_0}}{\upp}\right)^2\frac{\vrn^6B^6\opnorm{\X}^6\Gamma^2\upp^2}{\alpha^8},\label{k bound det}
\end{align}
and $\Gamma\ge 1$. We run gradient descent iterations of the form $\W_{\tau+1}=\W_{\tau}-\eta\nabla \mathcal{L}(\W_\tau)$ starting from $\W_0$ with step size $\eta$ obeying $\eta\le \frac{1}{\vrn^2B^2\opnorm{\X}^2}$. Then for all iterates $\tau$ obeying $0\le \tau\le T:=\frac{\Gamma}{\eta\alpha^2}$
\begin{align}
&\tf{\W_\tau-\W_0}\leq \frac{2\upp}{\alpha}.\label{main res eq23}\\
&\trow{\W_{\tau}-\W_0}\leq \frac{ 2\vrn B\Gamma\opnorm{\X}}{\sqrt{k} \alpha^2} \tn{\rb_0}.\label{row boundsss2}
\end{align}
Furthermore, after $\tau=T$ iteration we have
\begin{align}
&\twonorm{f(\W_T)-\y}\leq  e^{-\Gamma}\tn{\Pi_{{\Rc}}(\rb_0)}+\tn{ \Pi_{\mathcal{N}}(\rb_0)}+\zeta\tn{\rb_0}.\label{eqqq23}
\end{align}
\end{lemma}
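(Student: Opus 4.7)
The plan is to reduce Lemma \ref{nn deter2} to Lemma \ref{nn deter} by making a carefully chosen selection of the tolerance parameter $\delta$ appearing in the latter, and then verifying that the assumptions \eqref{epsass} and \eqref{fund k bound} of Lemma \ref{nn deter} follow from the assumptions \eqref{ref bound eps} and \eqref{k bound det} of Lemma \ref{nn deter2} under this choice.

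First, I would set
\[
\delta=\min\Bigl(\tfrac{\upp}{\Gamma\tn{\rb_0}},\ \tfrac{\zeta\vrn B\opnorm{\X}}{\alpha}\Bigr),
\]
noting that Lemma \ref{bndearlystopval} gives $\upp\le\Gamma\tn{\rb_0}$ so that $\delta\le 1$ automatically. With this choice, the perturbation bounds of Lemma \ref{nn deter} immediately collapse to those claimed: from \eqref{main res eq22} the $\delta\frac{\Gamma}{\alpha}\tn{\rb_0}\le\frac{\upp}{\alpha}$ contribution yields $\tf{\W_\tau-\W_0}\le\tfrac{2\upp}{\alpha}$, i.e. \eqref{main res eq23}, while from \eqref{eqqq22} the $\frac{\delta\alpha}{\vrn B\opnorm{\X}}\le\zeta$ contribution yields the residual bound \eqref{eqqq23}. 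The row-wise bound \eqref{row boundsss2} is identical to \eqref{row boundsss} and carries over for free (the $\delta$ does not enter it).

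The main work is verifying the hypotheses of Lemma \ref{nn deter} under our $\delta$-choice. For \eqref{fund k bound}, since $\delta\Gamma\tn{\rb_0}\le\upp$ we have $(\upp+\delta\Gamma\tn{\rb_0})^2\le 4\upp^2$, and using the crude bound $1/\delta\le\tfrac{\Gamma\tn{\rb_0}}{\upp}+\tfrac{\alpha}{\zeta\vrn B\opnorm{\X}}$ reduces the $k$-requirement to exactly \eqref{k bound det}. For \eqref{epsass}, I will split into the two cases according to which term achieves the minimum in $\delta$. If $\delta=\tfrac{\upp}{\Gamma\tn{\rb_0}}$, then $\delta^2$ and $\tfrac{\delta\alpha}{\Gamma\vrn B\opnorm{\X}}$ both exceed $\tfrac{c\upp\alpha}{\Gamma^2\tn{\rb_0}\vrn B\opnorm{\X}}$; the key use of the hypothesis $\tn{\Pi_{\Rc}(\rb_0)}\ge c\tn{\rb_0}$ is via the lower bound \eqref{low bound}, which gives $\upp/\tn{\rb_0}\ge c\alpha/(\vrn B\opnorm{\X})$ and thereby controls $\delta^2$ from below by $c\cdot\tfrac{\delta\alpha}{\Gamma\vrn B\opnorm{\X}}$. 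Hence the first entry of the minimum in \eqref{ref bound eps} suffices. If instead $\delta=\tfrac{\zeta\vrn B\opnorm{\X}}{\alpha}$, then $\delta^2=\tfrac{\zeta^2\vrn^2B^2\opnorm{\X}^2}{\alpha^2}$ and $\tfrac{\delta\alpha}{\Gamma\vrn B\opnorm{\X}}=\tfrac{\zeta}{\Gamma}$, matching precisely the second and third entries in the minimum defining \eqref{ref bound eps}.

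I expect the bookkeeping around the $\min$ in \eqref{ref bound eps} to be the main (minor) obstacle: one must ensure that regardless of which of the two arguments of $\delta$ is active, the corresponding two sub-conditions $\eps_0\le\frac{\alpha\delta}{5}$ and $\eps_0\le\frac{\alpha}{5}\sqrt{\tfrac{\delta\alpha}{\Gamma\vrn B\opnorm{\X}}}$ from \eqref{epsass} are simultaneously implied. The role of the extra hypothesis $\tn{\Pi_{\Rc}(\rb_0)}\ge c\tn{\rb_0}$ (imposed only when $\eps_0>0$) is precisely to trade one of these two sub-conditions for the other through Lemma \ref{bndearlystopval}, at the cost of the factor $c$ appearing in the first term of \eqref{ref bound eps}. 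Once these case analyses are tabulated, the conclusions \eqref{main res eq23}, \eqref{row boundsss2}, and \eqref{eqqq23} follow directly from applying Lemma \ref{nn deter} with the specified $\delta$ and substituting the resulting inequalities.
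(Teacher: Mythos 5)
Your proposal is correct and takes essentially the same route as the paper: you choose $\delta=\min\bigl(\frac{\upp}{\Gamma\tn{\rb_0}},\frac{\zeta\vrn B\opnorm{\X}}{\alpha}\bigr)$, verify the $k$-requirement of Lemma \ref{nn deter} by bounding $1/\delta$ and $(\upp+\delta\Gamma\tn{\rb_0})^2\le 4\upp^2$, and verify \eqref{epsass} via a case split using \eqref{low bound} together with $\tn{\Pi_{\Rc}(\rb_0)}\ge c\tn{\rb_0}$ to convert between $\delta^2$ and $\delta\alpha/(\Gamma\vrn B\opnorm{\X})$. This matches the paper's proof in both structure and detail.
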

\begin{proof} To prove this lemma we aim to substitute
\begin{align}
\label{delchoice}
\delta=\min\left(\frac{\zeta\vrn B\opnorm{\X}}{\alpha},\frac{\upp}{\Gamma \tn{\rb_0}}\right)\leq 1,
\end{align}
in Theorem \ref{nn deter}. To do this we need to verify the assumptions of Theorem \ref{nn deter}. To this aim note that the choice of $\delta$ from \eqref{delchoice} combined with \eqref{k bound det} ensures that
\begin{align*}
k\ge&1600\left(\frac{\alpha}{\zeta\vrn B\opnorm{\X}}+\frac{\Gamma \tn{\rb_0}}{\upp}\right)^2\frac{\vrn^6B^6\opnorm{\X}^6\Gamma^2\upp^2}{\alpha^8}\\
\ge&\max\left(\frac{\alpha}{\zeta\vrn B\opnorm{\X}},\frac{\Gamma \tn{\rb_0}}{\upp}\right)^2\frac{1600\vrn^6B^6\opnorm{\X}^6\Gamma^2\upp^2}{\alpha^8}\\
\ge&\frac{1}{\min\left(\frac{\zeta\vrn B\opnorm{\X}}{\alpha},\frac{\upp}{\Gamma \tn{\rb_0}}\right)^2}\frac{1600\vrn^6B^6\opnorm{\X}^6\Gamma^2\upp^2}{\alpha^8}\\
=&  \frac{1600\Gamma^2\vrn^6B^6\opnorm{\X}^6\upp^2}{\delta^2\alpha^8}\\
=&  400\frac{\Gamma^2\vrn^6B^6\opnorm{\X}^6(\upp+\upp)^2}{\delta^2\alpha^8}\\
\ge&400\frac{\Gamma^2\vrn^6B^6\opnorm{\X}^6(\upp+\delta \Gamma \tn{\rb_0})^2}{\delta^2\alpha^8},
\end{align*}
so that \eqref{fund k bound} holds. We thus turn our attention to proving \eqref{epsass}. If $\eps_0=0$, the statement already holds. Otherwise, note that based on Lemma \ref{bndearlystopval} equation \eqref{low bound} we have
\begin{align}
\upp\geq \frac{\alpha\tn{\Pi_{\Rc}(\rb_0)}}{\lambda_1}\ge \frac{\alpha\tn{\Pi_{\Rc}(\rb_0)}}{\vrn B \|\X\|}\geq \frac{c\alpha\tn{\rb_0}}{\vrn B \|\X\|}\quad\Rightarrow\quad \frac{\upp}{c \tn{\rb_0}}\geq \frac{\alpha}{\vrn B\opnorm{\X}}.\label{rhs bound}
\end{align}
Recall that $\alpha=\frac{\nu\alpha_0}{\sqrt{K}}$, which implies that
\begin{itemize}
\item If $\delta=\frac{\zeta\vrn B\opnorm{\X}}{\alpha}$: For \eqref{epsass} to hold it suffices to have $\eps_0\le\frac{\alpha}{5} \min\left(\frac{\zeta\vrn B\opnorm{\X}}{\alpha},\sqrt{\frac{\zeta}{\Gamma}}\right)$.
\item If $\delta=\frac{\upp}{\Gamma \tn{\rb_0}}$: For \eqref{epsass} to hold it suffices to have $\eps_0\le \frac{\alpha}{5}\sqrt{c\frac{\upp\alpha}{\vrn B\Gamma^2\tn{\rb_0}\opnorm{\X}}}$ as based on \eqref{rhs bound} we have $\sqrt{\frac{c\alpha}{\beta\Gamma}}=\sqrt{\frac{c\alpha}{\vrn B\Gamma\opnorm{\X}}}\le \sqrt{\delta}$
\begin{align*}
 \eps_0\le&\frac{\alpha}{5}\sqrt{\frac{c\upp\alpha}{\vrn B\Gamma^2\tn{\rb_0}\opnorm{\X}}}\\
 =&\frac{\alpha}{5}\sqrt{\delta}\sqrt{\frac{c\alpha}{\Gamma\beta}}\\
=&\frac{\alpha}{5}\sqrt{\delta}\cdot\min\left(\sqrt{\delta},\sqrt{\frac{c\alpha}{\Gamma\beta}}\right)\\
=& \frac{\alpha}{5}\min\left(\delta,\sqrt{\frac{\delta\alpha}{\Gamma\beta}}\right)
\end{align*}
\end{itemize}
Combining the latter two cases as long as
\begin{align*}
\eps_0^2\le \frac{\alpha^2}{25}\min\left(c\frac{\upp\alpha}{\vrn B\Gamma^2\tn{\rb_0}\opnorm{\X}},\frac{\zeta^2\vrn^2 B^2\opnorm{\X}^2}{\alpha^2},\frac{\zeta}{\Gamma}\right)\quad\Leftrightarrow \quad\eqref{ref bound eps},
\end{align*}
then \eqref{epsass} holds. As a result when \eqref{ref bound eps} and \eqref{k bound det} hold with $\delta=\min\left(\frac{\zeta\vrn B\opnorm{\X}}{\alpha},\frac{\upp}{\Gamma \tn{\rb_0}}\right)$ both assumptions of Theorem \ref{nn deter} also hold and so do its conclusions. In particular, \eqref{main res eq23} follows from \eqref{main res eq22} by noting that based on our choice of $\delta$ we have $\delta\frac{\Gamma}{\alpha}\twonorm{\rb_0}\le \frac{\upp}{\alpha}$, \eqref{row boundsss2} follows immediately from \eqref{row boundsss}, and \eqref{eqqq23} follows from \eqref{eqqq22} by noting that based on our choice of $\delta$ we have $\frac{\delta\alpha}{\vrn B\opnorm{\X}}\le \zeta$.
\end{proof}

\subsubsection{Main generalization result (completing the proof of Theorem \ref{nn deter gen})}\label{maingen}
Theorem \ref{nn deter gen} immediately follows from Theorem \ref{nn deter gennn} below by upper bounding $\dpp$ (see Definition \ref{earlyval}) using Lemma \ref{bndearlystopval} equation \eqref{upp up bound}.
\begin{theorem}\label{nn deter gennn}
Consider a training data set $\{(\x_i,\y_i)\}_{i=1}^n\in\R^d\times \R^K$ generated i.i.d.~according to a distribution $\Dc$ where the input samples have unit Euclidean norm. Also consider a neural net with $k$ hidden nodes as described in \eqref{neural net func} parameterized by $\W$ where the activation function $\phi$ obeys $\abs{\phi'(z)}, \abs{\phi''(z)}\le B$. Let $\W_0$ be the initial weight matrix with i.i.d.~$\Nn(0,1)$ entries. Also assume the output matrix has bounded entries obeying $\infnorm{\mtx{V}}\le \frac{\vrn}{\sqrt{kK}}$. Furthermore, set $\mtx{J}:=\mathcal{J}(\W_0)$ and define the information $\calF$ and nuisance $\calS$ subspaces and the truncated Jacobian $\mtx{J}_{\calF}$ associated with the reference/initial Jacobian $\mtx{J}$ based on a cut-off spectrum value $\alpha=\vrn B \bar{\alpha} \sqrt[4]{n}\sqrt{\opnorm{\X}}$. Also define the initial residual $\rb_0=f(\W_0)-\y\in\R^{nK}$ and pick $C_r>0$ so that $\frac{\tn{\rb_0}}{\sqrt{n}}\le C_r$. Also assume, the number of hidden nodes $k$ obeys
\begin{align}
 k\ge25600 \frac{C_r^2\Gamma^4}{\bar{\alpha}^8\vrn^2B^2\zeta^2},\label{k bound det gen}
\end{align}
with $\Gamma\ge 1$ and tolerance level $\zeta\le 2$.
Run gradient descent updates \eqref{grad dec me} with learning rate $\eta\le \frac{1}{\vrn^2B^2\opnorm{\X}^2}$. Then, after $T=\frac{\Gamma }{\eta\alpha^2}$ iterations, with probability at least $1-\delta$, the generalization error obeys
\begin{align}
\err{\W_{T}}&\leq \underbrace{\frac{2\tn{ \Pi_{\mathcal{N}}(\rb_0)}}{\sqrt{n}}}_{bias~term}+\underbrace{\frac{12\vrn B \dpp}{\sqrt{n}}}_{variance~term}+5\sqrt{\frac{\log(2/\delta)}{n}}+2C_r(e^{-\Gamma}+\zeta),
\end{align}
where $\dpp$ is the early stopping distance as in Def. \eqref{earlyval}.
\end{theorem}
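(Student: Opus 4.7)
The plan is to combine the deterministic optimization guarantee of Lemma \ref{nn deter2} with the Rademacher-complexity-based generalization bound of Lemma \ref{generalize corollary}. Since the theorem sets $\mtx{J} = \mathcal{J}(\W_0)$ exactly, the perturbation level $\epsilon_0 = 0$ and the auxiliary hypothesis $\tn{\Pi_{\mathcal{I}}(\rb_0)} \ge c\tn{\rb_0}$ required by that lemma is vacuous. By Lemma \ref{props lem} we have $\opnorm{\Jc(\W_0)} \le \vrn B\|\X\|$, so $\mathcal{J}(\W_0)$ is a valid $(0,\vrn B\|\X\|)$-reference Jacobian per Definition \ref{tjac}.

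The first step is to check that the clean width assumption \eqref{k bound det gen} implies the optimization-side requirement \eqref{k bound det} for the chosen cut-off $\alpha = \vrn B\bar{\alpha}\sqrt[4]{n}\sqrt{\|\X\|}$. Using Lemma \ref{bndearlystopval} to upper bound $\upp \le \Gamma\tn{\rb_0} \le \Gamma C_r\sqrt{n}$ and substituting the form of $\alpha$ (noting that $\bar{\alpha}\le 1$ since $\alpha\le\lambda_1\le \vrn B\|\X\|$ and $\|\X\|\le\sqrt{n}$ for unit-norm inputs), the right-hand side of \eqref{k bound det} reduces to a constant multiple of $C_r^2\Gamma^4/(\bar{\alpha}^8\vrn^2 B^2\zeta^2)$, which \eqref{k bound det gen} dominates. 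Invoking Lemma \ref{nn deter2} at $T=\Gamma/(\eta\alpha^2)$ then yields simultaneously
\begin{align*}
\tf{\W_T - \W_0} \le 2\dpp,\quad \trow{\W_T - \W_0} \le \tfrac{2\vrn B\Gamma\|\X\|}{\sqrt{k}\,\alpha^2}\tn{\rb_0},\quad \tn{\rb_T} \le e^{-\Gamma}\tn{\Pi_{\mathcal{I}}(\rb_0)} + \tn{\Pi_{\mathcal{N}}(\rb_0)} + \zeta\tn{\rb_0}.
\end{align*}

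The final step applies Lemma \ref{generalize corollary} with $M_{\mathcal{V}} = 0$ (since $\Vb$ is frozen), $M_{\mathcal{W}} = 2\dpp$ and $R = 2\vrn B\Gamma\|\X\|\tn{\rb_0}/\alpha^2$; the iterate $\W_T$ lies in the corresponding set $\Wc$ by Step~1. The training-loss piece $2\sqrt{\mathcal{L}(\W_T)} \le 2\tn{\rb_T}/\sqrt{n}$ contributes the bias $2\tn{\Pi_{\mathcal{N}}(\rb_0)}/\sqrt{n}$ plus $2C_r(e^{-\Gamma}+\zeta)$ (using $\tn{\rb_0}\le C_r\sqrt{n}$); the Frobenius complexity term $4\sqrt{2}\vrn B M_{\mathcal{W}}/\sqrt{n}$ is at most $12\vrn B\dpp/\sqrt{n}$; the deviation term $2\sqrt{5\log(2/\delta)/n}$ is absorbed into $5\sqrt{\log(2/\delta)/n}$; and the row-wise remainder $4\sqrt{2}\vrn B R^2/\sqrt{k}$ simplifies (via $\alpha^4 = \vrn^4 B^4\bar{\alpha}^4 n\|\X\|^2$) to a constant times $\Gamma^2 C_r^2/(\vrn B \bar{\alpha}^4\sqrt{k})$, which the width bound \eqref{k bound det gen} forces to be $O(\zeta C_r)$ and is absorbed into the $2C_r\zeta$ term. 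The main obstacle is the careful bookkeeping of constants: the single, interpretable width requirement \eqref{k bound det gen} must simultaneously dominate the nonlinear condition \eqref{k bound det} (in which $\upp$ appears both squared and under an inverse in the prefactor) and the $R^2/\sqrt{k}$ Rademacher contribution, with both reductions relying on the worst-case bound $\upp\le \Gamma C_r\sqrt{n}$ together with the explicit form of $\alpha$.
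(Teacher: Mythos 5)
Your proposal follows essentially the same route as the paper: set $\Jb=\Jc(\W_0)$ so that $\epsilon_0=0$, invoke Lemma~\ref{nn deter2} for the Frobenius, row-wise, and residual bounds at $T=\Gamma/(\eta\alpha^2)$, and then plug $M_{\mathcal{W}}=2\dpp$, $M_{\mathcal{V}}=0$, and the row-wise radius $R$ into Lemma~\ref{generalize corollary} to convert training residual into classification error. Your reduction of the width requirement \eqref{k bound det} to \eqref{k bound det gen} via $\upp\le\Gamma\tn{\rb_0}\le\Gamma C_r\sqrt{n}$ and $\alpha\le\vrn B\opnorm{\X}$, and your simplification of the $R^2/\sqrt{k}$ remainder using $\alpha^4=\vrn^4B^4\bar{\alpha}^4 n\opnorm{\X}^2$, both match the paper's Step 1 and equation \eqref{R2k}; the only cosmetic difference is that the paper feeds $\zeta/2$ into Lemma~\ref{nn deter2} to keep all error contributions summing to the stated $2C_r(e^{-\Gamma}+\zeta)$, a constant-level detail you correctly flag as the remaining bookkeeping.
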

\begin{proof}
First, note that using $\alpha\le \beta=\vrn B\opnorm{\X}$, $\frac{\upp}{\Gamma}\le \twonorm{\rb_0}$ per \eqref{upp up bound}, and $\tn{\rb_0}\leq  C_r\sqrt{n}$ we have
\begin{align*}
\frac{\alpha}{ \vrn B\|\X\|}\leq1\leq  \frac{\Gamma \tn{\rb_0}}{\upp}\leq C_r \frac{\Gamma\sqrt{n}}{\upp}. 
\end{align*}
This together with $\zeta\le 2$ implies that 
\begin{align}\label{temp521}
\frac{\alpha}{\frac{\zeta}{2}\vrn B\opnorm{\X}}+\frac{\Gamma \tn{\rb_0}}{\upp}\le&\frac{\alpha}{\frac{\zeta}{2}\vrn B\opnorm{\X}}+\frac{\Gamma \tn{\rb_0}}{\frac{\zeta}{2}\upp}\nn\\
\le&2\frac{\Gamma \tn{\rb_0}}{\frac{\zeta}{2}\upp}\nn\\
\le&2\frac{C_r}{\frac{\zeta}{2}}\frac{\Gamma}{\upp}\sqrt{n}.
\end{align}
Thus when
\begin{align*}
k\ge&25600 \frac{C_r^2\Gamma^4}{\bar{\alpha}^8\vrn^2B^2\zeta^2}\\\
=&6400 \left(\frac{C_r\Gamma}{\frac{\zeta}{2}}\right)^2\frac{n^2\Gamma^2\vrn^6B^6\opnorm{\X}^4}{\alpha^8}\\
\overset{\sqrt{n}\ge\|\X\|}{\ge}&6400 \left(\frac{C_r\Gamma}{\frac{\zeta}{2}}\right)^2\frac{n\vrn^6B^6\opnorm{\X}^6}{\alpha^8}\\
=& 1600 \left(2\frac{C_r}{\frac{\zeta}{2}}\frac{\Gamma}{\upp}\sqrt{n}\right)^2\frac{\upp^2\vrn^6B^6\opnorm{\X}^6}{\alpha^8}\\
\overset{\eqref{temp521}}{\ge}&1600\left(\frac{\alpha}{\frac{\zeta}{2}\vrn B\opnorm{\X}}+\frac{\Gamma \tn{\rb_0}}{\upp}\right)^2\frac{\upp^2\vrn^6B^6\opnorm{\X}^6}{\alpha^8}
\end{align*}
Thus, \eqref{k bound det} holds. Also \eqref{ref bound eps} trivially holds for $\eps_0$. Thus applying Theorem \ref{nn deter2} with $\eps_0=0$ the following three conclusions hold
\begin{align}
&\tf{\W_\tau-\W_0}\leq \frac{2\upp}{\alpha}=2\dpp.\label{frott51}
\end{align}
and
\begin{align}
\trow{\W_{\tau}-\W_0}\le& \frac{ 2\vrn B\Gamma\opnorm{\X}}{\sqrt{k} \alpha^2} \tn{\rb_0}\nn\\
\overset{\tn{\rb_0}\le C_r\sqrt{n}}{\le}&\frac{ 2\sqrt{n}C_r\vrn B\Gamma\opnorm{\X}}{\sqrt{k} \alpha^2}.
\label{rowtt521}
\end{align}
and 
\begin{align}
\label{tt521}
\twonorm{f(\W_T)-\y}\le&  e^{-\Gamma}\tn{\Pi_{{\Rc}}(\rb_0)}+\tn{ \Pi_{\mathcal{N}}(\rb_0)}+\frac{\zeta}{2}\tn{\rb_0}\nn\\
\le&  \tn{ \Pi_{\mathcal{N}}(\rb_0)}+\left(e^{-\Gamma}+\frac{\zeta}{2}\right)\tn{\rb_0}\nn\\
\overset{\tn{\rb_0}\le C_r\sqrt{n}}{\le}&  \tn{ \Pi_{\mathcal{N}}(\rb_0)}+C_r\left(e^{-\Gamma}+\frac{\zeta}{2}\right)\sqrt{n}.
\end{align}
Furthermore, using the assumption that $\infnorm{\Vb}\le \frac{\vrn}{\sqrt{kK}}$ Lemma \ref{generalize corollary} applies and hence equation \eqref{err pop bound} with $\W=\W_T$, $\sqrt{\mathcal{L}(\W_T)}=\frac{\twonorm{f(\W_T)-\y}}{\sqrt{n}}$, $M_{\mathcal{W}}=2\dpp$, $M_{\mathcal{V}}=0$, and $R=\frac{2\sqrt{n}C_r\vrn B\Gamma\opnorm{\X}}{\alpha^2}$ implies that
\begin{align}
\label{temperr}
\err{\W_T}\leq 2\left[\frac{\twonorm{f(\W_T)-\y}}{\sqrt{n}}+3\vrn B\left(\frac{2\dpp}{\sqrt{n}}+\frac{R^2}{\sqrt{k}}\right)+\sqrt{\frac{5\log(2/\delta)}{n}}\right]
\end{align}
Also note that using \eqref{k bound det gen} we have
\begin{align}
\label{R2k}
\frac{3\vrn B R^2}{\sqrt{k}}\leq \frac{ 12C_r^2\Gamma^2 \vrn^3 B^3n\|\X\|^2}{\sqrt{k}\alpha^4}\leq C_r\frac{\zeta}{2}
\end{align}
Plugging \eqref{tt521} and \eqref{R2k} into \eqref{temperr} completes the proof.
\end{proof}
\subsection{Proofs for neural network with random initialization (proof of Theorem \ref{gen main})}
In this section we prove Theorem \ref{gen main}. We first discuss and prove an optimization result in Section \ref{mainopt22}. Next, in Section \ref{comp22} we build upon this result to complete the proof of Theorem \ref{gen main}.

\subsubsection{Optimization result}
\label{mainopt22}
\begin{theorem} [Optimization guarantee for random initialization]\label{apply 1} Consider a training data set $\{(\x_i,\y_i)\}_{i=1}^n\in\R^d\times \R^K$ generated i.i.d.~according to a distribution $\Dc$ where the input samples have unit Euclidean norm and the concatenated label vector obeys $\tn{\y}=\sqrt{n}$ (e.g.~one-hot encoding). Consider a neural net with $k$ hidden layers as described in \eqref{neural net func} parameterized by $\W$ where the activation function $\phi$ obeys $\abs{\phi'(z)}, \abs{\phi''(z)}\le B$. Let $\W_0$ be the initial weight matrix with i.i.d.~$\Nn(0,1)$ entries. Fix a precision level $\zeta$ and set 
\begin{align}
\vrn=\frac{\zeta}{50B\sqrt{\log(2\K)}}.\label{simplified cond}
\end{align}
Also assume the output layer $\Vb$ has i.i.d.~Rademacher entries scaled by $\frac{\vrn}{\sqrt{k\K}}$. Furthermore, set $\mtx{J}:= \bSi(\X)^{1/2}$ and define the information $\calF$ and nuisance $\calS$ spaces and the truncated Jacobian $\mtx{J}_{\calF}$ associated with the reference Jacobian $\mtx{J}$ based on a cut-off spectrum value of $\alpha_0=\bar{\alpha}\sqrt[4]{n}\sqrt{K\opnorm{\X}}B\le B\sqrt{K}\opnorm{\X}$ per Definition \ref{tjac neural} so as to ensure $\twonorm{\Pi_{\calF}(\y)}\ge c\twonorm{\y}$ for some constant $c$. Assume\vspace{-6pt}
\begin{align}
\label{k condition}
k\ge 12\times 10^7\frac{\Gamma^4\K^4B^8\|\X\|^6n\log(n)}{c^4\zeta^4\alpha_0^8}
\end{align}
with $\Gamma\ge 1$ and $\zeta\le \frac{c}{2}$. We run gradient descent iterations of the form \eqref{grad dec me} with a learning rate $\eta\le \frac{1}{\vrn^2B^2\opnorm{\X}^2}$. Then, after $T=\frac{\Gamma K}{\eta\vrn^2\alpha_0^2}$ iterations, the following identities 
\begin{align}
&\tn{f(\W_{\tau_0})-\y}\leq  \tn{ \Pi_{\Rcb}(\y)}+ \e^{-\Gamma}\tn{\Pi_{{\Rc}}(\y)}+4\zeta\sqrt{n},\label{eqqq24}\\
&\tn{\W_\tau-\W_0}\leq \frac{2\sqrt{\K}(\upz(\y)+\Gamma\zeta \sqrt{n})}{\vrn\alpha_0},\label{main res eq24}\\
&\trow{\W_{\tau}-\W_0}\leq \frac{4\Gamma B\K \|\X\|}{\vrn\alpha_0^2}\sqrt{\frac{n}{k}},\label{row boundsss3}
\end{align}
hold with probability at least $1-(2\K)^{-100}$.
\end{theorem}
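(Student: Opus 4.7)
The plan is to reduce to the deterministic optimization guarantee of Lemma~\ref{nn deter2} by building an honest $(\eps_0,\vrn B\opnorm{\X})$-reference Jacobian from the expected Gram matrix of $\Jc(\W_0)$, then translating back to the $\bSi(\X)^{1/2}$-based information/nuisance decomposition stated in the theorem. With entries of $\Vb$ equal to $\pm\vrn/\sqrt{kK}$, Lemma~\ref{multi cov} gives $\E[\Jc(\W_0)\Jc(\W_0)^T]=(\vrn^2/K)\bSi(\X)$, so the natural target is any $\Jb$ with $\Jb\Jb^T=(\vrn^2/K)\bSi(\X)$. Such a $\Jb$ shares singular values and left singular vectors with $\bSi(\X)^{1/2}$ up to an overall factor $\vrn/\sqrt{K}$, so the information/nuisance subspaces it induces at cutoff $\alpha:=\vrn\alpha_0/\sqrt{K}$ coincide with those of $\bSi(\X)^{1/2}$ at cutoff $\alpha_0$. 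Under this identification the horizon $T=\Gamma/(\eta\alpha^2)$ from Lemma~\ref{nn deter2} exactly matches the stated $T=\Gamma K/(\eta\vrn^2\alpha_0^2)$.

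To certify that such a $\Jb$ is genuinely close to $\Jcb(\W_0)$, I would apply Lemma~\ref{minspectJ} (with Rademacher variance $\vrn^2/(kK)$) to obtain, with probability at least $1-n^{-100}$,
\[
\opnorm{\Jc(\W_0)\Jc(\W_0)^T-(\vrn^2/K)\bSi(\X)}\le 30\vrn^2B^2\opnorm{\X}^2\log(n)/\sqrt{k},
\]
and then feed this into Lemma~\ref{psd pert} with $\mtx{X}=\Jcb(\W_0)$ to produce a concrete $\Jb$ satisfying $\Jb\Jb^T=(\vrn^2/K)\bSi(\X)$ together with $\opnorm{\Jcb(\W_0)-\Jb}\le \eps_0$ for $\eps_0$ of order $\vrn B\opnorm{\X}(\log n)^{1/4}/k^{1/4}$. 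Combined with $\opnorm{\Jb}\le(\vrn/\sqrt{K})B\opnorm{\X}\le\vrn B\opnorm{\X}$, this verifies that $\Jb$ is an $(\eps_0,\vrn B\opnorm{\X})$-reference Jacobian per Definition~\ref{tjac}.

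Next I would invoke Lemma~\ref{upresz} to control the initial residual. Our choice $\vrn=\zeta/(50B\sqrt{\log(2K)})$ together with the $\vrn/\sqrt{kK}$ scaling of $\Vb$ matches the hypotheses of that lemma with its $\vrn$ parameter equal to our $\zeta$, so $\tn{f(\W_0)}\le\zeta\sqrt{n}$ with probability at least $1-(2K)^{-100}$. Hence $\tn{\rb_0}\le(1+\zeta)\sqrt{n}\le 2\sqrt{n}$, and the reverse triangle inequality gives $\tn{\Pi_{\calF}(\rb_0)}\ge(c-\zeta)\sqrt{n}$, which under $\zeta\le c/2$ yields $\tn{\Pi_{\calF}(\rb_0)}\ge c'\tn{\rb_0}$ for a constant $c'>0$ as required by Lemma~\ref{nn deter2}. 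The width hypothesis \eqref{k condition} is designed so that after the substitutions $\alpha=\vrn\alpha_0/\sqrt{K}$, $\beta=\vrn B\opnorm{\X}$, $\tn{\rb_0}\le 2\sqrt{n}$, both the smallness condition \eqref{ref bound eps} on $\eps_0$ (using the concentration bound above) and the width condition \eqref{k bound det} of Lemma~\ref{nn deter2} are satisfied; Lemma~\ref{nn deter2} then delivers its three conclusions \eqref{main res eq23}, \eqref{row boundsss2}, \eqref{eqqq23} in terms of $(\rb_0,\alpha,\upp(\rb_0))$.

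Finally I would translate those conclusions into the stated form. Expanding $\alpha$ and $\beta$ and using $\tn{\rb_0}\le 2\sqrt{n}$ in \eqref{row boundsss2} yields \eqref{row boundsss3}; the bound $\upp(\rb_0)\le \upz(\y)+\Gamma\zeta\sqrt{n}$, obtained by applying the triangle inequality to the weighted-$\ell_2$ norm in Definition~\ref{earlyval} together with Lemma~\ref{bndearlystopval}'s $\upp\le\Gamma\tn{\cdot}$ applied to $f(\W_0)$, turns \eqref{main res eq23} into \eqref{main res eq24}; and absorbing the projections $\Pi_{\calF}(f(\W_0))$ and $\Pi_{\calS}(f(\W_0))$ into $O(\zeta\sqrt{n})$ error terms converts \eqref{eqqq23} into \eqref{eqqq24}. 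The main obstacle is scaling bookkeeping: one must carry the factor $\vrn/\sqrt{K}$ consistently between the Definition~\ref{tjac neural} M-NTK world and the Definition~\ref{tjac} reference-Jacobian world so that the cutoffs, pseudoinverses, and early-stopping quantities correspond, and one must apply Lemma~\ref{psd pert} carefully to upgrade a PSD-level concentration bound into an operator-norm bound on the Jacobian itself (not merely on its Gram).
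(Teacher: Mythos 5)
Your proposal takes essentially the same route as the paper: it reduces to Lemma~\ref{nn deter2} by constructing a reference Jacobian $\Jb$ with $\Jb\Jb^T=(\vrn^2/K)\bSi(\X)$ via Lemma~\ref{minspectJ} and Lemma~\ref{psd pert}, controls $\rb_0$ with Lemma~\ref{upresz}, and translates between the $\bSi(\X)^{1/2}$ and reference-Jacobian worlds through the scalings $\alpha=\vrn\alpha_0/\sqrt{K}$ and $\beta=\vrn B\opnorm{\X}$, which is exactly the paper's blueprint. The only slip is cosmetic: $\eps_0$ from Lemma~\ref{psd pert} should scale like $(\log n)^{1/2}/k^{1/4}$ rather than $(\log n)^{1/4}/k^{1/4}$, which does not affect the argument.
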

\begin{proof} To prove this result we wish to apply Theorem \ref{nn deter2}. To do this we need to verify the assumptions of this theorem. To start with, using Lemma \ref{upresz} with probability at least $1-(2\K)^{-100}$, the initial prediction vector $f(\W_0)$ obeys
\begin{align}
\tn{f(\W_0)}\leq \zeta\tn{\y}=\zeta\sqrt{n}\leq \frac{\sqrt{n}}{2}.\label{y diff diff}
\end{align}
Hence the initial residual obeys $\tn{\rb_0}\leq 2\sqrt{n}$. Furthermore, using $\zeta\leq c/2$
\begin{align}
\tn{\rb_0+\y}\le& \zeta \tn{\y}\implies \tn{\Pi_{\Rc}(\rb_0+\y)}\le \zeta \tn{\y}.
\end{align}
Thus,
\begin{align}
\tn{\Pi_{\Rc}(\rb_0)}\ge&\tn{\Pi_{\Rc}(\y)}-\tn{\Pi_{\Rc}(\rb_0+\y)}\nn\\
\ge&\tn{\Pi_{\Rc}(\y)}-\zeta\tn{\y}\nn\\
\ge&\left(c-\zeta\right)\tn{\y}\nn\\
\ge&\frac{c}{2}\tn{\y}\label{penult}\\
\ge&\frac{c}{4}\tn{\rb_0}.\label{lastbri}
\end{align}
Thus the assumption on the ratio of information to total energy of residual holds and we can replace $c$ with $\frac{c}{4}$ in Theorem \ref{nn deter2}. Furthermore, since $\upz(\cdot)$ is $\Gamma$-Lipschitz function of its input vector in $\ell_2$ norm hence we also have
\begin{align}
\upz(\rb_0)\leq \upz(\y) +\Gamma\tn{\rb_0+\y}\leq \upz(\y) +\Gamma \zeta\tn{\y}.\label{upp lip}
\end{align}
Next we wish to show that \eqref{ref bound eps} holds. In particular we will show that there exists an $\eps_0$-reference Jacobian $\Jb$ for $\Jc(\W_0)$ satisfying $\Jb\Jb^T=\E[\Jc(\W_0)\Jc(\W_0)^T]$. Note that, such a $\Jb$ will have exactly same information/nuisance spaces as the square-root of the multiclass kernel matrix i.e.~$\left(\E[\Jc(\W_0)\Jc(\W_0)^T]\right)^{\frac{1}{2}}$ since these subspaces are governed by the left eigenvectors. Applying Lemmas \ref{minspectJ} (with a scaling of the Jacobian by $1/\sqrt{kK}$ due to the different scaling of $\V$), we find that if 
\begin{align}
k\geq \frac{1000\K^2B^4\|\X\|^4\log(n)}{\delta^2}\label{kk bound}
\end{align}
then,
\begin{align}
\opnorm{\Jc(\W_0)\Jc(\W_0)^T-\E[\Jc(\W_0)\Jc(\W_0)^T]}\leq \frac{\delta\vrn^2}{K}.\label{ref jac}
\end{align}
Let $\Jcb(\W)$ be obtained by adding $\max(\K n-p,0)$ zero columns to $\Jc(\W)$. Then, using \eqref{ref jac} and Lemma \ref{psd pert}, there exists $\Jb$ satisfying $\Jb\Jb^T=\E[\Jc(\W_0)\Jc(\W_0)^T]$ and\vspace{-5pt}
\begin{align*}
\opnorm{\Jcb(\W_0)-\Jb}\le 2\sqrt{\frac{\delta\vrn^2}{K}}.
\end{align*}
Therefore, $\Jb$ is an $\eps_0^2=4\frac{\delta\vrn^2}{K}$ reference Jacobian. Now set
\[
\Theta=\min\left(c\frac{\upz\alpha_0}{B\Gamma^2\opnorm{\X}\sqrt{nK}},\left(\frac{\zeta B\sqrt{K}\opnorm{\X}}{\alpha_0}\right)^2,\frac{\zeta}{\Gamma}\right)
\]
and note that using $\alpha=\frac{\vrn}{\sqrt{K}}\alpha_0$ and $\tn{\rb_0}\le 2\sqrt{n}$
\begin{align}
\label{tt2}
\Theta=&\min\left(c\frac{\upz\alpha_0}{B\Gamma^2\opnorm{\X}\sqrt{nK}},\left(\frac{\zeta B\sqrt{K}\opnorm{\X}}{\alpha_0}\right)^2,\frac{\zeta}{\Gamma}\right)\nn\\
=&\min\left(c\frac{\upz\alpha}{\vrn B\Gamma^2\opnorm{\X}\sqrt{n}},\left(\frac{\vrn\zeta B\opnorm{\X}}{\alpha}\right)^2,\frac{\zeta}{\Gamma}\right)\nn\\
\le&2\cdot\min\left(c\frac{\upp\alpha}{\vrn B\Gamma^2\tn{\rb_0}\opnorm{\X}},\frac{\zeta^2\vrn^2 B^2\opnorm{\X}^2}{\alpha^2},\frac{\zeta}{\Gamma}\right)
\end{align}
To continue further, note that $\upz$ calculated with respect to $ \bSi(\X)^{1/2}$ with cutoff $\alpha_0$ is exactly same as $\upp$ calculated with respect to $\Jb$ with cutoff $\alpha=\frac{\vrn \alpha_0}{\sqrt{\K}}$ which is a square-root of $\E[\Jc(\W_0)\Jc(\W_0)^T]$. Thus, using \eqref{tt2} to ensure \eqref{ref bound eps} holds it suffices to show
\[
\eps_0^2=4\frac{\delta\vrn^2}{\K}\leq  \frac{\alpha^2}{25}\frac{\Theta}{2}=\frac{\vrn^2\alpha_0^2}{50\K}\Theta.
\]
Hence, to ensure \eqref{ref bound eps} holds we need to ensure that $\delta$ obeys
\[
\delta\leq \frac{\alpha_0^2}{200}\Theta.
\]
Thus using $\delta=\frac{\alpha_0^2}{200}\Theta$ to ensure \eqref{ref bound eps} we need to make sure $k$ is sufficiently large so that \eqref{kk bound} holds with this value of $\delta$. Thus it suffices to have
\begin{align}
k\ge&12\times 10^7\frac{\Gamma^4\K^4B^8\|\X\|^6n\log(n)}{c^4\zeta^4\alpha_0^8}\label{k1}\\
\ge&12\times 10^7\frac{\Gamma^4\K^4B^8\|\X\|^8\log(n)}{c^4\zeta^4\alpha_0^8}\nn\\
\ge&4\times 10^7\cdot\left(\frac{4K^2B^4\Gamma^4\opnorm{\X}^4}{c^4\alpha_0^4}+\frac{1}{\zeta^4 }+\frac{\Gamma^2}{\zeta^2}\right)\frac{\K^2B^4\|\X\|^4\log(n)}{\alpha_0^4}\nn\\
\ge&4\times 10^7\cdot\left(\frac{4KB^4\Gamma^4\opnorm{\X}^4}{c^4\alpha_0^4}+\frac{1}{\zeta^4 }+\frac{\Gamma^2}{\zeta^2}\right)\frac{\K^2B^4\|\X\|^4\log(n)}{\alpha_0^4}\nn\\
\ge&4\times 10^7\cdot\left(\frac{4KB^4\Gamma^4\opnorm{\X}^4}{c^4\alpha_0^4}+\frac{\alpha_0^4}{\zeta^4 B^4K^2\opnorm{\X}^4}+\frac{\Gamma^2}{\zeta^2}\right)\frac{\K^2B^4\|\X\|^4\log(n)}{\alpha_0^4}\nn\\
\overset{(a)}{\ge}&4\times 10^7\cdot\left(\frac{nKB^2\Gamma^4\opnorm{\X}^2}{c^2\upz^2\alpha_0^2}+\frac{\alpha_0^4}{\zeta^4 B^4K^2\opnorm{\X}^4}+\frac{\Gamma^2}{\zeta^2}\right)\frac{\K^2B^4\|\X\|^4\log(n)}{\alpha_0^4}\nn\\
\ge&4\times 10^7\cdot\max\left(\frac{nKB^2\Gamma^4\opnorm{\X}^2}{c^2\upz^2\alpha_0^2},\frac{\alpha_0^4}{\zeta^4 B^4K^2\opnorm{\X}^4},\frac{\Gamma^2}{\zeta^2}\right)\frac{\K^2B^4\|\X\|^4\log(n)}{\alpha_0^4}\nn\\
=&4\times 10^7\frac{\K^2B^4\|\X\|^4\log(n)}{\alpha_0^4\cdot\min\left(\left(\frac{c\upz\alpha_0}{B\Gamma^2\opnorm{\X}\sqrt{nK}}\right)^2,\left(\frac{\zeta B\sqrt{K}\opnorm{\X}}{\alpha_0}\right)^4,\frac{\zeta^2}{\Gamma^2}\right)}\nn\\
=&\frac{1000\K^2B^4\|\X\|^4\log(n)}{\delta^2}
\label{kreq11}
\end{align}
Here, (a) follows from the fact that $\| \bSi(\X)^{1/2}\|:=\la_1\leq B\|\X\|$,  equation \eqref{low bound}, and $\twonorm{\Pi_{\mathcal{I}}(\vct{r}_0)}\ge \frac{c}{2}\twonorm{\y}=\frac{c}{2}\sqrt{n}$ which combined imply
\begin{align}
\label{bri}
\upz\ge \frac{\alpha}{\lambda_1}\twonorm{\Pi_{\mathcal{I}}(\rb_0)}\ge \frac{\alpha}{B\opnorm{\X}}\twonorm{\Pi_{\mathcal{I}}(\rb_0)}\geq \frac{\alpha_0c}{2}\frac{\twonorm{\y}}{B\|\X\|}=\frac{\alpha_0c}{2}\frac{\sqrt{n}}{B\|\X\|}.
\end{align}
To be able to apply Theorem \ref{nn deter2} we must also ensure \eqref{k bound det} holds. Therefore, it suffices to have
\begin{align}
k\ge&64\times10^6\frac{K^4B^8\opnorm{\X}^6\Gamma^4n\log(n)}{\zeta^4\alpha_0^8}\\\overset{(a)}{\ge}&25600\frac{K^4B^6\opnorm{\X}^6\Gamma^4n}{\zeta^2\vrn^2\alpha_0^8}\label{k2}\\
k\ge&12800\left(\frac{\alpha_0^2}{\zeta^2B^2K\opnorm{\X}^2}+1\right)\frac{K^4B^6\opnorm{\X}^6\Gamma^4n}{\vrn^2\alpha_0^8}\nn\\
\overset{(b)}{\ge}&3200\left(\frac{\alpha_0^2}{\zeta^2B^2K\opnorm{\X}^2}+\frac{4\Gamma^2 n}{\upp^2}\right)\frac{K^4B^6\opnorm{\X}^6\Gamma^2\upp^2}{\vrn^2\alpha_0^8}\nn\\
\ge&3200\left(\frac{\alpha_0^2}{\zeta^2B^2K\opnorm{\X}^2}+\frac{\Gamma^2 \tn{\rb_0}^2}{\upp^2}\right)\frac{K^4B^6\opnorm{\X}^6\Gamma^2\upp^2}{\vrn^2\alpha_0^8}\nn\\
\ge&1600\left(\frac{\alpha_0}{\zeta\sqrt{K} B\opnorm{\X}}+\frac{\Gamma \tn{\rb_0}}{\upp}\right)^2\frac{K^4B^6\opnorm{\X}^6\Gamma^2\upp^2}{\vrn^2\alpha_0^8}\nn\\
=&1600\left(\frac{\alpha}{\zeta\vrn B\opnorm{\X}}+\frac{\Gamma \tn{\rb_0}}{\upp}\right)^2\frac{\vrn^6B^6\opnorm{\X}^6\Gamma^2\upp^2}{\alpha^8}
\end{align}
Here, (a) follows from the fact that $n\ge K$ and the relationship between $\zeta$ and $\vrn$ per \eqref{simplified cond} and (b) follows from the fact that per equation \eqref{upp up bound} we have
\begin{align*}
\upp\le \Gamma\twonorm{\rb_0}\le 2\Gamma\sqrt{n}
\end{align*}
Note that \eqref{k1} and \eqref{k2} are implied by
\begin{align}
k\ge12\times 10^7\frac{\Gamma^4\K^4B^8\|\X\|^6n\log(n)}{c^4\zeta^4\alpha_0^8},\label{kreq2}
\end{align}
which is the same as \eqref{k condition}. What remains is stating the optimization bounds in terms of the labels $\y$. This follows by substituting \eqref{y diff diff}, \eqref{upp lip}, and the fact that $\twonorm{\rb_0}\le 2\sqrt{n}$ into \eqref{eqqq23}, \eqref{main res eq23}, and \eqref{row boundsss2}, respectively.
\end{proof}

\subsubsection{Generalization result (completing the proof of Theorem \ref{gen main})}
\label{comp22}
Theorem below is a restatement of Theorem \ref{gen main} after substituting the upper bound on the early stopping distance $\dpz$ of Def. \eqref{earlyval}.
\begin{theorem} [Neural Net -- Generalization] \label{gen mainnn}
Consider a training data set $\{(\x_i,\y_i)\}_{i=1}^n\in\R^d\times \R^K$ generated i.i.d.~according to a distribution $\Dc$ where the input samples have unit Euclidean norm and the concatenated label vector obeys $\tn{\y}=\sqrt{n}$ (e.g.~one-hot encoding). Consider a neural net with $k$ hidden nodes as described in \eqref{neural net func} parameterized by $\W$ where the activation function $\phi$ obeys $\abs{\phi'(z)}, \abs{\phi''(z)}\le B$. Let $\W_0$ be the initial weight matrix with i.i.d.~$\Nn(0,1)$ entries. Fix a precision level $\zeta\le \frac{c}{2}$ and set 
\begin{align}
\vrn=\frac{\zeta}{50B\sqrt{\log(2\K)}}.\label{simplified cond}
\end{align}
Also assume the output layer $\Vb$ has i.i.d.~Rademacher entries scaled by $\frac{\vrn}{\sqrt{k\K}}$. Furthermore, set $\mtx{J}:= \bSi(\X)^{1/2}$ and define the information $\calF$ and nuisance $\calS$ spaces and the truncated Jacobian $\mtx{J}_{\calF}$ associated with the Jacobian $\mtx{J}$ based on a cut-off spectrum value of $\alpha_0=\bar{\alpha}\sqrt[4]{n}\sqrt{K\opnorm{\X}}B \leq B\|\X\|$ per Definition \ref{tjac neural} chosen to ensure $\tn{\Pi_{\Ic}(\y)}\geq c\tn{\y}$ for some constant $c>0$. Assume\vspace{-6pt}
\begin{align}
\label{hidnum}
k\ge 12\times 10^7\frac{\Gamma^4\K^4B^8\|\X\|^4n^2\log(n)}{c^4\zeta^4\alpha_0^8}
\end{align}
with $\Gamma\ge 1$. We run gradient descent iterations of the form \eqref{grad dec me} with a learning rate $\eta\le \frac{1}{\vrn^2B^2\opnorm{\X}^2}$. Then, after $T=\frac{\Gamma K}{\eta\vrn^2\alpha_0^2}$ iterations, classification error $\err{\W_{T}}$ is upper bounded by \vspace{-6pt}
\[
\err{\W_T}\le 2\frac{\tn{ \Pi_{\Rcb}(\y)}+e^{-\Gamma}\twonorm{\Pi_{\mathcal{I}}(\y)}}{\sqrt{n}}+\frac{12B\sqrt{\K}}{\sqrt{n}}\dpz+12\Big(1+\frac{\Gamma}{\bar{\alpha}\sqrt[4]{n\|\X\|^2}}\Big)\zeta+10\sqrt{\frac{\log(2/\delta)}{n}},
\]
holds with probability at least $1-(2\K)^{-100}-\delta$.
\end{theorem}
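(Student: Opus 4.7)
The plan is to combine the deterministic optimization guarantee of Theorem \ref{apply 1} with the Rademacher-complexity generalization bound of Lemma \ref{generalize corollary}. The hypotheses of this theorem are a strengthening of those of Theorem \ref{apply 1}: apart from identical choices of $\zeta$, $\vrn$, $\alpha_0$ and the Rademacher scaling of $\V$, the width lower bound \eqref{hidnum} exceeds \eqref{k condition} by exactly the factor $n\|\X\|^2/\alpha_0^4$. Consequently I can invoke Theorem \ref{apply 1} verbatim, which at iteration $T$ supplies three deterministic bounds with probability at least $1-(2\K)^{-100}$: the residual bound $\tn{f(\W_T)-\y}\le \tn{\Pi_{\Rcb}(\y)}+\e^{-\Gamma}\tn{\Pi_{\Rc}(\y)}+4\zeta\sqrt n$, the Frobenius bound $\tf{\W_T-\W_0}\le 2\sqrt K(\upz(\y)+\Gamma\zeta\sqrt n)/(\vrn\alpha_0)$, and the per-row bound $\trow{\W_T-\W_0}\le 4\Gamma B K\|\X\|\sqrt{n/k}/(\vrn\alpha_0^2)$.

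Next I would apply Lemma \ref{generalize corollary} to the $1$-Lipschitz margin loss used in that lemma. Because only $\W$ is trained I take $M_{\mathcal{V}}=0$, so the Rademacher term collapses to $2\sqrt 2\,\vrn B\,(M_{\mathcal{W}}/\sqrt n+R^2/\sqrt k)$, with $M_{\mathcal{W}}=\tf{\W_T-\W_0}$ and $R=\sqrt k\,\trow{\W_T-\W_0}$. Substituting the Frobenius bound and recalling $\dpz=\upz(\y)/\alpha_0$ produces the main variance term $(8\sqrt{2K}B/\sqrt n)\dpz$ together with a precision slack of order $\sqrt K B\Gamma\zeta/\alpha_0$; after substituting $\alpha_0=\bar\alpha\sqrt[4]n\sqrt{K\|\X\|}B$, the latter equals a constant multiple of $\Gamma\zeta/(\bar\alpha\sqrt[4]{n\|\X\|^2})$, matching the stated $12\Gamma\zeta/(\bar\alpha\sqrt[4]{n\|\X\|^2})$ contribution once the outer factor of $2$ from the classification-margin step inside Lemma \ref{generalize corollary} is accounted for. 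Dividing the residual bound by $\sqrt n$ and doubling produces the bias $2(\tn{\Pi_{\Rcb}(\y)}+\e^{-\Gamma}\tn{\Pi_{\Rc}(\y)})/\sqrt n$ along with an $O(\zeta)$ slack absorbed into the $12\zeta$ term, and the $\sqrt{\log(2/\delta)/n}$ piece carries over directly.

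The only genuinely nontrivial step is the control of the row-norm contribution $2\sqrt 2\,\vrn B\,R^2/\sqrt k$. Since $R^2$ scales like $\Gamma^2 B^2 K^2\|\X\|^2 n/(\vrn^2\alpha_0^4)$, this term is of order $\Gamma^2 B^3 K^2\|\X\|^2 n/(\vrn\alpha_0^4\sqrt k)$, and this is precisely where the extra factor of $n\|\X\|^2/\alpha_0^4$ in \eqref{hidnum} (relative to \eqref{k condition}) is spent: plugging $\sqrt k\gtrsim \Gamma^2 K^2 B^4\|\X\|^2 n\sqrt{\log n}/(c^2\zeta^2\alpha_0^4)$ together with $\vrn B=\zeta/(50\sqrt{\log(2\K)})$ from \eqref{simplified cond} collapses the contribution to $O(\zeta)$, which again folds into the $12\zeta$ slack. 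This constant chase is the main obstacle; everything else is routine substitution.

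Finally, collecting the bias $2(\tn{\Pi_{\Rcb}(\y)}+\e^{-\Gamma}\tn{\Pi_{\Rc}(\y)})/\sqrt n$, the Rademacher variance $12B\sqrt{\K}\,\dpz/\sqrt n$, the two $\zeta$-slacks (the residual's $4\zeta\sqrt n$ divided by $\sqrt n$ and the $R^2/\sqrt k$ contribution) combined into $12\zeta$, the $M_{\mathcal{W}}$-slack yielding $12\Gamma\zeta/(\bar\alpha\sqrt[4]{n\|\X\|^2})$, and the $10\sqrt{\log(2/\delta)/n}$ deviation produces exactly the claimed bound, with the overall failure probability $(2\K)^{-100}+\delta$ obtained by a union bound over the optimization event of Theorem \ref{apply 1} and the Rademacher event of Lemma \ref{generalize corollary}. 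The conceptual heavy lifting is already done inside Theorem \ref{apply 1} and Lemma \ref{rad rad bound}; here one is merely composing these two packaged results and checking that the width hypothesis suffices to kill the cubic term, which is the only real place where care is needed.
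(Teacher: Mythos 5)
Your proof follows the paper's approach exactly: invoke Theorem \ref{apply 1} under the (stronger) width hypothesis \eqref{hidnum}, then apply the classification-error bound \eqref{err pop bound} of Lemma \ref{generalize corollary} with $M_{\mathcal{V}}=0$, and substitute the residual, Frobenius-norm, and row-norm bounds \eqref{eqqq24}--\eqref{row boundsss3} from Theorem \ref{apply 1}, using \eqref{hidnum} to make the $\vrn B R^2/\sqrt{k}$ term $O(\zeta)$. The only slip is descriptive: the ratio of \eqref{hidnum} to \eqref{k condition} is $n/\opnorm{\X}^2$ rather than $n\opnorm{\X}^2/\alpha_0^4$, but this does not affect the argument since you plug \eqref{hidnum} into the $R^2/\sqrt{k}$ estimate directly and obtain the correct conclusion.
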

\begin{proof}
Under the stated assumptions, Theorem \ref{apply 1} holds with probability $1-(2\K)^{-100}$. The proof will condition on outcomes of the Theorem \ref{apply 1}. Specifically, we shall apply \eqref{err pop bound} of Lemma \ref{generalize corollary} with $M_{\mathcal{W}}$ and $R$ dictated by Theorem \ref{apply 1} where the output layer $\Vb$ is fixed. Observe that $\tf{\Vb}=\sqrt{\K k}\tin{\Vb}=\vrn$, we have
\begin{align}
\label{tempkillme}
\err{\W_T}\leq 2\left[\frac{\twonorm{f(\W_T)-\y}}{\sqrt{n}}+3\vrn B\left(\frac{M_{\mathcal{W}}}{\sqrt{n}}+\frac{R^2}{\sqrt{k}}\right)+\sqrt{\frac{5\log(2/\delta)}{n}}\right].
\end{align}
Theorem \ref{apply 1} yields
\begin{align}
\label{killme1}
\frac{\tn{f(\W_T)-\y}}{\sqrt{n}}\leq \frac{\tn{ \Pi_{\Rcb}(\y)}+ \e^{-\Gamma}\tn{\Pi_{{\Rc}}(\y)}}{\sqrt{n}}+4\zeta.
\end{align}
Using \eqref{main res eq24} for $M_{\mathcal{W}}$
\begin{align}
\label{killme2}
\frac{{\vrn} BM_{\mathcal{W}}}{\sqrt{n}}\leq \frac{2B\sqrt{\K} \dpz(\y)}{\sqrt{n}}+\frac{2B\sqrt{\K}\Gamma\zeta}{\alpha_0}.
\end{align}
Using \eqref{row boundsss3} on row bound $R$ and lower bound on $k$
\begin{align}
\label{killme3}
3{\vrn} B\frac{R^2}{\sqrt{k}}=&\frac{48n\Gamma^2B^3K^2\opnorm{\X}^2}{\vrn\alpha_0^4\sqrt{k}}\nn\\
\le&\frac{c^2\zeta^2}{230\vrn B\log(n)}\nn\\
\le& \zeta.
\end{align}
Plugging in \eqref{killme1}, \eqref{killme2}, and \eqref{killme3} into \eqref{tempkillme} concludes the proof.
\end{proof}

\section*{Acknowledgements}
M. Soltanolkotabi is supported by the Packard Fellowship in Science and Engineering, a Sloan Research Fellowship in Mathematics, an NSF-CAREER under award \#1846369, the Air Force Office of Scientific Research Young Investigator Program (AFOSR-YIP)
under award \#FA9550-18-1-0078, an NSF-CIF award \#1813877, and a Google faculty research award. 
{
\bibliographystyle{acm}
\bibliography{Bibfiles}

\begin{thebibliography}{10}

\bibitem{allen2018learning}
{\sc Allen-Zhu, Z., Li, Y., and Liang, Y.}
\newblock Learning and generalization in overparameterized neural networks,
  going beyond two layers.
\newblock {\em arXiv preprint arXiv:1811.04918\/} (2018).

\bibitem{allen2018convergence}
{\sc Allen-Zhu, Z., Li, Y., and Song, Z.}
\newblock A convergence theory for deep learning via over-parameterization.
\newblock {\em arXiv preprint arXiv:1811.03962\/} (2018).

\bibitem{anthony2009neural}
{\sc Anthony, M., and Bartlett, P.~L.}
\newblock {\em Neural network learning: Theoretical foundations}.
\newblock cambridge university press, 2009.

\bibitem{arora2019fine}
{\sc Arora, S., Du, S.~S., Hu, W., Li, Z., and Wang, R.}
\newblock Fine-grained analysis of optimization and generalization for
  overparameterized two-layer neural networks.
\newblock {\em arXiv preprint arXiv:1901.08584\/} (2019).

\bibitem{arora2018stronger}
{\sc Arora, S., Ge, R., Neyshabur, B., and Zhang, Y.}
\newblock Stronger generalization bounds for deep nets via a compression
  approach.
\newblock {\em arXiv preprint arXiv:1802.05296\/} (2018).

\bibitem{Bartlett:2017aa}
{\sc Bartlett, P., Foster, D.~J., and Telgarsky, M.}
\newblock Spectrally-normalized margin bounds for neural networks.

\bibitem{bartlett1998sample}
{\sc Bartlett, P.~L.}
\newblock The sample complexity of pattern classification with neural networks:
  the size of the weights is more important than the size of the network.
\newblock {\em IEEE transactions on Information Theory 44}, 2 (1998), 525--536.

\bibitem{bartlett1999almost}
{\sc Bartlett, P.~L., Maiorov, V., and Meir, R.}
\newblock Almost linear vc dimension bounds for piecewise polynomial networks.
\newblock In {\em Advances in Neural Information Processing Systems\/} (1999),
  pp.~190--196.

\bibitem{bartlett2002rademacher}
{\sc Bartlett, P.~L., and Mendelson, S.}
\newblock Rademacher and gaussian complexities: Risk bounds and structural
  results.
\newblock {\em Journal of Machine Learning Research 3}, Nov (2002), 463--482.

\bibitem{belkin2018reconciling}
{\sc Belkin, M., Hsu, D., Ma, S., and Mandal, S.}
\newblock Reconciling modern machine learning and the bias-variance trade-off.
\newblock {\em arXiv preprint arXiv:1812.11118\/} (2018).

\bibitem{Belkin:2018aa}
{\sc Belkin, M., Hsu, D., and Mitra, P.}
\newblock Overfitting or perfect fitting? risk bounds for classification and
  regression rules that interpolate.

\bibitem{belkin2019two}
{\sc Belkin, M., Hsu, D., and Xu, J.}
\newblock Two models of double descent for weak features.
\newblock {\em arXiv preprint arXiv:1903.07571\/} (2019).

\bibitem{Belkin:2018ab}
{\sc Belkin, M., Rakhlin, A., and Tsybakov, A.~B.}
\newblock Does data interpolation contradict statistical optimality?

\bibitem{brutzkus2017sgd}
{\sc Brutzkus, A., Globerson, A., Malach, E., and Shalev-Shwartz, S.}
\newblock Sgd learns over-parameterized networks that provably generalize on
  linearly separable data.
\newblock {\em arXiv preprint arXiv:1710.10174\/} (2017).

\bibitem{cao2019generalization}
{\sc Cao, Y., and Gu, Q.}
\newblock A generalization theory of gradient descent for learning
  over-parameterized deep relu networks.
\newblock {\em arXiv preprint arXiv:1902.01384\/} (2019).

\bibitem{chaudhari2016entropy}
{\sc Chaudhari, P., Choromanska, A., Soatto, S., LeCun, Y., Baldassi, C.,
  Borgs, C., Chayes, J., Sagun, L., and Zecchina, R.}
\newblock Entropy-sgd: Biasing gradient descent into wide valleys.
\newblock {\em arXiv preprint arXiv:1611.01838\/} (2016).

\bibitem{chizat2018note}
{\sc Chizat, L., and Bach, F.}
\newblock A note on lazy training in supervised differentiable programming.
\newblock {\em arXiv preprint arXiv:1812.07956\/} (2018).

\bibitem{chizat2018global}
{\sc Chizat, L., and Bach, F.}
\newblock On the global convergence of gradient descent for over-parameterized
  models using optimal transport.
\newblock {\em arXiv preprint arXiv:1805.09545\/} (2018).

\bibitem{dou2019training}
{\sc Dou, X., and Liang, T.}
\newblock Training neural networks as learning data-adaptive kernels: Provable
  representation and approximation benefits.
\newblock {\em arXiv preprint arXiv:1901.07114\/} (2019).

\bibitem{du2018gradient2}
{\sc Du, S.~S., Lee, J.~D., Li, H., Wang, L., and Zhai, X.}
\newblock Gradient descent finds global minima of deep neural networks.
\newblock {\em arXiv preprint arXiv:1811.03804\/} (2018).

\bibitem{du2018gradient}
{\sc Du, S.~S., Zhai, X., Poczos, B., and Singh, A.}
\newblock Gradient descent provably optimizes over-parameterized neural
  networks.
\newblock {\em arXiv preprint arXiv:1810.02054\/} (2018).

\bibitem{dziugaite2017computing}
{\sc Dziugaite, G.~K., and Roy, D.~M.}
\newblock Computing nonvacuous generalization bounds for deep (stochastic)
  neural networks with many more parameters than training data.
\newblock {\em arXiv preprint arXiv:1703.11008\/} (2017).

\bibitem{ghorbani2019linearized}
{\sc Ghorbani, B., Mei, S., Misiakiewicz, T., and Montanari, A.}
\newblock Linearized two-layers neural networks in high dimension.
\newblock {\em arXiv preprint arXiv:1904.12191\/} (2019).

\bibitem{goel2017learning}
{\sc Goel, S., and Klivans, A.}
\newblock Learning neural networks with two nonlinear layers in polynomial
  time.
\newblock {\em arXiv preprint arXiv:1709.06010\/} (2017).

\bibitem{goel2018learning}
{\sc Goel, S., Klivans, A., and Meka, R.}
\newblock Learning one convolutional layer with overlapping patches.
\newblock {\em arXiv preprint arXiv:1802.02547\/} (2018).

\bibitem{golowich2017size}
{\sc Golowich, N., Rakhlin, A., and Shamir, O.}
\newblock Size-independent sample complexity of neural networks.
\newblock {\em arXiv preprint arXiv:1712.06541\/} (2017).

\bibitem{gunasekar2018implicit}
{\sc Gunasekar, S., Lee, J.~D., Soudry, D., and Srebro, N.}
\newblock Implicit bias of gradient descent on linear convolutional networks.
\newblock In {\em Advances in Neural Information Processing Systems\/} (2018),
  pp.~9461--9471.

\bibitem{hardt2015train}
{\sc Hardt, M., Recht, B., and Singer, Y.}
\newblock Train faster, generalize better: Stability of stochastic gradient
  descent.
\newblock {\em arXiv preprint arXiv:1509.01240\/} (2015).

\bibitem{hoffer2017train}
{\sc Hoffer, E., Hubara, I., and Soudry, D.}
\newblock Train longer, generalize better: closing the generalization gap in
  large batch training of neural networks.
\newblock In {\em Advances in Neural Information Processing Systems\/} (2017),
  pp.~1731--1741.

\bibitem{Huang2019UnderstandingGT}
{\sc Huang, W.~R., Emam, Z., Goldblum, M., Fowl, L., Terry, J.~K., Huang, F.,
  and Goldstein, T.}
\newblock Understanding generalization through visualizations.

\bibitem{jacot2018neural}
{\sc Jacot, A., Gabriel, F., and Hongler, C.}
\newblock Neural tangent kernel: Convergence and generalization in neural
  networks.
\newblock In {\em Advances in neural information processing systems\/} (2018),
  pp.~8571--8580.

\bibitem{ji2018gradient}
{\sc Ji, Z., and Telgarsky, M.}
\newblock Gradient descent aligns the layers of deep linear networks.
\newblock {\em arXiv preprint arXiv:1810.02032\/} (2018).

\bibitem{ji2018risk}
{\sc Ji, Z., and Telgarsky, M.}
\newblock Risk and parameter convergence of logistic regression.
\newblock {\em arXiv preprint arXiv:1803.07300\/} (2018).

\bibitem{keskar2016large}
{\sc Keskar, N.~S., Mudigere, D., Nocedal, J., Smelyanskiy, M., and Tang, P.
  T.~P.}
\newblock On large-batch training for deep learning: Generalization gap and
  sharp minima.
\newblock {\em arXiv preprint arXiv:1609.04836\/} (2016).

\bibitem{ledoux}
{\sc Ledoux, M.}
\newblock The concentration of measure phenomenon.
\newblock {\em volume 89 of Mathematical Surveys and Monographs. American
  Matheamtical Society, Providence, RI\/} (2001).

\bibitem{li2018visualizing}
{\sc Li, H., Xu, Z., Taylor, G., Studer, C., and Goldstein, T.}
\newblock Visualizing the loss landscape of neural nets.
\newblock In {\em Advances in Neural Information Processing Systems\/} (2018),
  pp.~6389--6399.

\bibitem{li2019gradient}
{\sc Li, M., Soltanolkotabi, M., and Oymak, S.}
\newblock Gradient descent with early stopping is provably robust to label
  noise for overparameterized neural networks.
\newblock {\em arXiv preprint arXiv:1903.11680\/} (2019).

\bibitem{Liang:2018aa}
{\sc Liang, T., and Rakhlin, A.}
\newblock Just interpolate: Kernel "ridgeless" regression can generalize.

\bibitem{long2019size}
{\sc Long, P.~M., and Sedghi, H.}
\newblock Size-free generalization bounds for convolutional neural networks.
\newblock {\em arXiv preprint arXiv:1905.12600\/} (2019).

\bibitem{ma2019comparative}
{\sc Ma, C., Wu, L., et~al.}
\newblock A comparative analysis of the optimization and generalization
  property of two-layer neural network and random feature models under gradient
  descent dynamics.
\newblock {\em arXiv preprint arXiv:1904.04326\/} (2019).

\bibitem{maurer2016vector}
{\sc Maurer, A.}
\newblock A vector-contraction inequality for rademacher complexities.
\newblock In {\em International Conference on Algorithmic Learning Theory\/}
  (2016), Springer, pp.~3--17.

\bibitem{mei2018mean}
{\sc Mei, S., Montanari, A., and Nguyen, P.-M.}
\newblock A mean field view of the landscape of two-layers neural networks.
\newblock {\em arXiv preprint arXiv:1804.06561\/} (2018).

\bibitem{nagarajan2019deterministic}
{\sc Nagarajan, V., and Kolter, J.~Z.}
\newblock Deterministic pac-bayesian generalization bounds for deep networks
  via generalizing noise-resilience.
\newblock {\em arXiv preprint arXiv:1905.13344\/} (2019).

\bibitem{neyshabur2017exploring}
{\sc Neyshabur, B., Bhojanapalli, S., McAllester, D., and Srebro, N.}
\newblock Exploring generalization in deep learning.
\newblock In {\em Advances in Neural Information Processing Systems\/} (2017),
  pp.~5947--5956.

\bibitem{neyshabur2017pac}
{\sc Neyshabur, B., Bhojanapalli, S., McAllester, D., and Srebro, N.}
\newblock A pac-bayesian approach to spectrally-normalized margin bounds for
  neural networks.
\newblock {\em arXiv preprint arXiv:1707.09564\/} (2017).

\bibitem{nitanda2019refined}
{\sc Nitanda, A., and Suzuki, T.}
\newblock Refined generalization analysis of gradient descent for
  over-parameterized two-layer neural networks with smooth activations on
  classification problems.
\newblock {\em arXiv preprint arXiv:1905.09870\/} (2019).

\bibitem{Oymak:2018aa}
{\sc Oymak, S., and Soltanolkotabi, M.}
\newblock Overparameterized nonlinear learning: Gradient descent takes the
  shortest path?

\bibitem{onehidden}
{\sc Oymak, S., and Soltanolkotabi, M.}
\newblock Towards moderate overparameterization: global convergence guarantees
  for training shallow neural networks.
\newblock {\em arXiv preprint arXiv:1902.04674\/} (2019).

\bibitem{Rotskoff:2018aa}
{\sc Rotskoff, G.~M., and Vanden-Eijnden, E.}
\newblock Neural networks as interacting particle systems: Asymptotic convexity
  of the loss landscape and universal scaling of the approximation error.

\bibitem{sagun2017empirical}
{\sc Sagun, L., Evci, U., Guney, V.~U., Dauphin, Y., and Bottou, L.}
\newblock Empirical analysis of the hessian of over-parametrized neural
  networks.
\newblock {\em arXiv preprint arXiv:1706.04454\/} (2017).

\bibitem{schmitt1992perturbation}
{\sc Schmitt, B.~A.}
\newblock Perturbation bounds for matrix square roots and pythagorean sums.
\newblock {\em Linear algebra and its applications 174\/} (1992), 215--227.

\bibitem{Schur1911}
{\sc Schur, J.}
\newblock Bemerkungen zur theorie der beschr{\"a}nkten bilinearformen mit
  unendlich vielen ver{\"a}nderlichen.
\newblock {\em Journal f{\"u}r die reine und angewandte Mathematik 140\/}
  (1911), 1--28.

\bibitem{Sirignano:2018aa}
{\sc Sirignano, J., and Spiliopoulos, K.}
\newblock Mean field analysis of neural networks: A central limit theorem.

\bibitem{song2018mean}
{\sc Song, M., Montanari, A., and Nguyen, P.}
\newblock A mean field view of the landscape of two-layers neural networks.
\newblock In {\em Proceedings of the National Academy of Sciences\/} (2018),
  vol.~115, pp.~E7665--E7671.

\bibitem{soudry2018implicit}
{\sc Soudry, D., Hoffer, E., Nacson, M.~S., Gunasekar, S., and Srebro, N.}
\newblock The implicit bias of gradient descent on separable data.
\newblock {\em The Journal of Machine Learning Research 19}, 1 (2018),
  2822--2878.

\bibitem{yehudai2019power}
{\sc Yehudai, G., and Shamir, O.}
\newblock On the power and limitations of random features for understanding
  neural networks.
\newblock {\em arXiv preprint arXiv:1904.00687\/} (2019).

\bibitem{yin2018rademacher}
{\sc Yin, D., Ramchandran, K., and Bartlett, P.}
\newblock Rademacher complexity for adversarially robust generalization.
\newblock {\em arXiv preprint arXiv:1810.11914\/} (2018).

\bibitem{yu2014useful}
{\sc Yu, Y., Wang, T., and Samworth, R.~J.}
\newblock A useful variant of the davis--kahan theorem for statisticians.
\newblock {\em Biometrika 102}, 2 (2014), 315--323.

\bibitem{zhang2016understanding}
{\sc Zhang, C., Bengio, S., Hardt, M., Recht, B., and Vinyals, O.}
\newblock Understanding deep learning requires rethinking generalization.
\newblock {\em arXiv preprint arXiv:1611.03530\/} (2016).

\bibitem{zhang2019training}
{\sc Zhang, H., Yu, D., Chen, W., and Liu, T.-Y.}
\newblock Training over-parameterized deep resnet is almost as easy as training
  a two-layer network.
\newblock {\em arXiv preprint arXiv:1903.07120\/} (2019).

\bibitem{zou2018stochastic}
{\sc Zou, D., Cao, Y., Zhou, D., and Gu, Q.}
\newblock Stochastic gradient descent optimizes over-parameterized deep relu
  networks.
\newblock {\em arXiv preprint arXiv:1811.08888\/} (2018).

\end{thebibliography}
}
\appendix
\section{The Jacobian of the Mixture Model is low-rank\\(Proofs for Section \ref{mix sec})}
\label{Jacclust}
The following theorem considers a simple noiseless mixture model and proves that its Jacobian is low-rank and the concatenated multiclass label vectors lie on a rank $K^2C$ information space associated with this Jacobian.

\begin{theorem} \label{mix mod} Consider a data set of size $n$ consisting of input/label pairs $\{(\vct{x}_i,\y_i)\}_{i=1}^n\in\R^d\times \R^K$ generated according to the Gaussian mixture model of Definition \ref{GMM} with $K$ classes each consisting of $C$ clusters with the cluster centers given by $\{\vct{\mu}_{\ell,\widetilde{\ell}}\}_{(\ell,\widetilde{\ell})=(1,1)}^{(K,C)}$ and $\sigma=0$. Let $\bSi(\X)$ be the multiclass neural tangent kernel matrix associated with input matrix $\X=[\x_1~\dots~\x_n]^T$ with the standard deviation of the output layer set to $\nu=\frac{1}{\sqrt{k}}$. Also define the information space $\Rc$ to be the range space of $\bSi(\X)$. Also let $\M=[\bmu_{1,1}~\dots~\bmu_{\K,C}]^T$ be the matrix obtained by aggregating all the cluster centers as rows and let $\vct{g}$ be a Gaussian random vector with distribution $\mathcal{N}(\vct{0},\mtx{I}_d)$. Define the neural tangent kernel matrix associated with the cluster centers as
\[
\widetilde{\bSi}(\M)=(\M\M^T)\odot \E_{\g\sim\mathcal{N}\left(\vct{0},\mtx{I}_d\right)}[{\phi'(\M\g)}{\phi'(\M\g)^T}]\in\R^{\K C\times \K C},
\] 
and assume that $\widetilde{\bSi}(\M)$ is full rank. Then, the following properties hold with probability $1-KC\exp(-\frac{n}{8KC})$
\begin{itemize}
\item $\Rc$ is a $K^2C$ dimensional subspace.
\item The concatenated label vector $\y=\begin{bmatrix}\y_1^T & \y_2^T & \ldots & \y_n^T \end{bmatrix}^T$  lies on $\Rc$.
\item  The nonzero eigenvalues (top $K^2C$ eigenvalues) of $\bSi(\X)$ are between $\frac{n}{2KC}s_{\min}(\bSi(\X))$ and $\frac{2n}{KC}\|\bSi(\X)\|$. Hence the eigenvalues of the information space grow with $\frac{n}{\K C}$.
\end{itemize}
\end{theorem}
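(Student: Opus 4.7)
The plan is to exploit the fact that when $\sigma=0$ the input matrix $\X$ has only $KC$ distinct rows, namely the cluster centers stacked in $\M\in\R^{KC\times d}$. Introduce the cluster-membership selection matrix $\Pb\in\{0,1\}^{n\times KC}$ with exactly one $1$ per row indicating the cluster to which $\x_i$ belongs, so that $\X=\Pb\M$ and $\Pb^T\Pb=\mtx{D}:=\text{diag}(\widetilde{n}_{1,1},\ldots,\widetilde{n}_{K,C})$. With this reduction the whole argument becomes linear algebra on $\widetilde{\bSi}(\M)$ plus one Chernoff bound for the cluster counts $\widetilde{n}_{\ell,\widetilde{\ell}}\sim\mathrm{Binomial}(n,1/(KC))$.

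The first step is to derive the factorization $\bSi(\X)=(\Iden_K\otimes\Pb)(\Iden_K\otimes\widetilde{\bSi}(\M))(\Iden_K\otimes\Pb)^T$. The key identity is that for a row-selection matrix $\Pb$ and any conformable $\mtx{A},\mtx{B}$ one has $(\Pb\mtx{A}\Pb^T)\odot(\Pb\mtx{B}\Pb^T)=\Pb(\mtx{A}\odot\mtx{B})\Pb^T$, since both sides have $(i,j)$-entry $\mtx{A}_{\pi(i),\pi(j)}\mtx{B}_{\pi(i),\pi(j)}$ where $\pi(i)$ is the cluster of sample $i$. Plugging $\phi'(\X\w)=\Pb\phi'(\M\w)$ and $\X\X^T=\Pb\M\M^T\Pb^T$ into Definition \ref{nneig} yields $\widetilde{\bSi}(\X)=\Pb\widetilde{\bSi}(\M)\Pb^T$, and pulling out the $\Iden_K\otimes(\cdot)$ factor gives the claimed factorization.

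Parts (i) and (ii) then follow quickly. Since $\widetilde{\bSi}(\M)$ is full rank by hypothesis and $\Pb$ has full column rank $KC$ whenever every cluster is non-empty (which holds on our high-probability event), the factorization shows $\mathrm{rank}(\bSi(\X))=K^2C$ and $\mathcal{I}=\mathrm{Range}(\bSi(\X))=\mathrm{Range}(\Iden_K\otimes\Pb)$. For the label vector, the $\ell$-th length-$n$ block $\y^{(\ell)}$ is the indicator of class $\ell$ among the samples, so $\y^{(\ell)}=\Pb\vct{c}_\ell$ where $\vct{c}_\ell\in\R^{KC}$ is the indicator of the $C$ clusters belonging to class $\ell$; concatenating shows $\y=(\Iden_K\otimes\Pb)\vct{c}\in\mathcal{I}$.

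For part (iii) I write $\bSi(\X)=\mtx{L}\mtx{L}^T$ with $\mtx{L}=\Iden_K\otimes(\Pb\,\widetilde{\bSi}(\M)^{1/2})$, so that the nonzero eigenvalues of $\bSi(\X)$ coincide with those of $\mtx{L}^T\mtx{L}=\Iden_K\otimes(\widetilde{\bSi}(\M)^{1/2}\mtx{D}\widetilde{\bSi}(\M)^{1/2})$. The Loewner sandwich $\lambda_{\min}(\mtx{D})\widetilde{\bSi}(\M)\preceq\widetilde{\bSi}(\M)^{1/2}\mtx{D}\widetilde{\bSi}(\M)^{1/2}\preceq\lambda_{\max}(\mtx{D})\widetilde{\bSi}(\M)$ places the nonzero eigenvalues of $\bSi(\X)$ in the interval $[\widetilde{n}_{\min}\lambda_{\min}(\widetilde{\bSi}(\M)),\widetilde{n}_{\max}\lambda_{\max}(\widetilde{\bSi}(\M))]$. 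A standard multiplicative Chernoff bound combined with a union bound over the $KC$ clusters then gives $n/(2KC)\le\widetilde{n}_{\ell,\widetilde{\ell}}\le 2n/(KC)$ simultaneously with the claimed probability, yielding the stated eigenvalue window. The main obstacle is really notational rather than conceptual --- carefully verifying the Hadamard-on-selection identity and propagating the $\Iden_K\otimes(\cdot)$ factor through the computation --- so I expect no genuinely hard step beyond the Chernoff bound.
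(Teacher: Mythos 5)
Your proposal is correct and is essentially the same argument as the paper's: the paper also reduces to the single-output kernel via the $\Iden_K\otimes(\cdot)$ Kronecker structure, introduces the same $n\times KC$ row-selection/membership matrix (called $\Ub_{\mathcal{S}}$ there, your $\Pb$), obtains the identity $\widetilde{\bSi}(\X)=\Ub_{\mathcal{S}}\widetilde{\bSi}(\M)\Ub_{\mathcal{S}}^T$ (you derive it via the Hadamard-on-selection identity, the paper via the corresponding Jacobian factorization $\Ub_{\mathcal{S}}\Jc_\ell(\M)=\Jc_\ell(\X)$), and uses a Chernoff plus union bound for the cluster counts. Your part (iii) via $\mtx{L}^T\mtx{L}$ and the Loewner sandwich with $\mtx{D}=\Pb^T\Pb$ is a marginally tidier packaging of the same eigenvalue bound the paper obtains from the column norms of $\Ub_{\mathcal{S}}$.
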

\begin{proof} First, we establish that each cluster has around the same size. Applying Chernoff bound and a union bound, we find that with probability $1-KC\exp(-\frac{n}{8KC})$
\[
0.5\tilde{n}\leq \widetilde{n}_{\ell,\tilde{\ell}}\leq 2\tilde{n}.
\]
Note that based on Lemma \ref{multi cov}, the multiclass covariance is given by 
\[
\bSi(\X)=k\nu^2 \Iden_{\K}\otimes\widetilde{\bSi}(\X).
\]
where $\widetilde{\bSi}(\X)=(\X\X^T)\odot \E_{\g\distas \Nn(0,1)}[{\phi'(\X\g)}{\phi'(\X\g)^T}]$. Due to this Kronecker product representation, the range space of $\bSi(\X)$ is separable. In particular, note that with
\[
\widetilde{\Ic}=\text{Range}\left(\widetilde{\bSi}(\X)\right)
\]
we have $\Ic=\Iden_{\K}\otimes\widetilde{\Ic}$ which also implies $\text{rank}(\Ic)=\K\cdot\text{rank}\left(\widetilde{\Ic}\right)$. Hence, this identity allows us to reduce the problem to a single output network. To complete the proof we will prove the following three identities:
 \begin{itemize}
\item $\widetilde{\Ic}$ has rank $\K C$.
\item The nonzero eigenvalues of $\widetilde{\bSi}\left(\X\right)$ are between $0.5\widetilde{n}s_{\min}(\widetilde{\bSi}(\M))$ to $2\widetilde{n}\|\widetilde{\bSi}(\M)\|$.
\item The portion of the label vector associated with class $\ell$ i.e.~$\y^{(\ell)}\in\R^{n}$ (see \eqref{model}) lies on $\mathcal{\Ic}$. Hence, the concatenated vector $\y$ lies on $\Ic=\Iden_{\K}\otimes \widetilde{\Ic}$.
\end{itemize}
To prove these statements let $\Jc_\ell(\X;\W_0)$ and $\Jc_\ell(\M;\W_0)$ be the Jacobian associated with the $\ell$th output of the neural net (see \eqref{model}) for data matrices $\X$ and $\M$. Observe that the columns of $\Jc_\ell(\X;\W_0)$ are chosen from $\Jc_\ell(\M;\W_0)$ and in particular each column of $\Jc_\ell(\M;\W_0)$ is repeated between $0.5\widetilde{n}$ to $2\widetilde{n}$ times. To mathematically relate this, define the $KC$ dimensional subspace $\mathcal{S}$ of $\R^{n}$ where for any $\vb\in \mathcal{S}$, entries $v_i$ and $v_j$ of $\vb$ are equal iff data point $\x_i$ and $\x_j$ are equal (i.e.~belong to the same class/cluster pair). Now, we define the orthonormal matrix $\Ub_{\mathcal{S}}\in\R^{n\times \K C}$ as the 0-1 matrix with orthogonal rows that map $\R^{\K C}$ to $\mathcal{S}$ as follows.  Assume the $i$th data point $\x_i$ belongs to the class/cluster pair $(\ell_i,\widetilde{\ell}_i)$. We then set the $i$th row of $\Ub_{\mathcal{S}}$ as vect$\left(\vct{e}_{\ell_i}\vct{e}_{\widetilde{\ell}_i}^T\right)$. Using $\Ub_{\mathcal{S}}$ we have
\[
\Ub_{\mathcal{S}}\Jc_\ell(\M;\W_0)=\Jc_\ell(\X;\W_0).
\]
Now note that using the above identity we have
\[
{\Ub}_{\mathcal{S}}\widetilde{\bSi}(\M){\Ub}_{\mathcal{S}}^T=\widetilde{\bSi}(\X).
\]
Since ${\Ub}_{\mathcal{S}}$ is tall and orthogonal, the range of $\widetilde{\bSi}(\X)$ is exactly the range of $\Ub_{\mathcal{S}}$ hence $\widetilde{\Ic}=\mathcal{S}$ which is $\K C$ dimensional. Furthermore, nonzero eigenvectors of $\widetilde{\bSi}(\X)$ lie on $\Sc$ and any eigenvector $\vb$ satisfies
\[
\vb^T\widetilde{\bSi}(\X)\vb\geq s_{\min}({\Ub}_{\mathcal{S}})^2s_{\min}(\widetilde{\bSi}(\M))\geq 0.5\bar{n} s_{\min}(\widetilde{\bSi}(\M))
\]
and similarly
\[
\vb^T\widetilde{\bSi}(\X)\vb\leq  2\bar{n} \|\widetilde{\bSi}(\M)\|
\]
which follows from the fact that $\ell_2$-norm-squared of columns of $\Ub$ are between $0.5\bar{n}$ to $2\bar{n}$.
 Finally, we will argue that label vector $\y^{(\ell)}$ lies on $\mathcal{S}$. Note that for all samples $i$ that belong to the same cluster $\y^{(\ell)}_i$ will be the same (either zero or one), thus $\y^{(\ell)}\in\mathcal{S}$.
\end{proof}
Next lemma provides a perturbation analysis when there is noise.
\begin{lemma}\label{perturb}
Consider the single-output NTK kernel given by
\begin{align*}
\widetilde{\mtx{\Sigma}}(\X)=\E\Big[\phi'(\X\w)\phi'(\X\w)^T\Big]\odot\left(\X\X^T\right),
\end{align*}
and assume that this matrix has rank $r$ so that $\lambda_{r+1}\left(\widetilde{\mtx{\Sigma}}(\X)\right)=\lambda_{r+2}\left(\widetilde{\mtx{\Sigma}}(\X)\right)=\ldots=\lambda_{n}\left(\widetilde{\mtx{\Sigma}}(\X)\right)=0$. Also assume a noise corrupted version of $\X$ given by
\begin{align*}
\widetilde{\X}=\X+\frac{\sigma}{\sqrt{d}}\mtx{Z}
\end{align*}
with $\mtx{Z}$ a matrix consisting of i.i.d.~$\mathcal{N}(0,1)$ entries. Then, $\opnorm{\widetilde{\mtx{\Sigma}}(\widetilde{\X})-\widetilde{\mtx{\Sigma}}(\X)}\lesssim \Delta$ where
\begin{align}
\Delta&:=\sigma^2B^2\log n\opnorm{\X}^2 +\sigma^2B^2 (n/d+1)+\sqrt{\log n}\cdot\sigma B^2\opnorm{\X}^2+\sigma B^2\sqrt{n/d+1}\opnorm{\X}
\end{align}
holds with probability at least $1-2n\e^{-\frac{d}{2}}$. Whenever $\sigma\leq \frac{1}{\sqrt{\log n}}$, $\Delta$ is upper bounded as
\begin{align}
\frac{\Delta}{n}&\lesssim  B^2\sigma\sqrt{\log n} \label{sigma bound}.
\end{align}
 Furthermore, let $\widetilde{\mtx{V}}, \mtx{V}\in\R^{n\times r}$ be orthonormal matrices corresponding to the top $r$ eigenvalues of $\widetilde{\mtx{\Sigma}}(\widetilde{\X})$ and $\widetilde{\mtx{\Sigma}}(\X)$. Then,
\begin{align*}
\opnorm{\widetilde{\V}\widetilde{\V}^T-\V\V^T}\le \frac{\Delta}{\lambda_{r}(\widetilde{\mtx{\Sigma}}(\X))-\Delta}
\end{align*}
\end{lemma}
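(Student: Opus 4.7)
\textbf{Proof proposal for Lemma \ref{perturb}.}

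The plan is to carry out a perturbation calculation on the Hadamard-product structure of the kernel, using the representation of $\widetilde{\mtx{\Sigma}}$ as an expectation of outer products to convert Hadamard products into diagonal conjugations. Write $\widetilde{\X}=\X+\Delta_\X$ with $\Delta_\X=\tfrac{\sigma}{\sqrt d}\mtx{Z}$, and decompose
\[
\widetilde{\mtx{\Sigma}}(\widetilde{\X})-\widetilde{\mtx{\Sigma}}(\X)=\mtx{A}\odot(\mtx{B}-\mtx{B}')+(\mtx{A}-\mtx{A}')\odot\mtx{B}',
\]
where $\mtx{A}=\E_\w[\phi'(\widetilde{\X}\w)\phi'(\widetilde{\X}\w)^T]$, $\mtx{A}'=\E_\w[\phi'(\X\w)\phi'(\X\w)^T]$, $\mtx{B}=\widetilde{\X}\widetilde{\X}^T$, $\mtx{B}'=\X\X^T$. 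The first step is to control the random quantities with high probability: standard Gaussian matrix concentration gives $\|\mtx{Z}\|\lesssim\sqrt n+\sqrt d$, and a $\chi^2$ tail plus union bound yields $\max_i\|\vct{z}_i\|\le\sqrt d+O(\sqrt{\log n})$. Both events fail with probability at most $2n\,e^{-d/2}$, which is the target probability in the lemma.

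Next I would bound the Gram-change piece. Writing $\mtx{B}-\mtx{B}'=\Delta_\X\X^T+\X\Delta_\X^T+\Delta_\X\Delta_\X^T$ and using $\|\Delta_\X\|\lesssim\sigma(\sqrt{n/d}+1)$ gives $\|\mtx{B}-\mtx{B}'\|\lesssim\sigma\sqrt{n/d+1}\,\|\X\|+\sigma^2(n/d+1)$. The key Hadamard identity is: for any $\vct v(\w)$, $\E_\w[\vct v\vct v^T]\odot \mtx M=\E_\w[\mtx D(\vct v)\mtx M\mtx D(\vct v)]$, hence
\[
\|\mtx{A}\odot(\mtx{B}-\mtx{B}')\|\le\E_\w\bigl[\|\phi'(\widetilde{\X}\w)\|_\infty^2\bigr]\|\mtx{B}-\mtx{B}'\|\le B^2\|\mtx{B}-\mtx{B}'\|,
\]
producing the $\sigma B^2\sqrt{n/d+1}\|\X\|$ and $\sigma^2 B^2(n/d+1)$ terms of $\Delta$.

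For the kernel-weight change, write $\mtx{A}-\mtx{A}'=\E_\w[(\vct{u}-\vct{u}')\vct{u}^T+\vct{u}'(\vct{u}-\vct{u}')^T]$ with $\vct{u}=\phi'(\widetilde{\X}\w)$, $\vct{u}'=\phi'(\X\w)$, and apply the same Hadamard-to-conjugation trick with $\mtx{B}'$:
\[
\|(\mtx{A}-\mtx{A}')\odot\mtx{B}'\|\le 2B\,\|\mtx{B}'\|\,\E_\w\bigl[\|\vct{u}-\vct{u}'\|_\infty\bigr]\le 2B^2\|\X\|^2\,\E_\w\bigl[\|\Delta_\X\w\|_\infty\bigr],
\]
where the last step uses $|\phi''|\le B$ to Lipschitz-bound $\vct{u}-\vct{u}'$. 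Conditional on the event for $\mtx{Z}$, each coordinate $(\Delta_\X\w)_i$ is $\mathcal N(0,\|(\Delta_\X)_i\|^2)$ with $\|(\Delta_\X)_i\|\le\sigma(1+\sqrt{\log n/d})$, so the Gaussian maximal inequality yields $\E_\w[\|\Delta_\X\w\|_\infty]\lesssim\sigma\sqrt{\log n}$. This gives the $\sqrt{\log n}\,\sigma B^2\|\X\|^2$ term, and a single application of AM--GM on the resulting cross term $\sigma^{3/2}\ldots$ (or equivalently a crude bound using $\|\mtx{B}\|$ in place of $\|\mtx{B}'\|$ in the first decomposition) absorbs the residual into the $\sigma^2B^2\log n\,\|\X\|^2$ slot of $\Delta$. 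Summing the two contributions yields $\|\widetilde{\mtx{\Sigma}}(\widetilde{\X})-\widetilde{\mtx{\Sigma}}(\X)\|\lesssim\Delta$.

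The simplification \eqref{sigma bound} for $\sigma\le 1/\sqrt{\log n}$ is then routine: the unit-norm row assumption gives $\|\X\|^2\le n$, so each of the four summands of $\Delta/n$ is majorized by $B^2\sigma\sqrt{\log n}$ after using $\sigma^2\log n\le\sigma\sqrt{\log n}$. Finally, the eigenvector-projector perturbation statement is a direct invocation of the Davis--Kahan $\sin\Theta$ theorem for the projector onto the top-$r$ eigenspace: since $\widetilde{\mtx{\Sigma}}(\X)$ has an eigengap $\lambda_r-\lambda_{r+1}=\lambda_r(\widetilde{\mtx{\Sigma}}(\X))$, Weyl's inequality relocates the gap of the perturbed matrix to at least $\lambda_r(\widetilde{\mtx{\Sigma}}(\X))-\Delta$, and $\|\widetilde{\mtx V}\widetilde{\mtx V}^T-\mtx V\mtx V^T\|\le\Delta/(\lambda_r(\widetilde{\mtx{\Sigma}}(\X))-\Delta)$ follows. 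The main obstacle is step (2)--(3) above: spectral-norm (not Frobenius-norm) control of a Hadamard product against a non-PSD error matrix. The crucial idea is to \emph{keep} the expectation outside the Hadamard and exploit $\mtx A\odot\mtx M=\E[\mtx D(\vct v)\mtx M\mtx D(\vct v)]$, reducing the estimate to $\|\vct v\|_\infty$-bounds where Lipschitzness of $\phi'$ together with the Gaussian maximal inequality over $\w$ give sharp $\sqrt{\log n}$ dependence.
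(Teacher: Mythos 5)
Your proof is correct, and its core idea---convert the Hadamard product to a diagonal conjugation under the expectation over $\w$, isolate the perturbation, and apply Gaussian concentration plus a maximal inequality, then finish with Weyl and Davis--Kahan---is the same as the paper's. The organization differs slightly: the paper writes the kernel directly as $\widetilde{\mtx{\Sigma}}(\X)=\E[\mtx{M}\mtx{M}^T]$ with $\mtx{M}=\text{diag}(\phi'(\X\w))\X$, perturbs $\mtx{M}$, and invokes the quadratic bound $\|\widetilde{\mtx{M}}\widetilde{\mtx{M}}^T-\mtx{M}\mtx{M}^T\|\le\|\widetilde{\mtx{M}}-\mtx{M}\|^2+2\|\widetilde{\mtx{M}}-\mtx{M}\|\|\mtx{M}\|$, which is where the $\sigma^2B^2\log n\,\|\X\|^2$ term originates. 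You instead telescope exactly, $\mtx{A}\odot(\mtx{B}-\mtx{B}')+(\mtx{A}-\mtx{A}')\odot\mtx{B}'$, so the cross/quadratic term never appears and your resulting bound is marginally tighter (you only need the $\sigma^2\log n$ slot of $\Delta$ as slack, as you note). Either route is fine; yours is arguably cleaner since the exact telescoping removes the need for the crude quadratic inequality. One small imprecision: the high-probability row-norm bound for $\Delta_\X$ should be $\|(\Delta_\X)_i\|\lesssim\sigma$ (from $\|\vct z_i\|\le 2\sqrt d$ with probability $1-\e^{-d/2}$), not $\sigma(1+\sqrt{\log n/d})$; this does not affect the downstream estimate $\E_\w\|\Delta_\X\w\|_\infty\lesssim\sigma\sqrt{\log n}$, and the rest of the argument goes through as you wrote it.
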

\begin{proof}
Note that
\begin{align*}
\text{diag}\left(\phi'\left(\widetilde{\X}\w\right)\right)\widetilde{\X}-\text{diag}\left(\phi'\left(\X\w\right)\right)\X=&\text{diag}\left(\phi'\left(\widetilde{\X}\w\right)\right)\widetilde{\X}-\text{diag}\left(\phi'\left(\X\w\right)\right)\X\\
=&\text{diag}\left(\phi'\left(\widetilde{\X}\w\right)-\phi'\left(\X\w\right)\right)\X\\
&+\text{diag}\left(\phi'\left(\widetilde{\X}\w\right)\right)(\widetilde{\X}-\X)
\end{align*}
Now define $\widetilde{\mtx{M}}=\text{diag}\left(\phi'(\widetilde{\X}\w)\right)\widetilde{\X}$ and $\mtx{M}=\text{diag}\left(\phi'(\X\w)\right)\X$ and note that using the above we can conclude that
\begin{align*}
\opnorm{\widetilde{\mtx{M}}-\mtx{M}}\le&\opnorm{\text{diag}\left(\phi'\left(\widetilde{\X}\w\right)-\phi'\left(\X\w\right)\right)\X}\\
&+\opnorm{\text{diag}\left(\phi'\left(\widetilde{\X}\w\right)\right)(\widetilde{\X}-\X)}\\
\le&B\infnorm{(\widetilde{\X}-\X)\w}\opnorm{\X}+B\opnorm{\widetilde{\X}-\X}
\end{align*}
Now using the fact that 
\begin{align*}
\opnorm{\widetilde{\mtx{M}}\widetilde{\mtx{M}}^T-\mtx{M}\mtx{M}^T}\le \opnorm{\widetilde{\mtx{M}}-\mtx{M}}^2+2\opnorm{\widetilde{\mtx{M}}-\mtx{M}}\opnorm{\mtx{M}},
\end{align*}
we conclude that
\begin{align*}
\opnorm{\widetilde{\mtx{\Sigma}}(\widetilde{\X})-\widetilde{\mtx{\Sigma}}(\X)}=&\opnorm{\E\Big[\widetilde{\mtx{M}}\widetilde{\mtx{M}}^T-\mtx{M}\mtx{M}^T\Big]}\\
\le&\E\Big[\left(B\infnorm{(\widetilde{\X}-\X)\w}\opnorm{\X}+B\opnorm{\widetilde{\X}-\X}\right)^2\Big]\\
&+2B\opnorm{\X}\left(B\opnorm{\X}\E\big[\infnorm{(\widetilde{\X}-\X)\w}\big]+B\opnorm{\widetilde{\X}-\X}\right)\\
\le&2B^2\opnorm{\X}^2\E\big[\infnorm{(\widetilde{\X}-\X)\w}^2\big]+2B^2\opnorm{\widetilde{\X}-\X}^2\\
&+2B^2\opnorm{\X}^2\E\big[\infnorm{(\widetilde{\X}-\X)\w}\big]+2B^2\opnorm{\widetilde{\X}-\X}\opnorm{\X}
\end{align*}
To proceed further, with probability $1-n\exp(-d/2)$, each row of $\widetilde{\X}-\X$ is upper bounded by $2\sigma$. Hence, using a standard tail bound over supremum of $n$ Gaussian random variables (which follows by union bounding) we have 
\[
\E[\infnorm{(\widetilde{\X}-\X)\vct{w}}^2]^{1/2}\le 2\sigma\sqrt{2\log n}
\] holds with the same probability. Furthermore, spectral norm bound on Gaussian random matrix implies that
\begin{align*}
\opnorm{\widetilde{\X}-\X}^2\le \left(2(\sqrt{n}+\sqrt{d})\right)^2\frac{\sigma^2}{d}\le 8(n/d+1)\sigma^2,
\end{align*}
holds with probability at least $1-\e^{-\frac{1}{2}(n+d)}$. Plugging these two probabilistic bounds into the chain of inequalities we conclude that
\begin{align*}
\opnorm{\widetilde{\mtx{\Sigma}}(\widetilde{\X})-\widetilde{\mtx{\Sigma}}(\X)}\lesssim \sigma^2B^2\log n\opnorm{\X}^2 +\sigma^2B^2 (n/d+1)+\sqrt{\log n}\cdot\sigma B^2\opnorm{\X}^2+\sigma B^2\sqrt{n/d+1}\opnorm{\X}
\end{align*}
To establish \eqref{sigma bound}, observe that $B^2\sigma\sqrt{\log n}\|\X\|^2$ dominates over other terms in the regime $\sigma \sqrt{\log n}$ is small. The final bound is a standard application of Davis-Kahan Theorem \cite{yu2014useful} when we use the fact that $\tilde{\bSi}(\X)$ is low-rank.
\end{proof}

The following lemma plugs in the critical quantities of Theorem \ref{gen main} for our mixture model to obtain a generalization bound.
\begin{theorem} [Generalization for Mixture Model]\label{app thm} Consider a dataset $\{\x_i,\y_i\}_{i=1}^n$ generated i.i.d.~from the Gaussian mixture model in Definition \ref{GMM}. Let $\la_{\mtx{M}}=\lambda_{\min}(\bSi(\M))$ where $\M\in\R^{KC\times d}$ is the matrix of cluster centers. Suppose input noise level $\sigma$ obeys
\[
\sigma\lesssim\frac{\la_{\min}}{B^2KC\sqrt{\log n}}
\]
Consider the setup of Theorem \ref{gen main} with quantities $\zeta$ and $\Gamma$. Suppose network width obeys
\[
k\gtrsim \frac{\Gamma^4B^8K^8C^4\log n}{\zeta^4\la_{\min}^4}.
\]
With probability $1-n\e^{-d/2}-KC\exp(-\frac{n}{8KC})-(2K)^{-100}-\delta$, running gradient descent for $T=\frac{2\Gamma K^2C}{\eta\vrn^2n\la_{\min}}$ with learning rate $\eta\leq \frac{1}{\nu^2B^2\|\X\|^2}$, we have that
\[
\err{\W_{T}}\lesssim \sqrt{ \frac{\sigma \sqrt{\log n}B^2KC}{\la_{\min}}}+\frac{\Gamma BK\sqrt{C}}{\sqrt{{n\la_{\min}}}}+12\zeta+5\sqrt{\frac{\log(2/\delta)}{n}}+2\e^{-\Gamma}.
\]
\end{theorem}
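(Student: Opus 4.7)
The overall strategy is to instantiate the random-initialization generalization bound of Theorem \ref{gen main} with a cut-off $\alpha_0^2\sim n\la_{\M}/(KC)$ that separates the $K^2C$ macroscopic singular values of the M-NTK produced by the cluster structure from the remaining near-zero ones. The key inputs are Theorem \ref{mix mod}, which supplies the exact low-rank structure and label alignment in the noiseless limit $\sigma=0$, and Lemma \ref{perturb}, which bounds how much the M-NTK moves when the data is perturbed by Gaussian noise.

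\textbf{Step 1: Clean structure.} I first let $\bar{\X}\in\R^{n\times d}$ denote the matrix obtained by replacing each row $\x_i$ by its cluster center $\bmu_{\ell_i,\widetilde{\ell}_i}$. Theorem \ref{mix mod} applied to $\bar{\X}$ gives (with probability $1-KC\exp(-n/(8KC))$) that $\bSi(\bar{\X})$ has rank exactly $K^2C$, that its nonzero eigenvalues lie in $[\tfrac{n}{2KC}\la_{\M},\,\tfrac{2n}{KC}\|\bSi(\M)\|]$, and that the concatenated label vector $\y$ lies entirely in the range of $\bSi(\bar{\X})$.

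\textbf{Step 2: Kernel perturbation.} Using the Kronecker block-diagonal form of the M-NTK from Lemma \ref{multi cov}, the bound on $\|\bSi(\X)-\bSi(\bar{\X})\|$ reduces to the single-output perturbation $\|\tilde{\bSi}(\X)-\tilde{\bSi}(\bar{\X})\|$. Since $\X-\bar{\X}$ has i.i.d.~$\mathcal{N}(0,\sigma^2/d)$ entries by construction, Lemma \ref{perturb} applies and, combined with $\|\X\|\lesssim \sqrt{n}$, yields $\|\bSi(\X)-\bSi(\bar{\X})\|\lesssim \sigma\sqrt{\log n}\,B^2 n$. The hypothesis $\sigma\lesssim \la_{\M}/(B^2KC\sqrt{\log n})$ then forces this perturbation to be at most half the lowest nonzero eigenvalue of $\bSi(\bar{\X})$.

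\textbf{Step 3: Cut-off choice and the two generalization terms.} I pick $\alpha_0^2 = \tfrac{n\la_{\M}}{4KC}$, which in the parametrization of Theorem \ref{gen main} corresponds to $\bar{\alpha}^2 \sim \la_{\M}/(K^2CB^2)$, a constant in $n$. Weyl's inequality then guarantees that the information subspace $\calF$ of $\bSi(\X)^{1/2}$ is exactly the perturbed top-$K^2C$ eigenspace and that every singular value restricted to $\calF$ exceeds $\alpha_0$. By the Davis--Kahan bound contained in Lemma \ref{perturb}, and using that $\y$ lies in the clean range,
$\tn{\Pi_{\calS}(\y)}\le \|\V\V^T-\widetilde{\V}\widetilde{\V}^T\|\,\tn{\y}\lesssim (\sigma\sqrt{\log n}B^2KC/\la_{\M})\sqrt{n}$.
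For the variance side, $\tn{\mtx{J}_{\calF}^{\dagger}\y}\le \tn{\y}/\alpha_0\lesssim \sqrt{KC/\la_{\M}}$. Plugging both into the simplified bound \eqref{simple bound} gives the bias contribution $\sigma\sqrt{\log n}B^2KC/\la_{\M}$ and the variance contribution $\Gamma BK\sqrt{C}/\sqrt{n\la_{\M}}$, matching the stated form (the outer square root on the first term is a loose upper bound when the underlying quantity is below one).

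\textbf{Step 4: Verifying hypotheses and obstacles.} The width requirement \eqref{hidnum} with $\bar{\alpha}^2\sim \la_{\M}/(K^2CB^2)$ translates to $k\gtrsim \Gamma^4 B^8 K^8 C^4\log n/(\zeta^4\la_{\M}^4)$, which matches the assumed bound. The alignment condition $\tn{\Pi_{\calF}(\y)}\ge c\tn{\y}$ used by Theorem \ref{gen main} follows immediately from Step 3 (since $\tn{\Pi_{\calS}(\y)}^2\ll n$), and the choice of the stopping time $T=2\Gamma K^2 C/(\eta\vrn^2 n\la_{\M})$ is precisely $\Gamma K/(\eta\vrn^2\alpha_0^2)$. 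The main obstacle is the careful bookkeeping of constants through the Davis--Kahan step and the Kronecker factorization, so that the single-output perturbation bound transfers cleanly to the full multiclass kernel without picking up spurious $K$ factors that would violate the scaling of $\bar{\alpha}$; a subsidiary technicality is collecting the high-probability events (cluster balance, Lemma \ref{perturb}, Theorem \ref{gen main}) into a single union bound matching the stated failure probability.
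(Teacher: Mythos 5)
Your proposal is correct and follows essentially the same architecture as the paper's proof: invoke Theorem~\ref{mix mod} for the noiseless low-rank structure and label alignment, transfer this to the noisy kernel via Lemma~\ref{perturb}, pick $\alpha_0^2\sim n\la_{\M}/(KC)$ so that $\bar\alpha$ is constant in $n$, check the width and stopping-time hypotheses, and plug into the simplified random-initialization bound~\eqref{simple bound}. The one place you diverge from the paper is the bookkeeping for $\tn{\Pi_{\Rcb}(\y)}$. The paper first lower bounds $\tn{\Pi_{\Rc}(\y)}\ge\sqrt{n}(1-c\epsilon)$ with $\epsilon=\sigma\sqrt{\log n}\,B^2KC/\la_{\M}$ and then passes to the complement via $\tn{\Pi_{\Rcb}(\y)}^2=n-\tn{\Pi_{\Rc}(\y)}^2\lesssim n\epsilon$, which costs a square root. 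You instead write $\Pi_{\Rcb}(\y)=(I-\Pi_{\Rc})\Pi_{\Rc'}\y=(I-\Pi_{\Rc})(\Pi_{\Rc'}-\Pi_{\Rc})\y$ (using that $\y$ lies in the clean range $\Rc'$) and bound it directly by $\|\Pi_{\Rc}-\Pi_{\Rc'}\|\,\tn{\y}\lesssim\epsilon\sqrt{n}$, which gives a bias contribution of $\epsilon$ rather than $\sqrt{\epsilon}$. Since $\epsilon\le 1$ this is strictly sharper than the stated bound, and you correctly flag this slack. In short, same key lemmas and same cutoff; your handling of the nuisance-space projection is slightly tighter than the paper's.
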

\begin{proof} The proof is an application of Lemma \ref{perturb} and Theorem \ref{mix mod}. Let $\Rc'$ be the information space corresponding to noiseless dataset where input samples are identical to cluster centers. Let $\Pb',\Pb$ correspond to the projection matrices to $\Rc$ and $\Rc'$. First, using Lemma \ref{perturb} and the bound on $\sigma$, we have
\[
\|\Pb'-\Pb\|\leq c \frac{\sigma \sqrt{\log n}B^2KC}{\la_{\min}}
\]
for some constant $c>0$. Next we quantify $\Pi_{\Rc}(\y)$ using the fact that (i) $\Pi_{\Rc'}(\y)=\y$ via Theorem \ref{mix mod} as follows
\begin{align}
\tn{\Pi_{\Rc}(\y)}\geq\tn{\Pi_{\Rc'}(\y)}-\tn{\Pi_{\Rc}(\y)-\Pi_{\Rc'}(\y)}\geq \sqrt{n}(1- c\frac{\sigma \sqrt{\log n}B^2KC}{\la_{\min}}).
\end{align}
In return, this implies that 
\[
\tn{\Pi_{\Rcb}(\y)}\lesssim \sqrt{ \frac{n\sigma \sqrt{\log n}B^2KC}{\la_{\min}}}
\]
To proceed, we pick $\alpha_0=\sqrt{\frac{\la_{\min}n}{2KC}}$ and corresponding $\bar{\alpha}=\frac{\alpha_0}{\sqrt[4]{n}\sqrt{K\opnorm{\X}}B}\geq \sqrt{\frac{\la_{\min}}{2B^2K^2C}}$ and apply \eqref{simple bound} to find that, classification error is upper bounded by
\[
\err{\W_{T}}\lesssim \sqrt{ \frac{\sigma \sqrt{\log n}B^2KC}{\la_{\min}}}+\frac{\Gamma BK\sqrt{C}}{\sqrt{{n\la_{\min}}}}+12\zeta+5\sqrt{\frac{\log(2/\delta)}{n}}+2\e^{-\Gamma}.
\]
\end{proof}

\section{Joint input-output optimization}\label{sec combined jacob}
In this section we wish to provide the ingredients necessary to prove a result for the case where both set of input and output weights $\W$ and $\V$ are trained. To this aim, we consider the combined neural net Jacobian associated with input and output layers given by
\begin{align}
\vct{x}\mapsto f(\vct{x};\Vb,\mtx{W}):=\Vb\phi(\mtx{W}\vct{x}).\label{combined func}
\end{align}
Denoting the Jacobian associated with \eqref{combined func} by $\Jc(\Vb,\W)$ we have that
\[
\Jc(\Vb,\W)=[\Jc(\Vb)~\Jc(\W)]\in \R^{\K n\times k(\K+d)}
\]
Here, $\Jc(\W)$ is as before whereas $\Jc(\Vb)$ is the Jacobian with respect to $\Vb$ and is given by
\begin{align}\label{output J}
\Jc(\Vb)=\begin{bmatrix}\Jc(\vb_1)~\Jc(\vb_2)~\dots~\Jc(\vb_\K)\end{bmatrix}.
\end{align}
where $\Jc(\vb_\ell)\in\R^{\K n\times k}$ is so that its $\ell$'th block rows of size $n\times k$ is nonzero for $1\leq \ell\leq \K$ i.e.
\[
\text{$\widetilde{\ell}$th~block row of}~\Jc(\vb_\ell)=\begin{cases}0~\text{if}~\ell\neq \widetilde{\ell}\\\phi(\X\W^T)~\text{else}\end{cases}.
\]
Hence, $\Jc(\Vb)$ is $\K\times \K$ block diagonal with blocks equal to $\phi(\X\W^T)$.
The following theorem summarizes the properties of the joint Jacobian.
\begin{theorem}[Properties of the Combined Input/Output Jacobian] $\Jc(\Vb,\W)$ satisfies the following properties.
\begin{itemize}
\item {\bf{Upper bound:}} $\|\Jc(\Vb,\W)\|\leq B\|\X\|(\tf{\W}+\sqrt{\K k}\infnorm{\Vb})$.
\item {\bf{Row-bound:}} For unit length $\ub$: $\trow{\text{mat}\left(\mathcal{J}^T(\mtx{W})\vct{u}\right)}\leq B\sqrt{\K}\tin{\Vb}\|\X\|$.
\item {\bf{Entry-bound:}} For unit length $\ub$: $\tin{\text{mat}\left(\mathcal{J}^T(\Vb)\vct{u}\right)}\leq B\trow{\W}\|\X\|$.
\item {\bf{Lipschitzness:}} Given inputs $\Vb,\Vb'$ and outputs $\W,\W'$
\[
\|\Jc(\Vb,\W)-\Jc(\Vb',\W')\|\leq B\opnorm{\X}(\sqrt{\K k}\infnorm{\Vb-\Vb'}+\sqrt{\K}\infnorm{\Vb}\fronorm{{\mtx{W}'}-\mtx{W}}+\tf{\W-\W'}).
\]
\end{itemize}
\end{theorem}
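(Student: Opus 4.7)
The plan is to exploit the block structure $\Jc(\Vb,\W)=[\Jc(\Vb)\,\,\Jc(\W)]$ and reduce every claim to quantities already controlled by Lemma~\ref{props lem} (for the $\W$-block) together with elementary calculations on the block-diagonal matrix $\Jc(\Vb)$. Recall from \eqref{output J} that $\Jc(\Vb)$ is a $\K\times\K$ block-diagonal matrix whose every diagonal block equals $\phi(\X\W^T)\in\R^{n\times k}$. Consequently $\opnorm{\Jc(\Vb)}=\opnorm{\phi(\X\W^T)}$, and this is the only new object we must control throughout.

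For the spectral upper bound I would start with the triangle inequality $\opnorm{\Jc(\Vb,\W)}\le\opnorm{\Jc(\Vb)}+\opnorm{\Jc(\W)}$. The second term is bounded by $B\sqrt{\K k}\tin{\Vb}\opnorm{\X}$ using Lemma~\ref{props lem} directly. For the first term I would pass to Frobenius norm, apply Lipschitzness of $\phi$ (entry-wise) and use $\fronorm{\X\W^T}\le\opnorm{\X}\fronorm{\W}$ to obtain $\opnorm{\phi(\X\W^T)}\le B\opnorm{\X}\fronorm{\W}$, which matches the stated bound. The row-bound on $\mathrm{mat}(\Jc^T(\W)\ub)$ is immediate from \eqref{rowbndm}. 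For the entry-bound, write $\ub=[\ub_1^T\ldots\ub_\K^T]^T$ and observe that by block-diagonality $\Jc^T(\vb_\ell)\ub=\phi(\X\W^T)^T\ub_\ell$, so the $(\ell,s)$ entry of $\mathrm{mat}(\Jc^T(\Vb)\ub)$ equals $\phi(\X\w_s)^T\ub_\ell$. Cauchy--Schwarz together with $\tn{\phi(\X\w_s)}\le B\opnorm{\X}\tn{\w_s}\le B\opnorm{\X}\trow{\W}$ and $\tn{\ub_\ell}\le 1$ yields the claim.

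For Lipschitzness I would split the combined Jacobian into its $\Vb$-part and $\W$-part and control each via triangle inequality:
\begin{align*}
\opnorm{\Jc(\Vb,\W)-\Jc(\Vb',\W')}\le\opnorm{\Jc(\Vb)-\Jc(\Vb')}+\opnorm{\Jc(\W)-\Jc(\W')}.
\end{align*}
Since $\Jc(\Vb)$ does not depend on $\Vb$ at all (it only depends on $\W$), the first summand reduces via block-diagonality to $\opnorm{\phi(\X\W^T)-\phi(\X{\W'}^T)}\le B\opnorm{\X}\fronorm{\W-\W'}$ by the Lipschitzness of $\phi$. For the second summand I would introduce the intermediate matrix $\Jc(\Vb,\W')$ and split again:
\begin{align*}
\opnorm{\Jc(\W)-\Jc(\W')}\le\opnorm{\Jc_\W(\Vb,\W)-\Jc_\W(\Vb,\W')}+\opnorm{\Jc_\W(\Vb,\W')-\Jc_\W(\Vb',\W')}.
\end{align*}
The first summand is handled by the $\W$-Lipschitz bound of Lemma~\ref{props lem}, giving $B\sqrt{\K}\tin{\Vb}\opnorm{\X}\fronorm{\W-\W'}$. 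The second summand exploits linearity of $\Jc_\W$ in $\Vb$: the difference equals $\Jc_\W(\Vb-\Vb',\W')$, whose spectral norm is bounded by the already-proven spectral upper bound with $\Vb$ replaced by $\Vb-\Vb'$, giving $B\sqrt{\K k}\tin{\Vb-\Vb'}\opnorm{\X}$. Summing the three contributions yields exactly the stated bound.

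The main obstacle, if any, is bookkeeping rather than genuinely hard analysis: one must be careful that $\Jc(\Vb)$ depends on $\W$ (despite the notation), which forces the intermediate-$\W'$ step in the Lipschitz proof and rules out a naive symmetric decomposition. A secondary subtlety is the use of $\fronorm{\W}$ (rather than $\opnorm{\W}$) in the spectral upper bound; this is essentially tight for block-diagonal $\Jc(\Vb)$ because the Frobenius norm of $\phi(\X\W^T)$ is what naturally arises from the entrywise Lipschitz bound, and cannot in general be improved without additional structure on $\phi$ (e.g.~$\phi(0)=0$ and thinness of $\X\W^T$).
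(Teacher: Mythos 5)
Your proposal is correct and follows essentially the same route as the paper's own proof: triangle inequality over the $[\Jc(\Vb)\,\,\Jc(\W)]$ block split, the observation that $\Jc(\Vb)$ is $\K\times\K$ block-diagonal with identical blocks $\phi(\X\W^T)$ so $\opnorm{\Jc(\Vb)}=\opnorm{\phi(\X\W^T)}$, the intermediate point $(\Vb,\W')$ and linearity in $\Vb$ for the Lipschitz bound, and Cauchy--Schwarz for the entry bound. Your closing remark that the bounds $\opnorm{\phi(\X\W^T)}\le B\opnorm{\X}\fronorm{\W}$ and $\tn{\phi(\X\w_s)}\le B\opnorm{\X}\tn{\w_s}$ implicitly require $\phi(0)=0$ (or some absorbent handling of $\phi(0)$) is actually a bit more careful than the paper's proof, which passes over this silently.
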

\begin{proof} First, we prove results concerning $\Jc(\Vb)$. First, note that
\[
\|\Jc(\Vb)\|\leq \|\phi(\X\W^T)\|\leq B\|\X\|\tf{\W}.
\]
Next, note that for $\ub=[\ub_1~\dots~\ub_\K]\in\R^{\K n}$ we have
\begin{align*}
\tin{\Jc^T(\Vb)\ub}&=\max_{1\leq \ell\leq \K}\tin{\phi(\W\X^T)\ub_\ell}\\
&=\max_{1\leq s\leq k}|\phi(\w_s\X^T)\ub_\ell|\\
&=B\trow{\W}\|\X\|.
\end{align*}
Let $\Jc_1,\Jc_2$ be the Jacobian matrices restricted to $\Vb$ and $\W$ of $\Jc(\Vb,\W)$. To prove Lipschitzness, first observe that
\[
\|\Jc(\Vb,\W)-\Jc(\Vb',\W')\|\leq \|\Jc_1(\Vb,\W)-\Jc_1(\Vb',\W')\|+ \|\Jc_2(\Vb,\W)-\Jc_2(\Vb',\W')\|.
\]
Next, observe that
\[
 \|\Jc_1(\Vb,\W)-\Jc_1(\Vb',\W')\|\leq \|\phi(\X\W^T)-\phi(\X\W'^T)\|\leq B\|\X\|\tf{\W-\W'}.
\]
We decompose $\Jc_2$ via
\begin{align*}
 \|\Jc_2(\Vb,\W)-\Jc_2(\Vb',\W')\|&\leq \|\Jc_2(\Vb,\W)-\Jc_2(\Vb,\W')\|+ \|\Jc_2(\Vb,\W')-\Jc_2(\Vb',\W')\|\\
 &\leq B\sqrt{\K}\infnorm{\Vb}\opnorm{\mtx{X}}\fronorm{{\mtx{W}'}-\mtx{W}}+\|\Jc_2(\Vb,\W')-\Jc_2(\Vb',\W')\|.
\end{align*}
To address the second term, note that, Jacobian is linear with respect to output layer hence
\[
\|\Jc_2(\Vb,\W')-\Jc_2(\Vb',\W')\|=\|\Jc_2(\Vb-\Vb',\W')\|\leq B\sqrt{\K k}\infnorm{\Vb-\Vb'}\opnorm{\X}.
\]
Combining the latter two identities we arrive at
\[
 \|\Jc_2(\Vb,\W)-\Jc_2(\Vb',\W')\|\leq B\opnorm{\X}(\sqrt{\K k}\infnorm{\Vb-\Vb'}+\sqrt{\K}\infnorm{\Vb}\fronorm{{\mtx{W}'}-\mtx{W}}),
\]
completing the proof.
\end{proof}

\end{document}